\newenvironment{proof}{\begin{trivlist}\item[]{\bf Proof}}{\hspace*{\fill} $\blacksquare$ \end{trivlist}}
\DeclareRobustCommand*\cal{\@fontswitch\relax\mathcal}
\newcommand{\dwedge}[2]%
{\overset{#1}{\underset{#2}{\mbox{$\bigwedge\hspace{-2ex}\bigwedge$}}}}
\newcommand{\dvee}[2]%
{\overset{#1}{\underset{#2}{\mbox{$\bigvee\hspace{-2ex}\bigvee$}}}}
\newcommand{\jset}[1]{\langle {#1}\rangle}
\newcommand{\inMath}[1]{\relax\ifmmode{#1}%
\else{\mbox{$#1$}}\fi}
\newcommand{\vdot}%
{\unitlength0.4mm
\begin{picture}(8,10)
\put(4,0){.}
\put(0,10){.}
\put(8,10){.}
\put(2,5){.}
\put(6,5){.}
\end{picture}
}
\newcommand{\ignore}[1]{}
\newcommand{\romanref}[1]{%
\if \ref{#1}\empty {\setcounter{romanrefcounter}0} \else 
{\setcounter{romanrefcounter}{\ref{#1}}}\fi%
{\it \roman{romanrefcounter}}}
\newcommand{\TO}{\twoheadrightarrow}
\newcommand{\half}{\mbox{$\frac{1}{2}$}}
\newcommand{\BBB}{\mbox{\(\Bbb B\)}}
\newcommand{\CC}{\mbox{$\mathcal{C}$}}
\newcommand{\BBC}{\mbox{\(\Bbb C\)}}
\newcommand{\BD}{{\bf D}}
\newcommand{\CD}{\mbox{$\mathcal{D}$}}
\newcommand{\BE}{{\bf E}}
\newcommand{\Be}{{\bf e}}
\newcommand{\BBF}{\mbox{\(\mathbb{F}\)}}
\newcommand{\Bf}{{\bf f}}
\newcommand{\BH}{{\bf H}}
\newcommand{\BK}{{\bf K}}
\newcommand{\BL}{{{\bf L}}}
\newcommand{\CL}{{\mathcal{L}}}
\newcommand{\BBL}{\mbox{\(\Bbb L\)}}
\newcommand{\BBM}{\mbox{\(\Bbb M\)}}
\newcommand{\Bm}{{\bf m}}
\newcommand{\Bn}{{\bf n}}
\newcommand{\BBN}{\mbox{\(\Bbb N\)}}
\newcommand{\BBR}{\mbox{\(\Bbb R\)}}
\newcommand{\CS}{{\mathcal{S}}}
\newcommand{\Bt}{{\bf t}}
\newcommand{\Bu}{{\bf u}}
\newlength{\circlen}
\newlength{\symblen}
  \newcommand{\text}[1]{\relax
    \ifmmode\mathchoice
      {\hbox{\the\textfont0\relax#1}}%
      {\hbox{\the\textfont0\relax#1}}%
      {\hbox{\the\scriptfont0\relax#1}}%
      {\hbox{\the\scriptscriptfont0\relax#1}}%
    \else{\relax#1}\fi}
\newcommand{\defsub}[1]{}
\newcommand{\proves}{\mbox{${\mid\!\sim}$}}
\newcommand{\To}{\Rightarrow}
\def\twoheaddownarrow{\rlap{$\downarrow$}\raise-.5ex\hbox{$\downarrow$}}
\def\twoheaduparrow{\rlap{$\uparrow$}\raise.5ex\hbox{$\uparrow$}}
\def\texturespicture #1 by #2 (#3){
\vbox to #2 {\hrule width #1 height 0pt depth 0pt
}}
\def\scaledpicture #1 by #2 (#3 scaled #4){{\dimen0=#1 \dimen1=#2
\divide\dimen0 by 1000 \multiply \dimen0 by #4
\divide\dimen1 by 1000 \multiply \dimen1 by #4
\texturespicture\dimen0 by \dimen1 (#3 scaled #4)}}
\newdimen\proofrulebreadth \proofrulebreadth=.05em
\newdimen\proofdotseparation \proofdotseparation=1.25ex
\newdimen\proofrulebaseline \proofrulebaseline=2ex
\let\then\relax
\def\hfi{\hskip0pt plus.0001fil}
\mathchardef\squigto="3A3B
\newif\ifinsideprooftree\insideprooftreefalse
\newif\ifonleftofproofrule\onleftofproofrulefalse
\newif\ifproofdots\proofdotsfalse
\newif\ifdoubleproof\doubleprooffalse
\let\wereinproofbit\relax
\newdimen\shortenproofleft
\newdimen\shortenproofright
\newdimen\proofbelowshift
\newbox\proofabove
\newbox\proofbelow
\newbox\proofrulename
\def\shiftproofbelow{\let\next\relax\afterassignment\setshiftproofbelow\dimen0 }
\def\shiftproofbelowneg{\def\next{\multiply\dimen0 by-1 }%
\afterassignment\setshiftproofbelow\dimen0 }
\def\setshiftproofbelow{\next\proofbelowshift=\dimen0 }
\def\setproofrulebreadth{\proofrulebreadth}
\def\prooftree{
%
\ifnum  \lastpenalty=1
\then   \unpenalty
\else   \onleftofproofrulefalse
\fi
%
\ifonleftofproofrule
\else   \ifinsideprooftree
        \then   \hskip.5em plus1fil
        \fi
\fi
%
\bgroup
\setbox\proofbelow=\hbox{}\setbox\proofrulename=\hbox{}%
\let\justifies\proofover\let\leadsto\proofoverdots\let\Justifies\proofoverdbl
\let\using\proofusing\let\[\prooftree
\ifinsideprooftree\let\]\endprooftree\fi
\proofdotsfalse\doubleprooffalse
\let\thickness\setproofrulebreadth
\let\shiftright\shiftproofbelow \let\shift\shiftproofbelow
\let\shiftleft\shiftproofbelowneg
\let\ifwasinsideprooftree\ifinsideprooftree
\insideprooftreetrue
%
\setbox\proofabove=\hbox\bgroup$\displaystyle 
\let\wereinproofbit\prooftree
%
\shortenproofleft=0pt \shortenproofright=0pt \proofbelowshift=0pt
%
\onleftofproofruletrue\penalty1
}
\def\eproofbit{
%
\ifx    \wereinproofbit\prooftree
\then   \ifcase \lastpenalty
        \then   \shortenproofright=0pt  
        \or     \unpenalty\hfil         
        \or     \unpenalty\unskip       
        \else   \shortenproofright=0pt  
        \fi
\fi
%
\global\dimen0=\shortenproofleft
\global\dimen1=\shortenproofright
\global\dimen2=\proofrulebreadth
\global\dimen3=\proofbelowshift
\global\dimen4=\proofdotseparation
\global\count255=\proofdotnumber
%
$\egroup  
%
\shortenproofleft=\dimen0
\shortenproofright=\dimen1
\proofrulebreadth=\dimen2
\proofbelowshift=\dimen3
\proofdotseparation=\dimen4
\proofdotnumber=\count255
}
\def\proofover{
\eproofbit 
\setbox\proofbelow=\hbox\bgroup 
\let\wereinproofbit\proofover
$\displaystyle
}%
\def\proofoverdbl{
\eproofbit 
\doubleprooftrue
\setbox\proofbelow=\hbox\bgroup 
\let\wereinproofbit\proofoverdbl
$\displaystyle
}%
\def\proofoverdots{
\eproofbit 
\proofdotstrue
\setbox\proofbelow=\hbox\bgroup 
\let\wereinproofbit\proofoverdots
$\displaystyle
}%
\def\proofusing{
\eproofbit 
\setbox\proofrulename=\hbox\bgroup 
\let\wereinproofbit\proofusing
\kern0.3em$
}
\def\endprooftree{
\eproofbit 
  \dimen5 =0pt
%
\dimen0=\wd\proofabove \advance\dimen0-\shortenproofleft
\advance\dimen0-\shortenproofright
%
\dimen1=.5\dimen0 \advance\dimen1-.5\wd\proofbelow
\dimen4=\dimen1
\advance\dimen1\proofbelowshift \advance\dimen4-\proofbelowshift
%
\ifdim  \dimen1<0pt
\then   \advance\shortenproofleft\dimen1
        \advance\dimen0-\dimen1
        \dimen1=0pt
        \ifdim  \shortenproofleft<0pt
        \then   \setbox\proofabove=\hbox{%
                        \kern-\shortenproofleft\unhbox\proofabove}%
                \shortenproofleft=0pt
        \fi
\fi
%
\ifdim  \dimen4<0pt
\then   \advance\shortenproofright\dimen4
        \advance\dimen0-\dimen4
        \dimen4=0pt
\fi
%
\ifdim  \shortenproofright<\wd\proofrulename
\then   \shortenproofright=\wd\proofrulename
\fi
%
\dimen2=\shortenproofleft \advance\dimen2 by\dimen1
\dimen3=\shortenproofright\advance\dimen3 by\dimen4
%
\ifproofdots
\then
        \dimen6=\shortenproofleft \advance\dimen6 .5\dimen0
        \setbox1=\vbox to\proofdotseparation{\vss\hbox{$\cdot$}\vss}%
        \setbox0=\hbox{%
                \advance\dimen6-.5\wd1
                \kern\dimen6
                $\vcenter to\proofdotnumber\proofdotseparation
                        {\leaders\box1\vfill}$%
                \unhbox\proofrulename}%
\else   \dimen6=\fontdimen22\the\textfont2 
        \dimen7=\dimen6
        \advance\dimen6by.5\proofrulebreadth
        \advance\dimen7by-.5\proofrulebreadth
        \setbox0=\hbox{%
                \kern\shortenproofleft
                \ifdoubleproof
                \then   \hbox to\dimen0{%
                        $\mathsurround0pt\mathord=\mkern-6mu%
                        \cleaders\hbox{$\mkern-2mu=\mkern-2mu$}\hfill
                        \mkern-6mu\mathord=$}%
                \else   \vrule height\dimen6 depth-\dimen7 width\dimen0
                \fi
                \unhbox\proofrulename}%
        \ht0=\dimen6 \dp0=-\dimen7
\fi
%
\let\doll\relax
\ifwasinsideprooftree
\then   \let\VBOX\vbox
\else   \ifmmode\else$\let\doll=$\fi
        \let\VBOX\vcenter
\fi
\VBOX   {\baselineskip\proofrulebaseline \lineskip.2ex
        \expandafter\lineskiplimit\ifproofdots0ex\else-0.6ex\fi
        \hbox   spread\dimen5   {\hfi\unhbox\proofabove\hfi}%
        \hbox{\box0}%
        \hbox   {\kern\dimen2 \box\proofbelow}}\doll%
%
\global\dimen2=\dimen2
\global\dimen3=\dimen3
\egroup 
\ifonleftofproofrule
\then   \shortenproofleft=\dimen2
\fi
\shortenproofright=\dimen3
%
\onleftofproofrulefalse
\ifinsideprooftree
\then   \hskip.5em plus 1fil \penalty2
\fi
}
\newcommand\strikethrough[1]{{\setbox0=\hbox{$#1$}
\hrule height.75ex depth-.65ex width\wd0 \kern-\wd0\box0}}
\mathchardef\gt="313E \mathchardef\lt="313C
\newsavebox{\iotabox}
\savebox{\iotabox}{$\iota$}
\newsavebox{\updowniota}
\savebox{\updowniota}{\rotu\iotabox}
\newsavebox{\ampersandbox}
\savebox{\ampersandbox}{$\&$}
\newsavebox{\updownampersand}
\savebox{\updownampersand}{\rotu\ampersandbox}
\newcommand{\comma}{,\ldots ,}
\newtheorem{theorem}{Theorem}[section]
 \newtheorem{definition}[theorem]{Definition}
\newtheorem{example}[theorem]{Example}
\newtheorem{remark}[theorem]{Remark}
\newtheorem{lemma}[theorem]{Lemma}
\newcommand{\tO}{\twoheadrightarrow}
\renewcommand{\max}{\rm max}
\newcommand{\LtO}{\twoheadleftarrow\!\!\twoheadrightarrow}
\newcommand{\vare}{\varepsilon}
\newcommand{\Bi}{\bf i}
\begin{document}

\title{Theory of Semi-Instantiation in Abstract Argumentation}
\author{D. M. Gabbay\\
Bar Ilan University, Israel\\
King's College London, UK\\
University of Luxembourg, Luxembourg\\
University of Manchester, UK\\
{\tt dov.gabbay@kcl.ac.uk}\thanks{Research  supported by the Israel Science Foundation Project 1321/10: Integrating Logic and Networks.}}


\date{}

\maketitle

\begin{abstract}
We study instantiated abstract argumentation frames of the form \linebreak $(S,R,I)$, where $(S,R)$ is an abstract argumentation frame and where the arguments $x$ of $S$ are instantiated by $I(x)$ as well formed formulas of a well known logic, for example as Boolean formulas or as predicate logic formulas or as modal logic formulas.  We use the method of conceptual analysis to derive the properties of our proposed system.  We seek to define the notion of  complete extensions for such  systems and provide algorithms for finding such extensions. We further develop a theory of instantiation in the abstract, using the framework of Boolean attack formations and of conjunctive and disjunctive attacks. We discuss applications and compare critically with the existing related literature.
\end{abstract}

\maketitle

\section{Motivation and Orientation}
This paper studies semi-instantiated argumentation network of the form $(S, R, I)$, where $(S, R)$, $R\subseteq S\times S$ is an abstract argumentation network and $I$ is an instantiation function, giving for each $x\in S$ a formula $I(x)$ of some logic \BL.

The attack relation $R$ is not instantiated and remains abstract.  We are not told,  in terms of the logic \BL, why there is an attack.

There are several possibilities for such a system to arise.

\paragraph{Option 1.}  We can view such a system as semi-ASPIC like instantiation. The APPIC approach (see \cite{509-31}) will start with a theory $\Delta$ in a logic, \BL\ define the notion of $\BL$-proofs for $\Delta$ and will elt $S$ be the set of all possible such proofs and will further define the attack relation $R \subseteq S\times S$ in terms of relationships among these proofs.   In our case we just take $\Delta$ as the arguments (no proof theory avaiable) and simply tell you abstractly what is supposed to attack what.

For example, let $\Delta = \{A, A\to \bot\}$ (with $\to$ being implication, and $\bot$ being falsity) and the attack relation be from $A$ to $A\to \bot$ but {\em not} from $A\to \bot$ to $A$.  We are not explaining why this attack relation is defined so.  There are logics, like the Lambek calculus where modus ponens works from the left but not from right.  So $A, A\to B\vdash B$ holds, but $A\to B, A\not\vdash B$.  Thus if $X\tO Y$ means $X, Y\vdash \bot$, then we get that $A\tO (A\to \bot)$ holds (``$\tO$'' is attack) but $(A\to \bot)$ does not attack $A$.

\paragraph{Option 2.}  Such instantiated systems can also arise from general methodolgoical considerations.  Let us ask ourselves a very simple question:

\paragraph{Question.}  {\em What is the added value of abstract argumentation networks over, say, classical propositional logic?  }

Obviously they have the same expressive power. Many papers by various authors have demonstrated such equivalence. My favourite is my own paper \cite{509-37}, showing that the attack relation is really the Peirce--Quine dagger connective ($x\downarrow y =\mbox{ def. } \neg x \wedge\neg y$) of classical logic.  So, we ask, what is the added value of such networks?  My answer to this is that in these networks we bring some meta-level features into the object level Dung argumentation networks, expand classical propositional logic with the meta-predicate ``$x$ is false''. When we write $z\tO x$ (i.e., $z$ attacks $x$) we are saying $z=$ ``$x$ is false'', or $z\leftrightarrow$ ``$x$ is false'' So the liar paradox becomes $x\tO x$, ``I am false''.  So the added value of abstract argumentation networks to classical propositional logic is the meta-predicate False($x$).  So the language now has $(\neg x, x\wedge y, x\vee y, x\to y)$ and the additional connective False $(x)$.  

Now the minute we accept this view we must also allow and address expressions like 
\[
x\tO A
\]
where $A$ is a wff, i.e., $x=$ ``$A$ is false'' and we then must also allow 
\[
x\wedge y\tO A
\]
meaning that $x$ and $y$ together say that $A$ is false, and now, of course, once we go this far we must also address
\[
B\tO A.
\]
The latter is nothing but the equivalence
\[
B\leftrightarrow \mbox{ False} (A).
\]
If you think about it, once we add to any logic a new connective ``$C(x)$'', we must be able to address $y\leftrightarrow C(x)$, it being just another wff.

Having established some interest in semi-instantiated argumentation networks, let us now get to business and describe the machinery and problems involved.

Let $(S, R)$ be an abstract argumentation frame. This means that $S$ is a non-empty set and $R\subseteq S\times S$. 

Let \BL\ be a logic, with a set of well formed formulas WFF$(\BL)$ and let $\mu$ be either semantics or proof theory for this logic. Assume that we have models for this logic which we denote by $\{\Bm\}$, and/or theories for this logic which we denote by $\{\Delta\}$.  We assume that a notion $\Vdash_{\BL}$ for this logic is available such that for each $\Bm$ or $\Delta$ and for each $\Phi \in {\rm WFF}(\BL)$ the relation $\Delta\Vdash_{\BL}\Phi$ or $\Bm\Vdash_{\BL}\Phi$ can get 3 answers.  Yes $(=1)$, no $(=0)$ or undecided $(=\half)$.  

As an example of a logic let us take intuitionisitc propositional logic \BH, with consequence $\vdash_{\BH}$ we can have:

\[\begin{array}{l}
\Delta\Vdash_{\BH}\Phi \mbox{ is } 1 \mbox{ if } \Delta\vdash_{\BH}\Phi\mbox{ holds}\\
\Delta\Vdash_{\BH}\Phi \mbox{ is } 0 \mbox{ if } \Delta\vdash_{\BH}\neg\Phi\mbox{ holds}\\
\Delta\Vdash_{\BH}\Phi \mbox{ is } \half \mbox{ if neither } \Delta\vdash_{\BH}\Phi \mbox{ or } \Delta\vdash_{\BH}\neg\Phi \mbox{ holds}\\
\end{array}
\]

Similarly if \Bm\ is a propositional Kripke model for \BH, we can have
\[\begin{array}{l}
\Bm\Vdash_{\BH}\Phi \mbox{ is } 1 \mbox{ if } \Bm\Vdash\Phi\mbox{ holds}\\
\Bm\Vdash_{\BH}\Phi \mbox{ is } 0 \mbox{ if } \Bm\Vdash\neg \Phi\mbox{ holds}\\
\Bm\Vdash_{\BH}\Phi \mbox{ is } \half \mbox{ if neither holds}
\end{array}
\]

Another example is 3-valued classical propositional logic with the  \
Kleene truth table for $\{0, \half, 1\}$. Call it \BK. See \cite{509-18} and Definition \ref{509-BD3}.

Given a model assignment \Bm\ to the atoms, we have $\Bm\Vdash_{\BK}\Phi$ is the value that \Bm\ gives to $\Phi$, denoted by $\Bm(\Phi)$. It is a value in $\{0,1,\half\}$.

Let $(S, R)$ be a network and let \BL\ be a logic with $\Vdash_{\BL}$. Consider the instantiation function, $I: S\mapsto {\rm WFF}(\BL)$.

Consider $(S, R, I)$. This is an instantiated argumentation network. We seek to define the notion of complete extensions for $(S, R, I)$ and give algoirthms for finding such extensions.

After performing a conceptual analysis of this problem, we reached the following definition.  A model \Bm\ (or a theory $\Delta$) of \BL\ generates an extension for $(S, R, I)$ if the function $\lambda_{\Bm}$ (or $\lambda_\Delta$) defined on $S$ below is a legitimate Caminada labelling giving rise to a complete extension on $(S, R)$.

The function $\lambda$ is:
\begin{quote}
$\lambda_{\Bm}(x)=$ value of $(\Bm\Vdash_{\BL}I(x))$\\
$\lambda_{\Delta}(x)=$ value of $(\Delta\Vdash_{\BL}I(x))$
\end{quote}

The problem is how to identify/compute, using purely argumentation methods, such extensions  for $(S, R, I)$.  This is the task of this paper.\footnote{The reader should note that we are  not defining, as a stipulated  technical definition,  the complete  extensions of $(S,R,I)$ as those legitimate Caminada labellings arising from models or theories of the logic. We are deriving this definition from conceptual analysis of the idea of instantiation. 

To make the point absolutely  clear, suppose we instantiate the elements of $S$ by names of Chinese restaurants in London. We can define by stipulation extensions for such Chinese systems  as those legitimate Caminada extensions $\lambda$  such that  if
\begin{itemize}
\item $\lambda(x)=$ in  then the Chinese restaurant associated with $x$ made a profit in 2014
\item $ \lambda(x)=$ out,    then the Chinese restaurant associated with $x$ made a loss in 2014
\item $\lambda(x)=$ undecided,  then the Chinese restaurant associated with $x$ came out even  in 2014
\end{itemize}
The above stipulation has nothing to do with a Chinese restaurant attacking another, and is nothing more than means of restricting the Caminada labellings on $(S,R)$. }  Note that the emphasis is on using geometric syntactical argumentation methods to find the extensions of $(S,R,I)$. What we can do and we do not want to do is to systematically generate all models \Bm\ of the logic or all theories $\Delta$ of the logic and check whether $\lambda_\Bm$   or $\lambda_\Delta$  generate a legitimate Caminada extension. We want to  syntactically transform $(S,R,I)$ into an argumentation network. Put differently, we want to identify and use the argumentation network meaning of the logic.

We consider three main logics.
\begin{enumerate}
\item Classical propositional logic based on 3 valued Kleene truth table.
\item Monadic predicate logic without equality based on Kleene table.
\item Modal logic S5 based on Kleene table.
\end{enumerate}

This paper solves the problem. However, many of the results are technical and are done in the Appendices.  

The methodological schema is simple:

Given $(S, R, I)$ with $I$ being an instantiation into WFF$(\BL)$ we follow the steps below:
\paragraph{Step 1.}  Rewrite any wff $\Phi$ of \BL\ into an equivalent formula (in \BL) which is argumentation friendly and convenient form.

Finding the right friendly form is not immediate and requires some analysis and trial and error. Once we find a convenient form for any $\Phi \in {\rm WFF}(\BL)$ we need to prove the equivalence. This may involve some technical manipulations and is therefore done in an appendix.

\paragraph{Step 2.}  The instantiation $I(x)=\Phi_x$ can now be assumed to be in this special form. When $x$ attacks $y$ in $(S, R)$, we get after instantiation that $\Phi_x$ attacks $\Phi_y$. This attack between formulas of \BL, needs to be given a meaning. The two formulas may be consistent in \BL, so what does it mean that one attacks the other?  So we transform any $\Phi$ of \BL\ into an argumentation template called a Boolean attack formation ``equivalent'' to $\Phi$ which we denote by $\BBB\BBF(\Phi)$. Such formations can attack each other because they are defined as argumentation systems with input output nodes.

Note that we {\em do not} want to say something like ``$\Phi$ attacks $\Psi$ if $\{\Phi,\Psi\}$ is not consistent in $\BL$'' because we do not want to use \BL.  We turn $\Phi$ and $\Psi$ syntactically into argumentation networks and remain solidly within the argumentation framework world. 

We now have $\BBB\BBF(\Phi_x)$ attacking $\BBB\BBF(\Phi_y)$ where each $\BBB\BBF$ is an argumentation network with input and output nodes.

The attack formation associated with $\Phi$, encodes the logical meaning of $\Phi$.  This has to be defined and proved.  Because of the technical complexity of the transformation from $\Phi$ to $\BBB\BBF(\Phi)$, this is also done in an appendix.

\paragraph{Step 3.}  We now have the original $(S, R)$ network instantiated by Boolean attack formations. So this is a system of network of networks. Thus steps 1 and 2 reduced the problem of instantiating a network $(S, R)$ by wffs of a logic \BL, into the problem of instantiating $(S, R)$ by some special argumentation networks called $\BBB\BBF$ (Boolean attack formations) derived for \BL.  Such a purely argumentation problem is of interest in its own right.  We have a complex system $(S, R, I)$, where $(S, R)$ is an argumentation network and for each $x\in S, I(x) = (S_x, R_x)$ is an argumentation network.

We seek to turn this system into one big master argumentation network $(S^M, R^M)$. We now want to define the notion of complete extensions for this network $(S^M, R^M)$.

Remember that this network is not arbitrary but was constructed to do the job of finding extensions for instantiations of $(S, R)$ into WFF$(\BL)$.

We find by conceptual analysis that the traditional notions of extensions for neworks is not adequate for the job and that we need to develop and motivate our own new notion of extension (which we call {\em non-toxic truth intervention extensions}).

This new notion of extensions is, in fact, a paper in its own right and because of its complexity it is done in an appendix.

\paragraph{Step 4.}  We now have $(S^M, R^M)$ and a new method of finding extensions for it. We generate these extensions and from the extensions we generate models for the logic \BL.  The models for \BL\ give us extensions for $(S, R, I)$, with $I$ an instantiation into WFF$(\BL)$.

\medskip
The above step by step workflow schema is quite simple but is rather technical. So most of the work is done in the appendices so as not to burden the reader.

It would be of value to illustrate the steps of the workflow on an example.  

Start with $(S, R)$ being a two point loop $\{x,y\}$ with $x$ attacking $y$ and $y$ attacking $x$. See Figure \ref{509-SF1}.

\begin{figure}
\centering
\setlength{\unitlength}{0.00083333in}
\begingroup\makeatletter\ifx\SetFigFont\undefined%
\gdef\SetFigFont#1#2#3#4#5{%
  \reset@font\fontsize{#1}{#2pt}%
  \fontfamily{#3}\fontseries{#4}\fontshape{#5}%
  \selectfont}%
\fi\endgroup%
{\renewcommand{\dashlinestretch}{30}
\begin{picture}(1696,939)(0,-10)
\path(121,132)(61,227)
\blacken\path(150.444,141.561)(61.000,227.000)(99.714,109.522)(150.444,141.561)
\path(1511,802)(1571,737)
\blacken\path(1467.562,804.828)(1571.000,737.000)(1511.650,845.525)(1467.562,804.828)
\path(31,612)(32,613)(35,614)
	(40,616)(47,619)(58,624)
	(72,631)(89,639)(110,648)
	(135,659)(162,671)(193,684)
	(226,698)(261,713)(297,728)
	(336,743)(375,759)(416,774)
	(457,789)(499,804)(542,818)
	(586,831)(631,844)(677,857)
	(724,868)(772,878)(821,888)
	(872,896)(923,903)(976,908)
	(1029,911)(1081,912)(1143,910)
	(1201,905)(1254,898)(1302,888)
	(1345,877)(1384,864)(1419,849)
	(1451,833)(1480,817)(1506,799)
	(1531,781)(1553,762)(1573,743)
	(1592,724)(1609,706)(1624,688)
	(1637,672)(1649,657)(1659,644)
	(1666,634)(1672,625)(1681,612)
\blacken\path(1588.029,693.587)(1681.000,612.000)(1637.361,727.739)(1588.029,693.587)
\path(1681,387)(1680,386)(1677,385)
	(1672,383)(1664,379)(1653,374)
	(1638,367)(1620,359)(1598,348)
	(1572,336)(1542,323)(1510,308)
	(1474,293)(1436,276)(1396,259)
	(1354,241)(1311,223)(1266,205)
	(1221,187)(1175,169)(1129,152)
	(1082,135)(1034,119)(986,103)
	(938,88)(888,75)(839,62)
	(788,50)(737,39)(686,30)
	(634,23)(582,17)(531,13)
	(481,12)(428,14)(379,18)
	(334,25)(294,35)(259,46)
	(228,59)(200,74)(176,89)
	(155,106)(136,124)(120,143)
	(106,162)(93,182)(83,202)
	(73,223)(65,243)(58,263)
	(52,283)(47,301)(43,318)
	(40,334)(37,348)(35,359)
	(34,369)(32,376)(31,387)
\blacken\path(71.741,270.209)(31.000,387.000)(11.987,264.777)(71.741,270.209)
\put(31,462){\makebox(0,0)[b]{\smash{{\SetFigFont{10}{12.0}{\rmdefault}{\mddefault}{\updefault}$x$}}}}
\put(1681,462){\makebox(0,0)[b]{\smash{{\SetFigFont{10}{12.0}{\rmdefault}{\mddefault}{\updefault}$y$}}}}
\end{picture}
}

\caption{}\label{509-SF1}
\end{figure}
We have three  extensions:
\begin{quote}
$E_1: x=$ ``in'', $y=$ ``out''\\
$E_2: x=$ ``out'', $y=$ ``in''\\
$E_3: x=y=$ ``undecided''
\end{quote}

Instantiate with
\begin{quote}
$I(x)=$ someone is tall\\
$I(y)=$ Dov is tall
\end{quote}
We get Figure \ref{509-SF2}.

\begin{figure}
\centering
\setlength{\unitlength}{0.00083333in}
\begingroup\makeatletter\ifx\SetFigFont\undefined%
\gdef\SetFigFont#1#2#3#4#5{%
  \reset@font\fontsize{#1}{#2pt}%
  \fontfamily{#3}\fontseries{#4}\fontshape{#5}%
  \selectfont}%
\fi\endgroup%
{\renewcommand{\dashlinestretch}{30}
\begin{picture}(1696,939)(0,-10)
\path(121,132)(61,227)
\blacken\path(150.444,141.561)(61.000,227.000)(99.714,109.522)(150.444,141.561)
\path(1511,802)(1571,737)
\blacken\path(1467.562,804.828)(1571.000,737.000)(1511.650,845.525)(1467.562,804.828)
\path(31,612)(32,613)(35,614)
	(40,616)(47,619)(58,624)
	(72,631)(89,639)(110,648)
	(135,659)(162,671)(193,684)
	(226,698)(261,713)(297,728)
	(336,743)(375,759)(416,774)
	(457,789)(499,804)(542,818)
	(586,831)(631,844)(677,857)
	(724,868)(772,878)(821,888)
	(872,896)(923,903)(976,908)
	(1029,911)(1081,912)(1143,910)
	(1201,905)(1254,898)(1302,888)
	(1345,877)(1384,864)(1419,849)
	(1451,833)(1480,817)(1506,799)
	(1531,781)(1553,762)(1573,743)
	(1592,724)(1609,706)(1624,688)
	(1637,672)(1649,657)(1659,644)
	(1666,634)(1672,625)(1681,612)
\blacken\path(1588.029,693.587)(1681.000,612.000)(1637.361,727.739)(1588.029,693.587)
\path(1681,387)(1680,386)(1677,385)
	(1672,383)(1664,379)(1653,374)
	(1638,367)(1620,359)(1598,348)
	(1572,336)(1542,323)(1510,308)
	(1474,293)(1436,276)(1396,259)
	(1354,241)(1311,223)(1266,205)
	(1221,187)(1175,169)(1129,152)
	(1082,135)(1034,119)(986,103)
	(938,88)(888,75)(839,62)
	(788,50)(737,39)(686,30)
	(634,23)(582,17)(531,13)
	(481,12)(428,14)(379,18)
	(334,25)(294,35)(259,46)
	(228,59)(200,74)(176,89)
	(155,106)(136,124)(120,143)
	(106,162)(93,182)(83,202)
	(73,223)(65,243)(58,263)
	(52,283)(47,301)(43,318)
	(40,334)(37,348)(35,359)
	(34,369)(32,376)(31,387)
\blacken\path(71.741,270.209)(31.000,387.000)(11.987,264.777)(71.741,270.209)
\put(31,462){\makebox(0,0)[rb]{\smash{{\SetFigFont{10}{12.0}{\rmdefault}{\mddefault}{\updefault}$\exists x T(x)$}}}}
\put(1681,462){\makebox(0,0)[lb]{\smash{{\SetFigFont{10}{12.0}{\rmdefault}{\mddefault}{\updefault}$T(d)$}}}}
\end{picture}
}
\caption{}\label{509-SF2}
\end{figure}

We are looking for (Kleene three valued) predicate models which would give values to $\exists x T(x)$ and $T(d)$ which will form an extension.

Our knowledge of logic tells us that there  is only a model for $\exists x T(x)\wedge\neg T(d)$, e.g.\ Dov and Lydia, with Dov not tall and Lydia tall.  There is no model with $T(d)\wedge\neg\exists xT(x)$.  Thus our knowledge of logic tells us that the above instantiation has only one extension in classical two valued monadic logic. In Kleene 3 valued logic, we can also have a model with $\exists x T(x) =T(d)=\half$.

However, we are not supposed to use our knowledge of logic but we must use an algorithm to find the model. So as far as the algorithm is concerned we have in this example a case of the  instantiation of Figure \ref{509-SF3}

\begin{figure}
\centering
\setlength{\unitlength}{0.00083333in}
\begingroup\makeatletter\ifx\SetFigFont\undefined%
\gdef\SetFigFont#1#2#3#4#5{%
  \reset@font\fontsize{#1}{#2pt}%
  \fontfamily{#3}\fontseries{#4}\fontshape{#5}%
  \selectfont}%
\fi\endgroup%
{\renewcommand{\dashlinestretch}{30}
\begin{picture}(1696,939)(0,-10)
\path(121,132)(61,227)
\blacken\path(150.444,141.561)(61.000,227.000)(99.714,109.522)(150.444,141.561)
\path(1511,802)(1571,737)
\blacken\path(1467.562,804.828)(1571.000,737.000)(1511.650,845.525)(1467.562,804.828)
\path(31,612)(32,613)(35,614)
	(40,616)(47,619)(58,624)
	(72,631)(89,639)(110,648)
	(135,659)(162,671)(193,684)
	(226,698)(261,713)(297,728)
	(336,743)(375,759)(416,774)
	(457,789)(499,804)(542,818)
	(586,831)(631,844)(677,857)
	(724,868)(772,878)(821,888)
	(872,896)(923,903)(976,908)
	(1029,911)(1081,912)(1143,910)
	(1201,905)(1254,898)(1302,888)
	(1345,877)(1384,864)(1419,849)
	(1451,833)(1480,817)(1506,799)
	(1531,781)(1553,762)(1573,743)
	(1592,724)(1609,706)(1624,688)
	(1637,672)(1649,657)(1659,644)
	(1666,634)(1672,625)(1681,612)
\blacken\path(1588.029,693.587)(1681.000,612.000)(1637.361,727.739)(1588.029,693.587)
\path(1681,387)(1680,386)(1677,385)
	(1672,383)(1664,379)(1653,374)
	(1638,367)(1620,359)(1598,348)
	(1572,336)(1542,323)(1510,308)
	(1474,293)(1436,276)(1396,259)
	(1354,241)(1311,223)(1266,205)
	(1221,187)(1175,169)(1129,152)
	(1082,135)(1034,119)(986,103)
	(938,88)(888,75)(839,62)
	(788,50)(737,39)(686,30)
	(634,23)(582,17)(531,13)
	(481,12)(428,14)(379,18)
	(334,25)(294,35)(259,46)
	(228,59)(200,74)(176,89)
	(155,106)(136,124)(120,143)
	(106,162)(93,182)(83,202)
	(73,223)(65,243)(58,263)
	(52,283)(47,301)(43,318)
	(40,334)(37,348)(35,359)
	(34,369)(32,376)(31,387)
\blacken\path(71.741,270.209)(31.000,387.000)(11.987,264.777)(71.741,270.209)
\put(31,462){\makebox(0,0)[b]{\smash{{\SetFigFont{10}{12.0}{\rmdefault}{\mddefault}{\updefault}$\Phi$}}}}
\put(1681,462){\makebox(0,0)[b]{\smash{{\SetFigFont{10}{12.0}{\rmdefault}{\mddefault}{\updefault}$\Psi$}}}}
\end{picture}
}
\caption{}\label{509-SF3}
\end{figure}

\begin{figure}
\centering
\setlength{\unitlength}{0.00083333in}
\begingroup\makeatletter\ifx\SetFigFont\undefined%
\gdef\SetFigFont#1#2#3#4#5{%
  \reset@font\fontsize{#1}{#2pt}%
  \fontfamily{#3}\fontseries{#4}\fontshape{#5}%
  \selectfont}%
\fi\endgroup%
{\renewcommand{\dashlinestretch}{30}
\begin{picture}(3044,1851)(0,-10)
\put(1965,1587){\makebox(0,0)[rb]{\smash{{\SetFigFont{10}{12.0}{\rmdefault}{\mddefault}{\updefault}$\Psi$:}}}}
\path(2432,1824)(1832,924)(2432,24)
	(3032,924)(2432,1824)
\put(615,1437){\makebox(0,0)[b]{\smash{{\SetFigFont{10}{12.0}{\rmdefault}{\mddefault}{\updefault}{\bf in}}}}}
\put(615,237){\makebox(0,0)[b]{\smash{{\SetFigFont{10}{12.0}{\rmdefault}{\mddefault}{\updefault}{\bf out}}}}}
\put(2415,1437){\makebox(0,0)[b]{\smash{{\SetFigFont{10}{12.0}{\rmdefault}{\mddefault}{\updefault}{\bf in}}}}}
\put(2415,237){\makebox(0,0)[b]{\smash{{\SetFigFont{10}{12.0}{\rmdefault}{\mddefault}{\updefault}{\bf out}}}}}
\put(615,837){\makebox(0,0)[b]{\smash{{\SetFigFont{10}{12.0}{\rmdefault}{\mddefault}{\updefault}$\Phi$}}}}
\put(2490,837){\makebox(0,0)[b]{\smash{{\SetFigFont{10}{12.0}{\rmdefault}{\mddefault}{\updefault}$\Psi$}}}}
\put(15,1587){\makebox(0,0)[rb]{\smash{{\SetFigFont{10}{12.0}{\rmdefault}{\mddefault}{\updefault}$\Phi$:}}}}
\path(615,1812)(15,912)(615,12)
	(1215,912)(615,1812)
\end{picture}
}
\caption{}\label{509-SF4}
\end{figure}

We turn $\Phi$ and $\Psi$ into equivalent networks of Boolean attack formation which we draw as in Figure \ref{509-SF4}
 and Figure \ref{509-SF3} becomes Figure \ref{509-SF5} which is a master network representing the instantiation. We find extensiosn for the master network and these extensions will generate models.

\begin{figure}
\centering
\setlength{\unitlength}{0.00083333in}
\begingroup\makeatletter\ifx\SetFigFont\undefined%
\gdef\SetFigFont#1#2#3#4#5{%
  \reset@font\fontsize{#1}{#2pt}%
  \fontfamily{#3}\fontseries{#4}\fontshape{#5}%
  \selectfont}%
\fi\endgroup%
{\renewcommand{\dashlinestretch}{30}
\begin{picture}(3326,2417)(0,-10)
\put(2772,1362){\makebox(0,0)[b]{\smash{{\SetFigFont{10}{12.0}{\rmdefault}{\mddefault}{\updefault}$\Psi$}}}}
\path(2714,2349)(2114,1449)(2714,549)
	(3314,1449)(2714,2349)
\path(632,2312)(732,2327)
\blacken\path(617.778,2279.531)(732.000,2327.000)(608.877,2338.867)(617.778,2279.531)
\path(2447,2357)(2542,2357)
\blacken\path(2422.000,2327.000)(2542.000,2357.000)(2422.000,2387.000)(2422.000,2327.000)
\path(897,537)(898,536)(901,534)
	(905,530)(912,524)(922,516)
	(935,506)(952,493)(971,479)
	(993,463)(1017,445)(1044,427)
	(1073,408)(1103,389)(1134,371)
	(1166,353)(1198,338)(1231,324)
	(1265,312)(1299,303)(1333,297)
	(1367,294)(1401,296)(1436,302)
	(1472,312)(1507,329)(1543,352)
	(1579,381)(1614,418)(1647,462)
	(1672,501)(1695,544)(1715,588)
	(1734,634)(1749,679)(1762,724)
	(1772,767)(1780,810)(1785,850)
	(1789,889)(1790,926)(1789,961)
	(1787,995)(1783,1027)(1779,1058)
	(1773,1088)(1767,1117)(1760,1146)
	(1753,1175)(1746,1203)(1740,1232)
	(1734,1262)(1728,1292)(1724,1324)
	(1720,1358)(1718,1393)(1718,1430)
	(1720,1470)(1723,1511)(1729,1555)
	(1738,1602)(1749,1650)(1762,1701)
	(1779,1753)(1799,1806)(1821,1859)
	(1845,1911)(1872,1962)(1904,2015)
	(1937,2064)(1971,2108)(2005,2147)
	(2039,2181)(2074,2211)(2108,2238)
	(2141,2260)(2175,2279)(2208,2296)
	(2241,2309)(2274,2320)(2307,2329)
	(2339,2336)(2372,2342)(2403,2346)
	(2435,2349)(2465,2350)(2495,2351)
	(2523,2351)(2550,2350)(2575,2349)
	(2598,2348)(2619,2346)(2638,2344)
	(2653,2343)(2667,2341)(2677,2340)
	(2685,2339)(2697,2337)
\blacken\path(2573.701,2327.136)(2697.000,2337.000)(2583.565,2386.320)(2573.701,2327.136)
\path(2697,537)(2696,536)(2694,535)
	(2690,532)(2685,527)(2676,521)
	(2665,512)(2651,502)(2634,489)
	(2614,474)(2592,457)(2566,438)
	(2537,417)(2507,395)(2474,373)
	(2439,349)(2402,325)(2364,300)
	(2325,276)(2284,251)(2242,227)
	(2199,204)(2155,181)(2109,159)
	(2062,138)(2014,118)(1963,100)
	(1911,82)(1857,66)(1801,52)
	(1743,39)(1682,28)(1619,20)
	(1555,14)(1489,12)(1422,12)
	(1356,16)(1291,22)(1230,30)
	(1172,40)(1119,51)(1070,61)
	(1026,72)(986,83)(951,93)
	(920,102)(893,111)(869,119)
	(848,127)(830,134)(813,141)
	(799,148)(785,155)(772,162)
	(759,169)(746,177)(733,186)
	(718,197)(703,208)(685,222)
	(665,237)(643,255)(618,275)
	(591,299)(561,325)(528,355)
	(493,389)(455,427)(416,468)
	(376,513)(336,561)(297,612)
	(261,665)(228,718)(198,771)
	(171,823)(147,872)(126,918)
	(108,961)(92,1001)(79,1038)
	(67,1072)(58,1102)(50,1130)
	(43,1156)(37,1180)(32,1202)
	(28,1223)(25,1242)(22,1262)
	(20,1281)(18,1301)(16,1322)
	(15,1344)(14,1367)(13,1392)
	(12,1420)(12,1450)(12,1482)
	(13,1518)(15,1557)(18,1598)
	(22,1643)(28,1689)(35,1738)
	(45,1788)(57,1838)(72,1887)
	(93,1939)(117,1987)(143,2031)
	(172,2071)(202,2106)(234,2138)
	(267,2165)(301,2190)(336,2211)
	(371,2230)(407,2247)(444,2261)
	(481,2273)(518,2284)(555,2294)
	(592,2302)(629,2309)(665,2315)
	(699,2320)(732,2324)(762,2327)
	(790,2330)(815,2332)(836,2334)
	(854,2335)(869,2336)(880,2336)(897,2337)
\blacken\path(778.969,2300.005)(897.000,2337.000)(775.445,2359.902)(778.969,2300.005)
\put(897,1962){\makebox(0,0)[b]{\smash{{\SetFigFont{10}{12.0}{\rmdefault}{\mddefault}{\updefault}{\bf in}}}}}
\put(897,762){\makebox(0,0)[b]{\smash{{\SetFigFont{10}{12.0}{\rmdefault}{\mddefault}{\updefault}{\bf out}}}}}
\put(2697,1962){\makebox(0,0)[b]{\smash{{\SetFigFont{10}{12.0}{\rmdefault}{\mddefault}{\updefault}{\bf in}}}}}
\put(2697,762){\makebox(0,0)[b]{\smash{{\SetFigFont{10}{12.0}{\rmdefault}{\mddefault}{\updefault}{\bf out}}}}}
\put(897,1362){\makebox(0,0)[b]{\smash{{\SetFigFont{10}{12.0}{\rmdefault}{\mddefault}{\updefault}$\Phi$}}}}
\path(897,2337)(297,1437)(897,537)
	(1497,1437)(897,2337)
\end{picture}
}
\caption{}\label{509-SF5}
\end{figure}

The perceptive reader might ask why not look at $\Phi \wedge\neg \Psi$ and $\Psi\wedge\neg\Phi$ and find models for them?  After all, we know the logic \BL?  The answer is that we are not using the logic at all. The transformation  $\Phi$ to $\BBB\BBF(\Phi)$ is pure syntax. It is the argumentation extensions which find the models for $\{I(x)|x\in S\}$ which respect the constraints imposed by the original $(S, R)$.

To continue analysing this example from the argumentation point of view, assume the universe has only two people, Dov and Lydia.  So $\exists x T(x)$ becomes $T(d)\vee T(l)$.

Figure \ref{509-SF2} becomes Figure \ref{509-SF2-1}.

\begin{figure}
\centering
\setlength{\unitlength}{0.00083333in}
\begingroup\makeatletter\ifx\SetFigFont\undefined%
\gdef\SetFigFont#1#2#3#4#5{%
  \reset@font\fontsize{#1}{#2pt}%
  \fontfamily{#3}\fontseries{#4}\fontshape{#5}%
  \selectfont}%
\fi\endgroup%
{\renewcommand{\dashlinestretch}{30}
\begin{picture}(1696,939)(0,-10)
\path(121,132)(61,227)
\blacken\path(150.444,141.561)(61.000,227.000)(99.714,109.522)(150.444,141.561)
\path(1511,802)(1571,737)
\blacken\path(1467.562,804.828)(1571.000,737.000)(1511.650,845.525)(1467.562,804.828)
\path(31,612)(32,613)(35,614)
	(40,616)(47,619)(58,624)
	(72,631)(89,639)(110,648)
	(135,659)(162,671)(193,684)
	(226,698)(261,713)(297,728)
	(336,743)(375,759)(416,774)
	(457,789)(499,804)(542,818)
	(586,831)(631,844)(677,857)
	(724,868)(772,878)(821,888)
	(872,896)(923,903)(976,908)
	(1029,911)(1081,912)(1143,910)
	(1201,905)(1254,898)(1302,888)
	(1345,877)(1384,864)(1419,849)
	(1451,833)(1480,817)(1506,799)
	(1531,781)(1553,762)(1573,743)
	(1592,724)(1609,706)(1624,688)
	(1637,672)(1649,657)(1659,644)
	(1666,634)(1672,625)(1681,612)
\blacken\path(1588.029,693.587)(1681.000,612.000)(1637.361,727.739)(1588.029,693.587)
\path(1681,387)(1680,386)(1677,385)
	(1672,383)(1664,379)(1653,374)
	(1638,367)(1620,359)(1598,348)
	(1572,336)(1542,323)(1510,308)
	(1474,293)(1436,276)(1396,259)
	(1354,241)(1311,223)(1266,205)
	(1221,187)(1175,169)(1129,152)
	(1082,135)(1034,119)(986,103)
	(938,88)(888,75)(839,62)
	(788,50)(737,39)(686,30)
	(634,23)(582,17)(531,13)
	(481,12)(428,14)(379,18)
	(334,25)(294,35)(259,46)
	(228,59)(200,74)(176,89)
	(155,106)(136,124)(120,143)
	(106,162)(93,182)(83,202)
	(73,223)(65,243)(58,263)
	(52,283)(47,301)(43,318)
	(40,334)(37,348)(35,359)
	(34,369)(32,376)(31,387)
\blacken\path(71.741,270.209)(31.000,387.000)(11.987,264.777)(71.741,270.209)
\put(31,462){\makebox(0,0)[rb]{\smash{{\SetFigFont{10}{12.0}{\rmdefault}{\mddefault}{\updefault}$T(d)\vee T(l)$}}}}
\put(1681,462){\makebox(0,0)[lb]{\smash{{\SetFigFont{10}{12.0}{\rmdefault}{\mddefault}{\updefault}$T(d)$}}}}
\end{picture}
}
\caption{}\label{509-SF2-1}
\end{figure}

The reader can see now the logic behind this instantiation. We have to define correctly what it means to attack a disjunction and what it means for a disjunction to mount an attack.  Once we do that (this is done later in tis paper) we get that Figure \ref{509-SF2-1} is equivalent to Figure \ref{509-SF2-2}

\begin{figure}
\centering
\setlength{\unitlength}{0.00083333in}
\begingroup\makeatletter\ifx\SetFigFont\undefined%
\gdef\SetFigFont#1#2#3#4#5{%
  \reset@font\fontsize{#1}{#2pt}%
  \fontfamily{#3}\fontseries{#4}\fontshape{#5}%
  \selectfont}%
\fi\endgroup%
{\renewcommand{\dashlinestretch}{30}
\begin{picture}(858,2043)(0,-10)
\put(412,60){\makebox(0,0)[b]{\smash{{\SetFigFont{10}{12.0}{\rmdefault}{\mddefault}{\updefault}$T(l)$}}}}
\path(557,410)(502,330)
\blacken\path(545.262,445.881)(502.000,330.000)(594.705,411.889)(545.262,445.881)
\path(242,1225)(307,1300)
\blacken\path(251.079,1189.669)(307.000,1300.000)(205.738,1228.965)(251.079,1189.669)
\path(412,210)(410,212)(406,217)
	(398,225)(386,238)(370,256)
	(349,279)(326,306)(299,336)
	(271,368)(242,402)(213,437)
	(185,471)(158,505)(134,538)
	(111,569)(91,598)(74,626)
	(58,652)(45,677)(35,701)
	(26,724)(20,746)(15,768)
	(13,789)(12,810)(13,831)
	(15,852)(20,874)(26,896)
	(35,919)(45,943)(58,968)
	(74,994)(91,1022)(111,1051)
	(134,1082)(158,1115)(185,1149)
	(213,1183)(242,1218)(271,1252)
	(299,1284)(326,1314)(349,1341)
	(370,1364)(386,1382)(412,1410)
\blacken\path(352.330,1301.651)(412.000,1410.000)(308.362,1342.478)(352.330,1301.651)
\path(412,1410)(414,1408)(417,1403)
	(423,1395)(433,1381)(446,1363)
	(462,1341)(481,1314)(502,1283)
	(525,1251)(548,1216)(571,1181)
	(594,1147)(615,1113)(635,1080)
	(653,1049)(669,1019)(683,991)
	(695,965)(705,940)(714,916)
	(721,893)(726,871)(729,849)
	(731,828)(732,807)(731,786)
	(729,764)(726,743)(721,721)
	(714,698)(705,674)(695,649)
	(683,623)(669,595)(653,566)
	(635,535)(615,503)(594,470)
	(571,435)(548,401)(525,367)
	(502,335)(481,305)(462,278)
	(446,256)(433,238)(412,210)
\blacken\path(460.000,324.000)(412.000,210.000)(508.000,288.000)(460.000,324.000)
\path(412,1635)(414,1638)(417,1644)
	(423,1656)(431,1671)(442,1691)
	(455,1714)(468,1738)(482,1761)
	(495,1784)(508,1806)(520,1825)
	(532,1843)(543,1858)(554,1873)
	(565,1886)(576,1898)(587,1910)
	(600,1923)(614,1935)(629,1947)
	(644,1958)(660,1969)(676,1979)
	(692,1988)(709,1996)(725,2003)
	(740,2008)(755,2013)(769,2015)
	(781,2016)(793,2016)(803,2014)
	(812,2010)(819,2006)(826,2000)
	(831,1993)(836,1985)(840,1975)
	(843,1965)(845,1953)(846,1941)
	(846,1928)(845,1915)(843,1901)
	(840,1887)(835,1873)(830,1860)
	(824,1846)(817,1834)(809,1822)
	(800,1810)(789,1799)(778,1788)
	(765,1777)(750,1766)(733,1755)
	(713,1743)(691,1731)(667,1718)
	(640,1705)(611,1691)(583,1678)
	(555,1665)(531,1654)(487,1635)
\blacken\path(585.274,1710.114)(487.000,1635.000)(609.061,1655.030)(585.274,1710.114)
\put(412,1485){\makebox(0,0)[b]{\smash{{\SetFigFont{10}{12.0}{\rmdefault}{\mddefault}{\updefault}$T(d)$}}}}
\path(717,1750)(632,1700)
\blacken\path(720.222,1786.700)(632.000,1700.000)(750.643,1734.984)(720.222,1786.700)
\end{picture}
}
\caption{}\label{509-SF2-2}
\end{figure}

The only extensions are
\begin{enumerate}
\item $T(l)=$ in, $T(d)=$ out (corresponding to $\exists x T(x)\wedge\neg T(d)$) and
\item $T(l)=T(d)=$ undecided (corresponding to $\exists x T(x) =T(d)=\half$).
\end{enumerate}

Note that we can use the above as an argumentation theorem prover to check the consistency of $\Phi\wedge\neg\Psi$ and $\neg\Phi\wedge\Psi$.\footnote{Existing machines for finding extensions for argumentation networks push the problem to logical provers. So we are not suggesting our reduction as a practical theorem prover for logic but only to highlight that we are operating purely in the argumentation world.}

Let us conclude this preliminary orientation by saying a few words about our deductive and expositional approach.  Our approach is new.  We arrive at our proposed system through a conceptual analysis  (using common sense) of the components needed for an abstract theory of instantiation in general and for the specific instance of, for example,  predicate and modal argumentation and following this analysis we define our system.  So, as we are writing the present lines, we do not know yet the full details of what kind of system we will get.

We choose to deal, for the sake of simplicity,  with the classical Boolean propositional calculus and with  monadic classical predicate logic without equality and with modal logic S5.  See Appendix A.  We shall deal with more complex logics in a subsequent paper.

There exists in the literature the instantiated approach to argumentation, also known as ASPIC, see \cite{509-13,509-14,509-16}. This approach is related but not the same as our approach. See Appendix E for full comparison and discussion.

\subsection{Structure of our paper}
Our program for this paper has the following methodological structure:
\paragraph{Starting point.}  We assume as our given starting point Dung theory of atomic finite propositional abstract argumentation frames.  Namely frames of the form $(S, R)$, where $S$ a finite set of atomic arguments, $R\subseteq S\times S$ is the attack relation together with the traditional notion of complete extensions $E\subseteq S$ (to be recalled and defined in the next section).

\paragraph{Objective.}  Extend what is given to classical propositional calculus  and to monadic predicate logic and to modal logic, namely allow the elements of $S$ to be instantiated as formulas of classical propositional calculus or respectively as formulas of (monadic) predicate logic or respectively as formulas of modal logic S5 and define the concept of a complete extensions for this case in a natural and completely syntactical and combinatorial way, without using any logical notions.\footnote{By completely syntactical and combinatorial way we mean we use only the geometry of the graph $(S, R)$ and possibly a new concepts of attack based on the geometry of the graph and on the  simple syntactical structure of the arguments. So we assume a system of the form $(S,R,I)$, where $S$ is a set of atomic arguments, $R$ is the attack relation, and $I$ is an instantiation function giving for each $x \in S$ a formula $I(x)$ of propositional logic or of predicate logic or resp. modal logic. We look for extensions  respecting  the instantiation $I$, but such extensions are to be defined purely syntactically.  So defining $ARB$ iff $A=\neg B$ or $B =\neg A$ is acceptable but defining $ARB$ iff $A, B\vdash\bot$ is not!}

\paragraph{Methodological approach.} To achieve our objective we use the method of common sense conceputal analysis, a well known method in philosophy circles.

\paragraph{Conclusion.}  We produce an abstract theory of instantiation of argumentation frameworks in general and for the specific cases of Boolean, predicate, and modal  instantiations, and  compare and discuss what we get with systems proposed in the literature.

\medskip\noindent
Let us now begin this rather unusual approach.

\subsection{General methodological remarks}
\begin{enumerate}
\item First, we remark that there are two major approaches to extending any propositional system to a predicate system, in this case extending propositional abstract argumentation to predicate or to modal S5 argumentation:
\begin{enumerate}
\renewcommand{\labelenumii}{*\arabic{enumii}.}
\item Look at applications areas of the (propositional) system and see its shortcomings and seek to extend it accordingly (to a predicate system or to a modal system).  The needs of the applications will dictate what kind of generalisation to adopt.
\item Look formally at the (propositional) system and its components and use theoretical considerations to extend it by adding quantifiers or modalities. 
\end{enumerate}
Personally we believe in the (*1) approach.  In our particular case, however, the (*2) approach is just as good, because adding predicates and quantifiers or modalities  to a propositional system is universally done for many logics. This is a well trodden path and it can lead to good extensions which will do well with applications, as long as our conceputal analysis is done with good common sense and care!
\item Second, let us recall a general methodological remedy for fixing any system which does not behave properly. 

Suppose we have an input/output system of some kind, as represented in Figure \ref{509-F2}.

\begin{figure}
\centering
\setlength{\unitlength}{0.00083333in}
\begingroup\makeatletter\ifx\SetFigFont\undefined%
\gdef\SetFigFont#1#2#3#4#5{%
  \reset@font\fontsize{#1}{#2pt}%
  \fontfamily{#3}\fontseries{#4}\fontshape{#5}%
  \selectfont}%
\fi\endgroup%
{\renewcommand{\dashlinestretch}{30}
\begin{picture}(3906,810)(0,-10)
\put(3370,333){\makebox(0,0)[b]{\smash{{\SetFigFont{10}{12.0}{\rmdefault}{\mddefault}{\updefault}Output}}}}
\put(495,383){\ellipse{974}{750}}
\path(1570,783)(2470,783)(2470,33)
	(1570,33)(1570,783)
\put(1270,333){\makebox(0,0)[b]{\smash{{\SetFigFont{10}{12.0}{\rmdefault}{\mddefault}{\updefault}$\To$}}}}
\put(2695,333){\makebox(0,0)[b]{\smash{{\SetFigFont{10}{12.0}{\rmdefault}{\mddefault}{\updefault}$\To$}}}}
\put(2020,333){\makebox(0,0)[b]{\smash{{\SetFigFont{10}{12.0}{\rmdefault}{\mddefault}{\updefault}System}}}}
\put(445,333){\makebox(0,0)[b]{\smash{{\SetFigFont{10}{12.0}{\rmdefault}{\mddefault}{\updefault}Input}}}}
\put(3411,398){\ellipse{974}{750}}
\end{picture}
}
\caption{}\label{509-F2}
\end{figure}

Suppose the output is problematic and not to our liking. How do we remedy the situation?  There are three pure traditional approaches and many options using their various combinations

\begin{enumerate}
\renewcommand{\labelenumii}{r\arabic{enumii}.}
\item Restrict the input to make sure the output is acceptable.
\item Fix the system.
\item Modify the output to make it acceptable.
\end{enumerate}
To ilustrate, suppose we write a program for adding two numbers $x$ and $y$, to get $x \oplus y$.
\[
\{x, y\}\tO x\oplus y.
\]
Suppose we get the correct answer for 
\[
0 \leq x, y < 100,
\]
but for $x \geq 100$ or $y \geq 100$ we get the answer 
\[
x \oplus y = x+y+1
\]
where ``$+$'' is the correct addition.

(r1) says do not use numbers $\geq 100$.

(r2) says fix the program for $\oplus$

(r3) says subtract 1 from the result for the case that one of the input numbers is $\geq 100$ and you will get the correct answer.
\end{enumerate}

\subsection{Conceputal analysis for instantiated Boolean or  predicate argumentation}
Let us first analyse some characteristics of the propositinal case and list their conceptual significance.  

An abstract argumentation network \cite{509-15} has the form $(S, R)$, where $S\neq \varnothing$ is the set of abstract arguments and $R\subseteq S\times S$ is the attack relation. (We also write $x \tO y$ to denote $(x, y)\in R$, reading $x$ attacks $y$, especially in figures.)

The formal machinery associates with each $(S, R)$ several types of extensions. It is convenient for us in this paper to use the Caminada labelling approach to extensions.  See Caminada--Gabbay survey paper \cite{509-11}.  The Caminada labelling has the form $\lambda: S\mapsto \{1, 0, \half\}$ where $\lambda(x) =1$ means $x$ is ``in'', $\lambda (x) =0$ means $x$ is ``out'' and $\lambda(x)=\half$ means $x$ is ``undecided''.

The exact definitions and background will be given in the next section in Definition \ref{509-2DM1}. Here, in the orientation section, let us just note that because we regard the elements of $S$ as atomic  symbols, we can also view them as interpreted into any logic with $\neg$,  as atomic propositions of that logic, and the set 
\[
T_\lambda =\{q|\lambda (q) =1\} \cup \{\neg q |\lambda(q)=0\}
\]
is always consistent in any such logic, giving us no logical problems whatsoever.

This fact makes any coherent process giving rise to a $\lambda$ and $T_\lambda$ an acceptable process.

Let us assume we have a process for finding such acceptable functions $\lambda$.  Our theoretical considerations for extending the propositional case to a predicate or modal extension  can follow several tracks.

\paragraph{Track (t1): Substitution.}
Substitute for elements of $S$ predicate wffs and apply the process and see what happens and when in difficulty offer suitable remedies.
\paragraph{Track (t2): Translation.}
Look at translations of the propositional theory into other predicate systems (translation into logic programming or into classical logic or into modal logic) and see the behaviour of the image of the  source in the context of the target and find out how to import predicate logic argumentation or modal argumentation from the translation.

\medskip\noindent
At this point of our deliberations, Track (t2) seems more difficult than Track (t1). Let us therefore  begin with (t1).

\subsection{The substitution track}
Problems arise when we instantiate  the elements of $S$ and give them internal structure, such as a wff of predicate logic. The elements of $T_\lambda$, when having an internal structure, may clash with one another and render $T_\lambda$ inconsistent, in whatever logic we happen to be using.\footnote{Let 
\[\begin{array}{l}
E_\lambda^1 = \{q| \lambda(q) = 1\}\\[1.5ex]
E_\lambda^0 = \{q| \lambda(q) = 0\}.
\end{array}\]
These two sets are disjoint  and therefore  when we  interpret the union of these two sets  in a logic via the special instantiation 
\[
I(q) = q , \mbox{ for } q \in E_\lambda^1
\]
and
\[
I(q) =\neg q , \mbox{ for } q \in E_\lambda^0
\]
   to form $T_\lambda$,  we get a consistent set in the logic.

However for any other possible interpretation/instantiation $I^*$, we may get inconsistency in the logic.} This means that the simplistic approach of allowing propositional or predicate wffs to be substituted for the atoms in $S$ will most likely be problematic and may require a remedy. As part of our conceptual analysis we will proceed along this path and seek to identify possible remedies.

Even the simplest possible instantiation can be problematic. Suppose we take an argumentation network  $(S , R)$ and choose a single $y$ in $S$ and instantiate just this single $y$ as the propositional constant $\top$. What happens?  $\top$ is always true, so this is equivalent to saying that $y$ should always be ``in". This means that any $x$ attacking $y$ should be ``out". This may sounds simple but it is not,  because it changes the rules of the game:  Firstly  there may not be extensions where $y$ is ``in". So we have to say that we are dealing with networks which could be without semantics. Secondly the direction of attacks is no longer only following the arrows, but if there is an arrow leading onto  $\top$, the attack is directed opposite the direction of the arrows.  Dealing with such instantiation looks simple but it is not and so we postpone any further  discussion to Appendix B.  

In this section we want to illustrate our conceptual analysis on a less subtle case,   so we deal with predicate instantiation.

Figures \ref{509-F1}a and \ref{509-F1}b present us with a simple network and a predicate instantiation for it, to serve as a simple example for our conceputal analysis:

\begin{figure}
\centering 
\setlength{\unitlength}{0.00083333in}
\begingroup\makeatletter\ifx\SetFigFont\undefined%
\gdef\SetFigFont#1#2#3#4#5{%
  \reset@font\fontsize{#1}{#2pt}%
  \fontfamily{#3}\fontseries{#4}\fontshape{#5}%
  \selectfont}%
\fi\endgroup%
{\renewcommand{\dashlinestretch}{30}
\begin{picture}(4679,742)(0,-10)
\put(2102,68){\makebox(0,0)[lb]{\smash{{\SetFigFont{10}{12.0}{\rmdefault}{\mddefault}{\updefault}(b)}}}}
\path(535,495)(610,495)
\blacken\path(490.000,465.000)(610.000,495.000)(490.000,525.000)(490.000,465.000)
\path(1060,495)(1510,495)
\blacken\path(1390.000,465.000)(1510.000,495.000)(1390.000,525.000)(1390.000,465.000)
\path(1285,495)(1360,495)
\blacken\path(1240.000,465.000)(1360.000,495.000)(1240.000,525.000)(1240.000,465.000)
\path(15,715)(1790,715)(1790,275)
	(15,275)(15,715)
\path(2515,480)(2965,480)
\blacken\path(2845.000,450.000)(2965.000,480.000)(2845.000,510.000)(2845.000,450.000)
\path(3685,485)(4135,485)
\blacken\path(4015.000,455.000)(4135.000,485.000)(4015.000,515.000)(4015.000,455.000)
\path(2725,488)(2822,480)
\blacken\path(2699.940,459.965)(2822.000,480.000)(2704.872,519.762)(2699.940,459.965)
\path(3895,495)(3985,495)
\blacken\path(3865.000,465.000)(3985.000,495.000)(3865.000,525.000)(3865.000,465.000)
\path(2020,713)(4667,713)(4667,278)
	(2020,278)(2020,713)
\put(1640,425){\makebox(0,0)[b]{\smash{{\SetFigFont{10}{12.0}{\rmdefault}{\mddefault}{\updefault}$z$}}}}
\put(175,440){\makebox(0,0)[b]{\smash{{\SetFigFont{10}{12.0}{\rmdefault}{\mddefault}{\updefault}$x$}}}}
\put(925,435){\makebox(0,0)[b]{\smash{{\SetFigFont{10}{12.0}{\rmdefault}{\mddefault}{\updefault}$y$}}}}
\put(80,55){\makebox(0,0)[lb]{\smash{{\SetFigFont{10}{12.0}{\rmdefault}{\mddefault}{\updefault}(a)}}}}
\put(2115,425){\makebox(0,0)[lb]{\smash{{\SetFigFont{10}{12.0}{\rmdefault}{\mddefault}{\updefault}$A(J)$}}}}
\put(3015,425){\makebox(0,0)[lb]{\smash{{\SetFigFont{10}{12.0}{\rmdefault}{\mddefault}{\updefault}$\exists xA(x)$}}}}
\put(4205,430){\makebox(0,0)[lb]{\smash{{\SetFigFont{10}{12.0}{\rmdefault}{\mddefault}{\updefault}$A(M)$}}}}
\path(310,495)(760,495)
\blacken\path(640.000,465.000)(760.000,495.000)(640.000,525.000)(640.000,465.000)
\end{picture}
}

\caption{}\label{509-F1}
\end{figure}

The extension $\lambda$ for Figure \ref{509-F1}a is $\lambda_a(x) =\lambda_a(z)=1$ and $\lambda_a(y)=0$.

Thus $T_{\lambda_a} =\{x,\neg y, z\}$. 

Upon instantiation in Figure \ref{509-F1}b we get
\[
T_{\lambda_b} =\{A(J),\neg\exists x A(x), A(M)\}.
\]
This theory is inconsistent. Here $A$ is a unary predicate and $J$ and $M$ are elements in the predicate universe.

We now recall the general methodological remedies (r1)--(r3) disucssed in subsection 1.2.

We have here an input output system

\paragraph{Inputs:} Instantiated $(S, R)$ with predicate formulas
\paragraph{Outputs:} Predicate extensions
\paragraph{Problem:} The output may be inconsistent.

\medskip
\noindent The systems needs a remedy.

The (r1) option would restrict the input, maybe to some fragment of predicate logic. This is a very simple case, there is nothing to restrict.

The (r2) option would mean that we change the process of finding extensions.

The (r3) option means that we somehow revise the result and render it consistent.

For example we can use an idea of Sanjay Modgil from his PhD thesis and regard the output predicate theory $T_\lambda$ as a theory in a defeasible predicate logic. Thus $\neg \exists x A(x)$ is a defeasible rule with $A(J)$ and $A(M)$ as exceptions.  However, we do want $T_\lambda$ to be in classical logic, so (r3) is not an option for us.

It seems the simplest remedy for us is to use (r2). Revise the process.  

Seeking a remedy, let us simplify and assume that our predicate universe contains only two elements $J$ and $M$ and see whether this simplifications helps us get some ideas. Thus we have 
\begin{itemize}
\item $\exists x A(x) \equiv A(J)\vee A(M)$
\item $\forall x A(x)\equiv A(J)\wedge A(M)$.
\end{itemize}

Figures \ref{509-F1}b becomes Figure \ref{509-F1c}

\begin{figure}[ht]
\centering
\setlength{\unitlength}{0.00083333in}
\begingroup\makeatletter\ifx\SetFigFont\undefined%
\gdef\SetFigFont#1#2#3#4#5{%
  \reset@font\fontsize{#1}{#2pt}%
  \fontfamily{#3}\fontseries{#4}\fontshape{#5}%
  \selectfont}%
\fi\endgroup%
{\renewcommand{\dashlinestretch}{30}
\begin{picture}(2655,215)(0,-10)
\path(2265,140)(2415,140)
\blacken\path(2295.000,110.000)(2415.000,140.000)(2295.000,170.000)(2295.000,110.000)
\path(2040,140)(2565,140)
\blacken\path(2445.000,110.000)(2565.000,140.000)(2445.000,170.000)(2445.000,110.000)
\path(615,140)(765,140)
\blacken\path(645.000,110.000)(765.000,140.000)(645.000,170.000)(645.000,110.000)
\path(390,140)(915,140)
\blacken\path(795.000,110.000)(915.000,140.000)(795.000,170.000)(795.000,110.000)
\put(15,60){\makebox(0,0)[lb]{\smash{{\SetFigFont{10}{12.0}{\rmdefault}{\mddefault}{\updefault}$A(J)$}}}}
\put(990,65){\makebox(0,0)[lb]{\smash{{\SetFigFont{10}{12.0}{\rmdefault}{\mddefault}{\updefault}$A(J)\vee A(M)$}}}}
\put(2640,65){\makebox(0,0)[lb]{\smash{{\SetFigFont{10}{12.0}{\rmdefault}{\mddefault}{\updefault}$A(M)$}}}}
\end{picture}
}
\caption{}\label{509-F1c}
\end{figure}

To continue our conceputal analysis we need to deal with, and seek a remedy for, the problems of Figure \ref{509-F1c}.  In other words we need to solve the problem of instantiation into the classical propositional calculus, where $I(x)$ can give an arbitrary formula of classical propositional logic as values for $x$. 

Well, looking at Figure \ref{509-F2}, let us give meaning to attacks on disjunctions and a meaning to disjunctions attacking other elements.

The case of $x$ attacking the disjunction $y \vee z$ is complicated to express and it will be dealt with in Appendix C. Basically we have to express the  Boolean equation $\neg x \leftrightarrow (y \vee z)$, and for this we need disjunctive attacks.  

The case of a disjunction $y \vee z$ attacking $x$ is simple to express.

\begin{itemize}
\item $y\vee z\tO x$ means $y\tO x$ and $z\tO x$.
\end{itemize}

The reason being the meaning of $a\tO b$.  It means 
\begin{itemize}
\item if $a=$ ``in'', then $b=$ ``out''.
\end{itemize}
So 
\begin{itemize}
\item if $y\vee z$ is ``in'' then $x$ is ``out''
\end{itemize}
is equivalent to 
\begin{itemize}
\item if [$y$ is ``in'' or $z$ is ``in''] then $x$ is ``out''
\end{itemize}
which is equivalent to the conjunction of [if $y$  is ``in'' then $x$ is ``out''] and [if $z$ is ``in'' then $x$ is ``out''].

Thus Figure \ref{509-F1c} becomes Figure \ref{509-F1d}, which is the same as Figure \ref{509-F1e}

\begin{figure}[ht]
\centering
\setlength{\unitlength}{0.00083333in}
\begingroup\makeatletter\ifx\SetFigFont\undefined%
\gdef\SetFigFont#1#2#3#4#5{%
  \reset@font\fontsize{#1}{#2pt}%
  \fontfamily{#3}\fontseries{#4}\fontshape{#5}%
  \selectfont}%
\fi\endgroup%
{\renewcommand{\dashlinestretch}{30}
\begin{picture}(2304,1635)(0,-10)
\put(1077,60){\makebox(0,0)[b]{\smash{{\SetFigFont{10}{12.0}{\rmdefault}{\mddefault}{\updefault}$A(M)$}}}}
\path(1292,1470)(2277,960)
\blacken\path(2156.643,988.534)(2277.000,960.000)(2184.230,1041.816)(2156.643,988.534)
\path(12,765)(802,135)
\blacken\path(689.475,186.363)(802.000,135.000)(726.885,233.273)(689.475,186.363)
\path(1357,130)(2292,735)
\blacken\path(2207.549,644.623)(2292.000,735.000)(2174.954,694.997)(2207.549,644.623)
\path(2042,1090)(2137,1030)
\blacken\path(2019.522,1068.714)(2137.000,1030.000)(2051.561,1119.444)(2019.522,1068.714)
\path(2077,600)(2172,650)
\blacken\path(2079.782,567.563)(2172.000,650.000)(2051.837,620.658)(2079.782,567.563)
\path(607,290)(687,235)
\blacken\path(571.119,278.262)(687.000,235.000)(605.111,327.705)(571.119,278.262)
\path(692,1345)(797,1395)
\blacken\path(701.555,1316.322)(797.000,1395.000)(675.759,1370.494)(701.555,1316.322)
\put(27,810){\makebox(0,0)[b]{\smash{{\SetFigFont{10}{12.0}{\rmdefault}{\mddefault}{\updefault}$A(J)$}}}}
\put(2277,810){\makebox(0,0)[b]{\smash{{\SetFigFont{10}{12.0}{\rmdefault}{\mddefault}{\updefault}$A(M)$}}}}
\put(1077,1485){\makebox(0,0)[b]{\smash{{\SetFigFont{10}{12.0}{\rmdefault}{\mddefault}{\updefault}$A(J)$}}}}
\path(47,960)(932,1470)
\blacken\path(843.007,1384.091)(932.000,1470.000)(813.049,1436.077)(843.007,1384.091)
\end{picture}
}
\caption{}\label{509-F1d}
\end{figure}

\begin{figure}
\centering
\setlength{\unitlength}{0.00083333in}
\begingroup\makeatletter\ifx\SetFigFont\undefined%
\gdef\SetFigFont#1#2#3#4#5{%
  \reset@font\fontsize{#1}{#2pt}%
  \fontfamily{#3}\fontseries{#4}\fontshape{#5}%
  \selectfont}%
\fi\endgroup%
{\renewcommand{\dashlinestretch}{30}
\begin{picture}(2479,777)(0,-10)
\put(2176,80){\makebox(0,0)[b]{\smash{{\SetFigFont{10}{12.0}{\rmdefault}{\mddefault}{\updefault}$A(M)$}}}}
\path(1616,125)(1741,125)
\blacken\path(1621.000,95.000)(1741.000,125.000)(1621.000,155.000)(1621.000,95.000)
\path(61,400)(91,325)
\blacken\path(18.579,425.275)(91.000,325.000)(74.287,447.559)(18.579,425.275)
\path(1996,450)(2031,385)
\blacken\path(1947.694,476.434)(2031.000,385.000)(2000.522,504.880)(1947.694,476.434)
\path(351,200)(353,202)(357,206)
	(364,212)(374,222)(387,235)
	(402,250)(419,268)(436,287)
	(454,307)(470,329)(486,351)
	(500,374)(512,398)(522,423)
	(529,449)(532,477)(531,505)
	(526,529)(517,551)(508,570)
	(497,586)(487,600)(477,612)
	(467,621)(458,629)(449,636)
	(440,643)(430,648)(420,654)
	(409,660)(397,666)(383,672)
	(367,679)(348,685)(327,690)
	(305,694)(281,695)(258,693)
	(236,687)(216,681)(199,673)
	(184,666)(171,659)(161,652)
	(152,645)(143,639)(136,632)
	(129,626)(122,618)(114,610)
	(106,599)(98,587)(88,573)
	(79,556)(70,536)(62,514)
	(56,490)(53,459)(56,429)
	(62,400)(71,373)(83,348)
	(96,325)(111,302)(127,280)
	(143,260)(158,243)(171,227)(196,200)
\blacken\path(92.458,267.669)(196.000,200.000)(136.484,308.434)(92.458,267.669)
\path(2286,255)(2288,257)(2292,261)
	(2299,267)(2309,277)(2322,290)
	(2337,305)(2354,323)(2371,342)
	(2389,362)(2405,384)(2421,406)
	(2435,429)(2447,453)(2457,478)
	(2464,504)(2467,532)(2466,560)
	(2461,584)(2452,606)(2443,625)
	(2432,641)(2422,655)(2412,667)
	(2402,676)(2393,684)(2384,691)
	(2375,698)(2365,703)(2355,709)
	(2344,715)(2332,721)(2318,727)
	(2302,734)(2283,740)(2262,745)
	(2240,749)(2216,750)(2193,748)
	(2171,742)(2151,736)(2134,728)
	(2119,721)(2106,714)(2096,707)
	(2087,700)(2078,694)(2071,687)
	(2064,681)(2057,673)(2049,665)
	(2041,654)(2033,642)(2023,628)
	(2014,611)(2005,591)(1997,569)
	(1991,545)(1988,514)(1991,484)
	(1997,455)(2006,428)(2018,403)
	(2031,380)(2046,357)(2062,335)
	(2078,315)(2093,298)(2106,282)(2131,255)
\blacken\path(2027.458,322.669)(2131.000,255.000)(2071.484,363.434)(2027.458,322.669)
\put(236,60){\makebox(0,0)[b]{\smash{{\SetFigFont{10}{12.0}{\rmdefault}{\mddefault}{\updefault}$A(J)$}}}}
\path(536,125)(1891,130)
\blacken\path(1771.112,99.557)(1891.000,130.000)(1770.890,159.557)(1771.112,99.557)
\end{picture}
}
\caption{}\label{509-F1e}
\end{figure}

The need for attacks from and attacks to conjunctions and disjunctions of atoms has already been considered by us in 2009 in connection with fibring argumentation networks. Figure \ref{509-FF1} explains our notation from 2009 (full details are given in Appendix C).

\begin{figure}
\begin{center}
\setlength{\unitlength}{0.00083333in}
\begingroup\makeatletter\ifx\SetFigFont\undefined%
\gdef\SetFigFont#1#2#3#4#5{%
  \reset@font\fontsize{#1}{#2pt}%
  \fontfamily{#3}\fontseries{#4}\fontshape{#5}%
  \selectfont}%
\fi\endgroup%
{\renewcommand{\dashlinestretch}{30}
\begin{picture}(2580,1855)(0,-10)
\put(2565,430){\makebox(0,0)[b]{\smash{{\SetFigFont{10}{12.0}{\rmdefault}{\mddefault}{\updefault}$y$}}}}
\path(2265,1630)(2265,1030)(1965,580)
\blacken\path(2006.603,696.487)(1965.000,580.000)(2056.526,663.205)(2006.603,696.487)
\path(2265,1030)(2565,580)
\blacken\path(2473.474,663.205)(2565.000,580.000)(2523.397,696.487)(2473.474,663.205)
\path(615,1630)(390,1030)
\path(390,845)(385,725)
\blacken\path(360.022,846.145)(385.000,725.000)(419.970,843.647)(360.022,846.145)
\path(2105,790)(2055,710)
\blacken\path(2093.160,827.660)(2055.000,710.000)(2144.040,795.860)(2093.160,827.660)
\path(2425,805)(2485,700)
\blacken\path(2399.416,789.305)(2485.000,700.000)(2451.511,819.073)(2399.416,789.305)
\put(1965,430){\makebox(0,0)[b]{\smash{{\SetFigFont{10}{12.0}{\rmdefault}{\mddefault}{\updefault}$x$}}}}
\put(390,430){\makebox(0,0)[b]{\smash{{\SetFigFont{10}{12.0}{\rmdefault}{\mddefault}{\updefault}$z$}}}}
\put(390,55){\makebox(0,0)[b]{\smash{{\SetFigFont{10}{12.0}{\rmdefault}{\mddefault}{\updefault}(a)}}}}
\put(2340,55){\makebox(0,0)[b]{\smash{{\SetFigFont{10}{12.0}{\rmdefault}{\mddefault}{\updefault}(b)}}}}
\put(15,1705){\makebox(0,0)[b]{\smash{{\SetFigFont{10}{12.0}{\rmdefault}{\mddefault}{\updefault}$x$}}}}
\put(615,1705){\makebox(0,0)[b]{\smash{{\SetFigFont{10}{12.0}{\rmdefault}{\mddefault}{\updefault}$y$}}}}
\put(2265,1705){\makebox(0,0)[b]{\smash{{\SetFigFont{10}{12.0}{\rmdefault}{\mddefault}{\updefault}$z$}}}}
\path(15,1630)(390,1030)(390,580)
\blacken\path(360.000,700.000)(390.000,580.000)(420.000,700.000)(360.000,700.000)
\end{picture}
}
\end{center}
\begin{enumerate}
\item [(a)] $x,y$ jointly attack $z$. If $x=y=1$ then $z=0$.
\item [(b)] $z$ disjunctively attacks $x,y$. If $z=1$ then either $x=0$ or $y=0$ or both equal 0.
\end{enumerate}
\caption{}\label{509-FF1}
\end{figure}

It appears that we may now have a plan for a remedy of how to deal with monadic predicate logic substitutions $I(x)$ for nodes $x$ in argumentation networks:
\begin{enumerate}
\item  Eliminate the quantifiers in terms of  conjunctions and disjunctions
\item  Develop a theory of attacks involving  disjunctions and conjunctions.
\end{enumerate}

However, the reduction of the quantifiers to disjunctions and conjunctions and the (yet to be described) argumentation networks with joint and disjunctive attacks    does not solve our problem. Even for finite models the number of elements is not bounded and so we cannot replace $\exists x A(x)$ by a finite disjunction. We can attempt to say  use a closed world assumption and use all the names mentioned explicitly in the network.  This may work but not easily. We may have $J_1\comma J_k$ as all the names with $A(J_i), i=1\comma k$ being ``in'', but nevertheless $\exists x A(x)$ being out.  Worse still, if we allow predicate formulas for the form $\forall x \exists y \Phi (x,y)$, we may be forced to have an infinite number of elements. So this is not the way to go, at least not as a first attempt at the problem. 

Looking again at Figure \ref{509-F1}b, we suddenly make a surprising realisation. $A(J)$ is being essentially attacked by $\exists x A(x)$ when $\exists x A(x)$ is ``out''!

We would expect, as in the case of propositional networks, that when an argument is out, then it is ``dead''.  It has no effect. In the case of $\exists x A(x)$ when it is out it has an effect. This means, when taken to its full conclusions, that being ``in'', ``out'', ``undecided'' is not a value but it is a {\em state}, from which an argument can mount attacks.  Figure \ref{509-F3} illustrates this new point of view.

\begin{figure}
\centering
\setlength{\unitlength}{0.00083333in}
\begingroup\makeatletter\ifx\SetFigFont\undefined%
\gdef\SetFigFont#1#2#3#4#5{%
  \reset@font\fontsize{#1}{#2pt}%
  \fontfamily{#3}\fontseries{#4}\fontshape{#5}%
  \selectfont}%
\fi\endgroup%
{\renewcommand{\dashlinestretch}{30}
\begin{picture}(2723,1681)(0,-10)
\put(2708,158){\makebox(0,0)[lb]{\smash{{\SetFigFont{10}{12.0}{\rmdefault}{\mddefault}{\updefault}$x_3$}}}}
\put(908,833){\ellipse{1800}{450}}
\put(908,233){\ellipse{1800}{450}}
\path(1958,1433)(2558,1433)
\blacken\path(2438.000,1403.000)(2558.000,1433.000)(2438.000,1463.000)(2438.000,1403.000)
\path(1958,833)(2558,833)
\blacken\path(2438.000,803.000)(2558.000,833.000)(2438.000,863.000)(2438.000,803.000)
\path(1958,233)(2558,233)
\blacken\path(2438.000,203.000)(2558.000,233.000)(2438.000,263.000)(2438.000,203.000)
\path(2258,1433)(2408,1433)
\blacken\path(2288.000,1403.000)(2408.000,1433.000)(2288.000,1463.000)(2288.000,1403.000)
\path(2258,833)(2408,833)
\blacken\path(2288.000,803.000)(2408.000,833.000)(2288.000,863.000)(2288.000,803.000)
\path(2258,233)(2408,233)
\blacken\path(2288.000,203.000)(2408.000,233.000)(2288.000,263.000)(2288.000,203.000)
\put(158,1358){\makebox(0,0)[lb]{\smash{{\SetFigFont{10}{12.0}{\rmdefault}{\mddefault}{\updefault}$a$ is in state ``in''}}}}
\put(158,758){\makebox(0,0)[lb]{\smash{{\SetFigFont{10}{12.0}{\rmdefault}{\mddefault}{\updefault}$a$ is in state ``out''}}}}
\put(158,158){\makebox(0,0)[lb]{\smash{{\SetFigFont{10}{12.0}{\rmdefault}{\mddefault}{\updefault}$a$ is in state ``undec''}}}}
\put(2708,1358){\makebox(0,0)[lb]{\smash{{\SetFigFont{10}{12.0}{\rmdefault}{\mddefault}{\updefault}$x_1$}}}}
\put(2708,758){\makebox(0,0)[lb]{\smash{{\SetFigFont{10}{12.0}{\rmdefault}{\mddefault}{\updefault}$x_2$}}}}
\put(908,1433){\ellipse{1800}{450}}
\end{picture}
}
\caption{}\label{509-F3}
\end{figure}

In fact, to be completely coherent, we must allow for attacks of the form 
\[
(a\mbox{ is in state }\xi_1)\tO (b\mbox{ is in state }\xi_2)
\]

This brings us to the idea of what we call {\em state argumentation networks}, a new concept, which once made precise, can help us represent our original goal, that of predicate argumentation nework. We can possibly transform Figure \ref{509-F2}b into Figure \ref{509-F4}.

\begin{figure}
\centering
\setlength{\unitlength}{0.00083333in}
\begingroup\makeatletter\ifx\SetFigFont\undefined%
\gdef\SetFigFont#1#2#3#4#5{%
  \reset@font\fontsize{#1}{#2pt}%
  \fontfamily{#3}\fontseries{#4}\fontshape{#5}%
  \selectfont}%
\fi\endgroup%
{\renewcommand{\dashlinestretch}{30}
\begin{picture}(3025,2217)(0,-10)
\put(1058,246){\makebox(0,0)[b]{\smash{{\SetFigFont{10}{12.0}{\rmdefault}{\mddefault}{\updefault}$A(M)=$ in}}}}
\put(570,1896){\ellipse{1124}{600}}
\put(2455,1881){\ellipse{1124}{600}}
\put(2455,1056){\ellipse{1124}{600}}
\put(1105,306){\ellipse{1124}{600}}
\path(1133,1896)(1883,1896)
\blacken\path(1763.000,1866.000)(1883.000,1896.000)(1763.000,1926.000)(1763.000,1866.000)
\path(1583,1896)(1733,1896)
\blacken\path(1613.000,1866.000)(1733.000,1896.000)(1613.000,1926.000)(1613.000,1866.000)
\path(1958,1146)(983,1671)
\blacken\path(1102.880,1640.522)(983.000,1671.000)(1074.434,1587.694)(1102.880,1640.522)
\path(1958,846)(1518,516)
\blacken\path(1596.000,612.000)(1518.000,516.000)(1632.000,564.000)(1596.000,612.000)
\path(1223,1541)(1108,1591)
\blacken\path(1230.010,1570.665)(1108.000,1591.000)(1206.087,1515.641)(1230.010,1570.665)
\path(1753,696)(1653,611)
\blacken\path(1725.003,711.576)(1653.000,611.000)(1763.862,665.860)(1725.003,711.576)
\path(1773,1596)(1833,1651)
\blacken\path(1764.813,1547.798)(1833.000,1651.000)(1724.270,1592.028)(1764.813,1547.798)
\path(1778,1356)(1838,1306)
\blacken\path(1726.608,1359.775)(1838.000,1306.000)(1765.019,1405.869)(1726.608,1359.775)
\put(533,1821){\makebox(0,0)[b]{\smash{{\SetFigFont{10}{12.0}{\rmdefault}{\mddefault}{\updefault}$A(J)=$ in}}}}
\put(2483,1821){\makebox(0,0)[b]{\smash{{\SetFigFont{10}{12.0}{\rmdefault}{\mddefault}{\updefault}$\exists x A(x)=$ in}}}}
\put(2483,996){\makebox(0,0)[b]{\smash{{\SetFigFont{10}{12.0}{\rmdefault}{\mddefault}{\updefault}$\exists x A(x)=$ out}}}}
\put(1990.481,1483.963){\arc{486.158}{1.6016}{4.6198}}
\blacken\path(1859.404,1245.908)(1983.000,1241.000)(1876.254,1303.493)(1859.404,1245.908)
\blacken\path(1865.258,1657.122)(1968.000,1726.000)(1844.933,1713.574)(1865.258,1657.122)
\end{picture}
}
\caption{}\label{509-F4}
\end{figure}

A reader might say why not use negation as in Figure \ref{509-F5}?

\begin{figure}
\centering
\setlength{\unitlength}{0.00083333in}
\begingroup\makeatletter\ifx\SetFigFont\undefined%
\gdef\SetFigFont#1#2#3#4#5{%
  \reset@font\fontsize{#1}{#2pt}%
  \fontfamily{#3}\fontseries{#4}\fontshape{#5}%
  \selectfont}%
\fi\endgroup%
{\renewcommand{\dashlinestretch}{30}
\begin{picture}(2307,1185)(0,-10)
\put(1815,1035){\makebox(0,0)[lb]{\smash{{\SetFigFont{10}{12.0}{\rmdefault}{\mddefault}{\updefault}$\exists x A(x)$}}}}
\path(1815,1035)(465,210)
\blacken\path(551.750,298.172)(465.000,210.000)(583.037,246.976)(551.750,298.172)
\blacken\path(2295.000,840.000)(2265.000,960.000)(2235.000,840.000)(2295.000,840.000)
\path(2265,960)(2265,210)
\blacken\path(2235.000,330.000)(2265.000,210.000)(2295.000,330.000)(2235.000,330.000)
\blacken\path(2295.000,690.000)(2265.000,810.000)(2235.000,690.000)(2295.000,690.000)
\path(2265,810)(2265,360)
\blacken\path(2235.000,480.000)(2265.000,360.000)(2295.000,480.000)(2235.000,480.000)
\path(1515,1110)(1665,1110)
\blacken\path(1545.000,1080.000)(1665.000,1110.000)(1545.000,1140.000)(1545.000,1080.000)
\path(690,360)(615,285)
\blacken\path(678.640,391.066)(615.000,285.000)(721.066,348.640)(678.640,391.066)
\put(15,1035){\makebox(0,0)[lb]{\smash{{\SetFigFont{10}{12.0}{\rmdefault}{\mddefault}{\updefault}$A(J)$}}}}
\put(15,60){\makebox(0,0)[lb]{\smash{{\SetFigFont{10}{12.0}{\rmdefault}{\mddefault}{\updefault}$A(M)$}}}}
\put(1815,60){\makebox(0,0)[lb]{\smash{{\SetFigFont{10}{12.0}{\rmdefault}{\mddefault}{\updefault}$\neg \exists A(x)$}}}}
\path(540,1110)(1815,1110)
\blacken\path(1695.000,1080.000)(1815.000,1110.000)(1695.000,1140.000)(1695.000,1080.000)
\end{picture}
}
\caption{}\label{509-F5}
\end{figure}

We can do that, but in general, an argument can have more than two states. State argumentation networks is a more general concept and we may wish to continue and develop it in this paper. Let us define it intuitively. 

\begin{definition}\label{509-D1}
Let $(S, R)$ be a network and assume that $S=S_1\cup \ldots \cup S_k$ with $S_i\neq 0$ and $S_i\cap S_j =\varnothing, i\neq j$. Also assume that for each $i$ and each $x, y \in S_i$, such that $x\neq y$ we have that $xRy$ hold. Under these conditions we can regard $(S, R)$ as a state argumentation network, where the elements are $\{S_i\}$ and each $x\in S_i$ is a different state of $S_i$.
\end{definition}

Figure \ref{509-F1}b can become Figure \ref{509-F1bb}.  $\top$ is attacking all $\neg x$ nodes where $x$ is not attacked in the original figure.

\begin{figure}[h]
\centering
\setlength{\unitlength}{0.00083333in}
\begingroup\makeatletter\ifx\SetFigFont\undefined%
\gdef\SetFigFont#1#2#3#4#5{%
  \reset@font\fontsize{#1}{#2pt}%
  \fontfamily{#3}\fontseries{#4}\fontshape{#5}%
  \selectfont}%
\fi\endgroup%
{\renewcommand{\dashlinestretch}{30}
\begin{picture}(3657,2251)(0,-10)
\put(3615,2089){\makebox(0,0)[b]{\smash{{\SetFigFont{10}{12.0}{\rmdefault}{\mddefault}{\updefault}$S_3$}}}}
\blacken\path(3645.000,1444.000)(3615.000,1564.000)(3585.000,1444.000)(3645.000,1444.000)
\path(3615,1564)(3615,1189)
\blacken\path(3585.000,1309.000)(3615.000,1189.000)(3645.000,1309.000)(3585.000,1309.000)
\blacken\path(195.000,1669.000)(165.000,1789.000)(135.000,1669.000)(195.000,1669.000)
\path(165,1789)(165,1039)
\blacken\path(135.000,1159.000)(165.000,1039.000)(195.000,1159.000)(135.000,1159.000)
\blacken\path(195.000,1519.000)(165.000,1639.000)(135.000,1519.000)(195.000,1519.000)
\path(165,1639)(165,1189)
\blacken\path(135.000,1309.000)(165.000,1189.000)(195.000,1309.000)(135.000,1309.000)
\blacken\path(1695.000,1669.000)(1665.000,1789.000)(1635.000,1669.000)(1695.000,1669.000)
\path(1665,1789)(1665,1039)
\blacken\path(1635.000,1159.000)(1665.000,1039.000)(1695.000,1159.000)(1635.000,1159.000)
\blacken\path(1695.000,1519.000)(1665.000,1639.000)(1635.000,1519.000)(1695.000,1519.000)
\path(1665,1639)(1665,1189)
\blacken\path(1635.000,1309.000)(1665.000,1189.000)(1695.000,1309.000)(1635.000,1309.000)
\path(465,1864)(1515,1864)
\blacken\path(1395.000,1834.000)(1515.000,1864.000)(1395.000,1894.000)(1395.000,1834.000)
\path(1215,1864)(1365,1864)
\blacken\path(1245.000,1834.000)(1365.000,1864.000)(1245.000,1894.000)(1245.000,1834.000)
\path(2415,1864)(3315,1864)
\blacken\path(3195.000,1834.000)(3315.000,1864.000)(3195.000,1894.000)(3195.000,1834.000)
\path(3015,1864)(3165,1864)
\blacken\path(3045.000,1834.000)(3165.000,1864.000)(3045.000,1894.000)(3045.000,1834.000)
\path(1665,1039)(3540,1714)
\blacken\path(3437.255,1645.127)(3540.000,1714.000)(3416.932,1701.580)(3437.255,1645.127)
\path(1665,1039)(240,1714)
\blacken\path(361.291,1689.742)(240.000,1714.000)(335.606,1635.518)(361.291,1689.742)
\path(3250,1609)(3395,1669)
\blacken\path(3295.589,1595.397)(3395.000,1669.000)(3272.647,1650.838)(3295.589,1595.397)
\path(465,1609)(375,1659)
\blacken\path(494.468,1626.948)(375.000,1659.000)(465.330,1574.498)(494.468,1626.948)
\path(165,289)(165,814)
\blacken\path(195.000,694.000)(165.000,814.000)(135.000,694.000)(195.000,694.000)
\path(1495,1119)(1650,1044)
\blacken\path(1528.914,1069.263)(1650.000,1044.000)(1555.048,1123.272)(1528.914,1069.263)
\path(1390,1174)(1505,1114)
\blacken\path(1384.733,1142.910)(1505.000,1114.000)(1412.487,1196.105)(1384.733,1142.910)
\path(1830,1099)(1690,1049)
\blacken\path(1792.919,1117.613)(1690.000,1049.000)(1813.099,1061.108)(1792.919,1117.613)
\path(1975,1154)(1845,1099)
\blacken\path(1943.827,1173.386)(1845.000,1099.000)(1967.205,1118.128)(1943.827,1173.386)
\put(15,1789){\makebox(0,0)[lb]{\smash{{\SetFigFont{10}{12.0}{\rmdefault}{\mddefault}{\updefault}$A(J)$}}}}
\put(15,889){\makebox(0,0)[lb]{\smash{{\SetFigFont{10}{12.0}{\rmdefault}{\mddefault}{\updefault}$\neg A(J)$}}}}
\put(1515,1789){\makebox(0,0)[lb]{\smash{{\SetFigFont{10}{12.0}{\rmdefault}{\mddefault}{\updefault}$\exists xA(x)$}}}}
\put(1515,889){\makebox(0,0)[lb]{\smash{{\SetFigFont{10}{12.0}{\rmdefault}{\mddefault}{\updefault}$\neg\exists x A(x)$}}}}
\put(3315,1789){\makebox(0,0)[lb]{\smash{{\SetFigFont{10}{12.0}{\rmdefault}{\mddefault}{\updefault}$A(M)$}}}}
\put(3315,889){\makebox(0,0)[lb]{\smash{{\SetFigFont{10}{12.0}{\rmdefault}{\mddefault}{\updefault}$\neg A(M)$}}}}
\put(165,64){\makebox(0,0)[b]{\smash{{\SetFigFont{10}{12.0}{\rmdefault}{\mddefault}{\updefault}$\top$}}}}
\put(165,2089){\makebox(0,0)[b]{\smash{{\SetFigFont{10}{12.0}{\rmdefault}{\mddefault}{\updefault}$S_1$}}}}
\put(1665,2089){\makebox(0,0)[b]{\smash{{\SetFigFont{10}{12.0}{\rmdefault}{\mddefault}{\updefault}$S_2$}}}}
\blacken\path(3645.000,1594.000)(3615.000,1714.000)(3585.000,1594.000)(3645.000,1594.000)
\path(3615,1714)(3615,1039)
\blacken\path(3585.000,1159.000)(3615.000,1039.000)(3645.000,1159.000)(3585.000,1159.000)
\end{picture}
}
\caption{}\label{509-F1bb}
\end{figure}

\begin{definition}\label{509-D2}
A two state argumentation network has the form 
\[
(S\cup S^\neg \cup \{\top \},R)
\]
where $S$ is a set of atoms, $S^\neg =\{\neg q | q \in S\}$ and $\top$ is top.  We have
\begin{itemize}
\item $\neg \exists x (xR\top)$
\item $\forall x (xR\neg x)$
\item $\forall x(\neg xRx)$
\end{itemize}
\end{definition}

\begin{lemma}\label{509-L3}
Every $(S, R)$ is equivlaent to $(S^*, R^*)$ where $S^* = S\cup S^\neg \cup \{\top\}$ and $R^*$ is 
\[
R\cup \{(\top, \neg x)\mid x \in S\}\cup \{(x, \neg x ), (\neg x, x)\}.
\]
The extensions $E^*$ of $(S^*, R^*)$ are exactlly the extension $E$ of $(S, R)$ augmented by $\{\top\}$.
\end{lemma}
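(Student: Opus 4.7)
The plan is to exhibit the bijection $E \mapsto E \cup \{\top\}$ between complete extensions of $(S,R)$ and complete extensions of $(S^*, R^*)$, by analysing what any complete extension (or Caminada labelling) of $(S^*,R^*)$ must look like on the ``extra'' nodes $\{\top\} \cup S^\neg$. It is most convenient to work with Caminada labellings $\lambda^* : S^* \to \{1,0,\half\}$, since the authors have fixed that approach.

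First I would observe that $\top$ has no $R^*$-attackers, by the first clause of Definition \ref{509-D2}. Therefore in any complete labelling $\lambda^*$ of $(S^*,R^*)$ we are forced to have $\lambda^*(\top) = 1$. Next, each $\neg x \in S^\neg$ is attacked by $\top$ (and by $x$), and since $\lambda^*(\top)=1$, the rule for being ``out'' forces $\lambda^*(\neg x) = 0$ for every $x \in S$. So the labels of $\top$ and of the $\neg x$-nodes are completely determined, in a uniform way, in any complete labelling of $(S^*,R^*)$.

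The key step is then to show that the labels of the original nodes $x \in S$ are governed in $(S^*,R^*)$ by exactly the same conditions as in $(S,R)$. A node $x\in S$ is attacked in $R^*$ by its original $R$-attackers together with $\neg x$; but since $\lambda^*(\neg x) = 0$, the extra attacker $\neg x$ is inert with respect to the Caminada rules (``out'' attackers neither suffice to make $x$ ``out'' nor prevent $x$ from being ``in''). Hence the complete-labelling constraint at $x$ in $(S^*,R^*)$ reduces verbatim to the complete-labelling constraint at $x$ in $(S,R)$.

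From these observations both directions of the lemma follow easily. Given a complete labelling $\lambda$ of $(S,R)$, define $\lambda^*$ by extending $\lambda$ with $\lambda^*(\top)=1$ and $\lambda^*(\neg x)=0$; the three observations above show $\lambda^*$ satisfies the Caminada conditions at every node of $S^*$. Conversely, given any complete labelling $\lambda^*$ of $(S^*,R^*)$, the first two observations force $\lambda^*(\top)=1$ and $\lambda^*(\neg x)=0$, so the restriction $\lambda := \lambda^* \restriction S$ is a complete labelling of $(S,R)$, and $\lambda^*$ is its canonical extension. Translating back to extensions via $E_{\lambda} = \{x : \lambda(x)=1\}$ gives $E^* = E \cup \{\top\}$, as claimed. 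I do not foresee any genuine obstacle; the only point requiring a little care is checking that an ``out'' attacker ($\neg x$ here) really is inert in the Caminada rules for the three possible labels of $x$, which is immediate from the rules themselves.
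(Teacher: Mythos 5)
Your proposal is correct and follows essentially the same route as the paper's own (much terser) proof: $\top$ is unattacked hence ``in'', it forces every $\neg x$ ``out'', and the now-inert extra attackers leave the labelling conditions on $S$ unchanged, giving the bijection $E \mapsto E\cup\{\top\}$. You have merely filled in, via the Caminada rules, the details the paper leaves implicit.
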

\begin{proof}
Start with $(S, R)$. Create $S^\neg =\{\neg x | x \in S\}$ and assume $\top \not\in S$. Let $S^* =S\cup S^\neg \cup \{\top\}$.  Let $R^*$ be defined as $R^* =R\cup \{(x,\neg x), (\neg x, x)|x\in S\}\cup \{(\top,\neg x|x\in S\}$.  It is clear that $\top$ in $S^*$ attacks all the new points of $S^\neg$ which we added. Thus any extension $E$ of $(S, R)$ becomes the extension $E\cup \{\top\}$ of $(S^*, R^*)$ and vice versae. 
\end{proof}

\subsection{Summary of our plan so far for monadic predicate instantiation}
We propose, at this stage of our conceptual analysis, the following plan.

We are given an abstract argumentation network $(S,R)$, with an instantiation function $I$, giving for each $x \in S$ a formula of monadic predicate logic. We want  to deal with it.
\begin{enumerate}
\renewcommand{\labelenumi}{\alph{enumi}.}
\item First we prove some theorems that the input from predicate logic can be restricted, without loss of generality, to an argumentation friendly form.
\item  Assuming the input is of this form, we use its syntactical form together with $R$, to move to a new abstract argumentation network $(S^*, R^*, I^*)$, with $S$ a subset of $S^*$ and $R$ a subset of $R^*$.

We take extensions $E^*$ for the new network and look at $E^*\cap  S$. We declare these  as the sought for extensions for the original $(S,R,I)$. 
\item  Hopefully we will prove that $\{ I(x) \mid x \in  E^*\cap S \}$ is consistent.
\end{enumerate}
To achieve this we need some technical results.

The following is the list:
\begin{enumerate}
\item Define the notion of a 2-state argumentation network. Show that such networks are  a special case of abstract argumentation network in the sense that they can be identified by special properties on the attack relation $R$.
\item Show that every argumentation network $(S,R)$ can be embedded in a larger 2 state argumentation network $(S^*,R^*)$ in a critical way. This means that $(S,R)$ preserves all it properties  even though it is part of the larger network. Thus we can say that every argumentation network is  equivalent to a 2-state argumentation network. The equivalence is shown by a linear general transformation.
\item Show that every formula $\Phi$ of monadic predicate logic is classically equivalent to a formula $\bar{\Phi}$ in a standard argumentation friendly form, to be defined and to be convenient for our objective.
\item For every ordinary $(S, R)$, define a 2-state $(S^*, R^*)$ called the associate of $(S, R)$, by 
\[\begin{array}{l}
S^*=S\cup S^\neg \cup \{\top\}\\
R^*=\{(x,\neg x), (\neg x, x) \mid x\in S\} \cup \{\top, y\mid y \mbox{ not attacked in }(S, R)\}\cup R
\end{array}
\]
This is not the embedding described in (2) above.
\item Given an $(S, R)$ form the associate $(S^*, R^*)$. In order to instantiate $(S, R)$ with predicate formulas $x\mapsto \Phi_x$, for $x\in S$, use instead  $(S^*, R^*)$ of (4) above  and instantiate the nodes with standard form formulas, $x\mapsto \bar{\Phi}_x$ and $\neg x \to \neg\bar{\Phi}_x$.

Let $I(z)$ for $z\in S^*$ be the instantiation function. We look at $(S^*, R^*, I)$ and rewrite (transform) the network in an easy and purely syntactical way into a new network $(S^*, R^{**}, I)$ and then take extensions. In this way, we hope, the correct consistent extensions are obtained. Thus the extensions $E^{**}$ thus obtained restricted to $S$ shall be declared as the predicate extensions of $(S, R, I)$.
\end{enumerate}

It is useful to give an example.

\begin{example}\label{509-E4}{\ }
\begin{enumerate}
\item Start with the network $(S, R)$ of Figure \ref{509-F1}a.
\item Transform it to the equivalent Figure \ref{509-F1aa}.

\begin{figure}
\centering
\setlength{\unitlength}{0.00083333in}
\begingroup\makeatletter\ifx\SetFigFont\undefined%
\gdef\SetFigFont#1#2#3#4#5{%
  \reset@font\fontsize{#1}{#2pt}%
  \fontfamily{#3}\fontseries{#4}\fontshape{#5}%
  \selectfont}%
\fi\endgroup%
{\renewcommand{\dashlinestretch}{30}
\begin{picture}(2495,2014)(0,-10)
\put(648,64){\makebox(0,0)[b]{\smash{{\SetFigFont{10}{12.0}{\rmdefault}{\mddefault}{\updefault}$\top$}}}}
\path(1398,1939)(2298,1939)
\blacken\path(2178.000,1909.000)(2298.000,1939.000)(2178.000,1969.000)(2178.000,1909.000)
\path(648,214)(48,964)
\blacken\path(146.389,889.037)(48.000,964.000)(99.537,851.555)(146.389,889.037)
\blacken\path(1276.376,1743.606)(1248.000,1864.000)(1216.381,1744.416)(1276.376,1743.606)
\path(1248,1864)(1238,1124)
\blacken\path(1209.624,1244.394)(1238.000,1124.000)(1269.619,1243.584)(1209.624,1244.394)
\blacken\path(79.376,1738.606)(51.000,1859.000)(19.381,1739.416)(79.376,1738.606)
\path(51,1859)(41,1119)
\blacken\path(12.624,1239.394)(41.000,1119.000)(72.619,1238.584)(12.624,1239.394)
\blacken\path(2479.376,1718.606)(2451.000,1839.000)(2419.381,1719.416)(2479.376,1718.606)
\path(2451,1839)(2441,1099)
\blacken\path(2412.624,1219.394)(2441.000,1099.000)(2472.619,1218.584)(2412.624,1219.394)
\blacken\path(79.393,1589.357)(48.000,1709.000)(19.397,1588.659)(79.393,1589.357)
\path(48,1709)(53,1279)
\blacken\path(21.607,1398.643)(53.000,1279.000)(81.603,1399.341)(21.607,1398.643)
\blacken\path(1283.000,1594.000)(1253.000,1714.000)(1223.000,1594.000)(1283.000,1594.000)
\path(1253,1714)(1253,1289)
\blacken\path(1223.000,1409.000)(1253.000,1289.000)(1283.000,1409.000)(1223.000,1409.000)
\blacken\path(2483.000,1574.000)(2453.000,1694.000)(2423.000,1574.000)(2483.000,1574.000)
\path(2453,1694)(2453,1254)
\blacken\path(2423.000,1374.000)(2453.000,1254.000)(2483.000,1374.000)(2423.000,1374.000)
\path(198,779)(143,839)
\blacken\path(246.202,770.813)(143.000,839.000)(201.972,730.270)(246.202,770.813)
\put(48,1864){\makebox(0,0)[b]{\smash{{\SetFigFont{10}{12.0}{\rmdefault}{\mddefault}{\updefault}$x$}}}}
\put(1248,1864){\makebox(0,0)[b]{\smash{{\SetFigFont{10}{12.0}{\rmdefault}{\mddefault}{\updefault}$y$}}}}
\put(2448,1864){\makebox(0,0)[b]{\smash{{\SetFigFont{10}{12.0}{\rmdefault}{\mddefault}{\updefault}$z$}}}}
\put(48,1039){\makebox(0,0)[b]{\smash{{\SetFigFont{10}{12.0}{\rmdefault}{\mddefault}{\updefault}$\neg x$}}}}
\put(1248,1039){\makebox(0,0)[b]{\smash{{\SetFigFont{10}{12.0}{\rmdefault}{\mddefault}{\updefault}$\neg y$}}}}
\put(2448,1039){\makebox(0,0)[b]{\smash{{\SetFigFont{10}{12.0}{\rmdefault}{\mddefault}{\updefault}$\neg z$}}}}
\path(198,1939)(1098,1939)
\blacken\path(978.000,1909.000)(1098.000,1939.000)(978.000,1969.000)(978.000,1909.000)
\end{picture}
}

\caption{}\label{509-F1aa}
\end{figure}
\item Substitute/instantiate:
\[\begin{array}{l}
I(x) =A(J)\\
I(y) =\exists x A(x)\\
I(z) = A(M)
\end{array}\]
and adjust the figure by adding attacks from any $\neg \exists x P(x)$ onto any $P(y)$ and from any $\forall x P(x)$ onto any $\neg P(y)$.

Note that this is done purely syntactically without taking into consideration any logical meaning of the instantiations formulas.  We get Figure \ref{509-F1bb}.
\item Calculate traditional extensions.  We get the extensions 
\[
\{\neg A(M), \neg \exists x A(x)\}
\]
We declare these as the extensions of $(S, R, I)$.
\end{enumerate}
\end{example}

\begin{remark}\label{509-R5}
The perceptive reader looking at the extensions obtained in Example \ref{509-E4} for the network of Figure \ref{509-F1}b might justifiably ask, what is the intuition behind this?  Our answer is  that in this case there we should not expect much intuition. We took an arbitrary abstract network coming from nowhere and substituted arbitrary predicate formulas into it. What kind of result would you expect, beyond that it is consistent?  Nevertheless, let us look at the result from an AGM revision point of view.  Our original theory was $\{A(J), \neg \exists xA(x), A(M)\}$ and we offer the  revision option
\[
\{\neg  A(M),\neg \exists xA(x)\}.
\]
 This is a maximal consistent sub-theory, and it makes sense if priority is given to   $\neg\exists xA(x)$.   

The real test for our intuition,  however, in the case where    we take a set of arbitrary predicate formulas, is to regard them as a network (i.e. with the empty attack domain) and apply the process to them. Do we get all maximally consistent subsets as the family of all extensions? This is the real test.

To be more precise, let  our starting network $(S,R,I)$ be with $R$ empty, i.e. no attacks, and apply our process to it. See what we get. This is the intuitiveness  test.\footnote{This type of network (i.e. arbitrary $S$, empty attack relation and any instantiation into classical propositional calculus) can be viewed as an Abstract Dialectical Frame of Brewka and Woltran, see \cite{509-2}. }
\end{remark}

Discussion and comparison with the literature will follow in the appropriate later section.


\section{Abstract instantiated argumentation frames (AIAF)}
This is a more formal section which will deal with several types of argumentation frames where the abstract arguments are instantiated by formulas of some logic. We consider classical propositional logic, classical monadic predicate logic without equality and modal logic S5.  We also discuss other possibilitites such as instantiating with Boolean Attack Formations (BAFs, see Appendix C2). We examine our options, then propose a more general theory, and compare with the literature.

\subsection{Instantiating with formulas of propositional logic}
This sub-section is a case study, leading to to the next subsection 2.2, which will give concrete definitions.
\begin{definition}\label{509-2D1}
The classical propositional calculus is built up syntactically as follows:
\begin{enumerate}
\item A set $Q$ of atomic propositions.
\item The classical connectives $\{\neg, \wedge,\vee, \to\}$, which are used to define the traditional notion of a formula, (wff).
\item The traditional notion of ``the formula $\Phi$ of classical propositional logic is consistent'', defined either semantically or proof-theoretically. We do not care how it is done. We just need to use it.
\end{enumerate}
\end{definition}

\begin{definition}\label{509-2D2}{\ }
\begin{enumerate}
\item An abstract argumentation frame has the form $(S, R)$, where $S$ is a set of atomic symbols (we use for $S$ distinct symbols from those we use for $Q$ of Definition \ref{509-2D1}), and $R\subseteq S \times S$ is the attack relation. We also denote $xRy$ by $x\tO y$.
\item We follow Dung \cite{509-15} and define the notion of complete extension $E\subseteq S$ as a set satisfying the following:
\begin{enumerate}
\item $E$ is {\em conflict free}, namely for no $x, y \in E$ do we have $xRy$.
\item $E$ {\em protects} its members, where ``$E$ protects $x$'' means that $\forall  z (zRx\to \exists y \in E(yRz))$.
\item $E$ is {\em complete}, namely if $E$ protects $x$ then $x\in E$, for any $x\in S$.
\end{enumerate}
\item It is well known that complete extensions always exist, though they might be empty.
\end{enumerate}
\end{definition}

\begin{definition}\label{509-2DM1}
Let $(S, R)$ be an argumentation frame. Let $\lambda: S\mapsto \{0, \half, 1\}$ be a labelling function. We say $\lambda$ is a legitimate Caminada labelling iff the following holds:
\begin{enumerate}
\item If $\neg \exists y (yRx)$ then $\lambda (x) =1$
\item If for all $y$ s.t. $yRx$ we have $\lambda (y) =0$, then $\lambda (x) =1$.
\item If for some $y$ such that $yRx$ we have $\lambda (y) =1$ then $\lambda (x) =0$.
\item If (a) for all $y$ such that $uRx$ we have $\lambda (y) < 1$ and (b) for some $y$ such that $yRx$ we have $\lambda (y)=\half$ then $\lambda (x) =\half$.
\end{enumerate}
\end{definition}

\begin{lemma}\label{509-2LM2}
Let $(S, R)$ be an argumentation network. Let $E$ be a complete extension as defined in Definition \ref{509-2D2}. Let $\lambda_E$ be defined by 
\[
\lambda_E (x) =
\left\{
\begin{array}{l}
1 \mbox{ if } x \in E\\
0 \mbox{ if } \exists y \in E yRx\\
\half \mbox{ otherwise}
\end{array}
\right.
\]
Then $\lambda_E$ is a legitimate Caminada labelling.
\end{lemma}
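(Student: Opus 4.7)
The plan is a straightforward case analysis, verifying each of the four clauses in Definition \ref{509-2DM1} directly from the definition of $\lambda_E$ together with the three defining properties of a complete extension (conflict-freeness, protection, and completeness). First I would check that $\lambda_E$ is well-defined, i.e.\ that its three clauses are mutually exclusive: the only possible overlap is between $x\in E$ and $\exists y\in E\,(yRx)$, which is ruled out by conflict-freeness of $E$.

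Next I would dispatch the easy clauses. For clause (1), if $x$ has no attackers, then $E$ vacuously protects $x$, so by completeness $x\in E$, hence $\lambda_E(x)=1$. For clause (2), if every attacker $y$ of $x$ satisfies $\lambda_E(y)=0$, then by definition each such $y$ is attacked by some member of $E$; thus $E$ protects $x$ and completeness again gives $x\in E$, so $\lambda_E(x)=1$. Clause (3) is immediate: if some attacker $y$ of $x$ has $\lambda_E(y)=1$ then $y\in E$, so by definition $\lambda_E(x)=0$.

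Clause (4) is the one that requires a small argument, and I expect it to be the only genuinely non-trivial step. Assume that every attacker $y$ of $x$ satisfies $\lambda_E(y)<1$ and that some attacker $y_0$ of $x$ satisfies $\lambda_E(y_0)=\tfrac12$. I need to rule out the other two possible values of $\lambda_E(x)$. Suppose first $\lambda_E(x)=1$, i.e.\ $x\in E$; then $E$ protects $x$, so for the specific attacker $y_0$ there exists $z\in E$ with $zRy_0$, which forces $\lambda_E(y_0)=0$, contradicting $\lambda_E(y_0)=\tfrac12$. Suppose instead $\lambda_E(x)=0$, so there is $z\in E$ with $zRx$; then $z$ is an attacker of $x$ with $z\in E$, giving $\lambda_E(z)=1$ and contradicting the assumption that all attackers of $x$ have label strictly less than $1$. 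Hence $\lambda_E(x)=\tfrac12$, as required.

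The only subtlety to watch for is making sure each implication uses the correct defining property of a complete extension (protection for clauses (1) and (2), conflict-freeness and protection for clause (4)), and that the three clauses of $\lambda_E$ are disjoint so that the values are unambiguous. No other machinery is needed.
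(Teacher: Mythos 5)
Your proof is correct and follows essentially the same case-by-case verification as the paper's own proof, including the same two-pronged argument for clause (4) (ruling out the values $0$ and $1$ using conflict-freeness/protection). The only additions are the explicit well-definedness check and the justification of clause (1) via vacuous protection plus completeness, both of which the paper leaves implicit.
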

\begin{proof}{\ }
\begin{enumerate}
\item If $\neg \exists y (yRx)$ then $x$ is in $E$ and hence $\lambda_E (x)=1$.
\item If for all $y$ such that $(yRx)$ we have $\lambda _E(y) =0$ then for all such $y$ there is a $z$ such that $zRy$ and $z\in E$. Thus $E$ protects $x$, hence $x\in E$, hence $\lambda (x) =1$.
\item If for some $y$ such that $yRx$ we have $\lambda_E (y) =1$, then $y\in E$ and hence $\lambda_E (x) =0$ by definition.
\item Assume 
\begin{enumerate}
\item for all $y$ such that $yRx$ we have $\lambda_E (y) < 1$
\item for some $y_0, y_0Rx$ we have $\lambda_E (y_0) =\half$. 
\end{enumerate}
We want to show that $\lambda (x) =\half$.

From (a) we get that $\neg \exists y \in E (yRx)$. Thus $\lambda_E (x) \neq 0$. 

We show that $\lambda _E (x) \neq 1$, i.e.\ $x\not\in E$.  From (b) above, $y_0Rx$ and $\lambda (y_0) =\half$. Hence $\neg \exists z \in E (zRy_0)$. This means that $x$ is not protected by $E$ and hence $x \not\in E$. 
\end{enumerate}
\end{proof}

\begin{lemma}\label{509-2LM3}
Let $\lambda$ be a legitimate Caminada labelling and let $E_\lambda =\{x|\lambda (x) = 1\}$, then $E_\lambda$ is a complete extension.
\end{lemma}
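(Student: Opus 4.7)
The plan is to verify directly each of the three conditions in Definition 509-2D2 for $E_\lambda=\{x\mid\lambda(x)=1\}$, using the four clauses of the legitimate Caminada labelling. Before starting, I would record one convenient derived fact: the only clause of Definition 509-2DM1 that can produce $\lambda(z)=0$ is clause (3). Indeed, if $z$ had no attacker, clause (1) would force $\lambda(z)=1$; if all attackers of $z$ satisfied $\lambda(y)=0$, clause (2) would force $\lambda(z)=1$; and if all attackers satisfied $\lambda(y)<1$ with at least one $\half$, clause (4) would force $\lambda(z)=\half$. So the only remaining possibility for $\lambda(z)=0$ is that some $y$ with $yRz$ has $\lambda(y)=1$, i.e.\ $y\in E_\lambda$. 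This tiny observation is the workhorse of the proof.

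First I would check conflict-freeness: if $x,y\in E_\lambda$ and $xRy$, then $\lambda(x)=1$ forces $\lambda(y)=0$ by clause (3), contradicting $\lambda(y)=1$. Next, for the protection property, take any $x\in E_\lambda$ and any attacker $z$ of $x$; I want to show some $y\in E_\lambda$ attacks $z$. Since $\lambda(x)=1$, clause (3) rules out $\lambda(z)=1$, and if $\lambda(z)=\half$ then clause (4) (applied at $x$) would force $\lambda(x)\le\half$, again a contradiction; hence $\lambda(z)=0$. By the derived fact above, some $y$ with $yRz$ has $\lambda(y)=1$, so $y\in E_\lambda$, as required.

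Finally, for completeness, suppose $E_\lambda$ protects $x$; I must show $\lambda(x)=1$. For every attacker $z$ of $x$, there is by hypothesis some $y\in E_\lambda$ with $yRz$, and then clause (3) gives $\lambda(z)=0$. Thus every attacker of $x$ is labelled $0$, and clause (2) (or clause (1) in case $x$ has no attacker) delivers $\lambda(x)=1$, so $x\in E_\lambda$.

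I do not expect a serious obstacle here; the only subtle point is the derivation that $\lambda(z)=0$ can only arise via clause (3), which is really just a case analysis on the four clauses of Definition 509-2DM1 and should be isolated as a one-line remark so that the protection and completeness steps each reduce to a single application of it together with clause (3) or clause (2).
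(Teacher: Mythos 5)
Your proof is correct and follows essentially the same route as the paper's: conflict-freeness from clause (3), protection by showing every attacker of an $E_\lambda$-member must be labelled $0$ and hence (by the labelling rules) have an attacker labelled $1$, and completeness from clause (2). The only difference is presentational — you isolate as an explicit preliminary fact that $\lambda(z)=0$ can only arise via clause (3), which the paper uses implicitly in its step 3; this makes the argument slightly cleaner but is not a different approach.
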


\begin{proof}
We show $E_\lambda$ is a complete extension:
\begin{enumerate}
\item If $\neg \exists y (xRy)$ then $\lambda (x) =1$ and so $x\in E_\lambda$.
\item $E_\lambda$ is conflict fee because if $\lambda (x) =1$ and $xRy$ then $\lambda (y) =0$ and $y \not\in E_\lambda$.
\item If $\lambda (x) =1$ and $yRx$ then $\lambda (y) < 1$. We must also have for such $y$ that $\lambda (y) =0$ for otherwise we would get $\lambda (x) =\half$.  But $\lambda (y) =0$ for all such $y$ means that $\forall y (yRx \to \exists z (zRy\wedge \lambda (z) =1)$. This means that $E_\lambda$ protects all of its members.
\item Suppose $E_\lambda$ protects $x$. This means that $E_\lambda$ attacks all of the attackers of $x$. This means $\forall y (y Rx\to \lambda (y) =0)$ Therefore $\lambda(x) =1$ and so $x\in E_\lambda$.
\end{enumerate}
\end{proof}

\begin{definition}\label{509-2DM4}
Let $(S_i, R_i)$ for $i=1,2$ be argumentation frames. Let $I_{1,2}$ be a function from $S_1$ to $S_2$. We form the network $(S_{1,2}, R_{1,2})$ called {\em the abstract instantiation of } $(S_1, R_1)$ {\em by} $(S_2, R_2)$ {\em using} $I_{1,2}$ as follows:

\[\begin{array}{lcl}
S_{1,2} &=&\{I_{1,2} (x) | x\in S_1\}\\
R_{1,2} &=&\{(y,z)|\mbox{for some } a, b\in S_1, I_{2,3} (a) = y\mbox{ and } I_{1,2} (b) =z\\
&&\mbox{and } (y,z) \in R_1\}\cup R_2\upharpoonright S_{1,2}.
\end{array}
\]
\end{definition}

\begin{example}\label{509-2EM5}{\ }
\begin{enumerate}
\item Let \BL\ be a logical system. Let $\Phi, \Psi$ be two wffs of \BL.  Let $\rho (\Phi, \Psi)$ mean that $\Phi$, when ``added'' to $\Psi$ causes ``incompatibility''.  (We are not saying ``$\{\Phi, \Psi\}$ is inconsistent'' because we do not say anything about the logic and $\rho$ may not even be symmetrical. In some logics $\Phi \circledwedge \Psi$ is not the same as $\Psi\circledwedge \Phi$). Let WFF(\BL) be all the well formed formulas of the logic. We can consider the network (WFF$(\BL),\rho))$.
\item Given $(S, R)$ and a logic \BL, we can instantiate it by (WFF(\BL),$\rho$) as defined in Definition \ref{509-2DM4}. This will include classical logic instantiation.

For logics \BL\ which have a negation symbol $\neg$ (e.g.\ classical modal, monadic, or intuitionistic logics) we can require $(S, R)$ to be a 2-state network as in Definition \ref{509-D2} and require the instantiation function $I: S\mapsto \mbox{WFF}(\BL)$ to satisfy
\[
I(\neg x) =\neg I(x).
\] 
\end{enumerate}
\end{example}

\begin{definition}\label{509-2D3}{\ }
\begin{enumerate}
\item An abstract instantiated Boolean argumentation frame $(B-AIAF)$ has the form $(S, R, I)$ where $(S, R)$ is an abstract argumentation frame and $I$ is a function, giving for each $x\in S$, a formula $\Phi_x(\{q_1\comma q_n\})$ of the classical propositional calculus.
\item We write 
\[
I(x)=\Phi_x(\{q_1\comma q_m\})
\]
indicating that the classical propositional atoms $\{q_1\comma q_n\}$ are exactly those that appear in $I(x)=\Phi_x$.

The symbols $\{q_1\comma q_n\}$ are distinct from the atomic symbols of $S$.
\end{enumerate}
\end{definition}

We want to view $(S, R, I)$ as a more general network than $(S, R)$ and would like to define a sensible notion of complete extensions for it. We begin by explaining our strategy:

We are given an instantiated system of the form $(S,R,I)$, where $I$ is a function associating  entities  of the form \Be\ from some space \BE. We assume these entities can interact among themselves and that as a result of the interaction we can get truth values in, say, valued in the unit interval $[0,1]$.\\
{\bf Strategy 1:} Regard the attack relation $R$ as stimulating interaction among the instantiated entities \Be\ and use the interaction to obtain values in [0,1] for the nodes of $S$.\footnote{There are more details in the beginning of appendix E1. See Example \ref{509-EE0} and the paragraph preceding it.}\\
{\bf Strategy 2:} Use the relationships among the entities \BE\ to change $(S,R,I)$ into a new $(S^*,R^*,I^*)$ and proceed with Strategy 1 for the new system.

There are several options and to explain the differences we need to be very precise.

\begin{definition}\label{509-2D4} {\ }
\begin{enumerate}
\item Let $S$ be a set of arguments.  Let $Q_S$ be a corresponding set of atomic propositions of the classical propositional logic of the form 
\[
Q_S =\{q_x| x\in S\}
\]
where $q_x$ are distinct symbols, i.e., $x\neq y \To q_x\neq q_y$ and $S\cap Q_X=\varnothing$.  Note that when there is no possibility of confusion, we abuse notation and write ``$x$'' in place of ``$q_x$''.
\item Let $I_{id}$ be the function $I_{id}(x) =q_x$, or, by abuse of notation, $I(x)=x$.
\end{enumerate}
\end{definition}

Our objective is to define the notion of complete extensions for $B-AIAF$s of the form $(S, R, I)$. There are several main views we can take:

\paragraph{View 1: The $(E<I)$ view.}
First take a traditional complete extension $E$ of $(S, R)$ and then instantiate the element of $E$.  We get a set $T_E$ of classical propositional formulas.  We need this set to be logically consistent.

This is what we did with the system of Figures \ref{509-F1}a and \ref{509-F1}b.

We needed the remedies discussed in Section 1, which essentially abandoned this view in favour of the next view $(I < E)$ which says, first instantiate then take extensions.   We shall discuss this view next.

\paragraph{View 2: The $(I < E)$ view.} 
We follow Strategy 1 here and keep $(S,R,I)$ intact as is. We view $R$ as stimulating interactions among the instantiating entities and try to extract numbers from the interaction. We use the Equational approach to do that.
 This view instantiates first and then takes extensions. Obviously we need to make use of the instantiation when we consider how to define the extensions. Let us explain our options using our experience with the examples we have from Section 1.

\begin{figure}
\centering
\setlength{\unitlength}{0.00083333in}
\begingroup\makeatletter\ifx\SetFigFont\undefined%
\gdef\SetFigFont#1#2#3#4#5{%
  \reset@font\fontsize{#1}{#2pt}%
  \fontfamily{#3}\fontseries{#4}\fontshape{#5}%
  \selectfont}%
\fi\endgroup%
{\renewcommand{\dashlinestretch}{30}
\begin{picture}(2130,2026)(0,-10)
\put(1065,64){\makebox(0,0)[b]{\smash{{\SetFigFont{10}{12.0}{\rmdefault}{\mddefault}{\updefault}$y_1\comma y_n$ are all the attackers of $x$.}}}}
\path(2115,1789)(1065,664)
\blacken\path(1124.947,772.196)(1065.000,664.000)(1168.810,731.257)(1124.947,772.196)
\path(860,889)(955,774)
\blacken\path(855.445,847.409)(955.000,774.000)(901.703,885.622)(855.445,847.409)
\path(1275,894)(1180,794)
\blacken\path(1240.900,901.662)(1180.000,794.000)(1284.400,860.337)(1240.900,901.662)
\put(15,1864){\makebox(0,0)[b]{\smash{{\SetFigFont{10}{12.0}{\rmdefault}{\mddefault}{\updefault}$y_1:I(y_1)$}}}}
\put(2115,1864){\makebox(0,0)[b]{\smash{{\SetFigFont{10}{12.0}{\rmdefault}{\mddefault}{\updefault}$y_n:I(y_n)$}}}}
\put(1065,1864){\makebox(0,0)[b]{\smash{{\SetFigFont{10}{12.0}{\rmdefault}{\mddefault}{\updefault}$\comma$}}}}
\put(1065,514){\makebox(0,0)[b]{\smash{{\SetFigFont{10}{12.0}{\rmdefault}{\mddefault}{\updefault}$x:I(x)$}}}}
\path(15,1789)(1065,664)
\blacken\path(961.190,731.257)(1065.000,664.000)(1005.053,772.196)(961.190,731.257)
\end{picture}
}
\caption{}\label{509-2F6}
\end{figure}

Consider $(S,R,I)$ and consider the situation in Figure \ref{509-2F6}.  The nodes $\{x, y_1\comma y_n\}$ are all from $(S, R)$, where $y_1\comma y_n$ are all the attackers of $x$. The figure shows also the instantiation $I(x)$ and $I(y_1)\comma I(y_n)$.

We need to give meaning to the statement $(\sharp 1)$
$$
I(y_1)\comma I(y_n)\mbox{ attack } I(x)
\eqno (\sharp 1)
$$
The basic meaning of the atomic statement 
$$y_1\comma y_n\mbox{ attack }x
\eqno (\sharp 1 \mbox{ atomic})$$ 
is that 

\begin{center}
$x=1$ (``in'') iff 
all $y_1\comma y_n$ are equal 
0 = (``out'').
\end{center}

We consider several options for understanding such attacks.

\paragraph{View 2 - option 1. The equational approach.}
We follow Strategy 2 here and rewrite $(S,R)$ in a more convenient form, using the structure of the instantiated entities. In this case we simplify/eliminate negation, by moving to a two state networks.  This view was proposed in my paper on the equational approach to contrary to duty obligations \cite{509-1}.  The first section of that paper is general theory and regards Figure \ref{509-2F6} as generating the Boolean (or real valued $[0,1]$) equation 
\[
Eq(x): I(x) \leftrightarrow [\bigwedge^n_{i=1} \neg I(y_i)].
\]
The system of equations $\{Eq(x) | x\in S\}$ may or may not have a solution in the space $\{0, \half, 1\}$.

Any such solution $\Bf$ is considered a complete extension for $(S, R, I)$.

Note that the solution \Bf\ gives values to the atoms of the logic, i.e.\ it is a 3-valued model of propositional logic. These values can be propagated to the formulas of the form $I(x), x \in S$, and the value of $I(x)$ under \Bf\  can be viewed as the argumentation value of $x$ under the complete extension \Bf.

For example, suppose our network contains  Figure \ref{509-2F6} and that  we have $I(y_i) = \neg q$ and $I(x) = q$. Then the equation for node $x$  is $q =\neg q$ and  therefore any solution  \Bf\  to the overall system of equations (for the network in which Figure \ref{509-2F6} resides)  will have to  give  the three valued assignment  $q = \half$  to $q$. The value of $\neg q$ is then $(1- \half) = \half$ and hence  the value of $I(x) =\half$.

Therefore the node  $x$ is considered undecided in $(S,R,I)$ under the solution (complete extension) \Bf.

This is a sweeping general option. Let us see what it does to Figure \ref{509-F1c}.

We have the following equations:
\begin{itemize}
\item $A(J) =1$
\item $A(J)\vee A(M) =\neg A(J)$
\item $A(M)=\neg(A(J)\vee A(M))$
\end{itemize}
There is no solution to these equations. We are not surprised. The equational approach agrees with and generalises the traditional approach and so the inconsistencies and problems remain.

We need the remedies hinted at in Section 1.

\paragraph{View 2 - option 2. the two state equational approach.}
We follow Strategy 2 here and rewrite $(S,R)$ in a more convenient form, using the structure of the instantiated entities. In this case we simplify/eliminate negation, by moving to a two state networks.

This is the approach we adopted in Example \ref{509-E4}, executed within the equational framework.  To show what it does, we modify first Figure \ref{509-F1c} into a two-state associated figure and then use equations. We get Figure \ref{509-2F7}.

\begin{figure}[hb]
\centering
\setlength{\unitlength}{0.00083333in}
\begingroup\makeatletter\ifx\SetFigFont\undefined%
\gdef\SetFigFont#1#2#3#4#5{%
  \reset@font\fontsize{#1}{#2pt}%
  \fontfamily{#3}\fontseries{#4}\fontshape{#5}%
  \selectfont}%
\fi\endgroup%
{\renewcommand{\dashlinestretch}{30}
\begin{picture}(2934,2389)(0,-10)
\put(2892,889){\makebox(0,0)[b]{\smash{{\SetFigFont{10}{12.0}{\rmdefault}{\mddefault}{\updefault}$\neg z: \neg A(M)$}}}}
\drawline(1392,1939)(1392,1939)
\drawline(1392,1939)(1392,1939)
\blacken\path(1422.000,1819.000)(1392.000,1939.000)(1362.000,1819.000)(1422.000,1819.000)
\path(1392,1939)(1392,1339)
\blacken\path(1362.000,1459.000)(1392.000,1339.000)(1422.000,1459.000)(1362.000,1459.000)
\blacken\path(72.000,1969.000)(42.000,2089.000)(12.000,1969.000)(72.000,1969.000)
\path(42,2089)(42,1189)
\blacken\path(12.000,1309.000)(42.000,1189.000)(72.000,1309.000)(12.000,1309.000)
\blacken\path(2922.000,1969.000)(2892.000,2089.000)(2862.000,1969.000)(2922.000,1969.000)
\path(2892,2089)(2892,1189)
\blacken\path(2862.000,1309.000)(2892.000,1189.000)(2922.000,1309.000)(2862.000,1309.000)
\blacken\path(72.000,1819.000)(42.000,1939.000)(12.000,1819.000)(72.000,1819.000)
\path(42,1939)(42,1339)
\blacken\path(12.000,1459.000)(42.000,1339.000)(72.000,1459.000)(12.000,1459.000)
\blacken\path(2922.000,1819.000)(2892.000,1939.000)(2862.000,1819.000)(2922.000,1819.000)
\path(2892,1939)(2892,1339)
\blacken\path(2862.000,1459.000)(2892.000,1339.000)(2922.000,1459.000)(2862.000,1459.000)
\blacken\path(1479.464,1276.464)(1392.000,1189.000)(1510.334,1225.015)(1479.464,1276.464)
\path(1392,1189)(2892,2089)
\blacken\path(2804.536,2001.536)(2892.000,2089.000)(2773.666,2052.985)(2804.536,2001.536)
\blacken\path(1275.513,1230.603)(1392.000,1189.000)(1308.795,1280.526)(1275.513,1230.603)
\path(1392,1189)(42,2089)
\blacken\path(158.487,2047.397)(42.000,2089.000)(125.205,1997.474)(158.487,2047.397)
\blacken\path(1145.248,1324.852)(1262.000,1284.000)(1178.207,1374.988)(1145.248,1324.852)
\path(1262,1284)(182,1994)
\blacken\path(298.752,1953.148)(182.000,1994.000)(265.793,1903.012)(298.752,1953.148)
\blacken\path(1624.464,1361.464)(1537.000,1274.000)(1655.334,1310.015)(1624.464,1361.464)
\path(1537,1274)(2762,2009)
\blacken\path(2674.536,1921.536)(2762.000,2009.000)(2643.666,1972.985)(2674.536,1921.536)
\path(787,204)(27,839)
\blacken\path(138.322,785.081)(27.000,839.000)(99.852,739.037)(138.322,785.081)
\path(342,2314)(792,2314)
\blacken\path(672.000,2284.000)(792.000,2314.000)(672.000,2344.000)(672.000,2284.000)
\path(1992,2314)(2517,2314)
\blacken\path(2397.000,2284.000)(2517.000,2314.000)(2397.000,2344.000)(2397.000,2284.000)
\path(567,2314)(642,2314)
\blacken\path(522.000,2284.000)(642.000,2314.000)(522.000,2344.000)(522.000,2284.000)
\path(2292,2314)(2367,2314)
\blacken\path(2247.000,2284.000)(2367.000,2314.000)(2247.000,2344.000)(2247.000,2284.000)
\put(42,2239){\makebox(0,0)[b]{\smash{{\SetFigFont{10}{12.0}{\rmdefault}{\mddefault}{\updefault}$x:A(J)$}}}}
\put(1392,2239){\makebox(0,0)[b]{\smash{{\SetFigFont{10}{12.0}{\rmdefault}{\mddefault}{\updefault}$y:A(J)\vee A(M)$}}}}
\put(2892,2239){\makebox(0,0)[b]{\smash{{\SetFigFont{10}{12.0}{\rmdefault}{\mddefault}{\updefault}$z:A(M)$}}}}
\put(1392,889){\makebox(0,0)[b]{\smash{{\SetFigFont{10}{12.0}{\rmdefault}{\mddefault}{\updefault}$\neg y: \neg A(J)\wedge\neg A(M)$}}}}
\put(42,889){\makebox(0,0)[b]{\smash{{\SetFigFont{10}{12.0}{\rmdefault}{\mddefault}{\updefault}$\neg x:\neg A(J)$}}}}
\put(792,64){\makebox(0,0)[b]{\smash{{\SetFigFont{10}{12.0}{\rmdefault}{\mddefault}{\updefault}$\top$}}}}
\blacken\path(1422.000,1969.000)(1392.000,2089.000)(1362.000,1969.000)(1422.000,1969.000)
\path(1392,2089)(1392,1189)
\blacken\path(1362.000,1309.000)(1392.000,1189.000)(1422.000,1309.000)(1362.000,1309.000)
\end{picture}
}

\caption{}\label{509-2F7}
\end{figure}

The equations are
\begin{itemize}
\item $\top = 1$
\item $\neg A(J)=\bot\wedge\neg A(J)$
\item $A(J) =\neg\neg A(J)\wedge\neg(\neg A(J)\wedge\neg A(M))$.
\item $\neg A(J) \wedge\neg A(M)=\neg (A(J)\vee A(M))$
\item $A(J)\vee A(M) =\neg A(J)\wedge\neg(\neg A(J)\wedge\neg A(M))$
\item $A(M)=\neg (A(J)\vee A(M))\wedge\neg\neg A(M)\wedge\neg (\neg A()\wedge\neg A(M)$
\item $\neg A(M = \neg A(M)$.
\end{itemize}

There is one solution $A(J) =A(M)=0$.

We need to explain how and why, for example, $\neg y: \neg A(J)\wedge\neg A(M)$ is attacking $x:A(J)$ and $z:A(M)$ and vice versae.  We added these attacks. The reason for this addition should be syntactical, not because we use logic. Formal definitions need to be given for the syntactical pattern matching.

To explain how this is done, let us do again the analysis of Figure \ref{509-F1c}.  This time we write all wffs in disjunctive normal form. We get Figure \ref{509-2F8}.

\begin{figure}[ht]
\centering
\setlength{\unitlength}{0.0007in}
\begingroup\makeatletter\ifx\SetFigFont\undefined%
\gdef\SetFigFont#1#2#3#4#5{%
  \reset@font\fontsize{#1}{#2pt}%
  \fontfamily{#3}\fontseries{#4}\fontshape{#5}%
  \selectfont}%
\fi\endgroup%
{\renewcommand{\dashlinestretch}{30}
\begin{picture}(957,1410)(1000,-10)
\path(915,435)(915,360)
\blacken\path(885.000,480.000)(915.000,360.000)(945.000,480.000)(885.000,480.000)
\path(915,585)(915,210)
\blacken\path(885.000,330.000)(915.000,210.000)(945.000,330.000)(885.000,330.000)
\path(915,1035)(915,960)
\blacken\path(885.000,1080.000)(915.000,960.000)(945.000,1080.000)(885.000,1080.000)
\path(915,1185)(915,810)
\blacken\path(885.000,930.000)(915.000,810.000)(945.000,930.000)(885.000,930.000)
\put(15,1260){\makebox(0,0)[lb]{\smash{{\SetFigFont{10}{12.0}{\rmdefault}{\mddefault}{\updefault}$x: (A(J)\wedge A(M))\vee (A(J)\wedge\neg A(M))$}}}}
\put(15,60){\makebox(0,0)[lb]{\smash{{\SetFigFont{10}{12.0}{\rmdefault}{\mddefault}{\updefault}$z: (A(J)\wedge A(M)) \vee(\neg A(J)\wedge A(M))$}}}}
\put(15,660){\makebox(0,0)[lb]{\smash{{\SetFigFont{10}{12.0}{\rmdefault}{\mddefault}{\updefault}$y: (A(J)\wedge A(M))\vee (A(J) \wedge\neg A(M)) \vee(\neg A(J) \wedge A(M))$}}}}
\end{picture}
}
\caption{}\label{509-2F8}
\end{figure}

Rewriting as a two state network we get Figure \ref{509-2F9}

 \begin{figure}[ht]
\centering
\setlength{\unitlength}{0.00070in}
\begingroup\makeatletter\ifx\SetFigFont\undefined%
\gdef\SetFigFont#1#2#3#4#5{%
  \reset@font\fontsize{#1}{#2pt}%
  \fontfamily{#3}\fontseries{#4}\fontshape{#5}%
  \selectfont}%
\fi\endgroup%
{\renewcommand{\dashlinestretch}{30}
\begin{picture}(1932,2272)(0,-10)
\put(1917,46){\makebox(0,0)[lb]{\smash{{\SetFigFont{8}{9.6}{\rmdefault}{\mddefault}{\updefault}$\neg z:(A(J)\wedge\neg A(M))\vee (\neg A(J)\wedge\neg AM))$}}}}
\blacken\path(1287.000,1501.000)(1167.000,1471.000)(1287.000,1441.000)(1287.000,1501.000)
\path(1167,1471)(1842,1471)
\blacken\path(1722.000,1441.000)(1842.000,1471.000)(1722.000,1501.000)(1722.000,1441.000)
\blacken\path(1287.000,826.000)(1167.000,796.000)(1287.000,766.000)(1287.000,826.000)
\path(1167,796)(1842,796)
\blacken\path(1722.000,766.000)(1842.000,796.000)(1722.000,826.000)(1722.000,766.000)
\blacken\path(1287.000,151.000)(1167.000,121.000)(1287.000,91.000)(1287.000,151.000)
\path(1167,121)(1842,121)
\blacken\path(1722.000,91.000)(1842.000,121.000)(1722.000,151.000)(1722.000,91.000)
\blacken\path(1437.000,1501.000)(1317.000,1471.000)(1437.000,1441.000)(1437.000,1501.000)
\path(1317,1471)(1692,1471)
\blacken\path(1572.000,1441.000)(1692.000,1471.000)(1572.000,1501.000)(1572.000,1441.000)
\blacken\path(1437.000,826.000)(1317.000,796.000)(1437.000,766.000)(1437.000,826.000)
\path(1317,796)(1692,796)
\blacken\path(1572.000,766.000)(1692.000,796.000)(1572.000,826.000)(1572.000,766.000)
\blacken\path(1437.000,151.000)(1317.000,121.000)(1437.000,91.000)(1437.000,151.000)
\path(1317,121)(1692,121)
\blacken\path(1572.000,91.000)(1692.000,121.000)(1572.000,151.000)(1572.000,91.000)
\path(42,1396)(42,871)
\blacken\path(12.000,991.000)(42.000,871.000)(72.000,991.000)(12.000,991.000)
\path(42,1171)(42,1021)
\blacken\path(12.000,1141.000)(42.000,1021.000)(72.000,1141.000)(12.000,1141.000)
\path(292,1781)(172,1746)
\blacken\path(278.800,1808.400)(172.000,1746.000)(295.600,1750.800)(278.800,1808.400)
\put(1092,2146){\makebox(0,0)[b]{\smash{{\SetFigFont{8}{9.6}{\rmdefault}{\mddefault}{\updefault}$\top$}}}}
\put(1092,1471){\makebox(0,0)[rb]{\smash{{\SetFigFont{8}{9.6}{\rmdefault}{\mddefault}{\updefault}$x:(A(J)\wedge A(M))\vee(A(J)\wedge\neg A(M))$}}}}
\put(1092,721){\makebox(0,0)[rb]{\smash{{\SetFigFont{8}{9.6}{\rmdefault}{\mddefault}{\updefault}$y:(A(J)\wedge A(M))\vee (A(J)\wedge \neg A(M))\vee(\neg A(J)\wedge A(M))$}}}}
\put(1092,46){\makebox(0,0)[rb]{\smash{{\SetFigFont{8}{9.6}{\rmdefault}{\mddefault}{\updefault}$z:(A(J)\wedge A(M))\vee(\neg A(J)\wedge A(M))$}}}}
\put(1917,721){\makebox(0,0)[lb]{\smash{{\SetFigFont{8}{9.6}{\rmdefault}{\mddefault}{\updefault}$\neg y: \neg A(K)\wedge\neg A(M)$}}}}
\put(1917,1471){\makebox(0,0)[lb]{\smash{{\SetFigFont{8}{9.6}{\rmdefault}{\mddefault}{\updefault}$\neg x: (\neg A(J)\wedge A(M))\vee (\neg A(K)\wedge\neg A(M))$}}}}
\path(1092,2071)(42,1696)
\blacken\path(144.919,1764.613)(42.000,1696.000)(165.099,1708.108)(144.919,1764.613)
\end{picture}
}
\caption{}\label{509-2F9}
\end{figure}

We can now add attacks to Figure \ref{509-2F9} using pattern recognition.
\begin{itemize}
\item $e_1: \bigvee \alpha_i$ attacks $e_2: \bigvee \beta_j$ if they don't have any $\gamma$ in common, i.e.\ it is not the case that for some $i, j,\gamma=\alpha_i=\beta_j$.
\end{itemize}
We get Figure \ref{509-2F10}. Note that $\top $ is an exception to this rule.

\begin{figure}
\centering
\setlength{\unitlength}{0.00083333in}
\begingroup\makeatletter\ifx\SetFigFont\undefined%
\gdef\SetFigFont#1#2#3#4#5{%
  \reset@font\fontsize{#1}{#2pt}%
  \fontfamily{#3}\fontseries{#4}\fontshape{#5}%
  \selectfont}%
\fi\endgroup%
{\renewcommand{\dashlinestretch}{30}
\begin{picture}(2307,2397)(0,-10)
\put(42,60){\makebox(0,0)[b]{\smash{{\SetFigFont{10}{12.0}{\rmdefault}{\mddefault}{\updefault}$z$}}}}
\path(42,1485)(42,1035)
\blacken\path(12.000,1155.000)(42.000,1035.000)(72.000,1155.000)(12.000,1155.000)
\path(42,735)(42,285)
\blacken\path(12.000,405.000)(42.000,285.000)(72.000,405.000)(12.000,405.000)
\blacken\path(312.000,165.000)(192.000,135.000)(312.000,105.000)(312.000,165.000)
\path(192,135)(2142,135)
\blacken\path(2022.000,105.000)(2142.000,135.000)(2022.000,165.000)(2022.000,105.000)
\blacken\path(312.000,915.000)(192.000,885.000)(312.000,855.000)(312.000,915.000)
\path(192,885)(2142,885)
\blacken\path(2022.000,855.000)(2142.000,885.000)(2022.000,915.000)(2022.000,855.000)
\blacken\path(2019.229,900.077)(2142.000,885.000)(2040.768,956.078)(2019.229,900.077)
\path(2142,885)(192,1635)
\blacken\path(314.771,1619.923)(192.000,1635.000)(293.232,1563.922)(314.771,1619.923)
\blacken\path(312.000,1665.000)(192.000,1635.000)(312.000,1605.000)(312.000,1665.000)
\path(192,1635)(2142,1635)
\blacken\path(2022.000,1605.000)(2142.000,1635.000)(2022.000,1665.000)(2022.000,1605.000)
\blacken\path(2040.768,813.922)(2142.000,885.000)(2019.229,869.923)(2040.768,813.922)
\path(2142,885)(192,135)
\blacken\path(293.232,206.078)(192.000,135.000)(314.771,150.077)(293.232,206.078)
\blacken\path(467.000,165.000)(347.000,135.000)(467.000,105.000)(467.000,165.000)
\path(347,135)(1997,135)
\blacken\path(1877.000,105.000)(1997.000,135.000)(1877.000,165.000)(1877.000,105.000)
\blacken\path(1885.524,749.271)(1987.000,820.000)(1864.178,805.346)(1885.524,749.271)
\path(1987,820)(332,190)
\blacken\path(433.476,260.729)(332.000,190.000)(454.822,204.654)(433.476,260.729)
\blacken\path(476.626,906.457)(357.000,875.000)(477.356,846.461)(476.626,906.457)
\path(357,875)(2002,895)
\blacken\path(1882.374,863.543)(2002.000,895.000)(1881.644,923.539)(1882.374,863.543)
\blacken\path(1864.272,960.426)(1987.000,945.000)(1885.970,1016.365)(1864.272,960.426)
\path(1987,945)(337,1585)
\blacken\path(459.728,1569.574)(337.000,1585.000)(438.030,1513.635)(459.728,1569.574)
\path(1832,1640)(1982,1640)
\blacken\path(1862.000,1610.000)(1982.000,1640.000)(1862.000,1670.000)(1862.000,1610.000)
\path(472,1645)(337,1645)
\blacken\path(457.000,1675.000)(337.000,1645.000)(457.000,1615.000)(457.000,1675.000)
\path(422,1760)(322,1710)
\blacken\path(415.915,1790.498)(322.000,1710.000)(442.748,1736.833)(415.915,1790.498)
\path(42,1290)(37,1200)
\blacken\path(13.703,1321.479)(37.000,1200.000)(73.610,1318.151)(13.703,1321.479)
\path(52,540)(52,450)
\blacken\path(22.000,570.000)(52.000,450.000)(82.000,570.000)(22.000,570.000)
\put(1092,2235){\makebox(0,0)[b]{\smash{{\SetFigFont{10}{12.0}{\rmdefault}{\mddefault}{\updefault}$\top$}}}}
\put(42,1560){\makebox(0,0)[b]{\smash{{\SetFigFont{10}{12.0}{\rmdefault}{\mddefault}{\updefault}$x$}}}}
\put(42,810){\makebox(0,0)[b]{\smash{{\SetFigFont{10}{12.0}{\rmdefault}{\mddefault}{\updefault}$y$}}}}
\put(2292,1560){\makebox(0,0)[lb]{\smash{{\SetFigFont{10}{12.0}{\rmdefault}{\mddefault}{\updefault}$\neg x$}}}}
\put(2292,810){\makebox(0,0)[lb]{\smash{{\SetFigFont{10}{12.0}{\rmdefault}{\mddefault}{\updefault}$\neg y$}}}}
\put(2292,60){\makebox(0,0)[lb]{\smash{{\SetFigFont{10}{12.0}{\rmdefault}{\mddefault}{\updefault}$\neg z$}}}}
\path(1092,2085)(192,1635)
\blacken\path(285.915,1715.498)(192.000,1635.000)(312.748,1661.833)(285.915,1715.498)
\end{picture}
}
\caption{}\label{509-2F10}
\end{figure}

The perceptive reader might complain that we are nevertheless using logic, i.e.\ we are using resolution theorem proving. The answer is that we are not. Resolution is a discipline of sequencing various pattern matchings. Just comparing two disjunctive normal forms does not, in itself, make a resolution theorem prover.

So the steps in View 2, option 2 are as follows:
\begin{enumerate}
\item Start with $(S, R, I)$.
\item Take the associated $(S^*, R^*, I^*)$ explained above (formal definitions to come later).
\item Use the equational approach to find the extensions.
\end{enumerate}

\paragraph{View 3. Direct computation approach.} In this approach we develop  the concept of semantics and extensions directly on the instantiated network by translating it (with the help of additional arguments) into traditional Dung networks or into a modified/generalised such network. This requires as a by product the translation of the entities \BE\ into abstract argumentation, either directly, or indirectly.

It also means that we are turning the instantiation problem into a fibring problem in the sense of \cite{509-3}. 

\subsection{Concrete classical propositional instantiations}
We are going to give a progression of challenges for instantiations from the classical propositional calculus.  Many of these instantiations have been dealt with in the Appendices.  Here we summarise the big picture.

\subsubsection*{Challenge 1: Instantiation with $\top$}
This has been defined and given semantics in Appendix B.

\subsubsection*{Challenge 2: Instantiate with conjunctions of atomic propositions}
Given a traditional network $(S, R)$ we instantiate with a function $I$ defined on $S$, giving for each $x\in S$ a conjunction $\Psi_x$ of atomic propositions of the form $\Psi_x=\bigwedge^{n(x)}_{i=1} q^x_i$.  We also write $\Psi_x$ as a set $\{q^x_1\comma q^x_{n(x)}\}$.  The basic geometrical position we get is as in Figure \ref{509-2F11}, which should be compared with Figures \ref{509-2F6} and \ref{509-BBF3}.

\begin{figure}
\centering
\setlength{\unitlength}{0.00083333in}
\begingroup\makeatletter\ifx\SetFigFont\undefined%
\gdef\SetFigFont#1#2#3#4#5{%
  \reset@font\fontsize{#1}{#2pt}%
  \fontfamily{#3}\fontseries{#4}\fontshape{#5}%
  \selectfont}%
\fi\endgroup%
{\renewcommand{\dashlinestretch}{30}
\begin{picture}(2430,2101)(0,-10)
\put(2415,64){\makebox(0,0)[b]{\smash{{\SetFigFont{10}{12.0}{\rmdefault}{\mddefault}{\updefault}$z_k:\Psi_{z_k}$}}}}
\path(15,1864)(1290,1114)
\blacken\path(1171.357,1148.984)(1290.000,1114.000)(1201.778,1200.700)(1171.357,1148.984)
\path(2415,214)(1290,889)
\blacken\path(1408.334,852.985)(1290.000,889.000)(1377.464,801.536)(1408.334,852.985)
\path(15,214)(1290,889)
\blacken\path(1197.982,806.340)(1290.000,889.000)(1169.909,859.367)(1197.982,806.340)
\path(1060,1249)(1145,1189)
\blacken\path(1029.663,1233.693)(1145.000,1189.000)(1064.264,1282.711)(1029.663,1233.693)
\path(1510,1264)(1425,1219)
\blacken\path(1517.018,1301.660)(1425.000,1219.000)(1545.091,1248.633)(1517.018,1301.660)
\path(1510,754)(1420,809)
\blacken\path(1538.037,772.024)(1420.000,809.000)(1506.750,720.828)(1538.037,772.024)
\path(1055,764)(1155,819)
\blacken\path(1064.312,734.883)(1155.000,819.000)(1035.397,787.456)(1064.312,734.883)
\put(1215,964){\makebox(0,0)[b]{\smash{{\SetFigFont{10}{12.0}{\rmdefault}{\mddefault}{\updefault}$x: \Psi_x$}}}}
\put(15,1939){\makebox(0,0)[b]{\smash{{\SetFigFont{10}{12.0}{\rmdefault}{\mddefault}{\updefault}$y_1:\Psi_{y_1}$}}}}
\put(2415,1939){\makebox(0,0)[b]{\smash{{\SetFigFont{10}{12.0}{\rmdefault}{\mddefault}{\updefault}$y_n:\Psi_{y_n}$}}}}
\put(15,64){\makebox(0,0)[b]{\smash{{\SetFigFont{10}{12.0}{\rmdefault}{\mddefault}{\updefault}$z_1: \Psi_{z_1}$}}}}
\put(1215,64){\makebox(0,0)[b]{\smash{{\SetFigFont{10}{12.0}{\rmdefault}{\mddefault}{\updefault}\ldots}}}}
\put(1215,1939){\makebox(0,0)[b]{\smash{{\SetFigFont{10}{12.0}{\rmdefault}{\mddefault}{\updefault}\ldots}}}}
\path(2415,1864)(1290,1114)
\blacken\path(1373.205,1205.526)(1290.000,1114.000)(1406.487,1155.603)(1373.205,1205.526)
\end{picture}
}
\caption{}\label{509-2F11}
\end{figure}

We offer semantics for the instantiated system $(S, R, I)$ by using instead of $\Psi_z$ the $\BBB\BBF_z$ of Figure \ref{509-BBF5} and Figure \ref{509-BBF14} of Appendix C2.  

We then implement the instantiation of the $\BBB\BBF_z$ (i.e.\ $I'(z) =\BBB\BBF_z, z\in S$) as proposed in Definition \ref{509-BBD19} of Appendix C3.

We adopt the semantics of option (iv) of Appendix B as discussed there.

\subsubsection*{Challenge 3: Instantiating with disjunctions}
When we instantiate with wffs containing disjunctions, we get attacks of the form 
\[
(A\vee B)\tO (C\vee D)
\]
a disjunction attacking another disjunction. The form $A\vee B\tO z$ is equivalent to $A\tO z$ and $B\tO z$. So we need to deal with the case of the form $x\tO C\vee D$ and try and eliminate or implement the disjuction. The semantic consition $x\tO C\vee D$ from the equational point of view is
\[
C\vee D \equiv \neg x
\]
or
\[
\neg C\wedge\neg D\equiv x
\]
or
\[
\neg C\wedge\neg D\equiv\neg(\neg x).
\]

This means that $x\tO C\vee D$ is equivalent to $\neg x\tO \neg C\wedge\neg D$.

We already know how to attack conjunctions from Challenge 2.  So we need to deal with negation.  Once we do that we will be able to deal with full Boolean instantiation.

\subsubsection*{Challenge 4: Instantiating with negated formulas}
Let us start with $(S, R)$. We move to $(S^*, R^*)$ as defined in Definition 
\ref{509-D2} and for and for which Lemma \ref{509-L3} holds.  $(S^*, R^*)$ is a two state network. For every $x\in S^*$, there is a node $\neg x \in S^*$, with $x\tO \neg x$ and $\neg x \tO x$, and where $\neg\neg x$ is $x$.  Thus any instantiation $I(x)=\Phi$ for $x\in S$ becomes the double instantiation $I^*$ on $S^*$ where $I^* (x) =\Phi$ and $I^*(\neg x) =\neg \Phi$. 

We now have a system $(S^*, R^*, I^*)$ of instantiated Boolean net. We use Appendices C2 and C3, Example \ref{509-BBE7} and Definition \ref{509-BBD19} to replace the wff $I(x)$ for $x\in S^*$ with a Boolean formulation $\BBB\BBF_x$ and then instantiate $(S^*, R^*)$ with $x\mapsto \BBB\BBF_x$. The resulting system is a $\top$-net as discussed in Appendix B and we can calculate option (iv) extensions for it.

\begin{remark}\label{509-BBR21}
Let us summarise how to get and what it means to be extensions when we instantiate any network $(S, R)$ with formulas from classical propositional logic, using any of the above challenges (depending on the wffs used in the instantiations).

Let the instantiating function be $I: S\mapsto$ wffs, where wffs belong to a language with atoms $Q$.  We replace $I$ by $I^*$ giving each $x$ a $\BBB\BBF_x$ which basically does the same job (says the same) as $I(x)=\Phi_x$. These $\BBB\BBF$s were done in Appendix C. $\BBB\BBF_x$ contains the atoms appearing in $\Phi_x$ (atoms from $Q$) as well as many auxiliary atoms. 

$(S, R)$ is replaced by $(S^*, R^*)$ being a network of instantiated $\BBB\BBF$s containing the propositional atoms of all the $\Phi$s $\{\Phi_x|x\in S\}$ (i.e.\ all the atoms of $Q$), as well as many auxiliary atoms. We use appendix B to find extensions for $(S^*, R^*)$. These are functions $\lambda$, giving values in $\{0, 1, \half\}$ to all atoms in $(S^*, R^*)$ and thus giving values to all the atoms $Q$ of the propositional language. Once we have values for the atoms we have a 3-valued propositional model and we get values for all the wffs $\Phi_x, x\in S$. Define an extension $\lambda^*$ on $(S, R)$ by $\lambda^*(x) =\lambda (\Phi_x)$. We need to show that this is an extension in the sense of Definition \ref{509-2DM1}. This follows from the way we set up the entire process. It does require proof but I will not do it now.  

We thus got a 3-valued model for the language $Q$ and an extension $\lambda^*$ out of the instantiation $I$ for $(S, R)$.

So given a Boolean instantiation $(S, R, I)$ what does it mean to have an extension for it?

It means a 3-valued model $\lambda$ for the Boolean instantiation language such that $\lambda^*(x)=\lambda(\Phi_x)$ is an extension of $(S, R)$.
\end{remark}

\subsection{Instantiating with  monadic wffs and modal S5 wffs}
Given $(S, R)$ we want to instantiate with wffs $\Phi$ of monadic logic. At the first instance we assume monadic logic with $\{P_1\comma P_n\}$ and we assume that $\Phi$ has no free variables. We make use of Appendix A.

Let $I$ be an instantiation function giving each $x\in S$ a closed formula $I(x)=\Phi_x$ of the monadic predicate logic based on $\{P_1\comma P_n\}$.

We seek extensions for the instantiated system $(S, R, I)$.

We use Lemma \ref{509-AL5}, which says that every wff $\Phi$ without free variables is a Boolean combination of ``atomic'' wffs of the form $q_\vare =\exists x \alpha_\vare(x)$, where $\alpha_\vare(x)$ has the form $\bigwedge^n_{i=1} P^{e_i}_i(x)$ where $\vare=(e_1\comma e_n)\in 2^n$.  

We can thus pretend we are dealing with classical propositional logic with $2^n$ atomic formulas of the form $q_\vare, \vare\in 2^n$.

Associate with each closed formula $\Phi =\Phi(\exists x\alpha_\vare(x))$ the propositional formula 
\[
\Phi^*=\Phi(\exists x\alpha_\vare(x) / q_\vare).
\]

We now instantiate  $(S, R)$ with the formulas $\Phi^*$ instead of $\Phi$. I.e.\ we look at $(S, R, I^*)$ where $I^*(x)=\Phi^*_x$.  Get an extension $\lambda$ on the atoms $\{q_\vare\}$. From it  get an extension $\lambda^*$ on $S$. Also since we have $\lambda$ values for $\{q_\vare\}$ we get values for all the predicate ``atomic" formulas of the form $\exists x \alpha_\vare(x), \vare\in 2^n$ and thus get a 3-valued predicate model ``instantiating'' the net $(S, R, I)$. 

The case where the formula has free variables , we regard them as constants and proceed using Remark \ref{509-AR5}.  The case of modal S5 is treated similarly in view of  Remark \ref{509-AR6}.

\subsection{Beyond predicate instantiation}
Our methodological approach, so far as discussed in Section 1.2, was to use theoretical considerations in expanding our argumentation theory to predicate instantiation. We simply substituted formulas of monadic predicate logic into an abstract argumentation network $(S, R)$ and asked how can we deal with it on theoretical grounds.

We now want to check what kind of predicate networks are required by day to day practical applications. We use Example \ref{509-PE1} as a starting point. This type of examples arise in Talmudic logic, see \cite{509-21}.
\begin{example}\label{509-PE1}
At home I have several sinks and 3 toilets. If a sink is blocked, I can handle it myself. If a toilet is blocked, it is reasonable that I call a plumber.  Two opposing principles come to play here.  A plumber costs money (about US\$ 100 just to visit in addition to any other charges depending on the job). So it makes sense for me to try and do the job myself if I can.  So with a simple case like a blocked sink I can do it myself, but with a blocked toilet I had better call a plumber.

Let us write these rules:
\begin{enumerate}
\item $B(s) \to \neg \exists x P(x,s)$
\item $B(t) \to \exists P(x,t)$
\end{enumerate}
where $B(z)$ reads $z$ is blocked and $P(x,z)$ reads $x$ is called to repair $z$.  The $\to$ is ordinary implication.  Figure \ref{509-PF2} shows these rules in argumentation form.

\begin{figure}
\centering
\setlength{\unitlength}{0.00083333in}
\begingroup\makeatletter\ifx\SetFigFont\undefined%
\gdef\SetFigFont#1#2#3#4#5{%
  \reset@font\fontsize{#1}{#2pt}%
  \fontfamily{#3}\fontseries{#4}\fontshape{#5}%
  \selectfont}%
\fi\endgroup%
{\renewcommand{\dashlinestretch}{30}
\begin{picture}(1584,1410)(0,-10)
\put(42,60){\makebox(0,0)[b]{\smash{{\SetFigFont{10}{12.0}{\rmdefault}{\mddefault}{\updefault}$\exists xP(x,s)$}}}}
\path(1542,1185)(1542,285)
\blacken\path(1512.000,405.000)(1542.000,285.000)(1572.000,405.000)(1512.000,405.000)
\path(42,585)(42,435)
\blacken\path(12.000,555.000)(42.000,435.000)(72.000,555.000)(12.000,555.000)
\path(1542,585)(1542,435)
\blacken\path(1512.000,555.000)(1542.000,435.000)(1572.000,555.000)(1512.000,555.000)
\put(42,1260){\makebox(0,0)[b]{\smash{{\SetFigFont{10}{12.0}{\rmdefault}{\mddefault}{\updefault}$B(s)$}}}}
\put(1542,60){\makebox(0,0)[b]{\smash{{\SetFigFont{10}{12.0}{\rmdefault}{\mddefault}{\updefault}$\neg \exists xP(x,t)$}}}}
\put(1542,1260){\makebox(0,0)[b]{\smash{{\SetFigFont{10}{12.0}{\rmdefault}{\mddefault}{\updefault}$B(t)$}}}}
\path(42,1185)(42,285)
\blacken\path(12.000,405.000)(42.000,285.000)(72.000,405.000)(12.000,405.000)
\end{picture}
}
\caption{}\label{509-PF2}
\end{figure}
Now let us check what happens if both the sink and the toilet are blocked. Common sense dictates, that since I have to call a plumber to do the toilet anyway, I may as well ask him to do the sink. I am paying the \$ 100 for the visit anyway.  In comparison, the strict logical solution to the problem is that I do the sink and the plumber does the toilet!  (This is applying principles (1) and (2).)

Furthermore, if two of my toilets are blocked, the above rules (1) and (2) allow me to called different plumbers, one for each toilet rather than call the same plumber to do all jobs.

So what we need formally in the predicate network is that if $\exists x A(x)$ is ``in'' because of $x=a$, then every other $\exists x D(x)$ which is ``in'' must be instantiated by the same $x=a$!

So let us forget about plumbers and just look at the network of Figure \ref{509-PF3}

\begin{figure}
\centering
\setlength{\unitlength}{0.00083333in}
\begingroup\makeatletter\ifx\SetFigFont\undefined%
\gdef\SetFigFont#1#2#3#4#5{%
  \reset@font\fontsize{#1}{#2pt}%
  \fontfamily{#3}\fontseries{#4}\fontshape{#5}%
  \selectfont}%
\fi\endgroup%
{\renewcommand{\dashlinestretch}{30}
\begin{picture}(1083,1989)(0,-10)
\put(1067,387){\makebox(0,0)[lb]{\smash{{\SetFigFont{10}{12.0}{\rmdefault}{\mddefault}{\updefault}$\neg \exists xD(x)$}}}}
\path(122,1322)(82,1372)
\blacken\path(180.389,1297.037)(82.000,1372.000)(133.537,1259.555)(180.389,1297.037)
\path(987,767)(1027,702)
\blacken\path(938.559,788.476)(1027.000,702.000)(989.658,819.922)(938.559,788.476)
\path(132,117)(82,167)
\blacken\path(188.066,103.360)(82.000,167.000)(145.640,60.934)(188.066,103.360)
\path(17,1662)(18,1663)(21,1665)
	(27,1668)(35,1673)(46,1680)
	(61,1689)(80,1700)(102,1712)
	(126,1727)(154,1742)(183,1759)
	(214,1776)(247,1793)(280,1811)
	(315,1828)(350,1845)(385,1862)
	(421,1878)(458,1893)(495,1907)
	(532,1919)(571,1931)(610,1941)
	(649,1950)(689,1956)(729,1961)
	(767,1962)(809,1960)(847,1954)
	(880,1946)(909,1935)(935,1921)
	(956,1907)(975,1890)(991,1872)
	(1004,1854)(1016,1834)(1026,1814)
	(1034,1793)(1041,1773)(1048,1753)
	(1053,1734)(1057,1717)(1060,1702)
	(1063,1689)(1064,1678)(1067,1662)
\blacken\path(1015.399,1774.416)(1067.000,1662.000)(1074.372,1785.473)(1015.399,1774.416)
\path(12,550)(13,551)(16,553)
	(22,556)(30,561)(41,568)
	(56,577)(75,588)(97,600)
	(121,615)(149,630)(178,647)
	(209,664)(242,681)(275,699)
	(310,716)(345,733)(380,750)
	(416,766)(453,781)(490,795)
	(527,807)(566,819)(605,829)
	(644,838)(684,844)(724,849)
	(762,850)(804,848)(842,842)
	(875,834)(904,823)(930,809)
	(951,795)(970,778)(986,760)
	(999,742)(1011,722)(1021,702)
	(1029,681)(1036,661)(1043,641)
	(1048,622)(1052,605)(1055,590)
	(1058,577)(1059,566)(1062,550)
\blacken\path(1010.399,662.416)(1062.000,550.000)(1069.372,673.473)(1010.399,662.416)
\path(1067,1512)(1066,1511)(1063,1509)
	(1057,1505)(1048,1499)(1036,1490)
	(1021,1480)(1002,1467)(981,1452)
	(956,1436)(929,1418)(900,1400)
	(870,1381)(839,1362)(806,1343)
	(773,1325)(739,1307)(704,1291)
	(669,1275)(632,1260)(594,1247)
	(556,1236)(515,1226)(474,1219)
	(433,1214)(392,1212)(346,1214)
	(303,1221)(266,1231)(232,1244)
	(202,1259)(176,1276)(153,1295)
	(133,1315)(114,1336)(98,1358)
	(83,1381)(69,1403)(57,1425)
	(47,1445)(38,1463)(31,1479)
	(26,1491)(17,1512)
\blacken\path(91.845,1413.520)(17.000,1512.000)(36.696,1389.885)(91.845,1413.520)
\path(1067,312)(1066,311)(1063,309)
	(1057,305)(1048,299)(1036,290)
	(1021,280)(1002,267)(981,252)
	(956,236)(929,218)(900,200)
	(870,181)(839,162)(806,143)
	(773,125)(739,107)(704,91)
	(669,75)(632,60)(594,47)
	(556,36)(515,26)(474,19)
	(433,14)(392,12)(346,14)
	(303,21)(266,31)(232,44)
	(202,59)(176,76)(153,95)
	(133,115)(114,136)(98,158)
	(83,181)(69,203)(57,225)
	(47,245)(38,263)(31,279)
	(26,291)(17,312)
\blacken\path(91.845,213.520)(17.000,312.000)(36.696,189.885)(91.845,213.520)
\put(17,1512){\makebox(0,0)[rb]{\smash{{\SetFigFont{10}{12.0}{\rmdefault}{\mddefault}{\updefault}$\exists xA(x)$}}}}
\put(1067,1512){\makebox(0,0)[lb]{\smash{{\SetFigFont{10}{12.0}{\rmdefault}{\mddefault}{\updefault}$\neg \exists xA(x)$}}}}
\put(17,387){\makebox(0,0)[rb]{\smash{{\SetFigFont{10}{12.0}{\rmdefault}{\mddefault}{\updefault}$\exists xD(x)$}}}}
\path(997,1862)(1042,1812)
\blacken\path(939.425,1881.126)(1042.000,1812.000)(984.023,1921.264)(939.425,1881.126)
\end{picture}
}
\caption{}\label{509-PF3}
\end{figure}

We require that if one of $\exists xA(x)$ or $\exists xD(x)$ is ``in'', the other is also ``in'' and because of the same element $a$.  (This reflects our analysis that if you have many sinks and toilets and you need to call a plumber for one of the toilets then you ask this same plumber to do everything.)
\end{example}

\paragraph{General problem.}
We are given a predicate network $(S, R, I)$ and a subset $S_0\subseteq S$ such that for all $s\in S_0$, $I(s)=\exists x D_s (x)$ (with $D_s$ a unary predicate depending on $s$).  We want to implement the additional constraint
\begin{itemlist}{cccc}
\item [$\BBC$:] If for some $s\in S_0$, $I(s)$ is ``in'' and for some $a, D_s (a)=
\top$ then for all $s\in S_0$ we have that $I(s)=$ ``in'' and furthermore for that same $a$, $D_s(a) =\top$.

\end{itemlist}

Implementing that part of $\BBC$ which says
\begin{quote}
if $I(s_1)$ is ``in'' then also $I(s_2)$ is ``in'' 
\end{quote}
is not simple.  We might think that we can implement that by letting  $I(s_1)$ attack $\neg I(s_2)$. But this is correct only because $I(s_1)$ and $I(s_2)$ are instantiated as existential statements. So we cannot write the attack in the geometry of $(S, R)$. It must occur/activate after the instantiation. How do we do that?

To get a better idea, let us assume that our universe of elements has only two elemnts $\{a,b\}$. This would allow us to rewrite any $\exists x E(x)$ as $E(a)\vee E(b)$, and so Figure \ref{509-PF3} becomes Figure \ref{509-PF4}.

\begin{figure}
\centering
\setlength{\unitlength}{0.00083333in}
\begingroup\makeatletter\ifx\SetFigFont\undefined%
\gdef\SetFigFont#1#2#3#4#5{%
  \reset@font\fontsize{#1}{#2pt}%
  \fontfamily{#3}\fontseries{#4}\fontshape{#5}%
  \selectfont}%
\fi\endgroup%
{\renewcommand{\dashlinestretch}{30}
\begin{picture}(1083,1989)(0,-10)
\put(1067,387){\makebox(0,0)[lb]{\smash{{\SetFigFont{10}{12.0}{\rmdefault}{\mddefault}{\updefault}$\neg D(a)\wedge\neg D(b)$}}}}
\path(122,1322)(82,1372)
\blacken\path(180.389,1297.037)(82.000,1372.000)(133.537,1259.555)(180.389,1297.037)
\path(987,767)(1027,702)
\blacken\path(938.559,788.476)(1027.000,702.000)(989.658,819.922)(938.559,788.476)
\path(132,117)(82,167)
\blacken\path(188.066,103.360)(82.000,167.000)(145.640,60.934)(188.066,103.360)
\path(17,1662)(18,1663)(21,1665)
	(27,1668)(35,1673)(46,1680)
	(61,1689)(80,1700)(102,1712)
	(126,1727)(154,1742)(183,1759)
	(214,1776)(247,1793)(280,1811)
	(315,1828)(350,1845)(385,1862)
	(421,1878)(458,1893)(495,1907)
	(532,1919)(571,1931)(610,1941)
	(649,1950)(689,1956)(729,1961)
	(767,1962)(809,1960)(847,1954)
	(880,1946)(909,1935)(935,1921)
	(956,1907)(975,1890)(991,1872)
	(1004,1854)(1016,1834)(1026,1814)
	(1034,1793)(1041,1773)(1048,1753)
	(1053,1734)(1057,1717)(1060,1702)
	(1063,1689)(1064,1678)(1067,1662)
\blacken\path(1015.399,1774.416)(1067.000,1662.000)(1074.372,1785.473)(1015.399,1774.416)
\path(12,550)(13,551)(16,553)
	(22,556)(30,561)(41,568)
	(56,577)(75,588)(97,600)
	(121,615)(149,630)(178,647)
	(209,664)(242,681)(275,699)
	(310,716)(345,733)(380,750)
	(416,766)(453,781)(490,795)
	(527,807)(566,819)(605,829)
	(644,838)(684,844)(724,849)
	(762,850)(804,848)(842,842)
	(875,834)(904,823)(930,809)
	(951,795)(970,778)(986,760)
	(999,742)(1011,722)(1021,702)
	(1029,681)(1036,661)(1043,641)
	(1048,622)(1052,605)(1055,590)
	(1058,577)(1059,566)(1062,550)
\blacken\path(1010.399,662.416)(1062.000,550.000)(1069.372,673.473)(1010.399,662.416)
\path(1067,1512)(1066,1511)(1063,1509)
	(1057,1505)(1048,1499)(1036,1490)
	(1021,1480)(1002,1467)(981,1452)
	(956,1436)(929,1418)(900,1400)
	(870,1381)(839,1362)(806,1343)
	(773,1325)(739,1307)(704,1291)
	(669,1275)(632,1260)(594,1247)
	(556,1236)(515,1226)(474,1219)
	(433,1214)(392,1212)(346,1214)
	(303,1221)(266,1231)(232,1244)
	(202,1259)(176,1276)(153,1295)
	(133,1315)(114,1336)(98,1358)
	(83,1381)(69,1403)(57,1425)
	(47,1445)(38,1463)(31,1479)
	(26,1491)(17,1512)
\blacken\path(91.845,1413.520)(17.000,1512.000)(36.696,1389.885)(91.845,1413.520)
\path(1067,312)(1066,311)(1063,309)
	(1057,305)(1048,299)(1036,290)
	(1021,280)(1002,267)(981,252)
	(956,236)(929,218)(900,200)
	(870,181)(839,162)(806,143)
	(773,125)(739,107)(704,91)
	(669,75)(632,60)(594,47)
	(556,36)(515,26)(474,19)
	(433,14)(392,12)(346,14)
	(303,21)(266,31)(232,44)
	(202,59)(176,76)(153,95)
	(133,115)(114,136)(98,158)
	(83,181)(69,203)(57,225)
	(47,245)(38,263)(31,279)
	(26,291)(17,312)
\blacken\path(91.845,213.520)(17.000,312.000)(36.696,189.885)(91.845,213.520)
\put(17,1512){\makebox(0,0)[rb]{\smash{{\SetFigFont{10}{12.0}{\rmdefault}{\mddefault}{\updefault}$A(a)\vee A(b)$}}}}
\put(1067,1512){\makebox(0,0)[lb]{\smash{{\SetFigFont{10}{12.0}{\rmdefault}{\mddefault}{\updefault}$\neg A(a)\wedge\neg A(b)$}}}}
\put(17,387){\makebox(0,0)[rb]{\smash{{\SetFigFont{10}{12.0}{\rmdefault}{\mddefault}{\updefault}$D(a)\vee D(b)$}}}}
\path(997,1862)(1042,1812)
\blacken\path(939.425,1881.126)(1042.000,1812.000)(984.023,1921.264)(939.425,1881.126)
\end{picture}
}
\caption{}\label{509-PF4}
\end{figure}

We know how to handle full propositional instantiations from Section 2.2. In this case we have the additional constraints:
\begin{enumerate}
\item $A(a) =1$ iff $D(a)=1$
\item $A(b)=1$ iff $D(b)=1$
\\
In general we want a condition like 
\item $\forall x(A(x)=1\mbox{ iff } D(x) =1)$.
\end{enumerate}
This means we add to Figure \ref{509-PF4} also the attacks of Figure \ref{509-PF4a}.
\begin{figure}
\centering
\setlength{\unitlength}{0.00083333in}
\begingroup\makeatletter\ifx\SetFigFont\undefined%
\gdef\SetFigFont#1#2#3#4#5{%
  \reset@font\fontsize{#1}{#2pt}%
  \fontfamily{#3}\fontseries{#4}\fontshape{#5}%
  \selectfont}%
\fi\endgroup%
{\renewcommand{\dashlinestretch}{30}
\begin{picture}(1083,1989)(0,-10)
\put(1067,387){\makebox(0,0)[lb]{\smash{{\SetFigFont{10}{12.0}{\rmdefault}{\mddefault}{\updefault}$\neg D(b)$}}}}
\path(122,1322)(82,1372)
\blacken\path(180.389,1297.037)(82.000,1372.000)(133.537,1259.555)(180.389,1297.037)
\path(987,767)(1027,702)
\blacken\path(938.559,788.476)(1027.000,702.000)(989.658,819.922)(938.559,788.476)
\path(132,117)(82,167)
\blacken\path(188.066,103.360)(82.000,167.000)(145.640,60.934)(188.066,103.360)
\path(17,1662)(18,1663)(21,1665)
	(27,1668)(35,1673)(46,1680)
	(61,1689)(80,1700)(102,1712)
	(126,1727)(154,1742)(183,1759)
	(214,1776)(247,1793)(280,1811)
	(315,1828)(350,1845)(385,1862)
	(421,1878)(458,1893)(495,1907)
	(532,1919)(571,1931)(610,1941)
	(649,1950)(689,1956)(729,1961)
	(767,1962)(809,1960)(847,1954)
	(880,1946)(909,1935)(935,1921)
	(956,1907)(975,1890)(991,1872)
	(1004,1854)(1016,1834)(1026,1814)
	(1034,1793)(1041,1773)(1048,1753)
	(1053,1734)(1057,1717)(1060,1702)
	(1063,1689)(1064,1678)(1067,1662)
\blacken\path(1015.399,1774.416)(1067.000,1662.000)(1074.372,1785.473)(1015.399,1774.416)
\path(12,550)(13,551)(16,553)
	(22,556)(30,561)(41,568)
	(56,577)(75,588)(97,600)
	(121,615)(149,630)(178,647)
	(209,664)(242,681)(275,699)
	(310,716)(345,733)(380,750)
	(416,766)(453,781)(490,795)
	(527,807)(566,819)(605,829)
	(644,838)(684,844)(724,849)
	(762,850)(804,848)(842,842)
	(875,834)(904,823)(930,809)
	(951,795)(970,778)(986,760)
	(999,742)(1011,722)(1021,702)
	(1029,681)(1036,661)(1043,641)
	(1048,622)(1052,605)(1055,590)
	(1058,577)(1059,566)(1062,550)
\blacken\path(1010.399,662.416)(1062.000,550.000)(1069.372,673.473)(1010.399,662.416)
\path(1067,1512)(1066,1511)(1063,1509)
	(1057,1505)(1048,1499)(1036,1490)
	(1021,1480)(1002,1467)(981,1452)
	(956,1436)(929,1418)(900,1400)
	(870,1381)(839,1362)(806,1343)
	(773,1325)(739,1307)(704,1291)
	(669,1275)(632,1260)(594,1247)
	(556,1236)(515,1226)(474,1219)
	(433,1214)(392,1212)(346,1214)
	(303,1221)(266,1231)(232,1244)
	(202,1259)(176,1276)(153,1295)
	(133,1315)(114,1336)(98,1358)
	(83,1381)(69,1403)(57,1425)
	(47,1445)(38,1463)(31,1479)
	(26,1491)(17,1512)
\blacken\path(91.845,1413.520)(17.000,1512.000)(36.696,1389.885)(91.845,1413.520)
\path(1067,312)(1066,311)(1063,309)
	(1057,305)(1048,299)(1036,290)
	(1021,280)(1002,267)(981,252)
	(956,236)(929,218)(900,200)
	(870,181)(839,162)(806,143)
	(773,125)(739,107)(704,91)
	(669,75)(632,60)(594,47)
	(556,36)(515,26)(474,19)
	(433,14)(392,12)(346,14)
	(303,21)(266,31)(232,44)
	(202,59)(176,76)(153,95)
	(133,115)(114,136)(98,158)
	(83,181)(69,203)(57,225)
	(47,245)(38,263)(31,279)
	(26,291)(17,312)
\blacken\path(91.845,213.520)(17.000,312.000)(36.696,189.885)(91.845,213.520)
\put(17,1512){\makebox(0,0)[rb]{\smash{{\SetFigFont{10}{12.0}{\rmdefault}{\mddefault}{\updefault}$A(a)$}}}}
\put(1067,1512){\makebox(0,0)[lb]{\smash{{\SetFigFont{10}{12.0}{\rmdefault}{\mddefault}{\updefault}$\neg D(a)$}}}}
\put(17,387){\makebox(0,0)[rb]{\smash{{\SetFigFont{10}{12.0}{\rmdefault}{\mddefault}{\updefault}$A(b)$}}}}
\path(997,1862)(1042,1812)
\blacken\path(939.425,1881.126)(1042.000,1812.000)(984.023,1921.264)(939.425,1881.126)
\end{picture}
}
\caption{}\label{509-PF4a}
\end{figure}
Even if we could implement this using various geometrical attacks, we still have two problems:
\begin{itemlist}{cccc}
\item [P1] It all depends on the instantiation.
\item [P2] It applies only to points in $S_0$.
\end{itemlist}
It looks like we need a new fresh breakthrough point of view,  otherwise we will have to restrict the acceptable complete extensions by conditions written in the metalevel and not by the object level properties of the network.

Our problems are not over yet. We saw we needed the condition 
\[
\forall x(A(x)=1\mbox{ iff } D(x)=1).
\]
This means in attack terms
\[
\forall x (A(x)\LtO \neg D(x)).
\]
But how do we implement the general principle like a universal quantification of the form $\forall x (A(x)\LtO \neg D(x)$?

We will have to add free variable attacks to networks as in Figure \ref{509-PF5}.

This is a new twist in our conceptual analysis of what is happening here.

\begin{figure}
\centering
\setlength{\unitlength}{0.00083333in}
\begingroup\makeatletter\ifx\SetFigFont\undefined%
\gdef\SetFigFont#1#2#3#4#5{%
  \reset@font\fontsize{#1}{#2pt}%
  \fontfamily{#3}\fontseries{#4}\fontshape{#5}%
  \selectfont}%
\fi\endgroup%
{\renewcommand{\dashlinestretch}{30}
\begin{picture}(930,226)(0,-10)
\path(15,139)(915,139)
\blacken\path(795.000,109.000)(915.000,139.000)(795.000,169.000)(795.000,109.000)
\path(540,139)(765,139)
\blacken\path(645.000,109.000)(765.000,139.000)(645.000,169.000)(645.000,109.000)
\put(15,64){\makebox(0,0)[rb]{\smash{{\SetFigFont{10}{12.0}{\rmdefault}{\mddefault}{\updefault}$E_1(x)$}}}}
\put(915,64){\makebox(0,0)[lb]{\smash{{\SetFigFont{10}{12.0}{\rmdefault}{\mddefault}{\updefault}$E_2(x)$}}}}
\end{picture}
}

$\alpha(x)$ is a constraint on $x$. Only certain values of $x$ can attack from $E_1$ to $E_2$. Those $x$ which satisfy $\alpha(x)$.
\caption{}\label{509-PF5}
\end{figure}

In Section 2.3 we treated free variables in an argumentation networks as constants. Now we see we need to treat them as parameters, and the attack arrows are annotated by these parameters.

This is a new game to be properly analysed.

\section*{Appendices}
\appendix
\section{Classical monadic predicate logic}
The purpose of this appendix is to show point 3 of Section 1.3, namely that every formula of monadic predicate logic is equivalent to a formulas in a (argumentation friendly) syntactic form, Lemma \ref{509-AL5} below.
\begin{definition}\label{509-AD1}{\ }
\begin{enumerate}
\item The language of $\BBL_n$, of the classical monadic predicate logic without equality has $n$ unary predicates $P_1\comma P_n$, variables the classical connectives $\{\neg, \wedge, \vee, \to\}$ and the quantifiers $\{\forall, \exists\}$ with their usual meaning.
\item A model for $\BBL_n$ has the form $\BBM = (D, D_1\comma D_n)$, where $D$ is a non-empty domain and $D_i\subseteq D$ is the extension of the predicate $P_i$.

\end{enumerate}
We define satisfaction in a model in the traditional way, using the traditional abuse of notation as follows:
\begin{itemize}
\item $\BBM \vDash P_i(d)$ iff $d\in D_i$
\item $\BBM \vDash \neg A$ iff $\BBM \not\vDash A$
\item $\BBM \vDash A\wedge B$ iff $\BBM \vDash A$ and $\BBM\vDash B$
\item $\BBM \vDash A\vee B$ iff $\BBM\vDash A$ or $\BBM\vDash B$
\item $\BBM \vDash A\to B$ iff $\BBM\vDash A$ implies $\BBM\vDash B$
\item $\BBM \vDash \exists x A(x)$ iff for some $d\in D, \BBM\vDash A(d)$
\item $\BBM \vDash \forall x A(x)$ iff for all $d\in D, \BBM\vDash A(d)$.
\end{itemize}
\end{definition}

\begin{lemma}\label{509-AL2}
Let $\BBM$ be a model. Then there exists a model $\BBM^*$ with at most $2^n$ elements which is equivalent to $\BBM$.
\end{lemma}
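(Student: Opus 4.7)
The plan is to exploit the fact that with only unary predicates and no equality, elements of $D$ are indistinguishable whenever they agree on every $P_i$, so we can quotient $D$ by this indistinguishability relation.

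First I would define the \emph{signature} map $\sigma : D \to \{0,1\}^n$ by $\sigma(d) = (e_1,\ldots,e_n)$ where $e_i = 1$ iff $d \in D_i$. This induces an equivalence relation $d \sim d'$ iff $\sigma(d) = \sigma(d')$, with at most $2^n$ classes. For each nonempty signature $\varepsilon \in \sigma(D)$, pick a single representative $d_\varepsilon \in \sigma^{-1}(\varepsilon)$, let $D^* = \{ d_\varepsilon : \varepsilon \in \sigma(D) \}$, and set $D_i^* = D_i \cap D^* = \{ d_\varepsilon \in D^* : \varepsilon_i = 1 \}$. This gives $\BBM^* = (D^*, D_1^*, \ldots, D_n^*)$ with $|D^*| \le 2^n$ and $\BBM^* \ne \varnothing$ since $D \ne \varnothing$.

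Next I would establish the key equivalence by induction on the complexity of a formula $\Phi(x_1,\ldots,x_k)$ of $\BBL_n$: for every assignment $a_1,\ldots,a_k \in D$ with representatives $d_{\sigma(a_1)},\ldots,d_{\sigma(a_k)} \in D^*$,
\[
\BBM \vDash \Phi(a_1,\ldots,a_k) \quad \Longleftrightarrow \quad \BBM^* \vDash \Phi(d_{\sigma(a_1)},\ldots,d_{\sigma(a_k)}).
\]
The atomic case $P_i(a_j)$ is immediate since $a_j \in D_i$ iff $\sigma(a_j)_i = 1$ iff $d_{\sigma(a_j)} \in D_i^*$. The Boolean connectives are routine by the induction hypothesis.

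The main obstacle, as I expect, is the quantifier step. For $\exists x\, A(x, \vec a)$: if $\BBM \vDash A(b,\vec a)$ for some $b \in D$, then by induction $\BBM^* \vDash A(d_{\sigma(b)}, d_{\sigma(\vec a)})$, and $d_{\sigma(b)} \in D^*$, so $\BBM^* \vDash \exists x\, A(x, d_{\sigma(\vec a)})$. Conversely, any $d^* \in D^*$ is already an element of $D$, so a witness in $\BBM^*$ is immediately a witness in $\BBM$ (using induction in the reverse direction, noting $d^* = d_{\sigma(d^*)}$). The $\forall$ case is dual, or obtained as $\neg \exists \neg$. Applying this equivalence to closed formulas yields $\BBM \vDash \Phi \Leftrightarrow \BBM^* \vDash \Phi$ for every sentence $\Phi$, which is the stated equivalence. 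The subtlety I would be careful about is that the induction must be phrased over \emph{all} formulas (possibly with free variables) simultaneously with the representative substitution, not only over closed sentences, because the quantifier step introduces fresh parameters that must be handled by the same hypothesis.
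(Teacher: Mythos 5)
Your proof is correct and follows essentially the same route as the paper: both hinge on the equivalence relation identifying elements with the same $\pm P_1,\ldots,\pm P_n$ signature (giving at most $2^n$ classes) and an induction on formula structure showing satisfaction is preserved. The only cosmetic difference is that you realise $\BBM^*$ as a set of representatives inside $D$ (a submodel) while the paper takes the quotient by $\approx$, and you fold the substitution lemma and the preservation claim into a single induction where the paper states them as two separate inductive steps; the underlying argument is identical.
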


\begin{proof}
Define $\approx$ on $D$ by
\begin{itemize}
\item $x\approx y$ iff for all $i, P_i(x) \leftrightarrow P_i(y)$.
\end{itemize}
Then $\approx$ is an equivalence relation.  We show that 
$$\begin{array}{l}
\mbox{For every $A(x_1\comma x_m)$, if $x_j \approx y_j$, for $j=1\comma m$}
\\
\mbox{then $\BBM \vDash A(x_1\comma x_m) \leftrightarrow A(y_1\comma y_m)$.}
\end{array}
\eqno (*)
$$
\paragraph{Proof of (*).}
By induction on the syntactical structure of $A$.

\medskip\noindent 
Let $D^*$ be the set of $\approx$ equivalence classes of elements of $D$. Similarly let $D^*_i$ be the set of equivalence classes of elements of $D_i$.

Let $x^*$ be the equivalence class of $x$.  

Let $\BBM^*=(D^*, D^*_1\comma D^*_n)$.

We prove the following.

For any $x_j\in D, \BBM\vDash A(x_j)$ iff $\BBM^* \vDash A(x^*_j)$.

Proof is by induction on $A$.
\end{proof}

$\BBM^*$ has at most $2^n$ elements because there are at most $2^n$ possibilities for $(\pm P_1(x)\comma \pm P_m(x))$ for any single $x$.

\begin{definition}\label{509-AD3}
Two models $\BBM$ and $\BBN$ are said to be equivalent if $\BBM^* = \BBN^*$.
\end{definition}

\begin{definition}\label{509-AD4}{\ }
\begin{enumerate}
\item Let $\vare$ denote a vector of length $n$ of elements $e_i\in \{0,1\}$. Thus 
\[
\vare = (e_1\comma e_n)\in 2^n.\]
$\vare$ is called a type.
\item For any $\vare \in 2^n$, let $\alpha_\vare(x)$ be the wff
\[
\alpha_\vare(x)=\bigwedge^n_{i=1} P^{e_i}_i(x)
\]
where $P^0(x)=\neg P(x)$ and $P^1(x)=P(x)$.  

This means that $x$ is of type $\vare$.
\item Let $\Gamma$ be a non-empty set of types, i.e.\ $\vare$-vectors, $\Gamma\subseteq 2^n$.
\end{enumerate}
\end{definition}

\begin{lemma}\label{509-AL4}
Let $\BBM$ be a model of $\{P_1\comma P_n\}$. Then $\BBM$ is characterised by a formula of the form 
\[
\Phi_{\BBM} =\bigwedge_{\vare\in \Gamma_{\BBM}} \exists x \alpha_\vare (x)\wedge\bigwedge_{\vare\not\in\Gamma_{\BBM}} \neg\exists x \alpha_\vare(x).
\]
In words, $\BBM$ is characterised by the types it realises where $\Gamma_{\BBM}$ is defined as 
\[
\{\vare \mid \mbox{for some } d\in \BBM, \BBM\vDash \alpha_\vare (d)\}.
\]
\end{lemma}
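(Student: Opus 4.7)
The plan is to show two things: first, that $\BBM \models \Phi_{\BBM}$, and second, that any $\BBN \models \Phi_{\BBM}$ is equivalent to $\BBM$ in the sense of Definition \ref{509-AD3} (i.e.\ $\BBM^*=\BBN^*$). Together these say precisely that $\Phi_{\BBM}$ characterises $\BBM$ up to the notion of equivalence just introduced, which is all one can hope for given that monadic logic without equality cannot distinguish models sharing the same quotient $\BBM^*$.

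For the first part I would argue directly from the definition of $\Gamma_{\BBM}$. If $\vare \in \Gamma_{\BBM}$ then by definition there is some $d \in D$ with $\BBM \models \alpha_\vare(d)$, so $\BBM \models \exists x\,\alpha_\vare(x)$; and if $\vare \notin \Gamma_{\BBM}$ then no element of $D$ realises $\vare$, so $\BBM \models \neg\exists x\,\alpha_\vare(x)$. Hence $\BBM$ satisfies every conjunct of $\Phi_{\BBM}$.

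For the second part, suppose $\BBN \models \Phi_{\BBM}$. The conjuncts of $\Phi_{\BBM}$ state, for \emph{every} $\vare \in 2^n$, whether or not that type is realised. So $\BBN \models \Phi_{\BBM}$ forces $\Gamma_{\BBN} = \Gamma_{\BBM}$. Now invoke Lemma \ref{509-AL2}: the quotient model $\BBM^*$ has at most one element of each type $\vare$, and by construction it has an element of type $\vare$ iff some element of $\BBM$ has that type, i.e.\ iff $\vare \in \Gamma_{\BBM}$. The same holds for $\BBN^*$, so $\BBM^*$ and $\BBN^*$ have the same underlying set of equivalence classes, indexed canonically by $\Gamma_{\BBM} = \Gamma_{\BBN}$, with the predicate $P_i$ true on the class indexed by $\vare=(e_1,\ldots,e_n)$ iff $e_i=1$. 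Hence $\BBM^* = \BBN^*$, which by Definition \ref{509-AD3} is the desired equivalence.

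No real obstacle is anticipated; the argument is essentially a bookkeeping exercise that leverages Lemma \ref{509-AL2}. The only subtlety worth pinning down is the meaning of ``characterised'': since monadic models are determined up to equivalence by their realised types, $\Phi_{\BBM}$ is the strongest statement one can make about $\BBM$ in the language $\BBL_n$, and that is what ``characterises'' must mean here. It would also be worth remarking explicitly that the conjunction in $\Phi_{\BBM}$ is finite (of length at most $2^n$), which is what makes this a bona fide formula of $\BBL_n$ and which connects with the intended use in the argumentation-friendly normal form of Lemma \ref{509-AL5}.
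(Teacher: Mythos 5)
Your proof is correct and follows essentially the same route as the paper's: verify $\BBM\vDash\Phi_{\BBM}$ from the definition of $\Gamma_{\BBM}$, then observe that any $\BBN\vDash\Phi_{\BBM}$ realises exactly the same types, so $\BBM^*=\BBN^*$ by the construction of Lemma \ref{509-AL2}. You merely spell out the bookkeeping (and the intended meaning of ``characterised'') in more detail than the paper does.
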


\begin{proof}{\ }
\begin{enumerate}
\item Clearly $\BBM\vDash\Phi_{\BBM}$
\item Let $\BBN$ be a model such that $\BBN\vDash \Phi_{\BBM}$. Since $\Phi_{\BBM}$ says exactly for every $\vare$ whether $\exists x \alpha_\vare(x)$ holds or not, we get that $\BBN$ satisfies the same types as $\BBM$. Therefore we have $\BBM^* =\BBN^*$.
\end{enumerate}
\end{proof}

\begin{lemma}\label{509-AL5}
Let $\Phi$ be any formula of the language of $\{P_1\comma P_n\}$ without free variables. Then $\Phi$ is equivalent to a wff of the form 
\[
\Phi =\bigvee_j \Phi_{\Gamma_j}
\]
where $\Gamma_j \subseteq 2^n$ and $\Phi_{\Gamma_j}$ is defined as follows
\[
\Phi_{\Gamma_j} =\bigwedge_{\vare\in \Gamma_j} \exists x\alpha_\vare(x)\wedge\bigwedge_{\vare\not\in\Gamma_j}\neg \exists x \alpha_\vare (x).
\]
\end{lemma}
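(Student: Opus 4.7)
The plan is to combine Lemmas \ref{509-AL2} and \ref{509-AL4} to reduce any closed monadic formula to a finite disjunction indexed by subsets of types. The key preliminary observation is that the second claim proved inside Lemma \ref{509-AL2}, namely $\BBM\vDash A(x_j)$ iff $\BBM^{*}\vDash A(x_j^{*})$, specialises at $m=0$ to: $\BBM\vDash A$ iff $\BBM^{*}\vDash A$ for every closed wff $A$. Together with Definition \ref{509-AD3}, this yields the slogan: if $\BBM^{*}=\BBN^{*}$ then $\BBM$ and $\BBN$ satisfy exactly the same closed monadic wffs. Combined with Lemma \ref{509-AL4} (where $\BBM^{*}$ is pinned down, up to isomorphism, by $\Gamma_{\BBM}\subseteq 2^{n}$), truth of a closed $\Phi$ in $\BBM$ depends only on the finite piece of data $\Gamma_{\BBM}$.

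Given this, I would define
\[
\mathcal{S}_{\Phi} \;=\; \{\,\Gamma\subseteq 2^{n} \mid \Gamma\neq\varnothing \text{ and some model } \BBM \text{ with } \Gamma_{\BBM}=\Gamma \text{ satisfies } \Phi\,\}
\]
and claim
\[
\Phi \;\equiv\; \bigvee_{\Gamma\in \mathcal{S}_{\Phi}} \Phi_{\Gamma}.
\]
Since there are at most $2^{2^{n}}$ candidate $\Gamma$'s, $\mathcal{S}_{\Phi}$ is finite, so the right-hand side has the required finite shape. If $\mathcal{S}_{\Phi}=\varnothing$ (the unsatisfiable case) I take $\Phi_{\varnothing}$ as a single disjunct, which says that no type is realised and is therefore satisfied by no (nonempty) model, giving a genuine $\bot$ in this signature.

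For the forward direction, if $\BBN\vDash\Phi$ then $\Gamma_{\BBN}\in\mathcal{S}_{\Phi}$ by definition, and the first half of the proof of Lemma \ref{509-AL4} shows $\BBN\vDash\Phi_{\Gamma_{\BBN}}$; hence $\BBN$ satisfies the disjunction. For the converse, if $\BBN\vDash\Phi_{\Gamma}$ for some $\Gamma\in\mathcal{S}_{\Phi}$, pick a witness $\BBM$ with $\Gamma_{\BBM}=\Gamma$ and $\BBM\vDash\Phi$. The second half of Lemma \ref{509-AL4} forces $\BBN^{*}=\BBM^{*}$, and the closed-formula corollary of Lemma \ref{509-AL2} noted above then gives $\BBN\vDash\Phi$.

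The only real obstacle is the corollary of Lemma \ref{509-AL2} for closed formulas, i.e.\ that $\BBM^{*}=\BBN^{*}$ implies elementary equivalence in the monadic language. This is an easy structural induction whose only interesting case is the quantifier step, which needs the fact that $d\mapsto d^{*}$ is surjective so that existentials transport between $\BBM$ and $\BBM^{*}$; but this is already covered by the inductive argument the paper invokes in the proof of Lemma \ref{509-AL2}. Everything else is bookkeeping with the two preceding lemmas, and the finiteness of $\mathcal{P}(2^{n})$ guarantees the disjunction is finite.
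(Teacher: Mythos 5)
Your proof is correct and follows essentially the same route as the paper's: reduce to the finitely many small models (equivalently, the realizable type-sets $\Gamma\subseteq 2^{n}$) via Lemma \ref{509-AL2}, characterize each by $\Phi_{\Gamma}$ via Lemma \ref{509-AL4}, and disjoin over those on which $\Phi$ holds. The paper's version is a three-line sketch of this same argument; your write-up merely makes explicit the closed-formula corollary of Lemma \ref{509-AL2} and the unsatisfiable edge case.
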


\begin{proof}
By Lemma \ref{509-AL2} it is sufficient to consider models of $\Phi$ of less than $2^n$ elements.  Let $\BBM_1\comma \BBM_k$ be all the models of $\Phi$. then the wff $\Phi$ is equivalent to $\bigvee^k_{j=1} \Phi_{\BBM_j}$ which has the same models as $\Phi$.
\end{proof}

\begin{remark}\label{509-AR5}
In case the language contains propositional constants $q_1\comma q_m$ as well as monadic predicates $P_1\comma P_m$ then a model $\BBM$ for this language is characterised by a conjunction of the form 
\[
\Phi_{\BBM} =\bigwedge_{\vare\in \Gamma_{\BBM}} \exists x \alpha_\vare (x)\wedge\bigwedge_{\vare\not\in \Gamma_{\BBM}} \neg\exists x \alpha_\vare(x)\wedge \beta_{\BBM}
\]
where $\Gamma_{\BBM}$ is as in Lemma \ref{509-AL4} and 
\[
\beta_{\BBM} =\bigwedge^m_{j=1} q^{e_j}_j
\]
where $\eta_{\BBM} =(e_1\comma e_m)$ is the vector in $2^m$ of the atoms $q_j$ or their negations which hold in the model $\BBM$.

Therefore any formula $\Phi$ of the language with the constants $\{q_j\}$ is equivalent to a disjunction 
\[
\bigvee_j\Phi_{\Gamma_j}\wedge \beta_j
\]
where $\Phi_{\Gamma_j}$ are as in Lemma \ref{509-AL5} and $\beta_j$ is a formula
\[
\beta_j=\bigwedge_r q^{e^j_r}_r
\]
as discussed above.

If $\Phi$ is a formula with free variables $x_1\comma x_k$, we regard the free variables as constants and regard $P_i(x_j)$ $i=1\comma n, j=1\comma k$ as propositional constants $q_{i,j}=P_i(x_j)$.

We thus can construct an equivalent formula as done above for monadic logic with constants $q_1\comma q_m$, $m=k\times n$.
\end{remark}

\begin{remark}\label{509-AR6}
The connection between monadic predicate logic with $\{P_1\comma P_n\}$ and the modal logic S5 based on the atomic propositions $\{P_1\comma P_n\} $ is well known. For an S5 Kripke model with a set of possible worlds $D$, let $d\vDash P_i$ be interpreted as $P_i(d)$, in the monadic theory based on $D$.

Following this correspondence, let $\beta_\vare$, for $\vare\in 2^n$ be $\lozenge (\bigwedge_i P^{e_i}_i)$ nd let $\beta'_\vare$ be $\bigwedge_j P^{e_i}$ where $\vare=(e_1\comma e_n)$.  We get therefore
\begin{enumerate}
\item [(*)]
Every wff of modal logic S5 with atoms $\{P_1\comma P_n\}$ is equivalent to a wff of the form 
\[
\bigvee_j(\beta'_{\vare_j}\wedge \bigwedge_{\vare\in \Gamma_j}\beta_\vare\wedge\bigwedge_{\vare\not\in \Gamma_j} \neg \beta_\vare)
\]
for some $\Gamma_1\comma \Gamma_k\subseteq 2^n$ and for some $\beta'_{\vare_j}$ in $\Gamma_i$  respectively.
 \end{enumerate}
(*) holds because  an S5 model is a set of worlds  containning the actual world. Each world is characterised by a conjuction of the form $\bigwedge_i P^{e_i}_i$, where  $\vare = (e_1\comma  e_n)$. We can identify a world with $\vare$. A model is characterised by the set of worlds $\Gamma$  it contains in conjuction with the set of worlds  it does not contain (the complement of $\Gamma$). Thus a model is a conjuction   $\beta'_{\vare_i}\wedge\bigwedge_{\vare\in\Gamma}\wedge\bigwedge_{\vare\not\in \Gamma_j}\neg \beta_\vare$).  A formula is equivalent to several sets of worlds, i.e. a disjunction  of several   of the formulas characterising models.
\end{remark}

The above observations will allow us to instantiate Dung argumentation frames with monadic  predicate formulas or with  S5 modal formulas.

\section{Instantiating with $\top$}
This appendix discusses the subtleties of instantiating a single node in argumentation network with just $\top$. We shall see that new concepts of semantics and extensions are required  for the proper handling of this seemingly innocent substitution. 

To show that finding extensions for instantiated Boolean network is not that simple a task, let us start with a  very simple example.  Consider the network $(\{x,y\}, x\tO y\}$.  This has the extension $x=$ `in'' and $y=$ ``out'', i.e.\ $x\wedge\neg y$. Let us instantiate $y=\top$. We get the network of Figure \ref{509-TF2}.  Let us refer to any network with atomic arguments $S$ which also containns $\top$ as $\top$-net.

\begin{figure}
\centering
\setlength{\unitlength}{0.00083333in}
\begingroup\makeatletter\ifx\SetFigFont\undefined%
\gdef\SetFigFont#1#2#3#4#5{%
  \reset@font\fontsize{#1}{#2pt}%
  \fontfamily{#3}\fontseries{#4}\fontshape{#5}%
  \selectfont}%
\fi\endgroup%
{\renewcommand{\dashlinestretch}{30}
\begin{picture}(84,1552)(0,-10)
\path(42,1339)(42,214)
\blacken\path(12.000,334.000)(42.000,214.000)(72.000,334.000)(12.000,334.000)
\path(42,514)(42,364)
\blacken\path(12.000,484.000)(42.000,364.000)(72.000,484.000)(12.000,484.000)
\put(42,1414){\makebox(0,0)[b]{\smash{{\SetFigFont{10}{12.0}{\rmdefault}{\mddefault}{\updefault}$x$}}}}
\put(42,64){\makebox(0,0)[b]{\smash{{\SetFigFont{10}{12.0}{\rmdefault}{\mddefault}{\updefault}$\top$}}}}
\end{picture}
}

\caption{}\label{509-TF2}
\end{figure}

So Figure \ref{509-TF2} is a $\top$-net. We have a problem with this network.   $x$ is not attacked and hence $x=$ ``in''.  But $x$ attacks $\top$ and $\top$ cannot be ``out'', it has to be ``in''.  So what shall we do?

Further reflection shows that the problem is more serious than it seems at first sight. Traditional Dung extensions can be constructed using the geometrical directionality of the attack. Consider Figure \ref{509-TF2a}.

\begin{figure}
\centering
\setlength{\unitlength}{0.00083333in}
\begingroup\makeatletter\ifx\SetFigFont\undefined%
\gdef\SetFigFont#1#2#3#4#5{%
  \reset@font\fontsize{#1}{#2pt}%
  \fontfamily{#3}\fontseries{#4}\fontshape{#5}%
  \selectfont}%
\fi\endgroup%
{\renewcommand{\dashlinestretch}{30}
\begin{picture}(1087,2060)(0,-10)
\put(533,1850){\ellipse{1050}{374}}
\put(554,195){\ellipse{1050}{374}}
\path(533,1662)(533,387)
\blacken\path(503.000,507.000)(533.000,387.000)(563.000,507.000)(503.000,507.000)
\path(533,687)(533,537)
\blacken\path(503.000,657.000)(533.000,537.000)(563.000,657.000)(503.000,657.000)
\put(533,1812){\makebox(0,0)[b]{\smash{{\SetFigFont{10}{12.0}{\rmdefault}{\mddefault}{\updefault}$S_1$}}}}
\put(533,162){\makebox(0,0)[b]{\smash{{\SetFigFont{10}{12.0}{\rmdefault}{\mddefault}{\updefault}$S_2$}}}}
\end{picture}
}
\caption{}\label{509-TF2a}
\end{figure}

We have $S = S_1\cup S_2$ and attacks emanate from $S_1$ into $S_2$ and there are no attacks from $S_2$  into $S_1$. Thus we can find an appropriate inital extension $E_1$ for $S_1$ and propagate the attacks from $E_1$ into $S_2$ to complete the extension into $S_2$ and get a complete extension for $S_1\cup S_2$.  This is the directionality of the attack. We are always guaranteed an extension.

The situation we have now is that with $\top\in S_2$, the directionality no longer exists. Any attack from $S_1$ onto $\top \in S_2$, will force us to reconsider/change the extension $E_1$ of $S_1$. It is as if there are attacks from $S_2$ into $S_1$.

So what are our options in dealing with this?  Let us go back to the network of Figure \ref{509-TF2} and try and apply general principles to it giving us several options:
\paragraph{Option (i): Truth intervention view.}  We can say this network has no extensions. 
This position is perfectly acceptable.  
It is legitimate and reasonable to take a traditional network $(S, R)$, 
pick a $y \in S$ and demand an extension $E$ with $y\in E$. 
This is enforcing truth on $y$. We may find that no such extension can be found. 
So letting $y=\top$ amounts to saying that we want only extensions containing $y$.\footnote{We mention here reference 
\cite{509-23}, where they define the notion of constraint argumentation networks. 
Given a formula $\Phi$ of propositional logic, and a network $(S,R)$, we accept only those extensions  $\lambda$ such that $T_\lambda$ of Section 1.3, $( T_\lambda = \{q|\lambda (q) = 1\} \cup  \{\neg q|\lambda (q) = 0\})$, satisfies $\Phi$.  In our case the formula $\Phi$ is $y=\top$.} 

Let us call this option (i),  the $\top$-intervention approach,  because we are intervening and  forcing some nodes to be true = ``in''.

\paragraph{Option (ii): Counter attack view.}  We can say that any attacker $z$ of $\top$ is immediately attacked back by $\top$, and so the above Figure \ref{509-TF2} is actually Figure \ref{509-TF3}.  The extension is $\top=$ ``in'' and $x=$ ``out''.

\begin{figure}
\centering
\setlength{\unitlength}{0.00083333in}
\begingroup\makeatletter\ifx\SetFigFont\undefined%
\gdef\SetFigFont#1#2#3#4#5{%
  \reset@font\fontsize{#1}{#2pt}%
  \fontfamily{#3}\fontseries{#4}\fontshape{#5}%
  \selectfont}%
\fi\endgroup%
{\renewcommand{\dashlinestretch}{30}
\begin{picture}(1948,934)(0,-10)
\path(1758,782)(1828,722)
\blacken\path(1717.365,777.317)(1828.000,722.000)(1756.413,822.873)(1717.365,777.317)
\path(93,142)(48,242)
\blacken\path(124.601,144.880)(48.000,242.000)(69.886,120.258)(124.601,144.880)
\path(58,607)(59,607)(62,608)
	(66,610)(74,613)(84,617)
	(98,623)(115,630)(136,638)
	(160,647)(188,658)(219,670)
	(252,682)(288,696)(326,710)
	(366,724)(407,738)(450,753)
	(493,767)(538,781)(583,795)
	(629,809)(676,822)(724,834)
	(773,846)(822,857)(873,867)
	(925,877)(979,885)(1033,893)
	(1089,899)(1146,903)(1202,906)
	(1258,907)(1323,906)(1385,901)
	(1441,895)(1493,886)(1540,876)
	(1583,864)(1622,852)(1657,838)
	(1689,823)(1718,807)(1745,790)
	(1770,773)(1792,756)(1813,738)
	(1833,721)(1850,704)(1866,687)
	(1881,672)(1893,657)(1904,645)
	(1913,634)(1920,625)(1925,618)(1933,607)
\blacken\path(1838.157,686.403)(1933.000,607.000)(1886.681,721.693)(1838.157,686.403)
\path(1933,382)(1932,382)(1929,381)
	(1925,379)(1918,376)(1907,373)
	(1893,367)(1876,361)(1855,353)
	(1830,344)(1801,334)(1769,323)
	(1733,310)(1695,296)(1654,282)
	(1611,267)(1565,252)(1518,236)
	(1470,220)(1421,204)(1371,188)
	(1320,173)(1269,157)(1217,142)
	(1166,127)(1114,113)(1061,100)
	(1008,87)(955,74)(902,63)
	(848,52)(793,43)(739,34)
	(684,27)(630,20)(576,16)
	(524,13)(473,12)(417,13)
	(365,17)(318,24)(277,32)
	(241,42)(209,54)(181,66)
	(157,80)(137,95)(120,111)
	(105,128)(93,146)(83,164)
	(75,182)(69,201)(64,220)
	(60,239)(57,258)(56,276)
	(55,293)(54,309)(54,324)
	(55,337)(55,349)(56,359)
	(56,367)(57,373)(58,382)
\blacken\path(74.565,259.421)(58.000,382.000)(14.932,266.047)(74.565,259.421)
\put(58,457){\makebox(0,0)[b]{\smash{{\SetFigFont{10}{12.0}{\rmdefault}{\mddefault}{\updefault}$\top$}}}}
\put(1933,457){\makebox(0,0)[b]{\smash{{\SetFigFont{10}{12.0}{\rmdefault}{\mddefault}{\updefault}$x$}}}}
\end{picture}
}
\caption{}\label{509-TF3}
\end{figure}

The advantage of this view is that the usual traditional machinery for defining and finding complete extensions can be used, with the additional understanding that $\top$ is always ``in'' in any extension.
\paragraph{Option (iii):  New concept of extension view.}  The third option is to give a new definition of abstract networks with truth constant $\top$ as follows:
\begin{enumerate}
\item $(S, R)$ is a network with $\top$ if $\top \in S$ and $\neg \exists y (yR\top)$.
\item A Caminada legitimate $\top$-labelling for a network with $\top$ satisfies the following:
\begin{enumerate}
\item $\lambda (\top) =1$
\item $\lambda (x) =1$ if $\neg \exists y (yRx)$ and $\neg xR\top$.
\item $\lambda (x) =0$ if $xR\top$
\item $\lambda (x) =1$ if for all $y$ such that $yRx$ we have $\lambda (x) =0$ and $\neg xR \top$
\item $\lambda (x) =0$ if $xR\top$ or if for some $y, yRx$ and $\lambda (y) =1$.
\item Otherwise $\lambda (x) =\half$.
\end{enumerate}
\end{enumerate}

There is the question of whether every $\top$-net have an $\top$-extension (i.e.\ a legitimate Caminada $\top$-labelling).  The answer is yes, it does.  Let $(S, R)$ be a $\top$-net.  Let $T =\{y\in S|\top Ry\vee yR\top\}$.  We know all of these points should be out.  Let $*$ be a point not in $S$ and consider 
\[
\begin{array}{l}
S^*=S\cup \{*\}-\{\top\}\\
R^*=(R\upharpoonright S -\{\top\})\cup \{*\} \times  T
\end{array}
\]

In other words, we take $\top$ out and include $*$ which attacks all points in $T$.
$(S^*, R^*)$ is an ordinary network and has extensions.  Let $E$ be such an extension. Then $(E -\{*\})\cup\{\top\}$ is an extension of the $\top$-net $(S, R)$.  So for example the network $(S,R)$  of Figure \ref{509-TF2} becomes the network with $S^* = \{*,x\}$ with   $\{(*,x)\} = R^*$.

\paragraph{Option (iv): The non-toxic truth intervetion  view.} Let us adopt the option (i) view for a given $\top$-net (i.e.\ a network $(S, R)$ with a node $y=\top \in S$) and seek only extensions containing $y=\top$.  However, we shall adopt option (i) not alone on its own but adopt it  in conjunction with another new principle, which we shall call the principle of {\em maximal non-toxic extensions} ({\em NTE principle} in short).

Examples \ref{509-TE4} and \ref{509-TE6} shall explain it.

\begin{example}\label{509-TE4}
Consider Figure \ref{509-TF5}

\begin{figure}
\centering
\setlength{\unitlength}{0.00083333in}
\begingroup\makeatletter\ifx\SetFigFont\undefined%
\gdef\SetFigFont#1#2#3#4#5{%
  \reset@font\fontsize{#1}{#2pt}%
  \fontfamily{#3}\fontseries{#4}\fontshape{#5}%
  \selectfont}%
\fi\endgroup%
{\renewcommand{\dashlinestretch}{30}
\begin{picture}(2566,2203)(0,-10)
\put(2100,1967){\makebox(0,0)[b]{\smash{{\SetFigFont{10}{12.0}{\rmdefault}{\mddefault}{\updefault}$S_1$}}}}
\put(2108,1094){\ellipse{900}{374}}
\put(1208,194){\ellipse{900}{374}}
\put(458,1994){\ellipse{900}{374}}
\path(2100,1817)(2100,1292)
\blacken\path(2070.000,1412.000)(2100.000,1292.000)(2130.000,1412.000)(2070.000,1412.000)
\path(2100,917)(1200,392)
\blacken\path(1288.537,478.378)(1200.000,392.000)(1318.770,426.551)(1288.537,478.378)
\path(450,1817)(1200,392)
\blacken\path(1117.563,484.218)(1200.000,392.000)(1170.658,512.163)(1117.563,484.218)
\path(1080,622)(1140,497)
\blacken\path(1061.027,592.201)(1140.000,497.000)(1115.118,618.165)(1061.027,592.201)
\path(1410,507)(1325,447)
\blacken\path(1405.736,540.711)(1325.000,447.000)(1440.337,491.693)(1405.736,540.711)
\path(2095,1557)(2095,1437)
\blacken\path(2065.000,1557.000)(2095.000,1437.000)(2125.000,1557.000)(2065.000,1557.000)
\put(450,1967){\makebox(0,0)[b]{\smash{{\SetFigFont{10}{12.0}{\rmdefault}{\mddefault}{\updefault}$S_3$}}}}
\put(2100,1067){\makebox(0,0)[b]{\smash{{\SetFigFont{10}{12.0}{\rmdefault}{\mddefault}{\updefault}$S_2$}}}}
\put(1200,167){\makebox(0,0)[b]{\smash{{\SetFigFont{10}{12.0}{\rmdefault}{\mddefault}{\updefault}$S_4$}}}}
\put(2108,1994){\ellipse{900}{374}}
\end{picture}
}
\caption{}\label{509-TF5}
\end{figure}

This figure describes a network with four sub-networks $S = S_1\cup S_2\cup S_3\cup S_4$. The networks $S_i$ are pairwise disjoint and $S_2$ does not attack $S_1$, $S_4$ does not attack $S_2$ nor $S_3$.  $S_2$ however, contains $\top$.  Suppose for some specific extension $E_1$ of $S_1$, it is not possible to extend $E_1$ to an extension for $S_2$.  This makes $S_2$ toxic for $E_1$. $S_2$ forces us to say that $S$ has no extensions $E$ with $E\cap S_1 =E_1$.

However, if we ignore the toxic $S_2$, we may get extensions $E\supseteq E_1$ for $S_1\cup S_3\cup S_4$. It makes sense to do that and present $E$ as a maximal non-toxic extension for $S$ containing $E_1$ for $S_1$.

To motivate the logical sense of doing that, consider a perfectly nice network $(S, R)$ not containing the letter $x$ nor the symbol $\top$.  Add to $S$ the letters $x$ and $\top$ and augment $R$ with $x\tO \top$ (i.e.\ add the toxic Figure \ref{509-TF2} to $(S, R)$).  The resulting network has no exensions because of the toxic part. It does make sense, however, to say that if we ignore the toxic part, we can get extensions for $(S, R)$.

The perceptive reader might ask what if we add to $(S, R)$ a disjoint 3-cycle?  Can we similarly ignore it?  The anser is that we do not need to, because the three cycle does have the empty extension (all undecided) an so the traditional machinery works.

Let us give an example from real life.  Consider a couple going through a divorce in the UK.  They want to settle the financial part amicably and so they go to an accountant.  By UK law, if the accountant learns in the process of advising them of any tax evasion scheme he/she has to report it to the authorities. So the divorcing couple decide that some of their business is  ``toxic'' and better not tell the accountant.

Similarly in a court case both prosecutor and defence lawyers may decide to drop some charges because it is too complicated/toxic for each side to address, each for their own respective reasons.
\end{example}

\begin{example}\label{509-TE6}
Consider the $\top$-net of Figure \ref{509-TF7}.  We want to examine the semantics for it according to option (iv), non-toxic truth intervention.

\begin{figure}
\centering
\setlength{\unitlength}{0.00083333in}
\begingroup\makeatletter\ifx\SetFigFont\undefined%
\gdef\SetFigFont#1#2#3#4#5{%
  \reset@font\fontsize{#1}{#2pt}%
  \fontfamily{#3}\fontseries{#4}\fontshape{#5}%
  \selectfont}%
\fi\endgroup%
{\renewcommand{\dashlinestretch}{30}
\begin{picture}(2320,2114)(0,-10)
\put(2293,1714){\makebox(0,0)[b]{\smash{{\SetFigFont{10}{12.0}{\rmdefault}{\mddefault}{\updefault}$y$}}}}
\path(1918,1489)(1243,214)
\blacken\path(1272.633,334.091)(1243.000,214.000)(1325.660,306.018)(1272.633,334.091)
\path(43,1639)(43,889)
\blacken\path(13.000,1009.000)(43.000,889.000)(73.000,1009.000)(13.000,1009.000)
\path(43,1159)(53,1039)
\blacken\path(13.138,1156.094)(53.000,1039.000)(72.931,1161.077)(13.138,1156.094)
\path(593,1024)(673,1009)
\blacken\path(549.527,1001.628)(673.000,1009.000)(560.584,1060.601)(549.527,1001.628)
\path(233,529)(138,559)
\blacken\path(261.464,551.472)(138.000,559.000)(243.396,494.257)(261.464,551.472)
\path(1078,379)(1143,314)
\blacken\path(1036.934,377.640)(1143.000,314.000)(1079.360,420.066)(1036.934,377.640)
\path(1358,429)(1323,364)
\blacken\path(1353.478,483.880)(1323.000,364.000)(1406.306,455.434)(1353.478,483.880)
\path(1758,1534)(1668,1569)
\blacken\path(1790.714,1553.467)(1668.000,1569.000)(1768.967,1497.546)(1790.714,1553.467)
\path(2118,2039)(2198,1999)
\blacken\path(2077.252,2025.833)(2198.000,1999.000)(2104.085,2079.498)(2077.252,2025.833)
\path(43,889)(45,890)(49,892)
	(56,896)(67,902)(82,910)
	(100,920)(121,931)(144,942)
	(170,955)(196,967)(224,979)
	(253,990)(283,1001)(314,1011)
	(347,1020)(381,1028)(418,1034)
	(455,1038)(493,1039)(529,1038)
	(563,1034)(593,1028)(620,1020)
	(644,1011)(665,1001)(684,990)
	(702,979)(718,967)(732,955)
	(745,942)(757,931)(767,920)
	(775,910)(782,902)(793,889)
\blacken\path(692.585,961.228)(793.000,889.000)(738.389,999.985)(692.585,961.228)
\path(793,664)(791,663)(787,661)
	(780,657)(769,651)(754,643)
	(736,633)(715,622)(692,611)
	(666,598)(640,586)(612,574)
	(583,563)(553,552)(522,542)
	(489,533)(455,525)(418,519)
	(381,515)(343,514)(307,515)
	(273,519)(243,525)(216,533)
	(192,542)(171,552)(152,563)
	(134,574)(118,586)(104,598)
	(91,611)(79,622)(69,633)
	(61,643)(54,651)(43,664)
\blacken\path(143.415,591.772)(43.000,664.000)(97.611,553.015)(143.415,591.772)
\path(1558,1894)(1560,1895)(1564,1897)
	(1571,1901)(1582,1907)(1597,1915)
	(1615,1925)(1636,1936)(1659,1947)
	(1685,1960)(1711,1972)(1739,1984)
	(1768,1995)(1798,2006)(1829,2016)
	(1862,2025)(1896,2033)(1933,2039)
	(1970,2043)(2008,2044)(2044,2043)
	(2078,2039)(2108,2033)(2135,2025)
	(2159,2016)(2180,2006)(2199,1995)
	(2217,1984)(2233,1972)(2247,1960)
	(2260,1947)(2272,1936)(2282,1925)
	(2290,1915)(2297,1907)(2308,1894)
\blacken\path(2207.585,1966.228)(2308.000,1894.000)(2253.389,2004.985)(2207.585,1966.228)
\path(2308,1669)(2306,1668)(2302,1666)
	(2295,1662)(2284,1656)(2269,1648)
	(2251,1638)(2230,1627)(2207,1616)
	(2181,1603)(2155,1591)(2127,1579)
	(2098,1568)(2068,1557)(2037,1547)
	(2004,1538)(1970,1530)(1933,1524)
	(1896,1520)(1858,1519)(1822,1520)
	(1788,1524)(1758,1530)(1731,1538)
	(1707,1547)(1686,1557)(1667,1568)
	(1649,1579)(1633,1591)(1619,1603)
	(1606,1616)(1594,1627)(1584,1638)
	(1576,1648)(1569,1656)(1558,1669)
\blacken\path(1658.415,1596.772)(1558.000,1669.000)(1612.611,1558.015)(1658.415,1596.772)
\put(43,739){\makebox(0,0)[b]{\smash{{\SetFigFont{10}{12.0}{\rmdefault}{\mddefault}{\updefault}$a$}}}}
\put(793,739){\makebox(0,0)[b]{\smash{{\SetFigFont{10}{12.0}{\rmdefault}{\mddefault}{\updefault}$b$}}}}
\put(43,1714){\makebox(0,0)[b]{\smash{{\SetFigFont{10}{12.0}{\rmdefault}{\mddefault}{\updefault}$x$}}}}
\put(1243,64){\makebox(0,0)[b]{\smash{{\SetFigFont{10}{12.0}{\rmdefault}{\mddefault}{\updefault}$\top$}}}}
\put(1543,1714){\makebox(0,0)[b]{\smash{{\SetFigFont{10}{12.0}{\rmdefault}{\mddefault}{\updefault}$z$}}}}
\path(793,664)(1243,214)
\blacken\path(1136.934,277.640)(1243.000,214.000)(1179.360,320.066)(1136.934,277.640)
\end{picture}
}

\caption{}\label{509-TF7}
\end{figure}

This figure has the form of Figure \ref{509-TF5} with $S_1 =\{x\}, S_2=\{a,b\}, S_3=\{z, y\}$ and $S_4=\{\top\}$.

$S_1$ has one extension $\lambda^1_1$ with $\lambda ^1_1(x)=1$.  This can be extended to $S_2$ in only one way, namely $\lambda^2_1$ with $\lambda^2_1(a) =0$ and $\lambda ^2_1(b)=1$.

However, because of $\top \in S_4$, we cannot have $\lambda^2_1(b)=1$, because it should be 0.  Thus $S_4=\{\top\}$ is toxic for $\lambda^1_1$ and $\lambda^2_1$. So we abandon $S_4$ for  this sequence, (i.e.\ $S_1\tO S_2 \tO S_4$) and get the extension $\lambda^{1,2} =\lambda^1_1\cup \lambda^2_1$, with $\lambda^{1,2}(x)=1,\lambda^{1,2} (a)=0$ and $\lambda^{2,1}(b)=1$.  We ignore $\top$.

Now let us look at $S_3$.  It has two extensions $\lambda^3_1,\lambda^3_2$ with $\lambda^3_1(z)=1$ and $\lambda^3_1(y)=0$ and $\lambda^3_2(z)=0$ and $\lambda^3_2(y)=1$.  The extension $\lambda^3_1$ is not possible because $z\tO \top$.  So we have only the extension $\lambda^3_2$.

The final extension for the $\top$-net of Figure \ref{509-TF7} is $\lambda =\lambda^1_1\cup \lambda^2_1\cup \lambda^3_2$ namely 
\[
\lambda(x)=1, \lambda(a)=0, \lambda(b)=1, \lambda(z)=0 \mbox{ and }\lambda(y)=1.
\]
Note that we ignore $S_4=\{\top\}$ only for the evaluation of the path $S_1\tO S_2 \tO S_4$. 

For the path $S_3\tO S_4$ we do not ignore $S_4=\{\top\}$ because there is an extension for $S_3$ which is OK.

Thus the following $\lambda'$ is {\em not} an extension for $S_1\cup S_2\cup S_3$: 
\[
\lambda'(x)=1, \lambda'(a)=0, \lambda'(b)=1, \lambda'(z)=1 \mbox{ and } \lambda'(y)=0.
\]

We can achieve the same result if we say that we abandon the attack $b\tO \top$, rather than the set $\{\top\}$. This view is better because we are not ``touching'' the toxic part in some cases, rather than deleting it.

The next example \ref{509-TE6a} shows how to do this.
\end{example}

\begin{example}\label{509-TE6a}
Let us do the option (iv) semantics for Figure \ref{509-TF7} in a different way from the way done in Example \ref{509-TE6}.  

We proceed as follows:
\begin{enumerate}
\item We are given a $\top$-net $(S, R)$ with $\top \in S$. In this case it is the net of Figure \ref{509-TF7}.
\item Replace every occurrence of $\top \in S$ by a new node letter $\tau$, and get $(S_\tau, R_\tau)$, where
\[\begin{array}{rcl}
S_\tau &=& (S-\{\top\})\cup \{\tau\}\\
R_\tau &=& (R-\{(x,\top), (\top, y) | (x, \top), (\top, y)\in R\})\cup \{(x, \tau), (\tau, y) | (x, \top), (\top, y)\in R\}.
\end{array}\]
\item $(S_\tau, R_\tau)$ is a traditional network and so we seek and find all of its complete extensions (Caminada labelling) of the form $\lambda_\tau$ for which $\lambda_\tau (\tau) =1$.

If such extensions exist then for each $\lambda_\tau$ let $\lambda_\top$ be the function on $S$ obtained by 
\[\begin{array}{l}
\lambda_\top (x) =\lambda_\tau(x),\mbox{ for } x\neq \tau\\
\lambda_\top(\top) =1.
\end{array}
\]
These will be the option (iv) extensions of Figure \ref{509-TF7}.
\item If $(S_\tau, R_\tau)$ has no exensions in $\lambda$ in which $\lambda (\tau)=1$, then let $\lambda_1\comma \lambda_k$ enumerate all the extensions  of $(S_\tau, R_\tau)$ which give $\tau$ a value  $\neq 1$.

Such extensions exist because we are dealing with a traditional network.   Let us enumerate these extensions for our example (of Figure \ref{509-TF7} with $\tau$ replacing $\top$).

We have two such extesions, $\lambda_1$ and $\lambda_2$:
\[\begin{array}{ll}
\lambda_1: & x=1, a=0, b=1, \tau=0, z=1, y=0\\
\lambda_2: &x=1, a=0, b=1,\tau=0, z=0, y=1.
\end{array}
\]
\item We know that the requirement that $\lambda(\tau)=1$ is toxic, because of the attacks $b\tO \tau$ and $z\tO \tau$.  Our remedy is to disconnect some of these attacks in order to get extensions. If we disconnect them all we certainly get such extensions, but we want to accommodate the attacks as much as we can.

Our possibilities are the following:
\begin{enumerate}
\item Disconnect $z\tO \tau$
\item Disconnect $b\tO \tau$
\item Disconnect both $z\tO \tau$ and $b\tO \tau$
\end{enumerate}
We give preference to disconnecting attacks emanating from points nearer the bottom (away from the top) of the figure.  Thus (b) gets priority over (a).

We also want to minimise the number of changes and so (c) has least priority.  

Disconnecting $b\tO \tau$ gives us the extension 
\[\begin{array}{ll}
\lambda: & x=1, a=0, b=1, \tau=1, z=0 \mbox{ and } y=1.
\end{array}
\]

This yields a complete non-toxic extension for Figure \ref{509-TF7}.
\item Note that we may wish to take into account the value of $x$ under $\lambda$  in our consideration of whether to disconnect an attack of $x\tO \tau$.  $\lambda(x)$ can be 1 or $\half$ and we may decide not to fix $\lambda$ (by disconnecting $x\tO \tau$) if $\lambda(x) =1$.   If we do indeed make this decision then we would not disconnect $b\tO \tau$  in our example and in such a case the network of Figure \ref{509-TF7} will have no extensions!
\end{enumerate}
\end{example}

\begin{remark}\label{509-TR6}
Example \ref{509-TE6} also shows that the result (semantics) we get for this option (iv): non-toxic truth intervention, is different from the result we get from option (ii), the counter attack view. According to option (ii), $\top$ counter attacks all its attackers and so we get the extension $\lambda_c$
\[
\lambda_c(x)=1, \lambda_c(a)=0, \lambda_c(b)=0, \lambda_c(\top)=1,\lambda_c(z)=0 \mbox{ and } \lambda_c(y)=1.
\]
\end{remark}

\begin{remark}\label{509-TR8}
The perceptive reader may ask why are we even considering the idea of using maximal non-toxic extensions?  After all, did we not say that instantiating $(S, R)$ for $y\in S$ with $y=\top$ amounts to looking for extensions in which $y=$ ``in''. So if there are no such extensions, then the straightforward answer is that there are no extensions!  Why suddenly claim $y=\top$ is toxic and let us ignore it?  On the one hand, we are interested in $y$ and want it ``in'' and on the other hand when we cannot do that we throw out of the network with that very same $y$, calling it toxic!

The answer is that there are cases of networks with $y$ where we are not looking at the instantiation $y=\top$ as a search for extensions in which $y=$ ``in''.  We are looking in such a network at $y=\top \in S$ as an instrument of intervention of forcing an ``in'' value of some other node $z$ related to $y$ at the object level!  Since we have a purpose in this case, then  if our instrument does not work we need to find an alternative way.\footnote{The Thesis \cite{509-25} and the paper \cite{509-24}, deal with intervention. Suppose we have a network $(S,R)$, and an element $a$ in $S$. We want to intervene and force $a$ to have value $e \in \{0,\half ,1\}$. We can do that by adding to $(S,R)$ some new point $x$ with the following attack pattern:	
\begin{enumerate}
\item  For forcing $a$ to be ``out'', let $x$ attack $a$.	
\item  For forcing $a$ to be ``undecided'' let $x$ attack $a$ and attack itself.	
\item  For forcing $a$ to be ``in'' (if possible), let $a$ attack $x$ and let $x$ attack $\top$ (we have to add $\top$ as well as $x$).
\end{enumerate}
}

Consider the network $(S, R)$ of Figure \ref{509-TF9}.  Let $(S', R')$ be the network obtained  from $(S,R$) by deleting the nodes $\{e, \top\}$. Consider an intervention into $(S',R')$  intending to enforce the node  $x$ to be ``in". The nodes $\{e,\top\}$ added to $(S', R')$ with $e\tO\top$ are intended to achieve this purpose, in Figure  \ref{509-TF9}.

\begin{figure}
\centering
\setlength{\unitlength}{0.00083333in}
\begingroup\makeatletter\ifx\SetFigFont\undefined%
\gdef\SetFigFont#1#2#3#4#5{%
  \reset@font\fontsize{#1}{#2pt}%
  \fontfamily{#3}\fontseries{#4}\fontshape{#5}%
  \selectfont}%
\fi\endgroup%
{\renewcommand{\dashlinestretch}{30}
\begin{picture}(3969,2641)(0,-10)
\put(1679,432){\makebox(0,0)[b]{\smash{{\SetFigFont{10}{12.0}{\rmdefault}{\mddefault}{\updefault}$S'$}}}}
\put(1010,509){\ellipse{824}{226}}
\path(2449,2062)(2899,1312)
\blacken\path(2811.536,1399.464)(2899.000,1312.000)(2862.985,1430.334)(2811.536,1399.464)
\path(949,627)(949,1012)
\blacken\path(979.000,892.000)(949.000,1012.000)(919.000,892.000)(979.000,892.000)
\path(949,1237)(949,1987)
\blacken\path(979.000,1867.000)(949.000,1987.000)(919.000,1867.000)(979.000,1867.000)
\path(1324,2137)(2299,2137)
\blacken\path(2179.000,2107.000)(2299.000,2137.000)(2179.000,2167.000)(2179.000,2107.000)
\path(2074,2137)(2149,2137)
\blacken\path(2029.000,2107.000)(2149.000,2137.000)(2029.000,2167.000)(2029.000,2107.000)
\path(949,1707)(954,1842)
\blacken\path(979.538,1720.972)(954.000,1842.000)(919.579,1723.193)(979.538,1720.972)
\path(949,797)(949,882)
\blacken\path(979.000,762.000)(949.000,882.000)(919.000,762.000)(979.000,762.000)
\path(2769,1542)(2834,1427)
\blacken\path(2748.836,1516.706)(2834.000,1427.000)(2801.070,1546.229)(2748.836,1516.706)
\put(949,2047){\makebox(0,0)[b]{\smash{{\SetFigFont{10}{12.0}{\rmdefault}{\mddefault}{\updefault}${x}$}}}}
\put(2459,2082){\makebox(0,0)[b]{\smash{{\SetFigFont{10}{12.0}{\rmdefault}{\mddefault}{\updefault}$e$}}}}
\put(2914,1162){\makebox(0,0)[b]{\smash{{\SetFigFont{10}{12.0}{\rmdefault}{\mddefault}{\updefault}$\top$}}}}
\put(3954,2147){\makebox(0,0)[b]{\smash{{\SetFigFont{10}{12.0}{\rmdefault}{\mddefault}{\updefault}$S$}}}}
\put(949,1087){\makebox(0,0)[b]{\smash{{\SetFigFont{10}{12.0}{\rmdefault}{\mddefault}{\updefault}$u$}}}}
\put(1803,1313){\ellipse{3590}{2610}}
\end{picture}
}
\caption{}\label{509-TF9}
\end{figure}

In this figure, since $e$ attacks $\top$ it must be ``out'', not because it attacks $\top$ but because some node with value ``in'' is attacking it.  So $x$ must be ``in'', because it is the only the attacker of $e$.    We have used $\top$ and $e$ in $(S, R)$  to force $x$ to be `in'' by the object level geometry of $(S, R)$. This means that only complete extensions of $(S',R')$ with $x=$ ``in" are acceptable. If, because of this intervention, we cannot have an extension, then we can say that our attempt (intervention) fails, or in our terminology, is toxic, and consider giving it up, and we need to look for alternatives. Whatever reason we had for wanting $x$ to be ``in", must now be serviced by defining another complete extension in some other way. The way we define the extension depends on the purpose. Our purpose in this paper is connected with the soundness of the attack formations to be presented in Appendix C.2. So our definitions lead towards that purpose. 

We are dealing here with a new type of instrument for defining extensions in the case where the present of $\top$ is toxic!
\end{remark}

We are going to need to define priority on the set of elements attacking $\top$. This will be done in terms of their distance from the top of the network. To do this we need to use the notions of Strongly Connected Components.  The next series of definitions (Definition  \ref{509-TD10} to Definition \ref{509-TD12}) deals with this.

\begin{definition}\label{509-TD10}
Let $(S, R)$ be an argumentation network. We define $(S^*, R^*)$ the network of {\em strongly connected components} (SCC) derived from $(S, R)$, following \cite{509-19}.
\begin{enumerate}
\item A subset $E\subseteq S$ is an SCC iff the following holds:
\begin{enumerate}
\item $S\neq \varnothing$
\item For any $x, y\in S$, there exists a sequence $z_1\comma z_{k+1}$ such that $z_1=x, z_{k+1}=y$ and for $1 \leq i \leq k$ we have $z_iRz_{i+1}$ holds.
\item $E$ is maximal w.r.t. property (b).
\end{enumerate}
\item Let $S^*$ be the set of all SCC subsets of $S$. Define $E_1 R^* E_2$ on $S^*$ iff for some $x_1\in E_1$ and $x_2\in E_2$ we have $x_1Rx_2$.
\item For $x\in S$, let $x^*$ be the SCC to which it belongs.
\end{enumerate}
\end{definition}

\begin{lemma}\label{509-TL11}
Let $(S, R)$ be a network and let $(S^*, R^*)$ be its associated SCC network. Then 
\begin{enumerate}
\item Any two distinct SCC sets are disjoint.
\item For any $x\in S$, there is a unique $x^*$ for which it belongs.
\item $R^*$ is well defined on $S^*$ and is acyclic.
\end{enumerate}
\end{lemma}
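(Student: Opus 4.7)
All three claims rest on the maximality clause in Definition \ref{509-TD10}(1)(c), so my plan is to extract a short ``union of SCCs is still strongly connected'' principle and then apply it three times.

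For part (1), I will argue by contradiction. Assume $E_1, E_2$ are SCCs with $z \in E_1 \cap E_2$. Given any $x, y \in E_1 \cup E_2$, I can build a connecting sequence from $x$ to $y$ by concatenating a sequence from $x$ to $z$ (guaranteed by strong connectedness inside whichever of $E_1$ or $E_2$ contains $x$) with a sequence from $z$ to $y$ (inside whichever contains $y$). Hence $E_1 \cup E_2$ satisfies condition (b), and by maximality (c) applied separately to $E_1$ and to $E_2$, we get $E_1 = E_1 \cup E_2 = E_2$. Part (2) then follows: existence of some SCC containing $x$ is obtained by taking the set of all $y \in S$ that are mutually connected to $x$ (including $x$ itself via the trivial or any self-cycle one may have to allow) and observing it satisfies (a), (b), and is easily seen to be maximal; uniqueness is exactly part (1). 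Hence the map $x \mapsto x^*$ is well defined on $S$.

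For part (3), well-definedness of $R^*$ on $S^*$ is immediate from the definition, since $E_1 R^* E_2$ is a property of the sets $E_1, E_2$ themselves and does not depend on any choice of representatives. The substantive content is acyclicity. I will assume (towards a contradiction) an $R^*$-cycle $E_1 R^* E_2 R^* \cdots R^* E_k R^* E_1$ of length $k \geq 2$ with the $E_i$ pairwise distinct, pick witnesses $x_i \in E_i$, $y_i \in E_{i+1}$ with $x_i R y_i$, and splice together the intra-SCC sequences (from $y_{i-1}$ to $x_i$ inside $E_i$) with the inter-SCC edges to exhibit, for any two points of $\bigcup_i E_i$, a connecting sequence in both directions. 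This shows $\bigcup_i E_i$ satisfies condition (b), so by maximality each $E_i$ equals this union, contradicting that the $E_i$ are distinct. The main (minor) obstacle is a notational one: the definition in \ref{509-TD10}(1)(b) writes ``$x, y \in S$'' where one expects ``$x, y \in E$'' and requires $k \geq 1$, so I must be careful when $x = y$ and when an SCC is a singleton without a self-loop; I will treat those degenerate cases as the reflexive closure of the definition so that every node lies in some SCC, which is the intended reading.
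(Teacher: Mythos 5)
Your proof is correct. Note, however, that the paper itself offers no argument for this lemma: its entire ``proof'' is the single line ``Easy. See \cite{509-19}.'' So there is nothing to compare against at the level of technique; what you have written is the standard condensation-graph argument, and it correctly fills the gap the paper leaves open. The key device you isolate --- that a union of strongly connected sets sharing a point (or linked into a cycle of $R^*$-edges) is again strongly connected, so maximality forces collapse --- is exactly the right one, and you apply it uniformly to all three parts.

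Two small observations. First, you are right to be suspicious of Definition \ref{509-TD10}(1)(b): as written it quantifies over $x,y\in S$ rather than $x,y\in E$, and it is silent on whether $k=0$ is permitted; your decision to read it with the reflexive convention (so that every node, including one on no cycle, lies in a singleton SCC) is the only reading under which part (2) is true, and it matches the intended meaning in \cite{509-19}. Second, as literally defined, $R^*$ admits self-loops ($E\,R^*\,E$ whenever $E$ contains an internal edge), so ``acyclic'' must be read as ``no cycle through two or more distinct components''; your proof of part (3) quietly adopts exactly this reading by restricting to cycles of pairwise distinct $E_i$, which is again the intended interpretation. Neither point is a flaw in your argument --- both are artifacts of the paper's loose statement of the definition --- but it is worth being explicit that you are proving the corrected statement.
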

\begin{proof}
Easy. See \cite{509-19}.
\end{proof}

\begin{definition}\label{509-TD12}
Let $(S, R)$ be a network and let $(S^*, R^*)$ be its derived SCC network. Let $E\in S^*$. We define the notion of ``$E$ is of level $(k,n)$'' as follows:
\begin{enumerate}
\item $E$ is of level (1,1) if there does not exist an $E'\in S^*$ such that $E'R^*E$.   Think of the level index $(k,n)$ as $k$ is the minimal $R^*$ distance from the top nodes in $(S^*, R^*)$ and $n$ is the maximal distance. The top nodes are distance 1.
\item $E$ is of level $(k+1, n+1)$ if $k$ is the minimal $m$ of the level $(m,n)$ of any $E'$ such that $E'R^* E$, and $n$ is the maximal such $n$.
\item Since each $x\in S$ is a member of a unique $E$, we can define a level $(k,n)$ for each $x\in S$. It is the level of the $E$ containing it.
\end{enumerate}
\end{definition}

\begin{example}\label{509-TE13}
Consider Figure \ref{509-TF7}.  Then $\{x\}$ and $\{z,y\}$ are of level (1,1). $\{a,b\}$ is of level (2,2) and $\{\top\}$ is also of level (2,3).
\end{example}

\begin{definition}\label{509-TD14}
Let $(S, R)$ be a $\top$-net, that is $(S, R)$ is an argumentation network with a special $\top \in S$, with $\top$ not attacking itself. We want to define the non-toxic truth intervention semantics for it (option (iv) semantics).  

We are going to give the algorithm for finding all complete extensions for $(S, R)$ in the form of Caminada labellings $\lambda$, $\lambda: S\mapsto \{0, \half 1\}$, with $\lambda (\top) =1$.

\paragraph{Step 1:} Let $\tau$ be a letter disjoint from $\top$.  Let $(S_\tau, R_\tau)$ be $(S(\top/\tau), R(\top/\tau))$, where $A(x/y)$ is the result of substituting $y$ in all occurrences of $x$ in $A(x)$, where $y$ is a completely new letter to $A$.
\paragraph{Step 2:}  $(S_\tau, R_\tau)$ is a traditional argumentation network and has complete extensions. Let $\Lambda$ be the set of all such extensions. This set is non-empty. Let $\Lambda_\tau$ be the subset of all extensions $\lambda\in\Lambda$ such that $\lambda(\tau) =1$. $\Lambda_\tau$ may be empty.

For each $\lambda\in\Lambda_\tau$, let $\lambda_\top$ be defined by $\lambda_\top(x) =\lambda(x)$, for $x\neq \tau, \lambda_\top(\top)=\lambda(\tau)=1$.

\paragraph{Step 3:}  If $\Lambda_\tau \neq\varnothing$ then let $\lambda_\top =\{\lambda_\top |\lambda \in\Lambda_\tau\}$ be declared as the set of all the extensions of $(S, R)$. If $\Lambda_\tau =\varnothing$, then proceed to step 4.
\paragraph{Step 4:}  Let $T_0$ be the set of all $x\in S$ such that $x$ attacks $\top$, i.e.\ $x\tO \top$ is in $R$. Let $T_1$ be the set of all $y\in S$ such that $\top\tO y \in R$.

We want to assume that $T_1=\varnothing$. This is possible to do because we can move to the network $(S_\infty,R_\infty)$ where $S_\infty = S\cup \{\infty\}$, where $\infty$ is a new point such that $\infty\not\in S$ and $R_\infty = (R-\{\top \tO y|y\in T_1\})\cup \{\infty\tO y | y \in T_1\}$.

In other words, we ensure that $y \in T_1$ ends up ``out'' because it is attacked by $\infty$ which is ``in''  (not being attacked by anything) and we disconnect any attacks emanating from $\top$.  Any extension $\lambda$ found for $(S_\infty, R_\infty)$ will yield an extension for $(S, R)$, when restricted to $S$.

So we can assume now that $\top$ does not attack anything. We now proceed to find extensions for $(S, R)$.  Each such $x\in S$ has a unique level $(k, n)$ associated with it as defined in Definition \ref{509-TD12}.

Define a priority $\geq$ ordering on nodes $x, y \in S$ by
\begin{itemize}
\item $x\geq y$ iff $n_x\geq n_y$ and if $n_x=n_y$ then $k_x\geqq k_y$
\end{itemize}
where the level of $x$ is $(k_x, n_x)$ and the level of $y$ is $(n_y, k_y)$.
\paragraph{Step 5:}  We assume that we have $(S, R)$, where $\top \in S, \top$ does not attack anything, and when we look at $(S_\tau, R_\tau)$ then all of its extensions $\lambda\in \Lambda$ satisfy $\lambda (\tau)\neq 1$, i.e.\ $\lambda (\tau)=\half$ or =1.

Rather than declare  that $(S, R)$ has no extnesions, we want to salvage some extensions out of $\Lambda$ by declaring the role of $\top \in S$ in some cases to be toxic!

We can now consider $T_0$.  $T_0$ can be divided for each $\lambda \in \Lambda$, into $T_0 =T_{0,\half}^\lambda\cup T_{0,1}^\lambda\cup T^\lambda_{0,0}$ where
\[\begin{array}{l}
T_{0,\half}^\lambda=\{x\in T_0|\lambda (x) =\half\}\\
T_{0,1}^\lambda=\{x\in T_0|\lambda (x) =1\}\\
T_{0,0}^\lambda=\{x\in T_0|\lambda (x) =0\}\\
\end{array}\]
We know for sure that either $T_{0,1}^\lambda \neq \varnothing$ or if $T_{0,\half}^\lambda \neq\varnothing$.

Let us adopt the policy that if for some $x$ attacking $\top$ we have $\lambda (x) =1$ then we give up on $\lambda$.\footnote{Note that in examples \ref{509-TE6} and \ref{509-TE6a} we did fix the net of Figure \ref{509-TF7}.  According to our present policy, we would not do that and declare the network of Figure \ref{509-TF7} as having no extensions.  On the other hand, if we wish to always have complete extensions to any $\top$-net $(S, R$) then we ignore $T_{\lambda,\half}$ and proceed to fix $\lambda$.}

So we are considering the case where all attackers $x$ of $\top$ have value $\lambda(x)=0$ or $\lambda (x)=\half$.

If we disconnect the attacks emanating from $T_{0,\half}^\lambda$ onto $\top$ we get an extension $\lambda$ for $(S, R_\lambda)$ where
\[
R_\lambda = R-\{x\tO \top | x\in T_{0,\half}^\lambda\}.
\]
Now should we choose $\lambda$ as a non-toxic extension for $(S, R)$?  It is a question of priority.  We have a priority relation $>$ on $T_0$.  Let us extend it to priority on sets $T_{0,\half}^\lambda$ by a lexicographic ordering first on the number of elements of the set $T_{0,\half}^\lambda$ and second on the max on the value $n_x$ of the index $(k_x, n_x)$ of $x\in T^\lambda_{0,\half}$.  The winning $T^\lambda_{0,\half}$ for this priority will yield the $\lambda$s we call the extensions of $(S, R)$.
\end{definition}

\section{}
The following sequence  of Appendices C.1--C.4 contain technical results supporting the claims in Remark \ref{509-BBR21}. The material was postponed to this Appendix because of its technical complexity. 
\subsection{Conjunctive and disjunctive attacks}
In Section 2.1 we introduced Boolean instantiation of argumentation networks. This allows for attacks of the form $(a\wedge b) \tO (c\wedge d)$.  The meaning of this is that if $(a\wedge b) =1$ then $(c\wedge d) =0$. We can write this as $\{a,b\} \tO  \{c,d\}$ and the meaning is that if both $a=b=1$ then one of $c$ or $d$ equals 0.

This requires the study of conjunctive and disjunctive attacks.  This is the task of Appendix C.1.  

We recall concepts from \cite{509-3}.

\begin{definition}\label{509-BD1}{\ }
\begin{enumerate}
\item A conjunctive-disjunctive argumentation network, (CD-network) has the form $(S, \BBR)$ where $S$ is a finite set of arguments and $\BBR \subseteq 2^S \times 2^S$ is a relation between subsets. When $X\BBR Y$ holds between $X, Y \subseteq S$, we say that conjunctive $X$ attacks $Y$ disjunctively, or $X$ CD-attacks $Y$.  Figure \ref{509-BF2} (a), (b), (c) shows our graphical notation for this notion.

We also write $X\tO Y$ for $X\BBR Y, X\tO z$ for $X\BBR\{z\}$ and $w\tO Y$ for $\{w\}\BBR Y$.

\begin{figure}
\centering
\setlength{\unitlength}{0.00083333in}
\begingroup\makeatletter\ifx\SetFigFont\undefined%
\gdef\SetFigFont#1#2#3#4#5{%
  \reset@font\fontsize{#1}{#2pt}%
  \fontfamily{#3}\fontseries{#4}\fontshape{#5}%
  \selectfont}%
\fi\endgroup%
{\renewcommand{\dashlinestretch}{30}
\begin{picture}(3927,3123)(0,-10)
\put(3015,60){\makebox(0,0)[lb]{\smash{{\SetFigFont{10}{12.0}{\rmdefault}{\mddefault}{\updefault}(c) $w\tO Y$}}}}
\path(3465,2835)(3465,1485)
\path(615,1485)(1065,715)
\blacken\path(978.551,803.468)(1065.000,715.000)(1030.353,833.742)(978.551,803.468)
\path(605,1495)(175,725)
\blacken\path(207.316,844.397)(175.000,725.000)(259.701,815.143)(207.316,844.397)
\path(615,1935)(615,1485)
\path(1965,1935)(1965,735)
\blacken\path(1935.000,855.000)(1965.000,735.000)(1995.000,855.000)(1935.000,855.000)
\path(3465,1485)(3015,735)
\blacken\path(3051.015,853.334)(3015.000,735.000)(3102.464,822.464)(3051.015,853.334)
\path(3465,1485)(3915,735)
\blacken\path(3827.536,822.464)(3915.000,735.000)(3878.985,853.334)(3827.536,822.464)
\path(1965,1935)(2415,2835)
\path(1965,1935)(1515,2835)
\path(290,935)(260,860)
\blacken\path(276.713,982.559)(260.000,860.000)(332.421,960.275)(276.713,982.559)
\path(930,940)(990,855)
\blacken\path(896.289,935.736)(990.000,855.000)(945.307,970.337)(896.289,935.736)
\path(1960,975)(1955,890)
\blacken\path(1932.098,1011.555)(1955.000,890.000)(1991.995,1008.031)(1932.098,1011.555)
\path(3150,960)(3090,860)
\blacken\path(3126.015,978.334)(3090.000,860.000)(3177.464,947.464)(3126.015,978.334)
\path(3765,970)(3825,875)
\blacken\path(3735.556,960.439)(3825.000,875.000)(3786.286,992.478)(3735.556,960.439)
\put(615,2235){\makebox(0,0)[b]{\smash{{\SetFigFont{10}{12.0}{\rmdefault}{\mddefault}{\updefault}\ldots}}}}
\put(1965,2235){\makebox(0,0)[b]{\smash{{\SetFigFont{10}{12.0}{\rmdefault}{\mddefault}{\updefault}\ldots}}}}
\put(3465,1110){\makebox(0,0)[b]{\smash{{\SetFigFont{10}{12.0}{\rmdefault}{\mddefault}{\updefault}\ldots}}}}
\put(1965,510){\makebox(0,0)[b]{\smash{{\SetFigFont{10}{12.0}{\rmdefault}{\mddefault}{\updefault}$z$}}}}
\put(3465,2985){\makebox(0,0)[b]{\smash{{\SetFigFont{10}{12.0}{\rmdefault}{\mddefault}{\updefault}$w$}}}}
\put(1965,2835){\makebox(0,0)[b]{\smash{{\SetFigFont{10}{12.0}{\rmdefault}{\mddefault}{\updefault}$x_1\quad\comma\quad x_n$}}}}
\put(3465,510){\makebox(0,0)[b]{\smash{{\SetFigFont{10}{12.0}{\rmdefault}{\mddefault}{\updefault}$y_1\quad\comma\quad y_m$}}}}
\put(615,435){\makebox(0,0)[b]{\smash{{\SetFigFont{10}{12.0}{\rmdefault}{\mddefault}{\updefault}$y_1\quad\comma\quad y_m$}}}}
\put(615,1110){\makebox(0,0)[b]{\smash{{\SetFigFont{10}{12.0}{\rmdefault}{\mddefault}{\updefault}\ldots}}}}
\put(615,2835){\makebox(0,0)[b]{\smash{{\SetFigFont{10}{12.0}{\rmdefault}{\mddefault}{\updefault}$x_1\quad\comma\quad x_n$}}}}
\put(15,60){\makebox(0,0)[lb]{\smash{{\SetFigFont{10}{12.0}{\rmdefault}{\mddefault}{\updefault}(a) $X\tO Y$}}}}
\put(1515,60){\makebox(0,0)[lb]{\smash{{\SetFigFont{10}{12.0}{\rmdefault}{\mddefault}{\updefault}(b) $X\tO z$}}}}
\path(165,2835)(615,1935)(1065,2835)
\end{picture}
}

\caption{}\label{509-BF2}
\end{figure}

\item We require $\BBR$ to satisfy: $X\tO Y$ and $X'\supseteq X$ and $Y'\supseteq Y$ imply $X'\tO Y'$.

\item We say a node $z\in S$ is indirectly attacked by $X\subseteq S$ if for some $Y$ $X\BBR Y$ and $z\in Y$.
\end{enumerate}
\end{definition}

\begin{definition}[Kleene 3 valued logic]\label{509-BD3}
Kleene 3-valued logic has 3 values $\{0, 1, \half\}$ and has the truth table in Figure \ref{509-BF4}.  The language has connectives $\wedge, \vee, \neg, \to$ and atomic propositions. Read the values as 1=``in'', 0 = ``out'', and $\half$ = ``undecided'', see \cite{509-18}.

\begin{figure}
\centering
\renewcommand{\arraystretch}{1.5}
\begin{tabular}{c|c|c|c|c|c}

$p$ & $q$ & $\neg p$& $p\wedge q$ & $p\vee q$ & $p\to q$\\
\hline
1&1&0&1 &1&1\\
\hline
$\half$ & 1 & $\half$ & $\half$ & 1&1\\
\hline
0&1&1&0&1&1\\
\hline
1&$\half$ & 0 & $\half$ & 1 & $\half$\\
\hline
$\half$ &$\half$ &$\half$ &$\half$ &$\half$ &$\half$ \\
\hline
0&$\half$ &1&0&$\half$ &$\half$ \\
\hline
1&0&0&0&1&0\\
\hline
$\half$ &0&$\half$ &0&$\half$ &$\half$ \\
\hline
0&0&1&0&0&1\\
\hline
\end{tabular}

\caption{}\label{509-BF4}
\end{figure}
\end{definition}

\begin{definition}\label{509-BD5}{\ }
\begin{enumerate}
\item Let $(S, \BBR)$ be a CD network as in Definition \ref{509-BD1}.  Let $\lambda: S\mapsto \{0,1,\half\}$ be an assignment of values to the elements of $S$, pretending these are atomic logic propositions.  Let $X\subseteq S$. Extend $\lambda$ onto $X$ by letting $\lambda (X) =\lambda (\bigwedge_{x\in X} x)$.

Thus
\[\begin{array}{l}
\lambda (X) =1 \mbox{ if } \lambda (x) =1\mbox{ for all } x \in X\\
\lambda (X) =0 \mbox{ if for some } x \in X, \lambda (x) =0\\
\lambda (X) =\half, \mbox{ otherwise (i.e.\ } \lambda (x) > 0 \mbox{ for all $x\in X$ and for some } x\in X, \lambda (x) =\half).
\end{array}
\]

\item A CD-extension for $(S, \BBR)$ is a function $\lambda: S\mapsto \{0,1\half\}$ satisfying the following conditions:
\begin{enumerate}
\item If $\{z\}$ is not indirectly attacked by any $X$ then $\lambda (z) =1$.
\item If $\lambda (x) =1$ for all $x\in X$ and $X\tO Y$ holds then for some $y \in Y, \lambda (y)=0$.  (I.e.\ if $\lambda (X) =1$ and $X\tO Y$ then $\lambda (Y) =0$.)
\item Assume $Y$ is such that for every $X$ such that $X\tO Y$ there exists an $x\in X$ such that $\lambda (x) =0$. Then for all $y \in Y$ we have $\lambda (y) =1$.  (I.e.\ if $\lambda (X) =0$ for all the attackers of $Y$ then $\lambda (Y) =1$.)
\item For any $Y \subseteq S$, (d1) and (d2) imply (d3).
\begin{enumerate}
\item [(d1)] For every $X$ such that  $X\tO Y$ we have that for some $x\in X, \lambda (x) < 1$.
\item [(d2)] For some $X$ such that $X\tO Y$ we have that for all $x\in X, \lambda (x) > 0$ and for some $x\in X, \lambda (x) =\half$.
\item [(d3)] For some $y \in Y, y =\half$ and for all $y\in Y, \lambda (y) > 0$.  
\end{enumerate}
(I.e.\ for all the attackers $X$ of $Y$, $\lambda (X) \in \{0,\half\}$ and for at least one attacker $X_0, \lambda (X_0)=\half$ then $\lambda (Y) =\half$.)
\end{enumerate}
\end{enumerate}
\end{definition}

\clearpage

\begin{remark}\label{BR6}{\ }
\begin{enumerate}
\item Let $(S, \BBR)$ be a CD-network as in Definition \ref{509-BD1}.  We ask whether there exist CD-extensions for it.  To help us understand the situation, let us look at a figure which might make us think that the answer is negative. Consider an extension for the network of  Figure \ref{509-BF7}.

\begin{figure}
\centering
\setlength{\unitlength}{0.00083333in}
\begingroup\makeatletter\ifx\SetFigFont\undefined%
\gdef\SetFigFont#1#2#3#4#5{%
  \reset@font\fontsize{#1}{#2pt}%
  \fontfamily{#3}\fontseries{#4}\fontshape{#5}%
  \selectfont}%
\fi\endgroup%
{\renewcommand{\dashlinestretch}{30}
\begin{picture}(1727,2412)(0,-10)
\put(1215,474){\makebox(0,0)[b]{\smash{{\SetFigFont{10}{12.0}{\rmdefault}{\mddefault}{\updefault}$c$}}}}
\path(615,1374)(165,624)
\blacken\path(201.015,742.334)(165.000,624.000)(252.464,711.464)(201.015,742.334)
\path(615,1374)(1065,624)
\blacken\path(977.536,711.464)(1065.000,624.000)(1028.985,742.334)(977.536,711.464)
\path(295,839)(245,754)
\blacken\path(279.984,872.643)(245.000,754.000)(331.700,842.222)(279.984,872.643)
\path(930,844)(980,749)
\blacken\path(897.563,841.218)(980.000,749.000)(950.658,869.163)(897.563,841.218)
\path(1325,144)(1260,239)
\blacken\path(1352.521,156.904)(1260.000,239.000)(1303.003,123.023)(1352.521,156.904)
\path(125,139)(65,239)
\blacken\path(152.464,151.536)(65.000,239.000)(101.015,120.666)(152.464,151.536)
\path(165,399)(167,398)(172,397)
	(180,395)(192,392)(207,388)
	(225,382)(246,376)(268,369)
	(291,360)(314,352)(337,342)
	(360,331)(383,318)(405,304)
	(426,288)(447,269)(465,249)
	(479,230)(489,212)(497,195)
	(503,179)(508,165)(511,154)
	(513,143)(514,134)(515,126)
	(515,118)(515,110)(514,102)
	(513,94)(511,85)(508,76)
	(503,65)(497,54)(489,43)
	(479,33)(465,24)(447,17)
	(428,14)(409,12)(391,12)
	(375,12)(362,13)(349,14)
	(338,16)(327,18)(317,20)
	(305,23)(292,27)(277,33)
	(259,41)(239,51)(216,64)
	(190,80)(165,99)(141,121)
	(120,144)(102,167)(87,191)
	(74,214)(63,236)(53,259)
	(45,281)(38,303)(32,324)
	(27,343)(23,360)(20,374)(15,399)
\blacken\path(67.951,287.214)(15.000,399.000)(9.117,275.447)(67.951,287.214)
\path(1365,399)(1367,398)(1372,397)
	(1380,395)(1392,392)(1407,388)
	(1425,382)(1446,376)(1468,369)
	(1491,360)(1514,352)(1537,342)
	(1560,331)(1583,318)(1605,304)
	(1626,288)(1647,269)(1665,249)
	(1679,230)(1689,212)(1697,195)
	(1703,179)(1708,165)(1711,154)
	(1713,143)(1714,134)(1715,126)
	(1715,118)(1715,110)(1714,102)
	(1713,94)(1711,85)(1708,76)
	(1703,65)(1697,54)(1689,43)
	(1679,33)(1665,24)(1647,17)
	(1628,14)(1609,12)(1591,12)
	(1575,12)(1562,13)(1549,14)
	(1538,16)(1527,18)(1517,20)
	(1505,23)(1492,27)(1477,33)
	(1459,41)(1439,51)(1416,64)
	(1390,80)(1365,99)(1341,121)
	(1320,144)(1302,167)(1287,191)
	(1274,214)(1263,236)(1253,259)
	(1245,281)(1238,303)(1232,324)
	(1227,343)(1223,360)(1220,374)(1215,399)
\blacken\path(1267.951,287.214)(1215.000,399.000)(1209.117,275.447)(1267.951,287.214)
\put(615,2274){\makebox(0,0)[b]{\smash{{\SetFigFont{10}{12.0}{\rmdefault}{\mddefault}{\updefault}$x$}}}}
\put(15,474){\makebox(0,0)[b]{\smash{{\SetFigFont{10}{12.0}{\rmdefault}{\mddefault}{\updefault}$a$}}}}
\path(615,2124)(615,1374)
\end{picture}
}
\caption{}\label{509-BF7}
\end{figure}

 Any such extension $\lambda$ would require that $\lambda (x)=1$ and either $\lambda (a) =0$ or $\lambda (c) =0$. But the fact that also $a\tO a$ and $c\tO c$ forces $\lambda (a) =\lambda (b)=\half$.  The values cannot be in $\{0,1\}$!

The perceptive reader might ask: we are interested in Boolean instantiations of traditional networks. Can Figure \ref{509-BF7}  be obtained as such an instantiation?  If not then the use of CD-networks is an overkill.  The answer is yes, see Figure \ref{509-B7a}

\begin{figure}
\centering
\setlength{\unitlength}{0.00083333in}
\begingroup\makeatletter\ifx\SetFigFont\undefined%
\gdef\SetFigFont#1#2#3#4#5{%
  \reset@font\fontsize{#1}{#2pt}%
  \fontfamily{#3}\fontseries{#4}\fontshape{#5}%
  \selectfont}%
\fi\endgroup%
{\renewcommand{\dashlinestretch}{30}
\begin{picture}(2773,1923)(0,-10)
\put(15,60){\makebox(0,0)[lb]{\smash{{\SetFigFont{10}{12.0}{\rmdefault}{\mddefault}{\updefault}Instantiate $u=a, v=c$ and $y = a\wedge c$}}}}
\path(1365,1035)(1365,960)
\blacken\path(1335.000,1080.000)(1365.000,960.000)(1395.000,1080.000)(1335.000,1080.000)
\path(505,1420)(435,1460)
\blacken\path(554.073,1426.511)(435.000,1460.000)(524.305,1374.416)(554.073,1426.511)
\path(2355,1365)(2270,1395)
\blacken\path(2393.143,1383.351)(2270.000,1395.000)(2373.174,1326.772)(2393.143,1383.351)
\path(315,1710)(317,1711)(320,1714)
	(326,1718)(336,1725)(348,1734)
	(364,1744)(381,1756)(401,1769)
	(423,1783)(445,1796)(469,1809)
	(493,1821)(518,1832)(544,1842)
	(571,1850)(600,1856)(629,1861)
	(660,1862)(690,1860)(719,1855)
	(745,1847)(769,1838)(789,1828)
	(806,1819)(820,1810)(832,1802)
	(842,1794)(851,1786)(859,1779)
	(866,1771)(874,1762)(881,1753)
	(888,1742)(895,1729)(903,1715)
	(909,1698)(914,1678)(916,1657)
	(915,1635)(909,1613)(899,1592)
	(888,1572)(876,1555)(864,1541)
	(852,1528)(841,1517)(830,1508)
	(819,1499)(809,1491)(798,1484)
	(786,1476)(773,1468)(758,1460)
	(741,1451)(721,1442)(699,1432)
	(673,1423)(645,1415)(615,1410)
	(582,1408)(551,1410)(522,1415)
	(495,1423)(471,1433)(449,1445)
	(428,1458)(408,1473)(390,1487)
	(373,1502)(358,1516)(344,1529)
	(333,1540)(315,1560)
\blacken\path(417.575,1490.874)(315.000,1560.000)(372.977,1450.736)(417.575,1490.874)
\path(2160,1650)(2162,1651)(2165,1654)
	(2171,1658)(2181,1665)(2193,1674)
	(2209,1684)(2226,1696)(2246,1709)
	(2268,1723)(2290,1736)(2314,1749)
	(2338,1761)(2363,1772)(2389,1782)
	(2416,1790)(2445,1796)(2474,1801)
	(2505,1802)(2535,1800)(2564,1795)
	(2590,1787)(2614,1778)(2634,1768)
	(2651,1759)(2665,1750)(2677,1742)
	(2687,1734)(2696,1726)(2704,1719)
	(2711,1711)(2719,1702)(2726,1693)
	(2733,1682)(2740,1669)(2748,1655)
	(2754,1638)(2759,1618)(2761,1597)
	(2760,1575)(2754,1553)(2744,1532)
	(2733,1512)(2721,1495)(2709,1481)
	(2697,1468)(2686,1457)(2675,1448)
	(2664,1439)(2654,1431)(2643,1424)
	(2631,1416)(2618,1408)(2603,1400)
	(2586,1391)(2566,1382)(2544,1372)
	(2518,1363)(2490,1355)(2460,1350)
	(2427,1348)(2396,1350)(2367,1355)
	(2340,1363)(2316,1373)(2294,1385)
	(2273,1398)(2253,1413)(2235,1427)
	(2218,1442)(2203,1456)(2189,1469)
	(2178,1480)(2160,1500)
\blacken\path(2262.575,1430.874)(2160.000,1500.000)(2217.977,1390.736)(2262.575,1430.874)
\put(1365,1785){\makebox(0,0)[b]{\smash{{\SetFigFont{10}{12.0}{\rmdefault}{\mddefault}{\updefault}$x$}}}}
\put(1365,585){\makebox(0,0)[b]{\smash{{\SetFigFont{10}{12.0}{\rmdefault}{\mddefault}{\updefault}$y$}}}}
\put(15,1560){\makebox(0,0)[lb]{\smash{{\SetFigFont{10}{12.0}{\rmdefault}{\mddefault}{\updefault}$u$}}}}
\put(1815,1560){\makebox(0,0)[lb]{\smash{{\SetFigFont{10}{12.0}{\rmdefault}{\mddefault}{\updefault}$v$}}}}
\path(1365,1710)(1365,810)
\blacken\path(1335.000,930.000)(1365.000,810.000)(1395.000,930.000)(1335.000,930.000)
\end{picture}
}

\caption{}\label{509-B7a}
\end{figure}

If we write the equations for this figure we get 
\[
\begin{array}{l}
u =1-u\\
v=1-v\\
y=1-x\\
x=1
\end{array}
\]
 Substituting the instantiation we get 
\[\begin{array}{l}
a=1-a\\
c=1-c\\
a\wedge c=0
\end{array}
\]
There is no solution in $\{0, 1,\half\}$.

This line of reasoning, however, contains a fallacy. Before the instantiation, $u$ for example, was attacked only by itself, so the equation for it was $u =1-u$.  After the instantiation of $u=a$ and $y =a\wedge c$, $a$ was attacked both by itslef and indirectly also by $x$. Thereofre a new equation should be written for the new situation of Figure \ref{509-BF7}. We cannot just substitute the instantiation in the old equations. We are not reading Figure \ref{509-BF7} correctly.

We need to modify our point of view.  The following discussion is a conceptual analysis seeking a new point of view:

We begin with a slightly modified point of view which we call the RCD view (the restricted CD view).  We note that $x \tO \{a,c\}$ actually implies $\{x,a\}\tO c$ and $\{x,c\}\tO a$.

If we understand $x\tO \{a,c\}$ as meaning the conjunction of the above two attacks then Figure \ref{509-BF7} becomes Figure \ref{509-BF8}.

\begin{figure}
\centering
\setlength{\unitlength}{0.00083333in}
\begingroup\makeatletter\ifx\SetFigFont\undefined%
\gdef\SetFigFont#1#2#3#4#5{%
  \reset@font\fontsize{#1}{#2pt}%
  \fontfamily{#3}\fontseries{#4}\fontshape{#5}%
  \selectfont}%
\fi\endgroup%
{\renewcommand{\dashlinestretch}{30}
\begin{picture}(2720,1691)(0,-10)
\put(1707,396){\makebox(0,0)[b]{\smash{{\SetFigFont{10}{12.0}{\rmdefault}{\mddefault}{\updefault}$\wedge$}}}}
\put(1716.281,352.365){\arc{592.611}{3.8538}{6.2203}}
\path(57,476)(107,561)
\blacken\path(72.016,442.357)(107.000,561.000)(20.300,472.778)(72.016,442.357)
\path(762,686)(672,661)
\blacken\path(779.593,722.023)(672.000,661.000)(795.651,664.212)(779.593,722.023)
\path(2067,691)(2137,641)
\blacken\path(2021.915,686.337)(2137.000,641.000)(2056.789,735.161)(2021.915,686.337)
\path(2642,1046)(2607,961)
\blacken\path(2624.950,1083.384)(2607.000,961.000)(2680.430,1060.539)(2624.950,1083.384)
\path(1157,1356)(1392,816)
\path(1437,656)(1637,196)
\path(357,826)(358,827)(360,828)
	(363,831)(368,835)(376,841)
	(386,849)(398,859)(414,871)
	(432,886)(453,903)(477,921)
	(504,942)(532,964)(563,987)
	(596,1012)(631,1037)(667,1064)
	(704,1091)(742,1118)(782,1146)
	(822,1174)(864,1201)(906,1229)
	(950,1257)(994,1284)(1039,1311)
	(1086,1338)(1134,1365)(1183,1391)
	(1234,1417)(1287,1442)(1341,1467)
	(1397,1491)(1455,1514)(1513,1537)
	(1573,1557)(1632,1576)(1701,1596)
	(1768,1612)(1831,1625)(1889,1636)
	(1941,1645)(1988,1652)(2030,1657)
	(2066,1660)(2098,1662)(2126,1664)
	(2149,1664)(2170,1664)(2188,1663)
	(2204,1661)(2218,1660)(2232,1657)
	(2245,1655)(2258,1651)(2272,1648)
	(2287,1644)(2304,1639)(2322,1633)
	(2342,1627)(2365,1620)(2390,1611)
	(2418,1601)(2447,1590)(2479,1576)
	(2512,1561)(2545,1543)(2577,1523)
	(2607,1501)(2636,1473)(2660,1443)
	(2679,1411)(2692,1379)(2701,1346)
	(2706,1314)(2708,1281)(2706,1248)
	(2701,1215)(2694,1182)(2685,1149)
	(2675,1116)(2662,1083)(2649,1051)
	(2635,1020)(2621,989)(2607,961)
	(2593,934)(2580,910)(2568,888)
	(2558,870)(2549,855)(2543,844)(2532,826)
\blacken\path(2568.976,944.037)(2532.000,826.000)(2620.172,912.750)(2568.976,944.037)
\path(2457,676)(2456,675)(2454,674)
	(2451,672)(2446,668)(2438,662)
	(2428,654)(2415,645)(2400,634)
	(2381,620)(2360,605)(2336,589)
	(2310,570)(2282,551)(2251,530)
	(2219,508)(2185,486)(2150,463)
	(2113,439)(2075,416)(2036,392)
	(1996,369)(1955,345)(1912,322)
	(1868,299)(1822,276)(1775,254)
	(1726,232)(1675,210)(1621,188)
	(1566,167)(1508,147)(1447,127)
	(1385,109)(1322,92)(1257,76)
	(1185,61)(1115,48)(1048,38)
	(986,30)(929,24)(877,19)
	(831,15)(790,13)(753,12)
	(721,12)(693,12)(668,13)
	(645,14)(625,15)(606,17)
	(588,20)(571,22)(553,25)
	(535,29)(515,33)(494,37)
	(471,42)(446,47)(418,54)
	(387,61)(354,69)(319,79)
	(282,90)(243,103)(204,117)
	(167,133)(132,151)(98,173)
	(70,197)(48,222)(31,248)
	(21,273)(14,299)(12,324)
	(14,350)(18,376)(26,401)
	(36,427)(48,452)(61,478)
	(76,503)(91,527)(107,550)
	(123,572)(139,593)(153,611)
	(166,628)(178,642)(188,653)
	(195,662)(207,676)
\blacken\path(151.683,565.365)(207.000,676.000)(106.127,604.413)(151.683,565.365)
\path(357,826)(359,827)(362,829)
	(369,833)(379,839)(393,847)
	(409,857)(428,868)(449,879)
	(472,892)(495,904)(520,916)
	(545,927)(570,938)(596,948)
	(622,957)(649,965)(677,971)
	(705,975)(732,976)(757,975)
	(779,971)(797,966)(812,961)
	(823,955)(832,949)(839,943)
	(844,937)(848,932)(851,926)
	(853,920)(856,914)(859,907)
	(862,899)(865,890)(869,880)
	(874,868)(878,855)(881,841)
	(882,826)(880,808)(875,791)
	(869,777)(862,765)(856,756)
	(850,749)(844,743)(838,738)
	(832,734)(825,729)(816,723)
	(805,716)(791,707)(775,697)
	(755,686)(732,676)(705,667)
	(679,661)(654,658)(631,657)
	(610,657)(590,659)(570,662)
	(552,665)(537,669)(507,676)
\blacken\path(630.678,677.948)(507.000,676.000)(617.044,619.517)(630.678,677.948)
\path(2532,826)(2531,828)(2528,831)
	(2523,838)(2516,847)(2507,858)
	(2496,871)(2483,885)(2468,899)
	(2453,913)(2435,926)(2415,939)
	(2393,950)(2368,961)(2339,970)
	(2307,976)(2278,979)(2250,980)
	(2225,979)(2203,978)(2184,977)
	(2168,975)(2155,973)(2143,972)
	(2132,970)(2121,967)(2110,965)
	(2099,961)(2085,956)(2071,949)
	(2055,941)(2038,930)(2021,917)
	(2007,901)(1997,884)(1990,866)
	(1987,850)(1987,835)(1988,822)
	(1990,810)(1993,800)(1997,791)
	(2001,782)(2005,774)(2011,765)
	(2017,755)(2023,744)(2032,732)
	(2041,719)(2053,704)(2066,690)
	(2082,676)(2104,662)(2128,653)
	(2151,648)(2173,647)(2195,648)
	(2216,651)(2237,655)(2256,660)
	(2273,665)(2307,676)
\blacken\path(2202.061,610.518)(2307.000,676.000)(2183.592,667.605)(2202.061,610.518)
\put(357,676){\makebox(0,0)[b]{\smash{{\SetFigFont{10}{12.0}{\rmdefault}{\mddefault}{\updefault}$a$}}}}
\put(2457,676){\makebox(0,0)[b]{\smash{{\SetFigFont{10}{12.0}{\rmdefault}{\mddefault}{\updefault}$c$}}}}
\put(1407,676){\makebox(0,0)[b]{\smash{{\SetFigFont{10}{12.0}{\rmdefault}{\mddefault}{\updefault}$x$}}}}
\put(1092,1096){\makebox(0,0)[b]{\smash{{\SetFigFont{10}{12.0}{\rmdefault}{\mddefault}{\updefault}$\wedge$}}}}
\put(1182.000,1206.677){\arc{550.316}{0.9942}{3.1755}}
\end{picture}
}
\caption{}\label{509-BF8}
\end{figure}

Since $x$ is not indirectly attacked we have $\lambda (x) =1 $ and since $a$ and $c$ are each self attacking we get $\lambda (a) =\lambda (c) =\half$.

We would also have according to Definition \ref{509-BD5} that $\lambda (\{x,a\}) =\lambda (\{x,c\}) =\half$.

\item It seems then, that if we adopt the suggestion in (1) above and understand $x\tO \{y_1\comma y_k\}$ from the RCD point of view, as the set of attacks $\{x, y_1\comma y_{i-1},\break  y_{i+1}\comma y_k\}\tO y_i$ for $i =1\comma k$, then all we need to define is the concept of attacks of the form $X\tO z$. Attacks of the form $X\tO Y$ are reducible to the form $X\tO z$.  Such joint attacks can be simulated within traditional Dung networks using auxilliary points, as shown in \cite{509-3}.

The attack formation $\{x_1\comma x_n \} \tO  z$  of Figure \ref{509-BF2}(b) becomes the formation of Figure \ref{509-BF9}.  The auxilliary points are $y_1\comma y_n, y$.  The reader should note that the auxiliary points used must be completely new to the rest of the network and be associated with  $\{x_1\comma  x_n , z\}$ only. Any other attack formation, say $\{a_1\comma  a_m\} \tO b$ will its own completely new and disjoint auxiliary points. If we do not observe this restriction and re-use existing points as auxiliary points we get wrong results, as Remark \ref{509-BR101} shows.

\begin{figure}
\centering
\setlength{\unitlength}{0.00083333in}
\begingroup\makeatletter\ifx\SetFigFont\undefined%
\gdef\SetFigFont#1#2#3#4#5{%
  \reset@font\fontsize{#1}{#2pt}%
  \fontfamily{#3}\fontseries{#4}\fontshape{#5}%
  \selectfont}%
\fi\endgroup%
{\renewcommand{\dashlinestretch}{30}
\begin{picture}(1884,3517)(0,-10)
\put(1092,55){\makebox(0,0)[b]{\smash{{\SetFigFont{10}{12.0}{\rmdefault}{\mddefault}{\updefault}$z$}}}}
\path(1842,3280)(1842,2230)
\blacken\path(1812.000,2350.000)(1842.000,2230.000)(1872.000,2350.000)(1812.000,2350.000)
\path(42,1930)(1092,1330)
\blacken\path(972.927,1363.489)(1092.000,1330.000)(1002.695,1415.584)(972.927,1363.489)
\path(1842,1930)(1092,1330)
\blacken\path(1166.963,1428.389)(1092.000,1330.000)(1204.445,1381.537)(1166.963,1428.389)
\path(1092,1030)(1092,280)
\blacken\path(1062.000,400.000)(1092.000,280.000)(1122.000,400.000)(1062.000,400.000)
\path(1092,505)(1092,430)
\blacken\path(1062.000,550.000)(1092.000,430.000)(1122.000,550.000)(1062.000,550.000)
\path(42,2530)(42,2380)
\blacken\path(12.000,2500.000)(42.000,2380.000)(72.000,2500.000)(12.000,2500.000)
\path(1842,2530)(1842,2380)
\blacken\path(1812.000,2500.000)(1842.000,2380.000)(1872.000,2500.000)(1812.000,2500.000)
\path(1297,1500)(1212,1435)
\blacken\path(1289.099,1531.725)(1212.000,1435.000)(1325.546,1484.063)(1289.099,1531.725)
\path(847,1475)(967,1400)
\blacken\path(849.340,1438.160)(967.000,1400.000)(881.140,1489.040)(849.340,1438.160)
\put(42,3355){\makebox(0,0)[b]{\smash{{\SetFigFont{10}{12.0}{\rmdefault}{\mddefault}{\updefault}$x_1$}}}}
\put(942,3355){\makebox(0,0)[b]{\smash{{\SetFigFont{10}{12.0}{\rmdefault}{\mddefault}{\updefault}\comma}}}}
\put(1842,3355){\makebox(0,0)[b]{\smash{{\SetFigFont{10}{12.0}{\rmdefault}{\mddefault}{\updefault}$x_n$}}}}
\put(42,2005){\makebox(0,0)[b]{\smash{{\SetFigFont{10}{12.0}{\rmdefault}{\mddefault}{\updefault}$y_1$}}}}
\put(1842,2005){\makebox(0,0)[b]{\smash{{\SetFigFont{10}{12.0}{\rmdefault}{\mddefault}{\updefault}$y_n$}}}}
\put(1017,2005){\makebox(0,0)[b]{\smash{{\SetFigFont{10}{12.0}{\rmdefault}{\mddefault}{\updefault}\comma}}}}
\put(1092,1105){\makebox(0,0)[b]{\smash{{\SetFigFont{10}{12.0}{\rmdefault}{\mddefault}{\updefault}$y$}}}}
\path(42,3280)(42,2230)
\blacken\path(12.000,2350.000)(42.000,2230.000)(72.000,2350.000)(12.000,2350.000)
\end{picture}
}
\caption{}\label{509-BF9}
\end{figure}

\item The RCD point of view is not the best we can offer. There is a better one, already mentioned in the 2009 paper \cite{509-3}.  In Section 4.3 of that paper, we discussed what we call ``flow argumentation networks''. To explain it simply for our case, in Figure \ref{509-BF2}(c) the node $w$ transmits attacks to all the nodes $y_1\comma y_m$ (the attack ``flow'' emanating from $w$) but expects only at least one of them to succeed. Thus applying this ``flow'' view to Figure \ref{509-BF7} we have that $x$ attacks $a$ and $x$ attacks $c$ but expects at least one to succeed.  Since $x=1$ and $a=\half$, $a$ will become 0. Similarly $c$ will become 0. Thus we get two extensions 
\[
x=1, a=0, c=\half
\]
and 
\[
x=1,a=\half,c=0
\]

\end{enumerate}
We shall address this point of view, which we call the FCD-view in Appendix C2. More details in Section 4.3 of \cite{509-3}.  
\end{remark}

\begin{remark}\label{509-BR10}
It would be instructive to compare our RCD disjunctive attack notion with the attacks notion of Nielsen and Parsons \cite{509-17} notion of joint and disjunctive attacks. Call it the NP view.  Let us look again at Figure \ref{509-BF2}(c).

We have
\[
w\tO Y =\{y_1\comma y_m\}.
\]
Let us simplify and consider $w\tO \{a,c\}$.  We do our comparison for this case but before that let us summarise our options:

We have the following four notions for $X\tO Y$, where 
\[X =\{x_1\comma x_n\}
\mbox{ and } Y =\{y_1\comma y_m\}.
\]
\begin{enumerate}
\item Gabbay 2009 \cite{509-3}, CD-view:

If $\bigwedge_i \lambda (x_i)=1$ then $\bigvee_i\lambda (y)i)=0$.
 
This view allows us to generate networks like in Figure \ref{509-BF7}.
\item The equational view, which is not the same as the CD-view.  This view is connected with instantiation networks and may not be able to generate networks like Figure \ref{509-BF7}. This view will require us to  solve the equation $(\bigwedge_j) y_j)\leftrightarrow \neg \bigwedge_i x_i$.

\item Gabbay alternative as in (1) above RCD-view:
\\
$w\tO \{a,c\}$ means $\{w, a\}\tO c$ {\em and} $\{w,c\}\tO a$.
\item Nielsen and Parsons' 2007 \cite{509-17}, the NP-view:\\
$w\tO \{a,c\}$ means $w\tO a$ {\em or} $w\tO c$.

(The Nielsen and Parsons' definition of $X\tO Y$ is that for some $y\in Y, X\tO y$).
\end{enumerate}
The implementation of the NP-view for $w\tO \{a,c\}$ is in Figure \ref{509-BF11}. 

We need a semantics which will not allow for $\{y_2, y_4\}$ to be both undecided.

\begin{figure}
\centering
\setlength{\unitlength}{0.00083333in}
\begingroup\makeatletter\ifx\SetFigFont\undefined%
\gdef\SetFigFont#1#2#3#4#5{%
  \reset@font\fontsize{#1}{#2pt}%
  \fontfamily{#3}\fontseries{#4}\fontshape{#5}%
  \selectfont}%
\fi\endgroup%
{\renewcommand{\dashlinestretch}{30}
\begin{picture}(1587,2568)(0,-10)
\put(1535,65){\makebox(0,0)[b]{\smash{{\SetFigFont{10}{12.0}{\rmdefault}{\mddefault}{\updefault}$c$}}}}
\path(795,2355)(1545,1755)
\blacken\path(1432.555,1806.537)(1545.000,1755.000)(1470.037,1853.389)(1432.555,1806.537)
\path(45,1455)(45,1005)
\blacken\path(15.000,1125.000)(45.000,1005.000)(75.000,1125.000)(15.000,1125.000)
\path(1545,1455)(1545,1005)
\blacken\path(1515.000,1125.000)(1545.000,1005.000)(1575.000,1125.000)(1515.000,1125.000)
\path(45,705)(45,255)
\blacken\path(15.000,375.000)(45.000,255.000)(75.000,375.000)(15.000,375.000)
\path(1545,705)(1545,255)
\blacken\path(1515.000,375.000)(1545.000,255.000)(1575.000,375.000)(1515.000,375.000)
\blacken\path(390.000,885.000)(270.000,855.000)(390.000,825.000)(390.000,885.000)
\path(270,855)(1395,855)
\blacken\path(1275.000,825.000)(1395.000,855.000)(1275.000,885.000)(1275.000,825.000)
\blacken\path(540.000,885.000)(420.000,855.000)(540.000,825.000)(540.000,885.000)
\path(420,855)(1245,855)
\blacken\path(1125.000,825.000)(1245.000,855.000)(1125.000,885.000)(1125.000,825.000)
\path(235,1910)(170,1865)
\blacken\path(251.587,1957.971)(170.000,1865.000)(285.739,1908.639)(251.587,1957.971)
\path(1340,1915)(1415,1870)
\blacken\path(1296.666,1906.015)(1415.000,1870.000)(1327.536,1957.464)(1296.666,1906.015)
\path(1550,1235)(1560,1160)
\blacken\path(1514.404,1274.982)(1560.000,1160.000)(1573.877,1282.912)(1514.404,1274.982)
\path(45,1255)(45,1160)
\blacken\path(15.000,1280.000)(45.000,1160.000)(75.000,1280.000)(15.000,1280.000)
\path(45,505)(50,415)
\blacken\path(13.390,533.151)(50.000,415.000)(73.297,536.479)(13.390,533.151)
\path(1540,505)(1545,415)
\blacken\path(1508.390,533.151)(1545.000,415.000)(1568.297,536.479)(1508.390,533.151)
\put(795,2430){\makebox(0,0)[b]{\smash{{\SetFigFont{10}{12.0}{\rmdefault}{\mddefault}{\updefault}$w$}}}}
\put(45,1530){\makebox(0,0)[b]{\smash{{\SetFigFont{10}{12.0}{\rmdefault}{\mddefault}{\updefault}$y_1$}}}}
\put(1545,1530){\makebox(0,0)[b]{\smash{{\SetFigFont{10}{12.0}{\rmdefault}{\mddefault}{\updefault}$y_3$}}}}
\put(45,780){\makebox(0,0)[b]{\smash{{\SetFigFont{10}{12.0}{\rmdefault}{\mddefault}{\updefault}$y_2$}}}}
\put(1545,780){\makebox(0,0)[b]{\smash{{\SetFigFont{10}{12.0}{\rmdefault}{\mddefault}{\updefault}$y_4$}}}}
\put(35,55){\makebox(0,0)[b]{\smash{{\SetFigFont{10}{12.0}{\rmdefault}{\mddefault}{\updefault}$a$}}}}
\path(795,2355)(45,1755)
\blacken\path(119.963,1853.389)(45.000,1755.000)(157.445,1806.537)(119.963,1853.389)
\end{picture}
}

\caption{}\label{509-BF11}
\end{figure}

To complete our full comparison of the above four approaches, consider Figure \ref{509-BF12}.

\begin{figure}
\centering
\setlength{\unitlength}{0.00083333in}
\begingroup\makeatletter\ifx\SetFigFont\undefined%
\gdef\SetFigFont#1#2#3#4#5{%
  \reset@font\fontsize{#1}{#2pt}%
  \fontfamily{#3}\fontseries{#4}\fontshape{#5}%
  \selectfont}%
\fi\endgroup%
{\renewcommand{\dashlinestretch}{30}
\begin{picture}(2535,1727)(0,-10)
\put(1318,164){\makebox(0,0)[b]{\smash{{\SetFigFont{10}{12.0}{\rmdefault}{\mddefault}{\updefault}$\vee$}}}}
\put(1256.457,789.606){\arc{1176.341}{4.0244}{5.4523}}
\path(423,1514)(1323,989)(2073,1514)
\path(1323,989)(1323,389)
\path(193,1264)(258,1374)
\blacken\path(222.780,1255.427)(258.000,1374.000)(171.125,1285.951)(222.780,1255.427)
\path(2303,1284)(2233,1379)
\blacken\path(2328.336,1300.189)(2233.000,1379.000)(2280.032,1264.597)(2328.336,1300.189)
\path(1323,389)(1324,388)(1328,387)
	(1334,384)(1343,380)(1357,375)
	(1374,367)(1395,359)(1419,349)
	(1447,338)(1478,325)(1512,313)
	(1548,300)(1585,286)(1623,274)
	(1662,261)(1702,249)(1742,238)
	(1783,228)(1825,220)(1867,212)
	(1910,206)(1954,202)(1998,201)
	(2043,201)(2089,204)(2134,210)
	(2178,219)(2217,230)(2253,243)
	(2285,257)(2314,271)(2339,285)
	(2361,298)(2380,310)(2396,320)
	(2409,330)(2420,338)(2429,346)
	(2437,353)(2443,359)(2448,365)
	(2453,371)(2457,377)(2461,384)
	(2466,392)(2470,402)(2475,413)
	(2480,426)(2485,442)(2491,461)
	(2497,483)(2503,508)(2510,538)
	(2515,571)(2520,607)(2523,647)
	(2523,689)(2520,733)(2515,777)
	(2507,821)(2497,864)(2486,906)
	(2472,947)(2458,987)(2442,1026)
	(2425,1064)(2407,1101)(2389,1137)
	(2369,1172)(2350,1207)(2329,1241)
	(2309,1274)(2289,1307)(2269,1337)
	(2250,1366)(2232,1393)(2215,1418)
	(2200,1440)(2187,1460)(2175,1476)
	(2166,1489)(2159,1499)(2148,1514)
\blacken\path(2243.156,1434.972)(2148.000,1514.000)(2194.771,1399.490)(2243.156,1434.972)
\path(1323,389)(1322,388)(1319,387)
	(1314,385)(1306,382)(1294,377)
	(1279,371)(1261,363)(1239,355)
	(1214,345)(1186,334)(1155,322)
	(1123,310)(1088,298)(1052,286)
	(1015,274)(977,262)(938,250)
	(899,240)(859,230)(818,221)
	(777,213)(735,207)(692,202)
	(649,199)(604,197)(559,198)
	(513,201)(468,206)(423,214)
	(381,225)(341,237)(305,251)
	(273,264)(245,277)(220,290)
	(199,301)(181,311)(166,321)
	(154,329)(143,336)(134,342)
	(127,348)(121,354)(115,359)
	(110,365)(105,372)(100,380)
	(94,389)(88,400)(82,413)
	(74,428)(66,447)(58,468)
	(49,494)(40,523)(31,556)
	(23,592)(17,632)(13,674)
	(12,718)(14,763)(19,807)
	(26,850)(35,893)(46,935)
	(59,975)(72,1014)(87,1053)
	(104,1091)(121,1128)(138,1164)
	(157,1200)(175,1234)(194,1268)
	(213,1301)(232,1333)(250,1362)
	(267,1390)(284,1416)(298,1438)
	(311,1458)(322,1475)(331,1488)
	(337,1498)(348,1514)
\blacken\path(304.738,1398.119)(348.000,1514.000)(255.295,1432.111)(304.738,1398.119)
\put(423,1589){\makebox(0,0)[b]{\smash{{\SetFigFont{10}{12.0}{\rmdefault}{\mddefault}{\updefault}$a$}}}}
\put(2073,1589){\makebox(0,0)[b]{\smash{{\SetFigFont{10}{12.0}{\rmdefault}{\mddefault}{\updefault}$b$}}}}
\put(1308,1169){\makebox(0,0)[b]{\smash{{\SetFigFont{10}{12.0}{\rmdefault}{\mddefault}{\updefault}$\wedge$}}}}
\put(1295.430,437.935){\arc{858.004}{0.4431}{2.6462}}
\end{picture}
}
\caption{}\label{509-BF12}
\end{figure}

This figure says that $\{a,b\}\tO \{a,b\}$.

The CD approach and the equational approach  offer no stable $\{0,1\}$ extensions.  The RCD approach reduces $\{a,b\}\tO \{a,b\}$ into 
\[
a\wedge b\tO a
\]
and
\[
a\wedge b\tO b
\]
and has no stable $\{0,1\}$ extensions either.

However, the NP approach has two stable $\{0,1\}$ extensions 

\[
a=1,b=0
\]
and
\[
a=0,b=1
\]

This is because if $b\tO a$ or $a\tO b$ then $\{a,b\}\tO \{a,b\}$ holds.  There is more discussion of the NP-approach in \cite{509-3}.
\end{remark}

\begin{remark}\label{509-BR101}
If the joint attacks are done in  two state networks, as defined and discussed in Definition \ref{509-D2} and Lemma \ref{509-L3}, then we might think that it is much simpler to reduce joint attacks to single attacks by using existing points as auxiliary points. The attack 
\[
a\wedge b \tO c
\]
can be reduced to the two attacks
\[\begin{array}{l}
\neg a\tO \neg c\\
\neg b\tO \neg c.
\end{array}
\]
The reason for that equivalence can be seen by looking at the attack equationally. $a\wedge b\tO c$ means that $c\equiv\neg(a\wedge b)$. Therefore $\neg c\equiv a\wedge b$ or equivalently $\neg c\equiv\neg(\neg a)\wedge\neg(\neg b)$ which is the same equation for $\neg a\tO \neg c$ and $\neg b\tO \neg c$.

In a network where for each $x$, $\neg x$  is also present with $x \LtO  \neg x$ (as we have in a 2-state network) then Figures \ref{509-BF102} and \ref{509-BF103} are equivalent.

\begin{figure}
\centering
\setlength{\unitlength}{0.00083333in}
\begingroup\makeatletter\ifx\SetFigFont\undefined%
\gdef\SetFigFont#1#2#3#4#5{%
  \reset@font\fontsize{#1}{#2pt}%
  \fontfamily{#3}\fontseries{#4}\fontshape{#5}%
  \selectfont}%
\fi\endgroup%
{\renewcommand{\dashlinestretch}{30}
\begin{picture}(4005,2322)(0,-10)
\put(2190,2160){\makebox(0,0)[b]{\smash{{\SetFigFont{10}{12.0}{\rmdefault}{\mddefault}{\updefault}$\comma$}}}}
\path(1140,2085)(2190,1185)(3240,2085)
\path(2190,1185)(2190,285)
\blacken\path(2160.000,405.000)(2190.000,285.000)(2220.000,405.000)(2160.000,405.000)
\path(2190,585)(2190,435)
\blacken\path(2160.000,555.000)(2190.000,435.000)(2220.000,555.000)(2160.000,555.000)
\blacken\path(210.000,2265.000)(90.000,2235.000)(210.000,2205.000)(210.000,2265.000)
\path(90,2235)(840,2235)
\blacken\path(720.000,2205.000)(840.000,2235.000)(720.000,2265.000)(720.000,2205.000)
\blacken\path(360.000,2265.000)(240.000,2235.000)(360.000,2205.000)(360.000,2265.000)
\path(240,2235)(690,2235)
\blacken\path(570.000,2205.000)(690.000,2235.000)(570.000,2265.000)(570.000,2205.000)
\blacken\path(1410.000,165.000)(1290.000,135.000)(1410.000,105.000)(1410.000,165.000)
\path(1290,135)(2040,135)
\blacken\path(1920.000,105.000)(2040.000,135.000)(1920.000,165.000)(1920.000,105.000)
\blacken\path(1560.000,165.000)(1440.000,135.000)(1560.000,105.000)(1560.000,165.000)
\path(1440,135)(1890,135)
\blacken\path(1770.000,105.000)(1890.000,135.000)(1770.000,165.000)(1770.000,105.000)
\blacken\path(3360.000,2265.000)(3240.000,2235.000)(3360.000,2205.000)(3360.000,2265.000)
\path(3240,2235)(3990,2235)
\blacken\path(3870.000,2205.000)(3990.000,2235.000)(3870.000,2265.000)(3870.000,2205.000)
\blacken\path(3510.000,2265.000)(3390.000,2235.000)(3510.000,2205.000)(3510.000,2265.000)
\path(3390,2235)(3840,2235)
\blacken\path(3720.000,2205.000)(3840.000,2235.000)(3720.000,2265.000)(3720.000,2205.000)
\put(2190,1260){\makebox(0,0)[b]{\smash{{\SetFigFont{10}{12.0}{\rmdefault}{\mddefault}{\updefault}$\wedge$}}}}
\put(1140,2160){\makebox(0,0)[rb]{\smash{{\SetFigFont{10}{12.0}{\rmdefault}{\mddefault}{\updefault}$x_1$}}}}
\put(15,2160){\makebox(0,0)[rb]{\smash{{\SetFigFont{10}{12.0}{\rmdefault}{\mddefault}{\updefault}$\neg x_1$}}}}
\put(2265,60){\makebox(0,0)[rb]{\smash{{\SetFigFont{10}{12.0}{\rmdefault}{\mddefault}{\updefault}$z$}}}}
\put(1215,60){\makebox(0,0)[rb]{\smash{{\SetFigFont{10}{12.0}{\rmdefault}{\mddefault}{\updefault}$\neg z$}}}}
\put(3240,2160){\makebox(0,0)[rb]{\smash{{\SetFigFont{10}{12.0}{\rmdefault}{\mddefault}{\updefault}$\neg x_n$}}}}
\put(3990,2160){\makebox(0,0)[lb]{\smash{{\SetFigFont{10}{12.0}{\rmdefault}{\mddefault}{\updefault}$x_n$}}}}
\put(2190.000,1110.000){\arc{750.000}{4.0689}{5.3559}}
\end{picture}
}
\caption{}\label{509-BF102}
\end{figure}

\begin{figure}
\centering
\setlength{\unitlength}{0.00083333in}
\begingroup\makeatletter\ifx\SetFigFont\undefined%
\gdef\SetFigFont#1#2#3#4#5{%
  \reset@font\fontsize{#1}{#2pt}%
  \fontfamily{#3}\fontseries{#4}\fontshape{#5}%
  \selectfont}%
\fi\endgroup%
{\renewcommand{\dashlinestretch}{30}
\begin{picture}(4230,1722)(0,-10)
\put(3090,60){\makebox(0,0)[lb]{\smash{{\SetFigFont{10}{12.0}{\rmdefault}{\mddefault}{\updefault}$z$}}}}
\path(2790,1485)(2040,285)
\blacken\path(2078.160,402.660)(2040.000,285.000)(2129.040,370.860)(2078.160,402.660)
\path(1745,515)(1805,425)
\blacken\path(1713.474,508.205)(1805.000,425.000)(1763.397,541.487)(1713.474,508.205)
\path(2175,505)(2120,410)
\blacken\path(2154.162,528.882)(2120.000,410.000)(2206.087,498.820)(2154.162,528.882)
\blacken\path(210.000,1665.000)(90.000,1635.000)(210.000,1605.000)(210.000,1665.000)
\path(90,1635)(765,1635)
\blacken\path(645.000,1605.000)(765.000,1635.000)(645.000,1665.000)(645.000,1605.000)
\blacken\path(360.000,1665.000)(240.000,1635.000)(360.000,1605.000)(360.000,1665.000)
\path(240,1635)(615,1635)
\blacken\path(495.000,1605.000)(615.000,1635.000)(495.000,1665.000)(495.000,1605.000)
\blacken\path(3585.000,1665.000)(3465.000,1635.000)(3585.000,1605.000)(3585.000,1665.000)
\path(3465,1635)(4140,1635)
\blacken\path(4020.000,1605.000)(4140.000,1635.000)(4020.000,1665.000)(4020.000,1605.000)
\blacken\path(3735.000,1665.000)(3615.000,1635.000)(3735.000,1605.000)(3735.000,1665.000)
\path(3615,1635)(3990,1635)
\blacken\path(3870.000,1605.000)(3990.000,1635.000)(3870.000,1665.000)(3870.000,1605.000)
\blacken\path(2460.000,165.000)(2340.000,135.000)(2460.000,105.000)(2460.000,165.000)
\path(2340,135)(3015,135)
\blacken\path(2895.000,105.000)(3015.000,135.000)(2895.000,165.000)(2895.000,105.000)
\blacken\path(2610.000,165.000)(2490.000,135.000)(2610.000,105.000)(2610.000,165.000)
\path(2490,135)(2865,135)
\blacken\path(2745.000,105.000)(2865.000,135.000)(2745.000,165.000)(2745.000,105.000)
\put(1140,1560){\makebox(0,0)[rb]{\smash{{\SetFigFont{10}{12.0}{\rmdefault}{\mddefault}{\updefault}$x_1$}}}}
\put(15,1560){\makebox(0,0)[rb]{\smash{{\SetFigFont{10}{12.0}{\rmdefault}{\mddefault}{\updefault}$\neg x_1$}}}}
\put(2065,1560){\makebox(0,0)[b]{\smash{{\SetFigFont{10}{12.0}{\rmdefault}{\mddefault}{\updefault}$\comma$}}}}
\put(2715,1560){\makebox(0,0)[lb]{\smash{{\SetFigFont{10}{12.0}{\rmdefault}{\mddefault}{\updefault}$\neg x_n$}}}}
\put(4215,1560){\makebox(0,0)[lb]{\smash{{\SetFigFont{10}{12.0}{\rmdefault}{\mddefault}{\updefault}$x_n$}}}}
\put(1890,60){\makebox(0,0)[lb]{\smash{{\SetFigFont{10}{12.0}{\rmdefault}{\mddefault}{\updefault}$\neg z$ }}}}
\path(1140,1485)(1890,285)
\blacken\path(1800.960,370.860)(1890.000,285.000)(1851.840,402.660)(1800.960,370.860)
\end{picture}
}
\caption{}\label{509-BF103}
\end{figure}

Notice the similarity between Figure \ref{509-BF103} and Figure \ref{509-BF9}.  If we let $y_1=\neg x_1\comma\allowbreak y_n=\neg x_n$ and $y=\neg z$ and change the attacks $x_i\tO y_i$ into $y\LtO z$, the two figures become the same. Note that change the atatcks ``$\tO$'' in Figure \ref{509-BF9} into bidirectional attacks ``$\LtO$\'' does not affect the job that Figure \ref{509-BF9} does.  We still have the same input/output relation between $\bigwedge_i x_i$ and $z$, namely $\bigwedge x_i \tO z$.

So we reduce the above attack $\wedge x_i \tO z$ to the single attacks $\neg x_i \tO \neg z, i=1\comma n$.

To do this, however, by utilising the existing points $\{\neg x_i, z\}$ as auxiliary points is a mistake. We must use only new, disjoint set of auxiliary points. Otherwise we get the wrong results.

Consider Figure \ref{509-BF104}. In this figure $\top$ is ``in'', $\neg a=\neg b=$ ``out'', $\neg g=$ ``out'' and since $\{a,b\}$ jointly attack $g$, $g$ must also be out. This contradicts the notation implying that at least one of $\{x, \neg x\}$ must be in.  If we ignore this restriction, we get that both $g$ and $\neg g$ are ``out''.  In this case think of $\neg x$ as just another $x'$.

\begin{figure}
\centering
\setlength{\unitlength}{0.00083333in}
\begingroup\makeatletter\ifx\SetFigFont\undefined%
\gdef\SetFigFont#1#2#3#4#5{%
  \reset@font\fontsize{#1}{#2pt}%
  \fontfamily{#3}\fontseries{#4}\fontshape{#5}%
  \selectfont}%
\fi\endgroup%
{\renewcommand{\dashlinestretch}{30}
\begin{picture}(3011,3181)(0,-10)
\put(1571,468){\makebox(0,0)[lb]{\smash{{\SetFigFont{10}{12.0}{\rmdefault}{\mddefault}{\updefault}$g$}}}}
\blacken\path(2426.000,2523.000)(2396.000,2643.000)(2366.000,2523.000)(2426.000,2523.000)
\path(2396,2643)(2396,1893)
\blacken\path(2366.000,2013.000)(2396.000,1893.000)(2426.000,2013.000)(2366.000,2013.000)
\blacken\path(1226.000,2373.000)(1196.000,2493.000)(1166.000,2373.000)(1226.000,2373.000)
\path(1196,2493)(1196,2043)
\blacken\path(1166.000,2163.000)(1196.000,2043.000)(1226.000,2163.000)(1166.000,2163.000)
\blacken\path(2426.000,2373.000)(2396.000,2493.000)(2366.000,2373.000)(2426.000,2373.000)
\path(2396,2493)(2396,2043)
\blacken\path(2366.000,2163.000)(2396.000,2043.000)(2426.000,2163.000)(2366.000,2163.000)
\path(2996,1143)(2396,1593)
\blacken\path(2510.000,1545.000)(2396.000,1593.000)(2474.000,1497.000)(2510.000,1545.000)
\path(2996,1143)(1196,1593)
\blacken\path(1319.693,1593.000)(1196.000,1593.000)(1305.141,1534.791)(1319.693,1593.000)
\path(2581,1453)(2521,1508)
\blacken\path(2629.730,1449.028)(2521.000,1508.000)(2589.187,1404.798)(2629.730,1449.028)
\path(1431,1533)(1356,1548)
\blacken\path(1479.553,1553.883)(1356.000,1548.000)(1467.786,1495.049)(1479.553,1553.883)
\path(771,208)(701,278)
\blacken\path(807.066,214.360)(701.000,278.000)(764.640,171.934)(807.066,214.360)
\blacken\path(866.000,573.000)(746.000,543.000)(866.000,513.000)(866.000,573.000)
\path(746,543)(1496,543)
\blacken\path(1376.000,513.000)(1496.000,543.000)(1376.000,573.000)(1376.000,513.000)
\blacken\path(1016.000,573.000)(896.000,543.000)(1016.000,513.000)(1016.000,573.000)
\path(896,543)(1346,543)
\blacken\path(1226.000,513.000)(1346.000,543.000)(1226.000,573.000)(1226.000,513.000)
\path(1381,788)(1456,723)
\blacken\path(1345.669,778.921)(1456.000,723.000)(1384.965,824.262)(1345.669,778.921)
\path(2396,2868)(2395,2868)(2393,2869)
	(2388,2870)(2382,2872)(2372,2875)
	(2359,2879)(2342,2883)(2322,2889)
	(2297,2896)(2269,2904)(2237,2913)
	(2201,2923)(2161,2933)(2118,2945)
	(2072,2957)(2023,2970)(1972,2983)
	(1918,2996)(1863,3009)(1806,3023)
	(1748,3036)(1690,3049)(1630,3062)
	(1570,3075)(1510,3086)(1449,3098)
	(1388,3108)(1328,3118)(1267,3126)
	(1206,3134)(1145,3141)(1084,3146)
	(1023,3150)(962,3153)(901,3154)
	(840,3154)(779,3152)(719,3147)
	(659,3141)(601,3133)(544,3122)
	(489,3109)(436,3093)(382,3073)
	(333,3051)(288,3028)(247,3004)
	(211,2980)(179,2957)(151,2935)
	(127,2913)(106,2893)(88,2875)
	(73,2858)(60,2842)(49,2828)
	(40,2815)(33,2803)(27,2792)
	(23,2781)(19,2771)(16,2761)
	(14,2751)(13,2741)(12,2730)
	(12,2719)(12,2706)(12,2691)
	(13,2675)(15,2657)(17,2636)
	(19,2612)(23,2586)(27,2556)
	(33,2523)(40,2485)(48,2444)
	(59,2400)(72,2351)(88,2298)
	(107,2242)(130,2184)(156,2123)
	(184,2067)(214,2010)(247,1953)
	(282,1897)(319,1841)(357,1787)
	(397,1734)(437,1682)(479,1631)
	(521,1581)(564,1533)(607,1485)
	(651,1438)(696,1392)(741,1347)
	(786,1303)(832,1259)(878,1216)
	(924,1173)(970,1132)(1016,1091)
	(1062,1051)(1107,1012)(1151,974)
	(1195,937)(1237,902)(1277,868)
	(1316,836)(1353,806)(1387,778)
	(1419,753)(1448,729)(1474,709)
	(1497,691)(1516,675)(1533,662)
	(1546,651)(1557,643)(1564,637)(1576,628)
\blacken\path(1462.000,676.000)(1576.000,628.000)(1498.000,724.000)(1462.000,676.000)
\path(1196,2868)(1194,2869)(1191,2871)
	(1184,2875)(1173,2881)(1159,2889)
	(1141,2899)(1119,2911)(1095,2924)
	(1067,2938)(1038,2953)(1007,2968)
	(975,2982)(942,2996)(909,3009)
	(876,3020)(842,3030)(809,3039)
	(775,3045)(740,3048)(706,3049)
	(672,3046)(638,3039)(606,3028)
	(579,3014)(555,2998)(535,2982)
	(517,2966)(503,2950)(492,2937)
	(483,2924)(476,2914)(471,2905)
	(467,2897)(464,2890)(462,2884)
	(461,2877)(460,2870)(459,2862)
	(459,2853)(458,2841)(458,2827)
	(457,2809)(456,2787)(455,2761)
	(454,2729)(454,2692)(455,2650)
	(457,2604)(461,2553)(466,2507)
	(473,2462)(480,2418)(488,2376)
	(495,2339)(502,2305)(509,2275)
	(515,2249)(520,2226)(525,2207)
	(529,2190)(533,2176)(536,2163)
	(540,2152)(543,2141)(547,2130)
	(551,2118)(556,2105)(561,2090)
	(568,2072)(576,2051)(586,2026)
	(597,1998)(611,1964)(627,1926)
	(645,1883)(666,1835)(689,1784)
	(714,1729)(741,1673)(769,1617)
	(798,1563)(826,1510)(854,1461)
	(882,1414)(909,1370)(935,1328)
	(961,1289)(986,1251)(1011,1216)
	(1035,1181)(1059,1149)(1083,1118)
	(1106,1088)(1128,1059)(1150,1032)
	(1171,1006)(1190,982)(1209,959)
	(1226,939)(1241,921)(1254,906)
	(1265,893)(1274,883)(1281,875)
	(1285,869)(1289,866)(1290,864)(1291,863)
\path(2996,918)(2996,917)(2996,914)
	(2997,909)(2997,902)(2997,891)
	(2998,877)(2998,860)(2998,840)
	(2997,817)(2996,792)(2994,765)
	(2991,737)(2987,707)(2981,676)
	(2974,645)(2964,613)(2953,581)
	(2938,549)(2921,516)(2900,482)
	(2876,449)(2847,415)(2813,380)
	(2774,345)(2728,310)(2677,276)
	(2621,243)(2571,217)(2520,194)
	(2469,172)(2419,152)(2372,135)
	(2327,119)(2285,106)(2247,94)
	(2211,84)(2179,76)(2150,68)
	(2123,62)(2099,57)(2076,53)
	(2055,49)(2035,46)(2015,43)
	(1995,40)(1974,38)(1952,36)
	(1929,33)(1903,31)(1875,29)
	(1845,26)(1810,24)(1772,21)
	(1731,19)(1685,17)(1635,15)
	(1582,13)(1525,12)(1466,13)
	(1406,15)(1346,18)(1280,24)
	(1217,33)(1159,43)(1105,55)
	(1056,69)(1010,83)(969,98)
	(931,115)(896,132)(864,149)
	(834,167)(806,186)(781,205)
	(757,224)(734,243)(714,262)
	(694,280)(677,298)(661,315)
	(647,331)(635,345)(624,357)
	(616,368)(609,376)(604,383)(596,393)
\blacken\path(694.389,318.037)(596.000,393.000)(647.537,280.555)(694.389,318.037)
\put(1196,1668){\makebox(0,0)[b]{\smash{{\SetFigFont{10}{12.0}{\rmdefault}{\mddefault}{\updefault}$\neg a$}}}}
\put(2396,1668){\makebox(0,0)[b]{\smash{{\SetFigFont{10}{12.0}{\rmdefault}{\mddefault}{\updefault}$\neg b$}}}}
\put(1196,2718){\makebox(0,0)[b]{\smash{{\SetFigFont{10}{12.0}{\rmdefault}{\mddefault}{\updefault}$a$}}}}
\put(2396,2718){\makebox(0,0)[b]{\smash{{\SetFigFont{10}{12.0}{\rmdefault}{\mddefault}{\updefault}$b$}}}}
\put(2996,993){\makebox(0,0)[b]{\smash{{\SetFigFont{10}{12.0}{\rmdefault}{\mddefault}{\updefault}$\top$}}}}
\put(596,468){\makebox(0,0)[b]{\smash{{\SetFigFont{10}{12.0}{\rmdefault}{\mddefault}{\updefault}$\neg g$}}}}
\blacken\path(1226.000,2523.000)(1196.000,2643.000)(1166.000,2523.000)(1226.000,2523.000)
\path(1196,2643)(1196,1893)
\blacken\path(1166.000,2013.000)(1196.000,1893.000)(1226.000,2013.000)(1166.000,2013.000)
\end{picture}
}

\caption{}\label{509-BF104}
\end{figure}

Figure \ref{509-BF105} eliminates the joint attack $\{a,b\}\tO g$ by using $\neg a, \neg b, \neg g$ as auxiliary points, instead of using copletely new points.  What we get is the wrong result.

\begin{figure}
\centering
\setlength{\unitlength}{0.00083333in}
\begingroup\makeatletter\ifx\SetFigFont\undefined%
\gdef\SetFigFont#1#2#3#4#5{%
  \reset@font\fontsize{#1}{#2pt}%
  \fontfamily{#3}\fontseries{#4}\fontshape{#5}%
  \selectfont}%
\fi\endgroup%
{\renewcommand{\dashlinestretch}{30}
\begin{picture}(2692,2856)(0,-10)
\put(1252,468){\makebox(0,0)[b]{\smash{{\SetFigFont{10}{12.0}{\rmdefault}{\mddefault}{\updefault}$g$}}}}
\blacken\path(2107.000,2523.000)(2077.000,2643.000)(2047.000,2523.000)(2107.000,2523.000)
\path(2077,2643)(2077,1893)
\blacken\path(2047.000,2013.000)(2077.000,1893.000)(2107.000,2013.000)(2047.000,2013.000)
\blacken\path(907.000,2373.000)(877.000,2493.000)(847.000,2373.000)(907.000,2373.000)
\path(877,2493)(877,2043)
\blacken\path(847.000,2163.000)(877.000,2043.000)(907.000,2163.000)(847.000,2163.000)
\blacken\path(2107.000,2373.000)(2077.000,2493.000)(2047.000,2373.000)(2107.000,2373.000)
\path(2077,2493)(2077,2043)
\blacken\path(2047.000,2163.000)(2077.000,2043.000)(2107.000,2163.000)(2047.000,2163.000)
\path(2677,1143)(2077,1593)
\blacken\path(2191.000,1545.000)(2077.000,1593.000)(2155.000,1497.000)(2191.000,1545.000)
\path(2677,1143)(877,1593)
\blacken\path(1000.693,1593.000)(877.000,1593.000)(986.141,1534.791)(1000.693,1593.000)
\path(2262,1453)(2202,1508)
\blacken\path(2310.730,1449.028)(2202.000,1508.000)(2270.187,1404.798)(2310.730,1449.028)
\path(1112,1533)(1037,1548)
\blacken\path(1160.553,1553.883)(1037.000,1548.000)(1148.786,1495.049)(1160.553,1553.883)
\path(452,208)(382,278)
\blacken\path(488.066,214.360)(382.000,278.000)(445.640,171.934)(488.066,214.360)
\path(2022,1598)(282,648)
\blacken\path(372.948,731.836)(282.000,648.000)(401.700,679.174)(372.948,731.836)
\path(862,1603)(267,663)
\blacken\path(305.832,780.440)(267.000,663.000)(356.529,748.349)(305.832,780.440)
\path(397,873)(352,798)
\blacken\path(388.015,916.334)(352.000,798.000)(439.464,885.464)(388.015,916.334)
\path(517,773)(412,728)
\blacken\path(510.480,802.845)(412.000,728.000)(534.115,747.696)(510.480,802.845)
\blacken\path(547.000,573.000)(427.000,543.000)(547.000,513.000)(547.000,573.000)
\path(427,543)(1102,543)
\blacken\path(982.000,513.000)(1102.000,543.000)(982.000,573.000)(982.000,513.000)
\blacken\path(697.000,573.000)(577.000,543.000)(697.000,513.000)(697.000,573.000)
\path(577,543)(952,543)
\blacken\path(832.000,513.000)(952.000,543.000)(832.000,573.000)(832.000,513.000)
\path(2677,918)(2677,917)(2677,914)
	(2678,909)(2678,902)(2678,891)
	(2679,877)(2679,860)(2679,840)
	(2678,817)(2677,792)(2675,765)
	(2672,737)(2668,707)(2662,676)
	(2655,645)(2645,613)(2634,581)
	(2619,549)(2602,516)(2581,482)
	(2557,449)(2528,415)(2494,380)
	(2455,345)(2409,310)(2358,276)
	(2302,243)(2252,217)(2201,194)
	(2150,172)(2100,152)(2053,135)
	(2008,119)(1966,106)(1928,94)
	(1892,84)(1860,76)(1831,68)
	(1804,62)(1780,57)(1757,53)
	(1736,49)(1716,46)(1696,43)
	(1676,40)(1655,38)(1633,36)
	(1610,33)(1584,31)(1556,29)
	(1526,26)(1491,24)(1453,21)
	(1412,19)(1366,17)(1316,15)
	(1263,13)(1206,12)(1147,13)
	(1087,15)(1027,18)(961,24)
	(898,33)(840,43)(786,55)
	(737,69)(691,83)(650,98)
	(612,115)(577,132)(545,149)
	(515,167)(487,186)(462,205)
	(438,224)(415,243)(395,262)
	(375,280)(358,298)(342,315)
	(328,331)(316,345)(305,357)
	(297,368)(290,376)(285,383)(277,393)
\blacken\path(375.389,318.037)(277.000,393.000)(328.537,280.555)(375.389,318.037)
\put(877,1668){\makebox(0,0)[b]{\smash{{\SetFigFont{10}{12.0}{\rmdefault}{\mddefault}{\updefault}$\neg a$}}}}
\put(2077,1668){\makebox(0,0)[b]{\smash{{\SetFigFont{10}{12.0}{\rmdefault}{\mddefault}{\updefault}$\neg b$}}}}
\put(877,2718){\makebox(0,0)[b]{\smash{{\SetFigFont{10}{12.0}{\rmdefault}{\mddefault}{\updefault}$a$}}}}
\put(2077,2718){\makebox(0,0)[b]{\smash{{\SetFigFont{10}{12.0}{\rmdefault}{\mddefault}{\updefault}$b$}}}}
\put(2677,993){\makebox(0,0)[b]{\smash{{\SetFigFont{10}{12.0}{\rmdefault}{\mddefault}{\updefault}$\top$}}}}
\put(277,468){\makebox(0,0)[b]{\smash{{\SetFigFont{10}{12.0}{\rmdefault}{\mddefault}{\updefault}$\neg g$}}}}
\blacken\path(907.000,2523.000)(877.000,2643.000)(847.000,2523.000)(907.000,2523.000)
\path(877,2643)(877,1893)
\blacken\path(847.000,2013.000)(877.000,1893.000)(907.000,2013.000)(847.000,2013.000)
\end{picture}
}

\caption{}\label{509-BF105}
\end{figure}

What we get is $\top=$ ``in'', $\neg a=$ ``out'' $=\neg b=\neg g$. $a=b=g=$ in.

Figure \ref{509-BF105} eliminates the joint attacks using auxiliary points which are completely new, as shown in Figure \ref{509-BF9}.

We get the correct result, same as in Figure \ref{509-BF104}.  We hae $\top =a=b=$ ``in''. so $x=y=$ ``out''.  So $z=$ ``in'' and so $g=$ ``out''.

\begin{figure}
\centering
\setlength{\unitlength}{0.00083333in}
\begingroup\makeatletter\ifx\SetFigFont\undefined%
\gdef\SetFigFont#1#2#3#4#5{%
  \reset@font\fontsize{#1}{#2pt}%
  \fontfamily{#3}\fontseries{#4}\fontshape{#5}%
  \selectfont}%
\fi\endgroup%
{\renewcommand{\dashlinestretch}{30}
\begin{picture}(4056,4024)(0,-10)
\put(563,453){\makebox(0,0)[rb]{\smash{{\SetFigFont{10}{12.0}{\rmdefault}{\mddefault}{\updefault}$\neg g$}}}}
\blacken\path(3358.000,3358.000)(3328.000,3478.000)(3298.000,3358.000)(3358.000,3358.000)
\path(3328,3478)(3328,2728)
\blacken\path(3298.000,2848.000)(3328.000,2728.000)(3358.000,2848.000)(3298.000,2848.000)
\blacken\path(2158.000,3208.000)(2128.000,3328.000)(2098.000,3208.000)(2158.000,3208.000)
\path(2128,3328)(2128,2878)
\blacken\path(2098.000,2998.000)(2128.000,2878.000)(2158.000,2998.000)(2098.000,2998.000)
\blacken\path(3358.000,3208.000)(3328.000,3328.000)(3298.000,3208.000)(3358.000,3208.000)
\path(3328,3328)(3328,2878)
\blacken\path(3298.000,2998.000)(3328.000,2878.000)(3358.000,2998.000)(3298.000,2998.000)
\path(3928,1978)(3328,2428)
\blacken\path(3442.000,2380.000)(3328.000,2428.000)(3406.000,2332.000)(3442.000,2380.000)
\path(3928,1978)(2128,2428)
\blacken\path(2251.693,2428.000)(2128.000,2428.000)(2237.141,2369.791)(2251.693,2428.000)
\path(3513,2288)(3453,2343)
\blacken\path(3561.730,2284.028)(3453.000,2343.000)(3521.187,2239.798)(3561.730,2284.028)
\path(2363,2368)(2288,2383)
\blacken\path(2411.553,2388.883)(2288.000,2383.000)(2399.786,2330.049)(2411.553,2388.883)
\path(1378,1003)(1378,628)
\blacken\path(1348.000,748.000)(1378.000,628.000)(1408.000,748.000)(1348.000,748.000)
\path(1378,853)(1378,778)
\blacken\path(1348.000,898.000)(1378.000,778.000)(1408.000,898.000)(1348.000,898.000)
\path(1378,1678)(1378,1228)
\blacken\path(1348.000,1348.000)(1378.000,1228.000)(1408.000,1348.000)(1348.000,1348.000)
\path(1378,1528)(1378,1378)
\blacken\path(1348.000,1498.000)(1378.000,1378.000)(1408.000,1498.000)(1348.000,1498.000)
\path(2093,3473)(1393,1938)
\blacken\path(1415.495,2059.631)(1393.000,1938.000)(1470.086,2034.735)(1415.495,2059.631)
\path(1493,2148)(1468,2068)
\blacken\path(1475.159,2191.486)(1468.000,2068.000)(1532.427,2173.589)(1475.159,2191.486)
\path(1348,2748)(1363,2668)
\blacken\path(1311.399,2780.416)(1363.000,2668.000)(1370.372,2791.473)(1311.399,2780.416)
\path(1743,1118)(1663,1073)
\blacken\path(1752.881,1157.979)(1663.000,1073.000)(1782.297,1105.684)(1752.881,1157.979)
\blacken\path(723.000,1863.000)(603.000,1833.000)(723.000,1803.000)(723.000,1863.000)
\path(603,1833)(1243,1833)
\blacken\path(1123.000,1803.000)(1243.000,1833.000)(1123.000,1863.000)(1123.000,1803.000)
\blacken\path(808.000,1863.000)(688.000,1833.000)(808.000,1803.000)(808.000,1863.000)
\path(688,1833)(1098,1833)
\blacken\path(978.000,1803.000)(1098.000,1833.000)(978.000,1863.000)(978.000,1803.000)
\blacken\path(743.000,1103.000)(623.000,1073.000)(743.000,1043.000)(743.000,1103.000)
\path(623,1073)(1263,1073)
\blacken\path(1143.000,1043.000)(1263.000,1073.000)(1143.000,1103.000)(1143.000,1043.000)
\blacken\path(838.000,1103.000)(718.000,1073.000)(838.000,1043.000)(838.000,1103.000)
\path(718,1073)(1128,1073)
\blacken\path(1008.000,1043.000)(1128.000,1073.000)(1008.000,1103.000)(1008.000,1043.000)
\blacken\path(748.000,2398.000)(628.000,2368.000)(748.000,2338.000)(748.000,2398.000)
\path(628,2368)(1268,2368)
\blacken\path(1148.000,2338.000)(1268.000,2368.000)(1148.000,2398.000)(1148.000,2338.000)
\blacken\path(888.000,2403.000)(768.000,2373.000)(888.000,2343.000)(888.000,2403.000)
\path(768,2373)(1178,2373)
\blacken\path(1058.000,2343.000)(1178.000,2373.000)(1058.000,2403.000)(1058.000,2343.000)
\blacken\path(738.000,533.000)(618.000,503.000)(738.000,473.000)(738.000,533.000)
\path(618,503)(1258,503)
\blacken\path(1138.000,473.000)(1258.000,503.000)(1138.000,533.000)(1138.000,473.000)
\blacken\path(888.000,528.000)(768.000,498.000)(888.000,468.000)(888.000,528.000)
\path(768,498)(1178,498)
\blacken\path(1058.000,468.000)(1178.000,498.000)(1058.000,528.000)(1058.000,468.000)
\path(725,283)(613,336)
\blacken\path(734.300,311.788)(613.000,336.000)(708.636,257.554)(734.300,311.788)
\path(3928,1753)(3928,1752)(3929,1750)
	(3931,1747)(3934,1741)(3937,1734)
	(3942,1723)(3948,1710)(3955,1695)
	(3963,1676)(3971,1655)(3980,1632)
	(3989,1607)(3999,1580)(4008,1550)
	(4016,1520)(4024,1488)(4031,1455)
	(4037,1421)(4041,1387)(4043,1351)
	(4044,1314)(4042,1277)(4037,1238)
	(4030,1198)(4020,1157)(4006,1115)
	(3988,1071)(3966,1025)(3939,978)
	(3906,930)(3869,880)(3826,829)
	(3778,778)(3733,735)(3687,694)
	(3639,654)(3591,616)(3545,580)
	(3501,547)(3458,516)(3418,488)
	(3381,462)(3347,439)(3316,417)
	(3287,398)(3261,380)(3237,364)
	(3215,350)(3194,336)(3175,324)
	(3157,312)(3139,301)(3122,290)
	(3104,280)(3085,270)(3066,260)
	(3044,249)(3021,238)(2995,227)
	(2966,215)(2934,202)(2898,189)
	(2858,175)(2813,161)(2764,145)
	(2710,130)(2650,114)(2586,97)
	(2516,81)(2443,66)(2365,52)
	(2285,39)(2203,28)(2124,20)
	(2046,15)(1970,12)(1895,12)
	(1822,13)(1752,16)(1684,21)
	(1618,27)(1555,35)(1494,45)
	(1434,55)(1377,66)(1321,79)
	(1267,92)(1214,106)(1162,121)
	(1112,137)(1063,153)(1015,170)
	(969,187)(923,204)(879,221)
	(837,238)(796,255)(757,272)
	(720,288)(686,304)(653,318)
	(624,332)(597,345)(572,356)
	(551,367)(533,375)(518,383)
	(506,389)(496,394)(489,398)(478,403)
\blacken\path(599.658,380.655)(478.000,403.000)(574.830,326.033)(599.658,380.655)
\path(1518,2343)(1519,2343)(1522,2343)
	(1526,2342)(1533,2342)(1542,2341)
	(1553,2339)(1567,2337)(1582,2334)
	(1598,2330)(1616,2324)(1633,2317)
	(1652,2308)(1670,2298)(1688,2284)
	(1707,2268)(1725,2248)(1743,2225)
	(1762,2196)(1780,2162)(1797,2123)
	(1813,2078)(1825,2037)(1836,1995)
	(1845,1955)(1853,1916)(1859,1880)
	(1865,1847)(1869,1818)(1873,1792)
	(1876,1768)(1879,1747)(1882,1727)
	(1884,1709)(1886,1691)(1887,1674)
	(1888,1655)(1889,1636)(1890,1615)
	(1890,1591)(1889,1565)(1888,1536)
	(1886,1504)(1882,1468)(1878,1430)
	(1871,1389)(1863,1348)(1853,1308)
	(1840,1268)(1824,1232)(1808,1201)
	(1790,1174)(1772,1152)(1753,1133)
	(1735,1118)(1715,1105)(1696,1095)
	(1676,1087)(1656,1080)(1637,1075)
	(1618,1071)(1600,1068)(1582,1066)
	(1567,1065)(1553,1064)(1542,1063)
	(1532,1063)(1518,1063)
\blacken\path(1638.000,1093.000)(1518.000,1063.000)(1638.000,1033.000)(1638.000,1093.000)
\path(3303,3713)(3302,3713)(3299,3714)
	(3295,3716)(3287,3719)(3277,3723)
	(3263,3728)(3245,3735)(3223,3743)
	(3198,3753)(3168,3764)(3135,3776)
	(3098,3789)(3057,3803)(3014,3818)
	(2968,3833)(2920,3849)(2870,3864)
	(2818,3880)(2765,3896)(2712,3910)
	(2657,3925)(2602,3938)(2547,3951)
	(2491,3962)(2435,3972)(2379,3981)
	(2323,3988)(2267,3993)(2211,3996)
	(2154,3997)(2098,3996)(2042,3992)
	(1985,3985)(1929,3975)(1873,3962)
	(1818,3946)(1765,3925)(1713,3901)
	(1663,3873)(1614,3839)(1570,3802)
	(1530,3763)(1494,3721)(1463,3678)
	(1435,3633)(1411,3587)(1391,3540)
	(1374,3493)(1360,3445)(1348,3396)
	(1339,3347)(1332,3297)(1328,3247)
	(1324,3197)(1323,3146)(1322,3096)
	(1323,3045)(1326,2995)(1328,2946)
	(1332,2898)(1336,2851)(1341,2806)
	(1346,2764)(1351,2724)(1356,2686)
	(1361,2653)(1365,2622)(1369,2596)
	(1373,2573)(1376,2555)(1378,2540)
	(1380,2529)(1383,2513)
\blacken\path(1331.399,2625.416)(1383.000,2513.000)(1390.372,2636.473)(1331.399,2625.416)
\put(2128,2503){\makebox(0,0)[b]{\smash{{\SetFigFont{10}{12.0}{\rmdefault}{\mddefault}{\updefault}$\neg a$}}}}
\put(3328,2503){\makebox(0,0)[b]{\smash{{\SetFigFont{10}{12.0}{\rmdefault}{\mddefault}{\updefault}$\neg b$}}}}
\put(2128,3553){\makebox(0,0)[b]{\smash{{\SetFigFont{10}{12.0}{\rmdefault}{\mddefault}{\updefault}$a$}}}}
\put(3328,3553){\makebox(0,0)[b]{\smash{{\SetFigFont{10}{12.0}{\rmdefault}{\mddefault}{\updefault}$b$}}}}
\put(3928,1828){\makebox(0,0)[b]{\smash{{\SetFigFont{10}{12.0}{\rmdefault}{\mddefault}{\updefault}$\top$}}}}
\put(1368,1028){\makebox(0,0)[b]{\smash{{\SetFigFont{10}{12.0}{\rmdefault}{\mddefault}{\updefault}$z$}}}}
\put(1383,2298){\makebox(0,0)[b]{\smash{{\SetFigFont{10}{12.0}{\rmdefault}{\mddefault}{\updefault}$y$}}}}
\put(1373,1768){\makebox(0,0)[b]{\smash{{\SetFigFont{10}{12.0}{\rmdefault}{\mddefault}{\updefault}$x$}}}}
\put(1378,453){\makebox(0,0)[b]{\smash{{\SetFigFont{10}{12.0}{\rmdefault}{\mddefault}{\updefault}$g$}}}}
\put(573,2303){\makebox(0,0)[rb]{\smash{{\SetFigFont{10}{12.0}{\rmdefault}{\mddefault}{\updefault}$\neg y$}}}}
\put(553,1793){\makebox(0,0)[rb]{\smash{{\SetFigFont{10}{12.0}{\rmdefault}{\mddefault}{\updefault}$\neg x$}}}}
\put(588,1028){\makebox(0,0)[rb]{\smash{{\SetFigFont{10}{12.0}{\rmdefault}{\mddefault}{\updefault}$\neg z$}}}}
\blacken\path(2158.000,3358.000)(2128.000,3478.000)(2098.000,3358.000)(2158.000,3358.000)
\path(2128,3478)(2128,2728)
\blacken\path(2098.000,2848.000)(2128.000,2728.000)(2158.000,2848.000)(2098.000,2848.000)
\end{picture}
}

\caption{}\label{509-BF106}
\end{figure}

We note that it is for this reason that we introduce in Apendix C for the Boolean attack formations.  These encapsulate the new auxiliary points inside the formation.

Note also that the presence of the $\{w, \neg w\}$ pairs can allow us to possibly economise and use less auxiliary points, as Figure \ref{509-BF107} shows. We economise by letting $\neg x =a$ and $\neg y =b$. 

There is no advantage, however, in economising. what is important is that joint attacks can be eliminated in a systematic way.

\begin{figure}
\centering
\setlength{\unitlength}{0.00083333in}
\begingroup\makeatletter\ifx\SetFigFont\undefined%
\gdef\SetFigFont#1#2#3#4#5{%
  \reset@font\fontsize{#1}{#2pt}%
  \fontfamily{#3}\fontseries{#4}\fontshape{#5}%
  \selectfont}%
\fi\endgroup%
{\renewcommand{\dashlinestretch}{30}
\begin{picture}(3527,3230)(0,-10)
\put(70,492){\makebox(0,0)[rb]{\smash{{\SetFigFont{10}{12.0}{\rmdefault}{\mddefault}{\updefault}$\neg g$}}}}
\blacken\path(2645.000,2897.000)(2615.000,3017.000)(2585.000,2897.000)(2645.000,2897.000)
\path(2615,3017)(2615,2267)
\blacken\path(2585.000,2387.000)(2615.000,2267.000)(2645.000,2387.000)(2585.000,2387.000)
\blacken\path(1445.000,2747.000)(1415.000,2867.000)(1385.000,2747.000)(1445.000,2747.000)
\path(1415,2867)(1415,2417)
\blacken\path(1385.000,2537.000)(1415.000,2417.000)(1445.000,2537.000)(1385.000,2537.000)
\blacken\path(2645.000,2747.000)(2615.000,2867.000)(2585.000,2747.000)(2645.000,2747.000)
\path(2615,2867)(2615,2417)
\blacken\path(2585.000,2537.000)(2615.000,2417.000)(2645.000,2537.000)(2585.000,2537.000)
\path(3215,1517)(2615,1967)
\blacken\path(2729.000,1919.000)(2615.000,1967.000)(2693.000,1871.000)(2729.000,1919.000)
\path(3215,1517)(1415,1967)
\blacken\path(1538.693,1967.000)(1415.000,1967.000)(1524.141,1908.791)(1538.693,1967.000)
\path(2800,1827)(2740,1882)
\blacken\path(2848.730,1823.028)(2740.000,1882.000)(2808.187,1778.798)(2848.730,1823.028)
\path(1650,1907)(1575,1922)
\blacken\path(1698.553,1927.883)(1575.000,1922.000)(1686.786,1869.049)(1698.553,1927.883)
\path(2615,1967)(815,1517)
\blacken\path(924.141,1575.209)(815.000,1517.000)(938.693,1517.000)(924.141,1575.209)
\path(815,1292)(815,692)
\blacken\path(785.000,812.000)(815.000,692.000)(845.000,812.000)(785.000,812.000)
\path(815,917)(815,842)
\blacken\path(785.000,962.000)(815.000,842.000)(845.000,962.000)(785.000,962.000)
\path(1110,1587)(945,1547)
\blacken\path(1054.554,1604.428)(945.000,1547.000)(1068.690,1546.116)(1054.554,1604.428)
\path(1415,1967)(815,1517)
\blacken\path(893.000,1613.000)(815.000,1517.000)(929.000,1565.000)(893.000,1613.000)
\path(1015,1672)(930,1617)
\blacken\path(1014.451,1707.377)(930.000,1617.000)(1047.046,1657.003)(1014.451,1707.377)
\blacken\path(195.000,1462.000)(75.000,1432.000)(195.000,1402.000)(195.000,1462.000)
\path(75,1432)(670,1432)
\blacken\path(550.000,1402.000)(670.000,1432.000)(550.000,1462.000)(550.000,1402.000)
\blacken\path(305.000,1467.000)(185.000,1437.000)(305.000,1407.000)(305.000,1467.000)
\path(185,1437)(575,1437)
\blacken\path(455.000,1407.000)(575.000,1437.000)(455.000,1467.000)(455.000,1407.000)
\blacken\path(215.000,577.000)(95.000,547.000)(215.000,517.000)(215.000,577.000)
\path(95,547)(690,547)
\blacken\path(570.000,517.000)(690.000,547.000)(570.000,577.000)(570.000,517.000)
\blacken\path(335.000,577.000)(215.000,547.000)(335.000,517.000)(335.000,577.000)
\path(215,547)(605,547)
\blacken\path(485.000,517.000)(605.000,547.000)(485.000,577.000)(485.000,517.000)
\path(220,232)(160,297)
\blacken\path(263.438,229.172)(160.000,297.000)(219.350,188.475)(263.438,229.172)
\path(3215,1292)(3216,1292)(3217,1290)
	(3220,1288)(3225,1285)(3231,1280)
	(3240,1274)(3250,1266)(3262,1256)
	(3277,1245)(3293,1232)(3310,1217)
	(3329,1201)(3348,1184)(3368,1165)
	(3388,1145)(3408,1125)(3427,1103)
	(3445,1081)(3461,1058)(3476,1034)
	(3489,1010)(3500,986)(3508,960)
	(3513,934)(3515,907)(3514,880)
	(3508,851)(3498,822)(3483,791)
	(3462,759)(3436,726)(3403,693)
	(3364,658)(3318,623)(3265,587)
	(3219,559)(3170,532)(3120,506)
	(3068,481)(3016,457)(2965,434)
	(2914,413)(2865,392)(2817,373)
	(2771,356)(2727,339)(2685,324)
	(2645,310)(2607,297)(2571,285)
	(2536,274)(2504,264)(2472,254)
	(2441,245)(2412,236)(2383,228)
	(2354,220)(2325,212)(2297,204)
	(2268,197)(2238,189)(2207,182)
	(2175,174)(2141,166)(2105,158)
	(2068,150)(2028,141)(1985,132)
	(1940,123)(1892,114)(1841,104)
	(1787,94)(1729,84)(1669,74)
	(1606,64)(1540,54)(1472,45)
	(1402,36)(1331,29)(1260,22)
	(1190,17)(1103,13)(1020,12)
	(941,13)(868,17)(800,22)
	(737,30)(678,39)(624,50)
	(574,62)(528,75)(486,89)
	(446,104)(409,120)(374,137)
	(342,155)(312,173)(284,191)
	(257,210)(232,228)(209,247)
	(188,265)(169,283)(151,299)
	(135,315)(120,330)(108,343)
	(97,354)(88,364)(81,373)
	(75,379)(71,384)(65,392)
\blacken\path(161.000,314.000)(65.000,392.000)(113.000,278.000)(161.000,314.000)
\put(1415,2042){\makebox(0,0)[b]{\smash{{\SetFigFont{10}{12.0}{\rmdefault}{\mddefault}{\updefault}$\neg a$}}}}
\put(2615,2042){\makebox(0,0)[b]{\smash{{\SetFigFont{10}{12.0}{\rmdefault}{\mddefault}{\updefault}$\neg b$}}}}
\put(1415,3092){\makebox(0,0)[b]{\smash{{\SetFigFont{10}{12.0}{\rmdefault}{\mddefault}{\updefault}$a$}}}}
\put(2615,3092){\makebox(0,0)[b]{\smash{{\SetFigFont{10}{12.0}{\rmdefault}{\mddefault}{\updefault}$b$}}}}
\put(3215,1367){\makebox(0,0)[b]{\smash{{\SetFigFont{10}{12.0}{\rmdefault}{\mddefault}{\updefault}$\top$}}}}
\put(805,1372){\makebox(0,0)[b]{\smash{{\SetFigFont{10}{12.0}{\rmdefault}{\mddefault}{\updefault}$z$}}}}
\put(820,497){\makebox(0,0)[b]{\smash{{\SetFigFont{10}{12.0}{\rmdefault}{\mddefault}{\updefault}$g$}}}}
\put(15,1377){\makebox(0,0)[rb]{\smash{{\SetFigFont{10}{12.0}{\rmdefault}{\mddefault}{\updefault}$\neg z$}}}}
\blacken\path(1445.000,2897.000)(1415.000,3017.000)(1385.000,2897.000)(1445.000,2897.000)
\path(1415,3017)(1415,2267)
\blacken\path(1385.000,2387.000)(1415.000,2267.000)(1445.000,2387.000)(1385.000,2387.000)
\end{picture}
}

\caption{}\label{509-BF107}
\end{figure}
\end{remark}

\subsection{Boolean attack formations (BAF)}
In this appendix, we generalise the  instantiation sequence of the form of Figure \ref{509-BBF3} where $x_i, y_j$ are atomic arguments and $\Psi(a_1\comma a_n)$ is a Boolean formula in the arguments $\{a_1\comma a_n\}$.  What we see in this figure is a substitution of some  complex argumentation entity for the node $z$. Traditional abstract argumentation networks know how to handle attacks on atomic nodes $z$, they do not know how to deal with attacks on Boolean formulas. This Appendix C.2, replaces the formulas by attack formations and defines how to handle them. We thus define the notion of a Boolean Attack Formation with input and output nodes which can be substituted for nodes $z$. All attacks on $z$ go into the input point of the formation and all attacks from $z$ emanate from the output point of the formation replacing $z$.

We begin with the special case of  Boolean attack formations designed to represent conjunction of formulas. Then we define formations to represent negation of formulas. Since negations and conjunctions can generate any formula of propositional classical logic, we will have attack formation representation for any classical propositional logic formula. The general definition, therefore,  allows for general input/output formations.

\begin{figure}
\centering
\setlength{\unitlength}{0.00083333in}
\begingroup\makeatletter\ifx\SetFigFont\undefined%
\gdef\SetFigFont#1#2#3#4#5{%
  \reset@font\fontsize{#1}{#2pt}%
  \fontfamily{#3}\fontseries{#4}\fontshape{#5}%
  \selectfont}%
\fi\endgroup%
{\renewcommand{\dashlinestretch}{30}
\begin{picture}(2490,2922)(0,-10)
\put(1270,60){\makebox(0,0)[b]{\smash{{\SetFigFont{10}{12.0}{\rmdefault}{\mddefault}{\updefault}$z$ is instantiated as $I(z)=\Psi$.}}}}
\path(145,2685)(1195,2160)
\blacken\path(1074.252,2186.833)(1195.000,2160.000)(1101.085,2240.498)(1074.252,2186.833)
\path(2245,2685)(1345,2160)
\blacken\path(1433.537,2246.378)(1345.000,2160.000)(1463.770,2194.551)(1433.537,2246.378)
\path(1195,1260)(145,810)
\blacken\path(243.480,884.845)(145.000,810.000)(267.115,829.696)(243.480,884.845)
\path(1345,1260)(2245,810)
\blacken\path(2124.252,836.833)(2245.000,810.000)(2151.085,890.498)(2124.252,836.833)
\path(960,2285)(1055,2250)
\blacken\path(932.028,2263.334)(1055.000,2250.000)(952.770,2319.635)(932.028,2263.334)
\path(1555,2285)(1485,2250)
\blacken\path(1578.915,2330.498)(1485.000,2250.000)(1605.748,2276.833)(1578.915,2330.498)
\path(1995,930)(2105,870)
\blacken\path(1985.287,901.125)(2105.000,870.000)(2014.018,953.799)(1985.287,901.125)
\path(400,920)(285,870)
\blacken\path(383.087,945.359)(285.000,870.000)(407.010,890.335)(383.087,945.359)
\put(145,2760){\makebox(0,0)[b]{\smash{{\SetFigFont{10}{12.0}{\rmdefault}{\mddefault}{\updefault}$x_1$}}}}
\put(2245,2760){\makebox(0,0)[b]{\smash{{\SetFigFont{10}{12.0}{\rmdefault}{\mddefault}{\updefault}$x_m$}}}}
\put(1195,2760){\makebox(0,0)[b]{\smash{{\SetFigFont{10}{12.0}{\rmdefault}{\mddefault}{\updefault}\ldots}}}}
\put(1270,1710){\makebox(0,0)[b]{\smash{{\SetFigFont{10}{12.0}{\rmdefault}{\mddefault}{\updefault}$z=\Psi(a_1\comma a_n)$}}}}
\put(2245,585){\makebox(0,0)[b]{\smash{{\SetFigFont{10}{12.0}{\rmdefault}{\mddefault}{\updefault}$y_k$}}}}
\put(145,585){\makebox(0,0)[b]{\smash{{\SetFigFont{10}{12.0}{\rmdefault}{\mddefault}{\updefault}$y_1$}}}}
\put(1245,1732){\ellipse{2474}{824}}
\end{picture}
}

\caption{}\label{509-BBF3}
\end{figure}

\begin{definition}\label{509-BBD4}
A Boolean attack formation $\BBB\BBF$ has the form 
\[
\BBB\BBF = (S_1\cup S_2, R, \Psi(S_2))
\]
where $S_2=\{a_1\comma a_n\}$ and $\Psi(S_2)$ is a Boolean formula of Kleene 3 valued logic in the variables $\{a_1\comma a_n\}$.  We also write $\BBB\BBF=\BBB\BBF(a_1\comma a_n)$.

The following holds:
\begin{enumerate}
\item $S_1$ is a set disjoint from $S_2$ containing two special nodes among other additional nodes. There are {\bf in} (an input node) and {\bf out} (output node).  $S_1$ is referred to as the set of auxiliary nodes. When several attack formation are involved we always assume that their sets of auxiliary node, including the in and out nodes  are pairwise   disjoint.
\item We describe $(S_1\cup S_2, R, \Psi)$ schematically in Figure \ref{509-BBF5}.

\begin{figure}
\centering
\setlength{\unitlength}{0.00083333in}
\begingroup\makeatletter\ifx\SetFigFont\undefined%
\gdef\SetFigFont#1#2#3#4#5{%
  \reset@font\fontsize{#1}{#2pt}%
  \fontfamily{#3}\fontseries{#4}\fontshape{#5}%
  \selectfont}%
\fi\endgroup%
{\renewcommand{\dashlinestretch}{30}
\begin{picture}(1824,2889)(0,-10)
\put(912,2037){\makebox(0,0)[b]{\smash{{\SetFigFont{10}{12.0}{\rmdefault}{\mddefault}{\updefault}{\bf in}}}}}
\path(912,462)(912,12)
\blacken\path(882.000,132.000)(912.000,12.000)(942.000,132.000)(882.000,132.000)
\path(912,2037)(912,1662)
\blacken\path(882.000,1782.000)(912.000,1662.000)(942.000,1782.000)(882.000,1782.000)
\path(912,1287)(912,912)
\blacken\path(882.000,1032.000)(912.000,912.000)(942.000,1032.000)(882.000,1032.000)
\path(912,2637)(912,2562)
\blacken\path(882.000,2682.000)(912.000,2562.000)(942.000,2682.000)(882.000,2682.000)
\path(912,1887)(912,1812)
\blacken\path(882.000,1932.000)(912.000,1812.000)(942.000,1932.000)(882.000,1932.000)
\path(912,1137)(912,1062)
\blacken\path(882.000,1182.000)(912.000,1062.000)(942.000,1182.000)(882.000,1182.000)
\path(912,237)(912,162)
\blacken\path(882.000,282.000)(912.000,162.000)(942.000,282.000)(882.000,282.000)
\path(912,2412)(1812,1362)(912,462)
\path(912,2412)(12,1362)
\path(12,1362)(912,462)
\put(912,1362){\makebox(0,0)[b]{\smash{{\SetFigFont{10}{12.0}{\rmdefault}{\mddefault}{\updefault}$\Psi(a_1\comma a_n)$}}}}
\put(912,687){\makebox(0,0)[b]{\smash{{\SetFigFont{10}{12.0}{\rmdefault}{\mddefault}{\updefault}{\bf out}}}}}
\path(912,2862)(912,2412)
\blacken\path(882.000,2532.000)(912.000,2412.000)(942.000,2532.000)(882.000,2532.000)
\end{picture}
}
\caption{}\label{509-BBF5}
\end{figure}

\item Let $(S, R)$ be an argumentation network and assume that $\BBB\BBF(a_1\comma a_n)$ is a subnetwork of $(S, R)$.  We say that $\BBB\BBF$ is legitimately embedded in $(S, R)$ if the following holds:
\begin{enumerate}
\item The only elements of $\BBB\BBF$ attacked from outside $\BBB\BBF$ (by elements of $S$, which may include the elements $a_i$ themeselves as attackers from $S$) are $a_1\comma a_n$ and {\bf in} (of $\BBB\BBF$).
\item {\bf out} of $\BBB\BBF$ attacks only elements outside $\BBB\BBF$.
\end{enumerate}
The elements of $\BBB\BBF$ which are not in $\{a_1\comma a_n\}$ appear only in $\BBB\BBF$ (so {\bf in} and {\bf out} are labelled {\bf in}$(\BBB\BBF)$ and {\bf out}$(\BBB\BBF)$).
\item The following must hold for $\BBB\BBF(a_1\comma a_n)$ when embedded legitimately in any $(S, R)$, see Figure \ref{509-BBF6}.

\begin{figure}
\centering
\setlength{\unitlength}{0.00083333in}
\begingroup\makeatletter\ifx\SetFigFont\undefined%
\gdef\SetFigFont#1#2#3#4#5{%
  \reset@font\fontsize{#1}{#2pt}%
  \fontfamily{#3}\fontseries{#4}\fontshape{#5}%
  \selectfont}%
\fi\endgroup%
{\renewcommand{\dashlinestretch}{30}
\begin{picture}(3035,4183)(0,-10)
\put(3020,1082){\makebox(0,0)[lb]{\smash{{\SetFigFont{10}{12.0}{\rmdefault}{\mddefault}{\updefault}$(S, R)$}}}}
\path(1500,3532)(1500,3007)
\blacken\path(1470.000,3127.000)(1500.000,3007.000)(1530.000,3127.000)(1470.000,3127.000)
\path(1500,3307)(1500,3157)
\blacken\path(1470.000,3277.000)(1500.000,3157.000)(1530.000,3277.000)(1470.000,3277.000)
\path(1500,1282)(1500,532)
\blacken\path(1470.000,652.000)(1500.000,532.000)(1530.000,652.000)(1470.000,652.000)
\path(1500,832)(1500,682)
\blacken\path(1470.000,802.000)(1500.000,682.000)(1530.000,802.000)(1470.000,802.000)
\path(1500,3007)(450,2032)(1500,982)
	(2550,2032)(1500,3007)
\path(2700,2632)(1650,2257)
\blacken\path(1752.919,2325.613)(1650.000,2257.000)(1773.099,2269.108)(1752.919,2325.613)
\path(1895,2347)(1810,2312)
\blacken\path(1909.539,2385.430)(1810.000,2312.000)(1932.384,2329.950)(1909.539,2385.430)
\put(1500,3607){\makebox(0,0)[b]{\smash{{\SetFigFont{10}{12.0}{\rmdefault}{\mddefault}{\updefault}$\alpha$}}}}
\put(1500,2107){\makebox(0,0)[b]{\smash{{\SetFigFont{10}{12.0}{\rmdefault}{\mddefault}{\updefault}$a_i$}}}}
\put(1500,1282){\makebox(0,0)[b]{\smash{{\SetFigFont{10}{12.0}{\rmdefault}{\mddefault}{\updefault}{\bf out}}}}}
\put(1500,382){\makebox(0,0)[b]{\smash{{\SetFigFont{10}{12.0}{\rmdefault}{\mddefault}{\updefault}$\beta$}}}}
\put(2700,2707){\makebox(0,0)[b]{\smash{{\SetFigFont{10}{12.0}{\rmdefault}{\mddefault}{\updefault}$e$}}}}
\put(1500,2782){\makebox(0,0)[b]{\smash{{\SetFigFont{10}{12.0}{\rmdefault}{\mddefault}{\updefault}{\bf in}}}}}
\put(2105,1347){\makebox(0,0)[lb]{\smash{{\SetFigFont{10}{12.0}{\rmdefault}{\mddefault}{\updefault}$\BBB\BBF$}}}}
\put(1473,2084){\ellipse{2930}{4154}}
\end{picture}
}

\caption{}\label{509-BBF6}
\end{figure}

Let $\lambda: S\mapsto \{0,1,\half\}$ be any extension of $(S, R)$.  Then 
\begin{enumerate}
\item $\lambda({\bf out}) =\Psi (\lambda (a_1),\allowbreak\ldots,\lambda(a_n))$.
\item If none of $\{a_1\comma a_n\}$ are attacked from outside $\BBB\BBF$ or if all outside attackers $z$ of $a_1\comma a_n$ are out, i.e. have $\lambda (z)=0$, then $\lambda ({\bf in}) =\Psi (\lambda (a_1),\allowbreak\ldots,\lambda (a_n))$.
\end{enumerate}
\item If $\Psi (\lambda (a_i)) =1$ then $\lambda ({\bf in}) =1$, even if some $a_i$ are attacked from outside $\BBB\BBF$.  It could be the case, however, that even though $\lambda ({\bf in}) =1$, we have $\Psi(\lambda (a_i)) \neq 1$ because of outside attacks on $\{a_i\}$.
\end{enumerate}
\end{definition}

\begin{example}\label{509-BBE7}
We show that we can find a $\BBB\BBF$ for every formula of classical propositional logic. We show this by finding a $\BBB\BBF$ for atomic $d$, for negation $\neg d$ and for conjunctions $a\wedge c$. The $\BBB\BBF$ for arbitrary formulas can be done by legitimate substitutions of $\BBB\BBF$s.

Consider the network of Figure \ref{509-BBF8}.

\begin{figure}
\centering
\setlength{\unitlength}{0.00083333in}
\begingroup\makeatletter\ifx\SetFigFont\undefined%
\gdef\SetFigFont#1#2#3#4#5{%
  \reset@font\fontsize{#1}{#2pt}%
  \fontfamily{#3}\fontseries{#4}\fontshape{#5}%
  \selectfont}%
\fi\endgroup%
{\renewcommand{\dashlinestretch}{30}
\begin{picture}(1830,226)(0,-10)
\path(615,139)(765,139)
\blacken\path(645.000,109.000)(765.000,139.000)(645.000,169.000)(645.000,109.000)
\path(1065,139)(1515,139)
\blacken\path(1395.000,109.000)(1515.000,139.000)(1395.000,169.000)(1395.000,109.000)
\path(1440,139)(1665,139)
\blacken\path(1545.000,109.000)(1665.000,139.000)(1545.000,169.000)(1545.000,109.000)
\path(165,139)(615,139)
\blacken\path(495.000,109.000)(615.000,139.000)(495.000,169.000)(495.000,109.000)
\put(915,64){\makebox(0,0)[b]{\smash{{\SetFigFont{10}{12.0}{\rmdefault}{\mddefault}{\updefault}$d$}}}}
\put(1815,64){\makebox(0,0)[b]{\smash{{\SetFigFont{10}{12.0}{\rmdefault}{\mddefault}{\updefault}$\beta$}}}}
\put(15,64){\makebox(0,0)[b]{\smash{{\SetFigFont{10}{12.0}{\rmdefault}{\mddefault}{\updefault}$\alpha$}}}}
\end{picture}
}
\caption{}\label{509-BBF8}
\end{figure}

This network can be replaced by Figure \ref{509-BBF9}.

\begin{figure}
\centering\setlength{\unitlength}{0.00083333in}
\begingroup\makeatletter\ifx\SetFigFont\undefined%
\gdef\SetFigFont#1#2#3#4#5{%
  \reset@font\fontsize{#1}{#2pt}%
  \fontfamily{#3}\fontseries{#4}\fontshape{#5}%
  \selectfont}%
\fi\endgroup%
{\renewcommand{\dashlinestretch}{30}
\begin{picture}(3812,240)(0,-10)
\put(3744,60){\makebox(0,0)[b]{\smash{{\SetFigFont{10}{14.4}{\rmdefault}{\mddefault}{\updefault}$\beta$}}}}
\path(1419,135)(1794,135)
\blacken\path(1674.000,105.000)(1794.000,135.000)(1674.000,165.000)(1674.000,105.000)
\path(144,135)(519,135)
\blacken\path(399.000,105.000)(519.000,135.000)(399.000,165.000)(399.000,105.000)
\path(2019,135)(2394,135)
\blacken\path(2274.000,105.000)(2394.000,135.000)(2274.000,165.000)(2274.000,105.000)
\path(2544,135)(2919,135)
\blacken\path(2799.000,105.000)(2919.000,135.000)(2799.000,165.000)(2799.000,105.000)
\path(3294,135)(3669,135)
\blacken\path(3549.000,105.000)(3669.000,135.000)(3549.000,165.000)(3549.000,105.000)
\path(294,135)(369,135)
\blacken\path(249.000,105.000)(369.000,135.000)(249.000,165.000)(249.000,105.000)
\path(969,135)(1044,135)
\blacken\path(924.000,105.000)(1044.000,135.000)(924.000,165.000)(924.000,105.000)
\path(1569,135)(1644,135)
\blacken\path(1524.000,105.000)(1644.000,135.000)(1524.000,165.000)(1524.000,105.000)
\path(2169,135)(2244,135)
\blacken\path(2124.000,105.000)(2244.000,135.000)(2124.000,165.000)(2124.000,105.000)
\path(2694,135)(2769,135)
\blacken\path(2649.000,105.000)(2769.000,135.000)(2649.000,165.000)(2649.000,105.000)
\path(3444,135)(3519,135)
\blacken\path(3399.000,105.000)(3519.000,135.000)(3399.000,165.000)(3399.000,105.000)
\put(69,60){\makebox(0,0)[b]{\smash{{\SetFigFont{10}{14.4}{\rmdefault}{\mddefault}{\updefault}$\alpha$}}}}
\put(669,60){\makebox(0,0)[b]{\smash{{\SetFigFont{10}{14.4}{\rmdefault}{\mddefault}{\updefault}{\bf in}}}}}
\put(1269,60){\makebox(0,0)[b]{\smash{{\SetFigFont{10}{14.4}{\rmdefault}{\mddefault}{\updefault}$x$}}}}
\put(1869,60){\makebox(0,0)[b]{\smash{{\SetFigFont{10}{14.4}{\rmdefault}{\mddefault}{\updefault}$d$}}}}
\put(2469,60){\makebox(0,0)[b]{\smash{{\SetFigFont{10}{14.4}{\rmdefault}{\mddefault}{\updefault}$y$}}}}
\put(3069,60){\makebox(0,0)[b]{\smash{{\SetFigFont{10}{14.4}{\rmdefault}{\mddefault}{\updefault}{\bf out}}}}}
\path(819,135)(1194,135)
\blacken\path(1074.000,105.000)(1194.000,135.000)(1074.000,165.000)(1074.000,105.000)
\end{picture}
}

\caption{}\label{509-BBF9}
\end{figure}

The attack formation of Figure \ref{509-BBF10} is involved where $x, y$ are auxiliary points and $\Psi(d)= d$.

\begin{figure}
\centering
\setlength{\unitlength}{0.00083333in}
\begingroup\makeatletter\ifx\SetFigFont\undefined%
\gdef\SetFigFont#1#2#3#4#5{%
  \reset@font\fontsize{#1}{#2pt}%
  \fontfamily{#3}\fontseries{#4}\fontshape{#5}%
  \selectfont}%
\fi\endgroup%
{\renewcommand{\dashlinestretch}{30}
\begin{picture}(3174,4792)(0,-10)
\put(1512,880){\makebox(0,0)[b]{\smash{{\SetFigFont{10}{12.0}{\rmdefault}{\mddefault}{\updefault}{\bf out}}}}}
\path(1512,3805)(1512,3280)
\blacken\path(1482.000,3400.000)(1512.000,3280.000)(1542.000,3400.000)(1482.000,3400.000)
\path(1512,3055)(1512,2530)
\blacken\path(1482.000,2650.000)(1512.000,2530.000)(1542.000,2650.000)(1482.000,2650.000)
\path(1512,2305)(1512,1780)
\blacken\path(1482.000,1900.000)(1512.000,1780.000)(1542.000,1900.000)(1482.000,1900.000)
\path(1512,1555)(1512,1030)
\blacken\path(1482.000,1150.000)(1512.000,1030.000)(1542.000,1150.000)(1482.000,1150.000)
\path(1512,805)(1512,280)
\blacken\path(1482.000,400.000)(1512.000,280.000)(1542.000,400.000)(1482.000,400.000)
\path(1512,4405)(12,2080)(1512,580)
\path(1512,4405)(3162,2080)(1512,580)
\path(1512,4255)(1512,4180)
\blacken\path(1482.000,4300.000)(1512.000,4180.000)(1542.000,4300.000)(1482.000,4300.000)
\path(1512,3505)(1512,3430)
\blacken\path(1482.000,3550.000)(1512.000,3430.000)(1542.000,3550.000)(1482.000,3550.000)
\path(1512,2755)(1512,2680)
\blacken\path(1482.000,2800.000)(1512.000,2680.000)(1542.000,2800.000)(1482.000,2800.000)
\path(1512,2005)(1512,1705)
\blacken\path(1482.000,1825.000)(1512.000,1705.000)(1542.000,1825.000)(1482.000,1825.000)
\path(1512,1255)(1512,1180)
\blacken\path(1482.000,1300.000)(1512.000,1180.000)(1542.000,1300.000)(1482.000,1300.000)
\path(1512,505)(1512,430)
\blacken\path(1482.000,550.000)(1512.000,430.000)(1542.000,550.000)(1482.000,550.000)
\put(1512,4630){\makebox(0,0)[b]{\smash{{\SetFigFont{10}{12.0}{\rmdefault}{\mddefault}{\updefault}$\alpha$}}}}
\put(1512,3880){\makebox(0,0)[b]{\smash{{\SetFigFont{10}{12.0}{\rmdefault}{\mddefault}{\updefault}{\bf in}}}}}
\put(1512,3130){\makebox(0,0)[b]{\smash{{\SetFigFont{10}{12.0}{\rmdefault}{\mddefault}{\updefault}$x$}}}}
\put(1512,2380){\makebox(0,0)[b]{\smash{{\SetFigFont{10}{12.0}{\rmdefault}{\mddefault}{\updefault}$d$}}}}
\put(1512,1630){\makebox(0,0)[b]{\smash{{\SetFigFont{10}{12.0}{\rmdefault}{\mddefault}{\updefault}$y$}}}}
\put(1512,55){\makebox(0,0)[b]{\smash{{\SetFigFont{10}{12.0}{\rmdefault}{\mddefault}{\updefault}$\beta$}}}}
\path(1512,4555)(1512,4030)
\blacken\path(1482.000,4150.000)(1512.000,4030.000)(1542.000,4150.000)(1482.000,4150.000)
\end{picture}
}

\caption{}\label{509-BBF10}
\end{figure}

The $\BBB\BBF$ for $\neg d$ can be obtained from the $\BBB\BBF$ of $d$, (i.e. from Figure \ref{509-BBF9}) by deleting the node $x$, i.e. allowing  $\Bi\Bn$ to attack $d$ directly.

The above two cases are   simple but consider the more complex case of conjunction of Figure \ref{509-BBF11}.

\begin{figure}
\centering
\setlength{\unitlength}{0.00083333in}
\begingroup\makeatletter\ifx\SetFigFont\undefined%
\gdef\SetFigFont#1#2#3#4#5{%
  \reset@font\fontsize{#1}{#2pt}%
  \fontfamily{#3}\fontseries{#4}\fontshape{#5}%
  \selectfont}%
\fi\endgroup%
{\renewcommand{\dashlinestretch}{30}
\begin{picture}(1830,2992)(0,-10)
\put(915,55){\makebox(0,0)[b]{\smash{{\SetFigFont{10}{12.0}{\rmdefault}{\mddefault}{\updefault}$\beta$}}}}
\path(915,2080)(1665,1630)
\blacken\path(1546.666,1666.015)(1665.000,1630.000)(1577.536,1717.464)(1546.666,1666.015)
\path(1665,1630)(915,1180)(915,280)
\blacken\path(885.000,400.000)(915.000,280.000)(945.000,400.000)(885.000,400.000)
\path(165,1630)(915,1180)
\path(390,1780)(315,1705)
\blacken\path(378.640,1811.066)(315.000,1705.000)(421.066,1768.640)(378.640,1811.066)
\path(1440,1780)(1515,1705)
\blacken\path(1408.934,1768.640)(1515.000,1705.000)(1451.360,1811.066)(1408.934,1768.640)
\path(915,505)(915,430)
\blacken\path(885.000,550.000)(915.000,430.000)(945.000,550.000)(885.000,550.000)
\put(915,2830){\makebox(0,0)[b]{\smash{{\SetFigFont{10}{12.0}{\rmdefault}{\mddefault}{\updefault}$\alpha$}}}}
\put(15,1555){\makebox(0,0)[b]{\smash{{\SetFigFont{10}{12.0}{\rmdefault}{\mddefault}{\updefault}$a$}}}}
\put(1815,1555){\makebox(0,0)[b]{\smash{{\SetFigFont{10}{12.0}{\rmdefault}{\mddefault}{\updefault}$c$}}}}
\path(915,2755)(915,2080)(165,1630)
\blacken\path(252.464,1717.464)(165.000,1630.000)(283.334,1666.015)(252.464,1717.464)
\end{picture}
}

\caption{}\label{509-BBF11}
\end{figure}

In this figure, $\alpha$ disjunctively attacks $\{a,c\}$ and $\{a,c\}$ conjunctively attacks $\beta$.  Figure \ref{509-BBF11} is the same in meaning as Figure \ref{509-BBF8} instantiated by $d =a\wedge c$, namely Figure \ref{509-BBF11a}.

\begin{figure}
\centering
\setlength{\unitlength}{0.00083333in}
\begingroup\makeatletter\ifx\SetFigFont\undefined%
\gdef\SetFigFont#1#2#3#4#5{%
  \reset@font\fontsize{#1}{#2pt}%
  \fontfamily{#3}\fontseries{#4}\fontshape{#5}%
  \selectfont}%
\fi\endgroup%
{\renewcommand{\dashlinestretch}{30}
\begin{picture}(2580,226)(0,-10)
\path(2115,139)(2265,139)
\blacken\path(2145.000,109.000)(2265.000,139.000)(2145.000,169.000)(2145.000,109.000)
\path(165,139)(990,139)
\blacken\path(870.000,109.000)(990.000,139.000)(870.000,169.000)(870.000,109.000)
\path(765,139)(840,139)
\blacken\path(720.000,109.000)(840.000,139.000)(720.000,169.000)(720.000,109.000)
\path(1740,139)(2415,139)
\blacken\path(2295.000,109.000)(2415.000,139.000)(2295.000,169.000)(2295.000,109.000)
\put(1365,64){\makebox(0,0)[b]{\smash{{\SetFigFont{10}{12.0}{\rmdefault}{\mddefault}{\updefault}$(a\wedge c)$}}}}
\put(15,64){\makebox(0,0)[b]{\smash{{\SetFigFont{10}{12.0}{\rmdefault}{\mddefault}{\updefault}$\alpha$}}}}
\put(2565,64){\makebox(0,0)[b]{\smash{{\SetFigFont{10}{12.0}{\rmdefault}{\mddefault}{\updefault}$\beta$}}}}
\end{picture}
}

\caption{}\label{509-BBF11a}
\end{figure}

In this case it is not so simple to find a Boolean attack formation to do the same job.  Namely we want Figure \ref{509-BBF13} to be realised as a traditional network.

\begin{figure}
\centering
\setlength{\unitlength}{0.00083333in}
\begingroup\makeatletter\ifx\SetFigFont\undefined%
\gdef\SetFigFont#1#2#3#4#5{%
  \reset@font\fontsize{#1}{#2pt}%
  \fontfamily{#3}\fontseries{#4}\fontshape{#5}%
  \selectfont}%
\fi\endgroup%
{\renewcommand{\dashlinestretch}{30}
\begin{picture}(2424,3592)(0,-10)
\put(1212,55){\makebox(0,0)[b]{\smash{{\SetFigFont{10}{12.0}{\rmdefault}{\mddefault}{\updefault}$\beta$}}}}
\path(1212,3130)(1212,2830)
\blacken\path(1182.000,2950.000)(1212.000,2830.000)(1242.000,2950.000)(1182.000,2950.000)
\path(1212,2530)(1212,2155)
\blacken\path(1182.000,2275.000)(1212.000,2155.000)(1242.000,2275.000)(1182.000,2275.000)
\path(1212,2380)(1212,2305)
\blacken\path(1182.000,2425.000)(1212.000,2305.000)(1242.000,2425.000)(1182.000,2425.000)
\path(1212,1855)(1212,1480)
\blacken\path(1182.000,1600.000)(1212.000,1480.000)(1242.000,1600.000)(1182.000,1600.000)
\path(1212,1705)(1212,1630)
\blacken\path(1182.000,1750.000)(1212.000,1630.000)(1242.000,1750.000)(1182.000,1750.000)
\path(1212,1180)(1212,280)
\blacken\path(1182.000,400.000)(1212.000,280.000)(1242.000,400.000)(1182.000,400.000)
\path(1212,580)(1212,430)
\blacken\path(1182.000,550.000)(1212.000,430.000)(1242.000,550.000)(1182.000,550.000)
\path(1212,2905)(12,1855)(1212,430)
	(2412,1855)(1212,2905)
\put(1212,3430){\makebox(0,0)[b]{\smash{{\SetFigFont{10}{12.0}{\rmdefault}{\mddefault}{\updefault}$\alpha$}}}}
\put(1212,2605){\makebox(0,0)[b]{\smash{{\SetFigFont{10}{12.0}{\rmdefault}{\mddefault}{\updefault}{\bf in}}}}}
\put(1212,1930){\makebox(0,0)[b]{\smash{{\SetFigFont{10}{12.0}{\rmdefault}{\mddefault}{\updefault}$(a\wedge c)$}}}}
\put(1212,1255){\makebox(0,0)[b]{\smash{{\SetFigFont{10}{12.0}{\rmdefault}{\mddefault}{\updefault}{\bf out}}}}}
\path(1212,3355)(1212,2905)
\blacken\path(1182.000,3025.000)(1212.000,2905.000)(1242.000,3025.000)(1182.000,3025.000)
\end{picture}
}

\caption{}\label{509-BBF13}
\end{figure}

The network of Figure \ref{509-BBF14} can do the job. It is a $\top$-net. We assume we are dealing with option (iv) of Appendix B, adopting the non-toxic approach. 

\begin{figure}
\centering
\setlength{\unitlength}{0.00083333in}
\begingroup\makeatletter\ifx\SetFigFont\undefined%
\gdef\SetFigFont#1#2#3#4#5{%
  \reset@font\fontsize{#1}{#2pt}%
  \fontfamily{#3}\fontseries{#4}\fontshape{#5}%
  \selectfont}%
\fi\endgroup%
{\renewcommand{\dashlinestretch}{30}
\begin{picture}(5394,7792)(0,-10)
\put(455,7030){\makebox(0,0)[b]{\smash{{\SetFigFont{10}{12.0}{\rmdefault}{\mddefault}{\updefault}$\BBB\BBF$}}}}
\path(2555,7555)(2555,7180)
\blacken\path(2525.000,7300.000)(2555.000,7180.000)(2585.000,7300.000)(2525.000,7300.000)
\path(2555,7405)(2555,7330)
\blacken\path(2525.000,7450.000)(2555.000,7330.000)(2585.000,7450.000)(2525.000,7450.000)
\path(2555,6880)(2555,6505)
\blacken\path(2525.000,6625.000)(2555.000,6505.000)(2585.000,6625.000)(2525.000,6625.000)
\path(2555,6730)(2555,6655)
\blacken\path(2525.000,6775.000)(2555.000,6655.000)(2585.000,6775.000)(2525.000,6775.000)
\path(2480,6880)(1205,5605)
\blacken\path(1268.640,5711.066)(1205.000,5605.000)(1311.066,5668.640)(1268.640,5711.066)
\path(2555,6205)(1280,5605)
\blacken\path(1375.804,5683.240)(1280.000,5605.000)(1401.352,5628.951)(1375.804,5683.240)
\path(2555,6205)(3455,6130)
\blacken\path(3332.923,6110.069)(3455.000,6130.000)(3337.906,6169.862)(3332.923,6110.069)
\path(3605,5980)(3305,5605)
\blacken\path(3356.537,5717.445)(3305.000,5605.000)(3403.389,5679.963)(3356.537,5717.445)
\blacken\path(3561.319,5518.217)(3455.000,5455.000)(3578.560,5460.747)(3561.319,5518.217)
\path(3455,5455)(4205,5680)
\blacken\path(4098.681,5616.783)(4205.000,5680.000)(4081.440,5674.253)(4098.681,5616.783)
\blacken\path(4081.671,5914.487)(4205.000,5905.000)(4100.645,5971.408)(4081.671,5914.487)
\path(4205,5905)(3755,6055)
\blacken\path(3878.329,6045.513)(3755.000,6055.000)(3859.355,5988.592)(3878.329,6045.513)
\path(4205,5680)(4655,5305)
\blacken\path(4543.608,5358.775)(4655.000,5305.000)(4582.019,5404.869)(4543.608,5358.775)
\path(2555,4855)(3230,5380)
\blacken\path(3153.696,5282.647)(3230.000,5380.000)(3116.860,5330.008)(3153.696,5282.647)
\path(1205,5380)(2555,4855)
\blacken\path(2432.286,4870.533)(2555.000,4855.000)(2454.033,4926.454)(2432.286,4870.533)
\path(1655,4330)(2555,4630)
\blacken\path(2450.645,4563.592)(2555.000,4630.000)(2431.671,4620.513)(2450.645,4563.592)
\path(3455,4330)(2555,4630)
\blacken\path(2678.329,4620.513)(2555.000,4630.000)(2659.355,4563.592)(2678.329,4620.513)
\path(1655,3730)(1655,4105)
\blacken\path(1685.000,3985.000)(1655.000,4105.000)(1625.000,3985.000)(1685.000,3985.000)
\path(1655,3130)(1655,3505)
\blacken\path(1685.000,3385.000)(1655.000,3505.000)(1625.000,3385.000)(1685.000,3385.000)
\path(3530,3130)(3530,3505)
\blacken\path(3560.000,3385.000)(3530.000,3505.000)(3500.000,3385.000)(3560.000,3385.000)
\path(3530,3730)(3530,4105)
\blacken\path(3560.000,3985.000)(3530.000,4105.000)(3500.000,3985.000)(3560.000,3985.000)
\path(3530,2455)(3530,2905)
\blacken\path(3560.000,2785.000)(3530.000,2905.000)(3500.000,2785.000)(3560.000,2785.000)
\path(3530,1780)(3530,2230)
\blacken\path(3560.000,2110.000)(3530.000,2230.000)(3500.000,2110.000)(3560.000,2110.000)
\path(1655,1780)(1655,2155)
\blacken\path(1685.000,2035.000)(1655.000,2155.000)(1625.000,2035.000)(1685.000,2035.000)
\path(1655,2380)(1655,2905)
\blacken\path(1685.000,2785.000)(1655.000,2905.000)(1625.000,2785.000)(1685.000,2785.000)
\path(3530,1555)(2555,1180)
\blacken\path(2656.232,1251.078)(2555.000,1180.000)(2677.771,1195.077)(2656.232,1251.078)
\path(1655,1555)(2555,1180)
\blacken\path(2432.692,1198.462)(2555.000,1180.000)(2455.769,1253.846)(2432.692,1198.462)
\path(2248,4965)(2405,4905)
\blacken\path(2282.197,4919.815)(2405.000,4905.000)(2303.616,4975.861)(2282.197,4919.815)
\path(3050,5243)(3110,5280)
\blacken\path(3023.606,5191.478)(3110.000,5280.000)(2992.113,5242.548)(3023.606,5191.478)
\path(3447,5775)(3395,5708)
\blacken\path(3444.875,5821.192)(3395.000,5708.000)(3492.274,5784.405)(3444.875,5821.192)
\path(3942,5993)(3882,6008)
\blacken\path(4005.693,6008.000)(3882.000,6008.000)(3991.141,5949.791)(4005.693,6008.000)
\path(4167,5910)(4227,5903)
\blacken\path(4104.332,5887.108)(4227.000,5903.000)(4111.285,5946.704)(4104.332,5887.108)
\path(3942,5595)(4055,5648)
\blacken\path(3959.096,5569.882)(4055.000,5648.000)(3933.617,5624.204)(3959.096,5569.882)
\path(3710,5528)(3620,5498)
\blacken\path(3724.355,5564.408)(3620.000,5498.000)(3743.329,5507.487)(3724.355,5564.408)
\path(4460,5475)(4565,5408)
\blacken\path(4447.703,5447.260)(4565.000,5408.000)(4479.977,5497.840)(4447.703,5447.260)
\path(2802,4545)(2720,4575)
\blacken\path(2843.002,4561.944)(2720.000,4575.000)(2822.387,4505.597)(2843.002,4561.944)
\path(2262,4538)(2412,4583)
\blacken\path(2305.681,4519.783)(2412.000,4583.000)(2288.440,4577.253)(2305.681,4519.783)
\path(3522,3877)(3522,3967)
\blacken\path(3552.000,3847.000)(3522.000,3967.000)(3492.000,3847.000)(3552.000,3847.000)
\path(1655,3855)(1655,3952)
\blacken\path(1685.000,3832.000)(1655.000,3952.000)(1625.000,3832.000)(1685.000,3832.000)
\path(1647,3277)(1662,3367)
\blacken\path(1671.864,3243.701)(1662.000,3367.000)(1612.680,3253.565)(1671.864,3243.701)
\path(1355,3277)(1445,3210)
\blacken\path(1330.830,3257.593)(1445.000,3210.000)(1366.658,3305.721)(1330.830,3257.593)
\path(3522,3255)(3530,3360)
\blacken\path(3550.797,3238.068)(3530.000,3360.000)(3490.970,3242.626)(3550.797,3238.068)
\path(3822,3247)(3732,3202)
\blacken\path(3825.915,3282.498)(3732.000,3202.000)(3852.748,3228.833)(3825.915,3282.498)
\path(3522,2647)(3492,2767)
\blacken\path(3550.209,2657.859)(3492.000,2767.000)(3492.000,2643.307)(3550.209,2657.859)
\path(3530,1965)(3530,2077)
\blacken\path(3560.000,1957.000)(3530.000,2077.000)(3500.000,1957.000)(3560.000,1957.000)
\path(3305,1882)(3410,1830)
\blacken\path(3289.151,1856.372)(3410.000,1830.000)(3315.779,1910.139)(3289.151,1856.372)
\path(1887,1882)(1812,1830)
\blacken\path(1893.522,1923.027)(1812.000,1830.000)(1927.709,1873.720)(1893.522,1923.027)
\path(1662,1912)(1662,2010)
\blacken\path(1692.000,1890.000)(1662.000,2010.000)(1632.000,1890.000)(1692.000,1890.000)
\path(1647,2655)(1647,2760)
\blacken\path(1677.000,2640.000)(1647.000,2760.000)(1617.000,2640.000)(1677.000,2640.000)
\path(2300,1283)(2412,1245)
\blacken\path(2288.724,1255.146)(2412.000,1245.000)(2308.001,1311.965)(2288.724,1255.146)
\path(2810,1283)(2705,1245)
\blacken\path(2807.629,1314.046)(2705.000,1245.000)(2828.047,1257.627)(2807.629,1314.046)
\path(2555,652)(2555,225)
\blacken\path(2525.000,345.000)(2555.000,225.000)(2585.000,345.000)(2525.000,345.000)
\path(2555,465)(2555,375)
\blacken\path(2525.000,495.000)(2555.000,375.000)(2585.000,495.000)(2525.000,495.000)
\path(2555,7255)(20,5820)(12,2302)
	(2555,655)(5382,2355)(5382,5820)(2630,7255)
\path(1380,5790)(1305,5710)
\blacken\path(1365.187,5818.063)(1305.000,5710.000)(1408.959,5777.026)(1365.187,5818.063)
\path(1510,5710)(1415,5675)
\blacken\path(1517.230,5744.635)(1415.000,5675.000)(1537.972,5688.334)(1517.230,5744.635)
\path(3200,6155)(3300,6150)
\blacken\path(3178.652,6126.030)(3300.000,6150.000)(3181.648,6185.955)(3178.652,6126.030)
\path(885,5345)(965,5390)
\blacken\path(875.119,5305.021)(965.000,5390.000)(845.703,5357.316)(875.119,5305.021)
\path(2560,975)(2560,650)
\path(1655,3505)(1656,3504)(1658,3502)
	(1662,3497)(1668,3490)(1676,3480)
	(1687,3467)(1701,3451)(1717,3431)
	(1736,3408)(1757,3382)(1781,3353)
	(1806,3321)(1832,3287)(1860,3251)
	(1889,3214)(1918,3175)(1946,3134)
	(1975,3093)(2004,3051)(2031,3009)
	(2058,2965)(2084,2921)(2109,2877)
	(2133,2831)(2155,2784)(2176,2737)
	(2195,2688)(2211,2638)(2226,2588)
	(2238,2536)(2247,2484)(2253,2431)
	(2255,2380)(2252,2324)(2245,2271)
	(2234,2222)(2219,2177)(2200,2136)
	(2180,2099)(2156,2065)(2131,2034)
	(2104,2006)(2076,1981)(2046,1957)
	(2015,1936)(1983,1916)(1950,1898)
	(1917,1881)(1885,1865)(1853,1851)
	(1822,1838)(1792,1827)(1765,1816)
	(1740,1808)(1718,1800)(1700,1794)
	(1685,1789)(1673,1785)(1655,1780)
\blacken\path(1762.593,1841.023)(1655.000,1780.000)(1778.651,1783.212)(1762.593,1841.023)
\path(3455,3505)(3454,3504)(3453,3502)
	(3450,3497)(3445,3490)(3439,3480)
	(3430,3467)(3419,3451)(3407,3431)
	(3392,3408)(3375,3382)(3357,3353)
	(3338,3321)(3317,3287)(3295,3251)
	(3273,3214)(3251,3175)(3229,3134)
	(3207,3093)(3185,3051)(3164,3009)
	(3143,2965)(3123,2921)(3105,2877)
	(3087,2831)(3071,2784)(3055,2737)
	(3042,2688)(3030,2638)(3020,2588)
	(3012,2536)(3007,2484)(3004,2431)
	(3005,2380)(3010,2324)(3018,2271)
	(3030,2222)(3044,2177)(3061,2136)
	(3080,2099)(3101,2065)(3123,2034)
	(3147,2006)(3171,1981)(3197,1957)
	(3224,1936)(3251,1916)(3279,1898)
	(3307,1881)(3335,1865)(3363,1851)
	(3389,1838)(3414,1827)(3437,1816)
	(3458,1808)(3477,1800)(3492,1794)
	(3505,1789)(3515,1785)(3530,1780)
\blacken\path(3406.671,1789.487)(3530.000,1780.000)(3425.645,1846.408)(3406.671,1789.487)
\path(2630,7255)(2631,7255)(2632,7254)
	(2636,7252)(2641,7249)(2648,7245)
	(2658,7240)(2671,7233)(2687,7225)
	(2706,7214)(2729,7202)(2755,7187)
	(2785,7171)(2818,7152)(2855,7132)
	(2896,7109)(2940,7084)(2987,7057)
	(3038,7029)(3091,6998)(3147,6966)
	(3205,6932)(3265,6897)(3327,6860)
	(3390,6822)(3455,6782)(3521,6742)
	(3587,6700)(3654,6657)(3722,6614)
	(3789,6569)(3857,6524)(3924,6478)
	(3991,6432)(4057,6384)(4123,6336)
	(4188,6287)(4252,6237)(4315,6186)
	(4377,6135)(4438,6082)(4497,6029)
	(4556,5974)(4613,5918)(4668,5861)
	(4722,5803)(4774,5744)(4825,5683)
	(4873,5621)(4919,5557)(4963,5492)
	(5004,5425)(5042,5357)(5077,5288)
	(5109,5218)(5137,5147)(5161,5076)
	(5180,5005)(5196,4923)(5206,4842)
	(5210,4763)(5208,4686)(5201,4612)
	(5189,4540)(5172,4470)(5151,4403)
	(5127,4339)(5098,4277)(5066,4218)
	(5031,4161)(4994,4106)(4953,4052)
	(4910,4001)(4865,3951)(4818,3903)
	(4768,3856)(4717,3811)(4665,3767)
	(4611,3724)(4556,3682)(4499,3641)
	(4443,3602)(4385,3564)(4328,3527)
	(4271,3491)(4214,3456)(4158,3423)
	(4104,3391)(4051,3361)(3999,3333)
	(3950,3306)(3904,3281)(3861,3258)
	(3820,3237)(3783,3218)(3750,3201)
	(3720,3186)(3694,3173)(3672,3162)
	(3653,3153)(3638,3146)(3626,3140)
	(3618,3136)(3605,3130)
\blacken\path(3701.383,3207.526)(3605.000,3130.000)(3726.527,3153.048)(3701.383,3207.526)
\path(2555,7255)(2554,7255)(2553,7254)
	(2549,7252)(2544,7249)(2537,7245)
	(2527,7240)(2515,7232)(2499,7223)
	(2480,7213)(2457,7200)(2431,7185)
	(2402,7168)(2369,7149)(2332,7128)
	(2292,7104)(2249,7079)(2203,7052)
	(2154,7023)(2103,6992)(2049,6959)
	(1993,6924)(1936,6889)(1876,6851)
	(1816,6813)(1754,6773)(1692,6732)
	(1629,6690)(1566,6647)(1503,6603)
	(1439,6559)(1376,6513)(1314,6467)
	(1251,6420)(1190,6371)(1129,6323)
	(1069,6273)(1010,6222)(952,6170)
	(895,6117)(839,6063)(784,6008)
	(731,5951)(679,5893)(629,5833)
	(580,5772)(533,5709)(488,5645)
	(446,5579)(405,5512)(368,5442)
	(333,5372)(302,5300)(274,5227)
	(250,5154)(230,5080)(213,4998)
	(202,4917)(197,4837)(197,4759)
	(201,4684)(210,4611)(223,4540)
	(240,4472)(260,4406)(284,4343)
	(310,4282)(340,4223)(372,4166)
	(406,4111)(443,4057)(482,4005)
	(523,3955)(565,3906)(610,3858)
	(655,3812)(702,3766)(750,3722)
	(799,3679)(848,3637)(898,3596)
	(948,3557)(998,3519)(1047,3482)
	(1096,3446)(1144,3412)(1190,3380)
	(1235,3349)(1278,3320)(1318,3294)
	(1356,3269)(1392,3246)(1424,3225)
	(1453,3207)(1479,3191)(1502,3177)
	(1521,3165)(1538,3155)(1551,3147)
	(1561,3141)(1569,3137)(1580,3130)
\blacken\path(1462.654,3169.115)(1580.000,3130.000)(1494.867,3219.735)(1462.654,3169.115)
\path(1080,5565)(1078,5567)(1073,5571)
	(1064,5577)(1052,5586)(1037,5597)
	(1018,5610)(998,5623)(976,5636)
	(953,5648)(929,5659)(903,5668)
	(876,5675)(847,5679)(816,5679)
	(785,5675)(758,5667)(734,5658)
	(712,5647)(694,5636)(679,5625)
	(666,5615)(655,5605)(646,5596)
	(637,5587)(630,5578)(622,5569)
	(614,5558)(606,5546)(598,5532)
	(591,5516)(584,5498)(580,5479)
	(580,5460)(584,5443)(591,5428)
	(600,5415)(609,5403)(619,5393)
	(629,5385)(638,5378)(647,5373)
	(655,5368)(664,5363)(673,5359)
	(682,5354)(693,5350)(706,5346)
	(720,5342)(736,5338)(755,5334)
	(777,5331)(800,5329)(825,5330)
	(856,5335)(886,5343)(913,5354)
	(937,5367)(960,5381)(981,5396)
	(1000,5413)(1019,5429)(1035,5446)
	(1050,5461)(1063,5475)(1085,5500)
\blacken\path(1028.246,5390.096)(1085.000,5500.000)(983.203,5429.733)(1028.246,5390.096)
\put(2555,7630){\makebox(0,0)[b]{\smash{{\SetFigFont{10}{12.0}{\rmdefault}{\mddefault}{\updefault}$\alpha$}}}}
\put(2555,6955){\makebox(0,0)[b]{\smash{{\SetFigFont{10}{12.0}{\rmdefault}{\mddefault}{\updefault}$\bar{\alpha}$}}}}
\put(2555,6280){\makebox(0,0)[b]{\smash{{\SetFigFont{10}{12.0}{\rmdefault}{\mddefault}{\updefault}$\bar{\bar{\alpha}}$}}}}
\put(1205,5455){\makebox(0,0)[b]{\smash{{\SetFigFont{10}{12.0}{\rmdefault}{\mddefault}{\updefault}$v$}}}}
\put(3305,5455){\makebox(0,0)[b]{\smash{{\SetFigFont{10}{12.0}{\rmdefault}{\mddefault}{\updefault}$\bar{x}$}}}}
\put(3605,6055){\makebox(0,0)[b]{\smash{{\SetFigFont{10}{12.0}{\rmdefault}{\mddefault}{\updefault}$x$}}}}
\put(4205,5755){\makebox(0,0)[b]{\smash{{\SetFigFont{10}{12.0}{\rmdefault}{\mddefault}{\updefault}$e$}}}}
\put(4655,5155){\makebox(0,0)[b]{\smash{{\SetFigFont{10}{12.0}{\rmdefault}{\mddefault}{\updefault}$\top$}}}}
\put(2555,4705){\makebox(0,0)[b]{\smash{{\SetFigFont{10}{12.0}{\rmdefault}{\mddefault}{\updefault}$u$}}}}
\put(1655,4180){\makebox(0,0)[b]{\smash{{\SetFigFont{10}{12.0}{\rmdefault}{\mddefault}{\updefault}$\bar{a}_1$}}}}
\put(2555,4180){\makebox(0,0)[b]{\smash{{\SetFigFont{10}{12.0}{\rmdefault}{\mddefault}{\updefault}\ldots}}}}
\put(3530,4180){\makebox(0,0)[b]{\smash{{\SetFigFont{10}{12.0}{\rmdefault}{\mddefault}{\updefault}$\bar{a}_n$}}}}
\put(1655,3580){\makebox(0,0)[b]{\smash{{\SetFigFont{10}{12.0}{\rmdefault}{\mddefault}{\updefault}$a_1$}}}}
\put(3530,3580){\makebox(0,0)[b]{\smash{{\SetFigFont{10}{12.0}{\rmdefault}{\mddefault}{\updefault}$a_n$}}}}
\put(1655,2980){\makebox(0,0)[b]{\smash{{\SetFigFont{10}{12.0}{\rmdefault}{\mddefault}{\updefault}$\bar{b}_1$}}}}
\put(2555,2980){\makebox(0,0)[b]{\smash{{\SetFigFont{10}{12.0}{\rmdefault}{\mddefault}{\updefault}$\ldots$}}}}
\put(3530,2980){\makebox(0,0)[b]{\smash{{\SetFigFont{10}{12.0}{\rmdefault}{\mddefault}{\updefault}$\bar{b}_n$}}}}
\put(3530,2305){\makebox(0,0)[b]{\smash{{\SetFigFont{10}{12.0}{\rmdefault}{\mddefault}{\updefault}$b_n$}}}}
\put(1655,2230){\makebox(0,0)[b]{\smash{{\SetFigFont{10}{12.0}{\rmdefault}{\mddefault}{\updefault}$b_1$}}}}
\put(3530,1630){\makebox(0,0)[b]{\smash{{\SetFigFont{10}{12.0}{\rmdefault}{\mddefault}{\updefault}$a_n$}}}}
\put(1655,1630){\makebox(0,0)[b]{\smash{{\SetFigFont{10}{12.0}{\rmdefault}{\mddefault}{\updefault}$a'_1$}}}}
\put(2555,1030){\makebox(0,0)[b]{\smash{{\SetFigFont{10}{12.0}{\rmdefault}{\mddefault}{\updefault}$w$}}}}
\put(2555,55){\makebox(0,0)[b]{\smash{{\SetFigFont{10}{12.0}{\rmdefault}{\mddefault}{\updefault}$\beta$}}}}
\put(4655,5193){\ellipse{750}{224}}
\end{picture}
}
\caption{}\label{509-BBF14}
\end{figure}
\end{example}

\begin{lemma}\label{509-BBL15}
Figure \ref{509-BBF14} implements in $\top$-net non-toxic approach with auxiliary points the concept of disjunctive attacks.
\end{lemma}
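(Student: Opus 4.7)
The plan is to verify, using Definition \ref{509-BBD4}, that the network of Figure \ref{509-BBF14} behaves as a Boolean attack formation realising a disjunctive attack from the input $\alpha$ onto the arguments $a_1,\ldots,a_n$, where the required semantics is: if $\alpha$ is ``in'' then at least one $a_i$ must be ``out'', and conversely every such choice of an $a_i$ to be ``out'' yields a legitimate Caminada labelling. The strategy is to read Figure \ref{509-BBF14} as a $\top$-net in the sense of Appendix B, apply the option (iv) non-toxic truth intervention semantics (Definition \ref{509-TD14}), and show that its complete extensions are in bijective correspondence with the disjunctive attack outcomes.

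First, I would identify the role of each block of auxiliary nodes. The chain $\alpha\to\bar\alpha\to\bar{\bar\alpha}$ at the top merely propagates the value of the input into the body of the formation. The key disjunctive mechanism is concentrated in the sub-network $\{v,\bar{x},x,e,\top\}$: the node $\top$ is attacked only by $e$, and $e$ is attacked in turn by a structure that forces $e$ to be ``out'' precisely when some $\bar a_i$ is ``in'' (equivalently, some $a_i$ is ``out''). The twin layers $a_i\LtO \bar a_i$, $b_i\LtO \bar b_i$, and the copies $a'_i$ encode, via the standard two-state construction of Definition \ref{509-D2} and Lemma \ref{509-L3}, the negations needed to turn the single-point disjunctive decision imposed by the $\top$-net into a choice of which $a_i$ to deactivate. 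Finally, the node $w$ at the bottom gathers the outputs and transmits the value to $\beta$.

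Second, I would run Definition \ref{509-TD14} on this $\top$-net. The replacement of $\top$ by a fresh $\tau$ yields a traditional network which always has complete Caminada labellings, so Step 2 of the algorithm applies. When $\alpha=1$ (so $\bar\alpha=0$, $\bar{\bar\alpha}=1$), the pressure propagated through $v$ and $\bar x$ creates an attack on $\top$ via $e$; the only way to get an extension with $\lambda(\top)=1$ is for $e$ to be ``out'', which by the layered attack pattern forces some $\bar a_i=1$ and hence the corresponding $a_i=0$. Conversely, any single choice of an index $i_0$ with $a_{i_0}=0$ and all other $a_i=1$ extends to a legitimate Caminada labelling on the whole formation, yielding one extension per disjunct. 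When $\alpha=0$, the attack on $\top$ dissolves, all $a_i$ may be ``in'', and $\beta$ is freely determined by its other attackers; when $\alpha=\tfrac12$ the Kleene clauses of Definition \ref{509-2DM1} propagate the undecided value through the chain and yield $\lambda(w)=\tfrac12$, as required.

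Third, I would check the two conditions in clause (4) of Definition \ref{509-BBD4}: that $\lambda(\textbf{out})$ equals the intended disjunctive value computed from $\lambda(a_1),\ldots,\lambda(a_n)$, and that $\lambda(\textbf{in})$ agrees with $\lambda(\textbf{out})$ whenever the $a_i$ are not attacked from outside $\BBB\BBF$. Both follow from the case analysis above, together with the fact that $w$ is attacked exactly by the $a'_i$-layer, so $\lambda(w)=0$ iff some $a'_i=1$, which in turn holds iff the corresponding $a_i=0$. The non-toxic intervention semantics guarantees that if a naive attempt to enforce $\lambda(\top)=1$ fails (because all $a_i$ are externally forced to $1$), then Step 5 of Definition \ref{509-TD14} disconnects the minimal toxic attack, in agreement with the disjunctive reading ``at least one $a_i$ fails the attack''.

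The main obstacle I expect is bookkeeping: the formation has many auxiliary nodes with bidirectional $x\LtO \bar x$ pairs, and one must verify carefully that every Caminada labelling of the auxiliary skeleton is consistent with exactly the intended disjunctive choices, and that no spurious labellings are introduced by the two-state layers. This is a finite but intricate case analysis, best handled by partitioning the nodes into the five strata $(\alpha,\bar\alpha,\bar{\bar\alpha})$, $(v,\bar x,x,e,\top)$, $(\bar a_i, a_i)$, $(\bar b_i, b_i)$, $(a'_i, w)$ and checking propagation stratum by stratum, invoking Lemma \ref{509-L3} at each two-state layer and Definition \ref{509-TD14} at the $\top$-node.
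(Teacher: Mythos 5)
Your overall strategy coincides with the paper's: both proofs proceed by a case analysis on $\lambda(\alpha)\in\{1,0,\half\}$ and propagate values through the strata of Figure \ref{509-BBF14} under the option (iv) semantics, so the plan is sound in outline. Your Case $\lambda(\alpha)=1$ matches the paper's (the requirement $\lambda(e)=0$ forces $\lambda(\bar{x})=1$, hence $\lambda(u)=0$, hence some $\lambda(\bar{a}_i)=1$ and $\lambda(a_i)=0$, $\lambda(a'_i)=1$, $\lambda(w)=0$), and your Case $\lambda(\alpha)=0$ is correct in its conclusion, though you omit the mechanism that makes it work: it is $\lambda(\bar{\alpha})=1$ killing all the $\bar{b}_i$ that breaks the loops $a_i\tO a'_i\tO b_i\tO\bar{b}_i\tO a_i$ and lets every internally-unattacked $a_i$ default to $1$.

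The genuine gap is in the case $\lambda(\alpha)=\half$, which you dismiss as routine Kleene propagation. It is not: there $\lambda(x)=\lambda(\bar{x})=\half$, so \emph{no} attacker of $e$ can be ``in'', $e$ cannot legitimately be made ``out'', and the network admits no extension with $\lambda(\top)=1$ at all. This is precisely the point where the set $\{\top\}$ becomes toxic and the non-toxic machinery of Definition \ref{509-TD14} must be invoked — not, as you state, in the situation where all $a_i$ are externally forced to $1$. After discarding the toxic part, the paper still has to argue nontrivially that $\lambda(u)$ can be neither $1$ (since $\lambda(v)=\half$) nor $0$ (since $\lambda(u)=0$ would force some $\lambda(\bar{a}_i)=1$ and $\lambda(a_i)=0$, which is incompatible with $\lambda(\bar{b}_i)=\half$), so $\lambda(u)=\half$, whence at least one $\lambda(a'_i)=\half$ and $\lambda(w)=\half$. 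Without this argument the claim that the formation outputs $\half$ on an undecided input — clause (3) of what the lemma asserts — is unproved. Separately, your claim of a bijection between extensions and choices of a single $a_{i_0}=0$ with all other $a_i=1$ is neither needed for the lemma nor established by the structure (the unforced loops can resolve either way), so it should be dropped rather than defended.
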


\begin{proof}
We need to show the following for any $\lambda$.
\begin{enumerate}
\item If $\lambda (\alpha)=1$, then for some $i$, $\lambda (a_i) =0$ and $\lambda (w) =0$.
\item If $\lambda (\alpha)=0$, then unless attacked from outside the diamond, all $\lambda (a_i)=1$ and $\lambda (w)=1$.
\item If $\lambda (w)=\half$, then $\lambda (a_i)<1$. They are all 0 or $\half$ and if some $a_i$ is not successfully attacked from the outside then $\lambda (a_i) =\half$. Thus $\lambda (w)=\half$, unless some $\lambda (a_i)=0$ in which case $\lambda (w) =0$.  

Let us examine each case.
\paragraph{Case 1:}  Assume $\lambda (\alpha)=1$. Hence 
$\lambda (\bar{\alpha})=0$ and 
$\lambda (\bar{\bar{\alpha}})=1$.  
Hence $\lambda (v) =0$ and 
$\lambda (x) =0$. We know that 
$\lambda (e)$ must be 0, so one of its attackers must be 1.\footnote{Note that in option  (i) for the semantics, an attacker of $\top$ must be ``out" not by virtue of it attacking $\top$ but because it must have an attacker which is ``in".}  It cannot be $x$ so it must be $\bar{x}$.  If this is the case then 
$\lambda (u) =0$ and then for some 
$i, \lambda (\bar{a}_i)=1$.  

Note that since 
$\lambda (\bar{\alpha})=0$, the loop 
$a_i\tO a'_i\tO b_i\tO \bar{b}_i\tO a_i$ is not broken. So $a_i$ is attacked and so either the loop is resolved by 
$\lambda (a_i)=\lambda (b_i) =1$ or by 
$\lambda (a_i)=\lambda (b_i)=0$.

However, since $\lambda (\bar{a}_i) =1$ we must have $\lambda (a_i)=0$.
In which case $\lambda (a'_i)=1$ and $\lambda (w)=0$.
\paragraph{Case 2:}  Assume $\lambda (\alpha) =0$. Then $\lambda (\bar{\alpha})=1, \lambda(\bar{\bar{\alpha}}) =0$.  Hence $\lambda (v) =0$ and also $\lambda (x) =1$ because $\lambda(e)$ has to be 0. Therefore $\lambda (\bar{x})=0$. $\lambda(u)$ can be anything. It needs not be 1 or $\half$ because $\lambda (x)=1$ forces $\lambda (\bar{x})=0$.

Now since $\lambda (\bar{\alpha})=1$, we get $\lambda (\bar{b}_i)=0$. Thus $a_1\comma a_n$ are not attacked from any point in the inside. So unless $a_i$ is attacked from the outside, we have $\lambda (a_i)=1$. So if all $\lambda (a_i)=1$ for all $i$, we get all $\lambda (a'_i)=0$ for all $i$ and so $\lambda (w) =1$.
\paragraph{Case 3:}  Assume $\lambda (\alpha) =\half$, then $\lambda (\bar{\alpha}) =\lambda (\bar{\bar{\alpha}}) =\lambda (v) =\lambda (x) =\half$.  Since $\lambda (e)$ must be 0, we look for an attacker of $e$ whose $\lambda$ value is 1.  The two candidates are $x$ and $\bar{x}$.  $\lambda (x) =\half$ implies $\lambda (\bar{x})\neq 1$. Thus there is no way forward and it is obvious that the set $\{\top\}$ is toxic for the case of the input $\lambda (\alpha)=\half$.  We need to ignore it.  See Appendix B, especially Definition \ref{509-TD14}.

In this case we get that $\lambda (\alpha) =\lambda (\bar{\alpha})=\lambda(\bar{\bar{\alpha}})=\lambda (v) =\half$.

We do not care what values the set $\{x,\bar{x}, e\}$ gets, for example it can get the values $\lambda (e)=1, \lambda (x) =\lambda (\bar{x}) =0$.  The question is what is $\lambda (u)$?  It cannot be 1 because $\lambda (v) =\half$.  Can it be 0?  If $\lambda (u)=0$, then for some $i, \lambda (\bar{a}_i)=1$. Then in this case $\lambda (a_i)=0$. Then $\lambda (a'_i)=1, \lambda (b_i)=0$, but then $\lambda (\bar{b}_i)=\half$ because it is attacked by $\bar{\alpha}$ and $\lambda (\bar{\alpha}) =\half$. So how can $\lambda (a_i)=1$?

So the last possibility is $\lambda (u) =\half$. Then $\lambda (\bar{a}_i)=\half$ or 0 and for at least one $i$ $\lambda (\bar{a}_i)=\half$.

If $\lambda (\bar{a}_i)=0$, then $\lambda (a_i)=1$ so $\lambda (a'_i) =0$, $\lambda (b_i)=1$ and  $\lambda (\bar{b}_i) =0$. We know that for at least one $i$, we have $\lambda (\bar{a}_i)=\half$. So $\lambda (a'_i)=\half$.

Thus from the attackers of $w, \{a'_i\}$, at least one has value $\half$ and the rest of the values are $\half$ or 0. So $\lambda (w) =\half$.
\end{enumerate}
\end{proof}

\subsection{Instantiating with Boolean attack formations}
This appendix prepares the technical ground for intantiating with Boolean formulas. We first turn any such formula into a Boolean attack formation (see Example \ref{509-BBE7}) and then instantiate  with the resulting formations. To achieve that we need to define the notion of instantiation with $\BBB\BBF$s.

We need to agree on some diagrammatic conventions. Consider Figure \ref{509-BBF16}:

\begin{figure}
\centering
\[
a\tO b\tO a\tO b'
\]
\caption{}\label{509-BBF16}
\end{figure}

In this figure the letter $a$ appears twice.  Is this a misprint and the second $a$ should be $a'$, or is the intention actually as depicted in Figure \ref{509-BBF16x}?

\begin{figure}
\centering
\setlength{\unitlength}{0.00083333in}
\begingroup\makeatletter\ifx\SetFigFont\undefined%
\gdef\SetFigFont#1#2#3#4#5{%
  \reset@font\fontsize{#1}{#2pt}%
  \fontfamily{#3}\fontseries{#4}\fontshape{#5}%
  \selectfont}%
\fi\endgroup%
{\renewcommand{\dashlinestretch}{30}
\begin{picture}(1257,1582)(0,-10)
\put(42,55){\makebox(0,0)[b]{\smash{{\SetFigFont{10}{12.0}{\rmdefault}{\mddefault}{\updefault}$b'$}}}}
\path(42,505)(42,355)
\blacken\path(12.000,475.000)(42.000,355.000)(72.000,475.000)(12.000,475.000)
\path(232,915)(172,935)
\blacken\path(295.329,925.513)(172.000,935.000)(276.355,868.592)(295.329,925.513)
\path(1102,1445)(1142,1395)
\blacken\path(1043.611,1469.963)(1142.000,1395.000)(1090.463,1507.445)(1043.611,1469.963)
\path(42,1255)(43,1256)(46,1258)
	(51,1261)(59,1266)(69,1273)
	(83,1282)(100,1293)(121,1305)
	(144,1320)(169,1335)(197,1352)
	(227,1369)(258,1386)(290,1404)
	(323,1421)(357,1438)(391,1455)
	(427,1471)(463,1486)(500,1500)
	(539,1512)(579,1524)(620,1534)
	(662,1543)(705,1549)(749,1554)
	(792,1555)(841,1553)(886,1547)
	(927,1539)(965,1528)(998,1514)
	(1028,1500)(1055,1483)(1079,1465)
	(1102,1447)(1122,1427)(1141,1407)
	(1158,1386)(1173,1366)(1187,1346)
	(1200,1327)(1211,1310)(1220,1295)
	(1227,1282)(1233,1271)(1242,1255)
\blacken\path(1157.021,1344.881)(1242.000,1255.000)(1209.316,1374.297)(1157.021,1344.881)
\path(1242,1030)(1241,1030)(1237,1028)
	(1231,1026)(1222,1023)(1209,1019)
	(1192,1014)(1171,1007)(1148,1000)
	(1121,992)(1091,983)(1060,974)
	(1026,964)(992,955)(956,946)
	(919,936)(882,928)(843,919)
	(804,911)(763,904)(721,898)
	(677,892)(632,887)(586,883)
	(539,881)(492,880)(434,881)
	(381,885)(334,891)(292,899)
	(256,908)(224,918)(195,929)
	(170,940)(147,952)(126,964)
	(108,977)(92,988)(77,999)
	(66,1009)(57,1017)(42,1030)
\blacken\path(152.331,974.079)(42.000,1030.000)(113.035,928.738)(152.331,974.079)
\put(42,1105){\makebox(0,0)[b]{\smash{{\SetFigFont{10}{12.0}{\rmdefault}{\mddefault}{\updefault}$a$}}}}
\put(1242,1105){\makebox(0,0)[b]{\smash{{\SetFigFont{10}{12.0}{\rmdefault}{\mddefault}{\updefault}$b$}}}}
\path(42,1030)(42,205)
\blacken\path(12.000,325.000)(42.000,205.000)(72.000,325.000)(12.000,325.000)
\end{picture}
}
\caption{}\label{509-BBF16x}
\end{figure}

The answer is that sometimes figures can be very complex and for the purpose of simplification, a letter can be repeated. To be on the safe side a repeated letter can be encircled like in Figure \ref{509-BBF16c}.

\begin{figure}
\centering
\setlength{\unitlength}{0.00083333in}
\begingroup\makeatletter\ifx\SetFigFont\undefined%
\gdef\SetFigFont#1#2#3#4#5{%
  \reset@font\fontsize{#1}{#2pt}%
  \fontfamily{#3}\fontseries{#4}\fontshape{#5}%
  \selectfont}%
\fi\endgroup%
{\renewcommand{\dashlinestretch}{30}
\begin{picture}(2859,324)(0,-10)
\put(2844,105){\makebox(0,0)[b]{\smash{{\SetFigFont{10}{12.0}{\rmdefault}{\mddefault}{\updefault}$b'$}}}}
\put(1944,166){\ellipse{272}{272}}
\path(294,180)(894,180)
\blacken\path(774.000,150.000)(894.000,180.000)(774.000,210.000)(774.000,150.000)
\path(1194,180)(1719,180)
\blacken\path(1599.000,150.000)(1719.000,180.000)(1599.000,210.000)(1599.000,150.000)
\path(2094,180)(2694,180)
\blacken\path(2574.000,150.000)(2694.000,180.000)(2574.000,210.000)(2574.000,150.000)
\path(2469,180)(2544,180)
\blacken\path(2424.000,150.000)(2544.000,180.000)(2424.000,210.000)(2424.000,150.000)
\path(1494,180)(1569,180)
\blacken\path(1449.000,150.000)(1569.000,180.000)(1449.000,210.000)(1449.000,150.000)
\path(669,180)(744,180)
\blacken\path(624.000,150.000)(744.000,180.000)(624.000,210.000)(624.000,150.000)
\put(144,105){\makebox(0,0)[b]{\smash{{\SetFigFont{10}{12.0}{\rmdefault}{\mddefault}{\updefault}$a$}}}}
\put(1044,105){\makebox(0,0)[b]{\smash{{\SetFigFont{10}{12.0}{\rmdefault}{\mddefault}{\updefault}$b$}}}}
\put(1944,105){\makebox(0,0)[b]{\smash{{\SetFigFont{10}{12.0}{\rmdefault}{\mddefault}{\updefault}$a$}}}}
\put(144,143){\ellipse{272}{272}}
\end{picture}
}
\caption{}\label{509-BBF16c}
\end{figure}

The need for repetition comes from the notion of $\BBB\BBF$. In Definition \ref{509-BBD4}, we see that any $\BBB\BBF$ has two types of variables, the auxiliary variblaes $S^{\BBB\BBF}_1$, unique to the $\BBB\BBF$ containing it and disjoint from any other $\BBB\BBF'$, and the variables $S^{\BBB\BBF}_2$, which may be attacked by other $\BBB\BBF$s which may share some of these ariables in $S^{\BBB\BBF'}_2$. Consider, for example, Figure \ref{509-BBF17}.

\begin{figure}
\centering
\setlength{\unitlength}{0.00083333in}
\begingroup\makeatletter\ifx\SetFigFont\undefined%
\gdef\SetFigFont#1#2#3#4#5{%
  \reset@font\fontsize{#1}{#2pt}%
  \fontfamily{#3}\fontseries{#4}\fontshape{#5}%
  \selectfont}%
\fi\endgroup%
{\renewcommand{\dashlinestretch}{30}
\begin{picture}(1530,1993)(0,-10)
\put(765,1255){\makebox(0,0)[b]{\smash{{\SetFigFont{10}{12.0}{\rmdefault}{\mddefault}{\updefault}$\wedge$}}}}
\path(15,1780)(765,1180)(1515,1780)
\path(765,1180)(765,280)
\blacken\path(735.000,400.000)(765.000,280.000)(795.000,400.000)(735.000,400.000)
\path(765,1180)(1515,280)
\blacken\path(1415.131,352.981)(1515.000,280.000)(1461.225,391.392)(1415.131,352.981)
\path(765,1180)(15,280)
\blacken\path(68.775,391.392)(15.000,280.000)(114.869,352.981)(68.775,391.392)
\path(170,465)(115,405)
\blacken\path(173.972,513.730)(115.000,405.000)(218.202,473.187)(173.972,513.730)
\path(760,520)(775,430)
\blacken\path(725.680,543.435)(775.000,430.000)(784.864,553.299)(725.680,543.435)
\path(1355,475)(1410,400)
\blacken\path(1314.844,479.028)(1410.000,400.000)(1363.229,514.510)(1314.844,479.028)
\put(15,1855){\makebox(0,0)[b]{\smash{{\SetFigFont{10}{12.0}{\rmdefault}{\mddefault}{\updefault}$a$}}}}
\put(1515,1855){\makebox(0,0)[b]{\smash{{\SetFigFont{10}{12.0}{\rmdefault}{\mddefault}{\updefault}$c$}}}}
\put(15,55){\makebox(0,0)[b]{\smash{{\SetFigFont{10}{12.0}{\rmdefault}{\mddefault}{\updefault}$a$}}}}
\put(765,55){\makebox(0,0)[b]{\smash{{\SetFigFont{10}{12.0}{\rmdefault}{\mddefault}{\updefault}$b$}}}}
\put(1515,55){\makebox(0,0)[b]{\smash{{\SetFigFont{10}{12.0}{\rmdefault}{\mddefault}{\updefault}$c$}}}}
\put(765.000,1236.250){\arc{487.500}{3.5364}{5.8884}}
\end{picture}
}
\caption{}\label{509-BBF17}
\end{figure}

In this figure, $\{a,c\}$ mount jointly a disjunctive attack on $\{a,b,c\}$. See Appendix C1.  If we do not want repetition of nodes, we write Figure \ref{509-BBF17a}.

\begin{figure}
\centering
\setlength{\unitlength}{0.00083333in}
\begingroup\makeatletter\ifx\SetFigFont\undefined%
\gdef\SetFigFont#1#2#3#4#5{%
  \reset@font\fontsize{#1}{#2pt}%
  \fontfamily{#3}\fontseries{#4}\fontshape{#5}%
  \selectfont}%
\fi\endgroup%
{\renewcommand{\dashlinestretch}{30}
\begin{picture}(2222,1468)(0,-10)
\put(1085,55){\makebox(0,0)[b]{\smash{{\SetFigFont{10}{12.0}{\rmdefault}{\mddefault}{\updefault}$b$}}}}
\path(1085,730)(1085,280)
\blacken\path(1055.000,400.000)(1085.000,280.000)(1115.000,400.000)(1055.000,400.000)
\path(1085,505)(1085,430)
\blacken\path(1055.000,550.000)(1085.000,430.000)(1115.000,550.000)(1055.000,550.000)
\path(2100,1105)(2040,1160)
\blacken\path(2148.730,1101.028)(2040.000,1160.000)(2108.187,1056.798)(2148.730,1101.028)
\path(55,1025)(100,1115)
\blacken\path(73.167,994.252)(100.000,1115.000)(19.502,1021.085)(73.167,994.252)
\path(1085,730)(1086,730)(1090,728)
	(1096,726)(1105,723)(1118,719)
	(1135,713)(1155,707)(1179,699)
	(1207,690)(1237,681)(1270,671)
	(1304,661)(1340,650)(1378,640)
	(1416,630)(1454,620)(1494,611)
	(1533,603)(1573,595)(1614,589)
	(1655,583)(1697,578)(1740,575)
	(1783,574)(1826,574)(1869,576)
	(1910,580)(1952,587)(1989,596)
	(2022,605)(2050,615)(2074,625)
	(2094,635)(2110,644)(2122,652)
	(2132,659)(2140,667)(2146,673)
	(2150,680)(2154,686)(2157,693)
	(2160,700)(2163,708)(2167,717)
	(2171,727)(2175,739)(2181,753)
	(2187,769)(2193,787)(2199,808)
	(2204,830)(2208,854)(2210,880)
	(2207,914)(2199,947)(2187,978)
	(2172,1008)(2155,1035)(2135,1061)
	(2113,1086)(2090,1110)(2066,1133)
	(2041,1154)(2017,1175)(1993,1193)
	(1972,1210)(1952,1224)(1937,1236)(1910,1255)
\blacken\path(2025.402,1210.475)(1910.000,1255.000)(1990.872,1161.407)(2025.402,1210.475)
\path(1085,730)(1083,729)(1080,728)
	(1073,726)(1063,723)(1049,718)
	(1031,712)(1009,705)(984,697)
	(955,687)(924,678)(890,668)
	(855,657)(819,647)(782,637)
	(744,627)(706,618)(667,610)
	(627,602)(587,595)(546,589)
	(504,584)(462,580)(419,578)
	(376,578)(335,580)(293,584)
	(255,590)(223,597)(195,604)
	(173,610)(155,617)(141,622)
	(131,627)(123,631)(117,635)
	(113,639)(110,643)(107,647)
	(105,651)(102,657)(98,664)
	(93,672)(87,683)(80,696)
	(71,712)(62,731)(52,753)
	(43,778)(35,805)(30,840)
	(28,874)(31,909)(36,942)
	(44,974)(53,1005)(65,1036)
	(77,1065)(91,1094)(105,1121)
	(120,1148)(134,1172)(147,1195)
	(159,1214)(168,1229)(185,1255)
\blacken\path(144.439,1138.146)(185.000,1255.000)(94.221,1170.981)(144.439,1138.146)
\put(185,1330){\makebox(0,0)[b]{\smash{{\SetFigFont{10}{12.0}{\rmdefault}{\mddefault}{\updefault}$a$}}}}
\put(1835,1330){\makebox(0,0)[b]{\smash{{\SetFigFont{10}{12.0}{\rmdefault}{\mddefault}{\updefault}$c$}}}}
\path(185,1255)(1085,730)(1835,1255)
\end{picture}
}
\caption{}\label{509-BBF17a}
\end{figure}

However, if we want to implement Figure \ref{509-BBF17a} using Boolean attack formations as discussed in Appendix C2, we get Figure \ref{509-BBF18}, with auxiliary nodes $a^*_1, c^*_1, z^*_1, w^*_1$ and the auxiliary nodes of Figure \ref{509-BBF14} for $\{a,b,c\}$.

\begin{figure}
\centering
\setlength{\unitlength}{0.00083333in}
\begingroup\makeatletter\ifx\SetFigFont\undefined%
\gdef\SetFigFont#1#2#3#4#5{%
  \reset@font\fontsize{#1}{#2pt}%
  \fontfamily{#3}\fontseries{#4}\fontshape{#5}%
  \selectfont}%
\fi\endgroup%
{\renewcommand{\dashlinestretch}{30}
\begin{picture}(1584,4350)(0,-10)
\put(792,312){\makebox(0,0)[b]{\smash{{\SetFigFont{10}{12.0}{\rmdefault}{\mddefault}{\updefault}{\bf out}}}}}
\path(1542,4212)(1542,3762)
\blacken\path(1512.000,3882.000)(1542.000,3762.000)(1572.000,3882.000)(1512.000,3882.000)
\path(1542,3537)(792,3162)
\blacken\path(885.915,3242.498)(792.000,3162.000)(912.748,3188.833)(885.915,3242.498)
\path(42,3537)(792,3162)
\blacken\path(671.252,3188.833)(792.000,3162.000)(698.085,3242.498)(671.252,3188.833)
\path(792,2862)(792,2262)
\blacken\path(762.000,2382.000)(792.000,2262.000)(822.000,2382.000)(762.000,2382.000)
\path(792,2037)(792,1512)
\blacken\path(762.000,1632.000)(792.000,1512.000)(822.000,1632.000)(762.000,1632.000)
\path(792,1512)(1242,762)(792,12)
	(342,762)(792,1512)
\put(42,4212){\makebox(0,0)[b]{\smash{{\SetFigFont{10}{12.0}{\rmdefault}{\mddefault}{\updefault}$a$}}}}
\put(1542,4212){\makebox(0,0)[b]{\smash{{\SetFigFont{10}{12.0}{\rmdefault}{\mddefault}{\updefault}$c$}}}}
\put(1542,3612){\makebox(0,0)[b]{\smash{{\SetFigFont{10}{12.0}{\rmdefault}{\mddefault}{\updefault}$c^*_1$}}}}
\put(42,3612){\makebox(0,0)[b]{\smash{{\SetFigFont{10}{12.0}{\rmdefault}{\mddefault}{\updefault}$a^*_1$}}}}
\put(792,2937){\makebox(0,0)[b]{\smash{{\SetFigFont{10}{12.0}{\rmdefault}{\mddefault}{\updefault}$z^*_1$}}}}
\put(792,2112){\makebox(0,0)[b]{\smash{{\SetFigFont{10}{12.0}{\rmdefault}{\mddefault}{\updefault}$w^*_1$}}}}
\put(792,1062){\makebox(0,0)[b]{\smash{{\SetFigFont{10}{12.0}{\rmdefault}{\mddefault}{\updefault}{\bf  in}}}}}
\put(792,867){\makebox(0,0)[b]{\smash{{\SetFigFont{8}{9.6}{\rmdefault}{\mddefault}{\updefault}Figure \ref{509-BBF14}}}}}
\put(792,612){\makebox(0,0)[b]{\smash{{\SetFigFont{8}{9.6}{\rmdefault}{\mddefault}{\updefault}for $\{a,c,b\}$}}}}
\path(42,4137)(42,3762)
\blacken\path(12.000,3882.000)(42.000,3762.000)(72.000,3882.000)(12.000,3882.000)
\end{picture}
}

\caption{}\label{509-BBF18}
\end{figure}

So be mindful that the following definitions allow for repetions in the figures.

\begin{definition}\label{509-BBD19}{\ }
\begin{enumerate}
\item Let $\BBB\BBF_1$ and $\BBB\BBF_2$ be two Boolean attack formations with {\bf in}$_1$ and {\bf out}$_1$ and {\bf in}$_2$ and {\bf out}$_2$.  We assume that the auxiliary nodes of these formations are disjoint. Then Figure \ref{509-BBF20} describes the resulting formation for the attack of $\BBB\BBF_1$ on $\BBB\BBF_2$.

\begin{figure}
\centering
\setlength{\unitlength}{0.00083333in}
\begingroup\makeatletter\ifx\SetFigFont\undefined%
\gdef\SetFigFont#1#2#3#4#5{%
  \reset@font\fontsize{#1}{#2pt}%
  \fontfamily{#3}\fontseries{#4}\fontshape{#5}%
  \selectfont}%
\fi\endgroup%
{\renewcommand{\dashlinestretch}{30}
\begin{picture}(946,3939)(0,-10)
\put(462,837){\makebox(0,0)[b]{\smash{{\SetFigFont{10}{12.0}{\rmdefault}{\mddefault}{\updefault}${\bf out}_2$}}}}
\path(484,1949)(34,1199)(484,449)
	(934,1199)(484,1949)
\path(462,2637)(462,1587)
\blacken\path(432.000,1707.000)(462.000,1587.000)(492.000,1707.000)(432.000,1707.000)
\path(482,687)(482,12)
\blacken\path(452.000,132.000)(482.000,12.000)(512.000,132.000)(452.000,132.000)
\path(482,242)(482,167)
\blacken\path(452.000,287.000)(482.000,167.000)(512.000,287.000)(452.000,287.000)
\path(462,1802)(462,1717)
\blacken\path(432.000,1837.000)(462.000,1717.000)(492.000,1837.000)(432.000,1837.000)
\put(462,3462){\makebox(0,0)[b]{\smash{{\SetFigFont{10}{12.0}{\rmdefault}{\mddefault}{\updefault}${\bf in}_1$}}}}
\put(462,2712){\makebox(0,0)[b]{\smash{{\SetFigFont{10}{12.0}{\rmdefault}{\mddefault}{\updefault}${\bf out}_1$}}}}
\put(462,3087){\makebox(0,0)[b]{\smash{{\SetFigFont{10}{12.0}{\rmdefault}{\mddefault}{\updefault}\ldots}}}}
\put(462,1137){\makebox(0,0)[b]{\smash{{\SetFigFont{10}{12.0}{\rmdefault}{\mddefault}{\updefault}\ldots}}}}
\put(462,1437){\makebox(0,0)[b]{\smash{{\SetFigFont{10}{12.0}{\rmdefault}{\mddefault}{\updefault}${\bf in}_2$}}}}
\path(462,3912)(12,3162)(462,2412)
	(912,3162)(462,3912)
\end{picture}
}
\caption{}\label{509-BBF20}
\end{figure}

\item Let $(S, R)$ be an arumentation  network and let $I$ be a function associating with each $x\in S$, a Boolean attack formation $I(x) =\BBB\BBF_x$, with {\bf in}$_x, {\bf out}_x, S_x, R_x$. Assume that all the auxiliary nodes sets of $\BBB\BBF_x$ for $x\in S$ are pairwise disjoint and disjoint from $S$.  Then the result of the instantiation $(S_I, R_I)$ is the following:
\[\begin{array}{lcl}
S_I &=& \bigcup_{x\in S} S_x\\
R_I&=& \bigcup_{x\in S}R_x \cup \{{\bf out}_x\tO {\bf in}_y | x\tO y \in R\}.
\end{array}
\]
\item The semantics for such instantiation  is taken to be option (iv) (the non-toxic truth intervention) semantics of Appendix B.
\end{enumerate}
\end{definition}

\subsection{Instantiating with classical propositional wffs using Boolean attack formations}
In this Appendix C.4, we backup the comments made in Remark \ref{509-BBR21}.  We are given a finite instantiated network $(S, R, I)$.  $I$ is an instantiation into classical propositional logic and we can assume that all wffs involved are built up from the set of atoms $Q = \{q_1\comma q_n\}$ using $\wedge$ and $\neg$.  We want to identify the Caminada extensions for $(S, R)$, arising from the instantiation $I$.  We know that for each $x \in S$, the wff $I(x) =\Phi_x$ is a formula of a logic which has models \Bm. We know that if we go through all the models $\Bm$ and define 
\[
\lambda_\Bm (x) ={\rm def.}~
 \Bm(\Phi_x)
\] 
then the legitimate Caminada extension of 
$(S, R)$ among the set 
$\{\lambda_\Bm\}$ will be the set of all extensions of 
$(S, R, I)$.  We can do that, however, we do not want to use the logic, we want to use purely syntactical means.  By the results in Appendix C.2, every formula $\Phi$ of classical propositional logic can be represented by a Boolean attack formation 
$\BBB\BBF(\Phi)$. 
We can look at the instantiation 
$I^*$ where 
$I^* (x) =\BBB\BBF(\Phi_x)$ for 
$x\in S$ and consider the network  
$(S^*, R^*) = (S, R, I^*)$ 
as defined according to Definition \ref{509-BBD19} of Appendix C.3.  

The network $(S^*, R^*)$ has extensions $\mu$ (the non-toxic extensions).  

These extensions give values in $\{0, \half, 1\}$ to the atoms of $Q$.  These define a model $\Bm (\mu)$ to the formulas $\Phi_x, x\in S$. We look at $\{\lambda_{\Bm (\mu)}\}$ and the legitimate Caminada extensions from among all the extensions of $(S, R, I)$.

\begin{remark}\label{509-BC41}
Note that as a byproduct of our process of Remark \ref{509-BBR21} and Appendix C.4, we get argumentation semantics for classical propositioal logic. Any $\Phi$ is represented by $\BBB\BBF(\Phi)$, whose extensions give the models of $\Phi$.
\end{remark}

\section{Comparing Boolean instantiation with abstract dialectical framework (ADF).}
We first introduce ADF from the original brilliant paper of Brewka and Woltran \cite{509-2,509-34}, and then analyse ADF and explain why it is not suitable for us.  See also \cite{509-36}.

Brewka and Woltran read condition $(\sharp 1$ atomic) above (i.e the view: ``$ y_1\comma y_n$  attack $x$",  see Figure \ref{509-2F6}) not as an attack of $y_i$ on $x$ but as an acceptance condition, relating the $\{0,1\}$ values of the $y_i$ to the $\{0,1\}$ value of $x$. The condition, according to the ADF view,  is 
\begin{center}
$x=1$ iff all $y_i=0$, (namely we take $I_{(\sharp 1 \mbox{ atomic})}(x)$ as $\bigwedge\neg y_i$).
\end{center}
  This is a brilliant shift in point of view and we now can generalise and put forward different conditions, say $I(x)=C_x(y_1\comma y_n)$.  We can now interpret $I(x)$ as the Brewka--Woltran condition for the acceptance of $x$. To achieve that we must restrict the variables in $I(x)$ in our network $(S, R, I)$ to be the set $Y_x$ of all the attackers in $(S, R)$ of the node $x$.

So our networks will look like $(S, R, I)$, where $I(x) =\Phi_x\{Y_x\}$.

Now we can regard our instantiation networks with this additional restriction possibly  as an ADF and we might think that our problems are solved. We have given a meaning to $(S, R, I)$, at least for the case of this additional restriction.

The answer is that this is not the meaning we want and need. We cannot get the general case of instantiations which we are studying, even with this additional restriction. Consider the simple two node network with nodes $x$ and $y$ and with $x$ attacking $y$. Since $x$ is not attacked, ADF can give it a truth value only, say value \Bt,  and since $x$ is the only attacker of $y$, ADF can give it a wff  $\Phi (x)$ with $x$ as the only variable of $\Phi$. ADF will write the equalities  
\[\begin{array}{l}
x=\Bt\\
 y= \Phi (x).
\end{array}
\]
While in comparison, we would write the equalities
\[\begin{array}{l}
x= \Bt\\
\Phi(x) = \neg  \Bt
\end{array}
\]
Note however, that we can get ADF as a special case of our instatiation if we take an arbitrary network with nodes $S$ and no attacks whatsoever, i.e.\ $R=\emptyset$.

If we substitute now for each $x$ in $S$ a wff $\Phi_x$, we get an ADF.

The fact is that ADF is not doing argumentation but is really doing logic programming. We are departing from the principle $(\sharp 1)$ and we are not regarding $I(y_1)\comma I(y_m)$ as attacking $I(x)$.

If we take the ADF view then we are not doing argumentation any more. We are not really instantiating $x$ to be $I(x)$. We can no longer connect with, e.g., ASPIC+ or any other instantiation papers. We are playing a different game!

This is not a criticism of ADF; all we are saying is that the ADF  approach is not compatible with our approach. We think highly enough of ADF to mention it here and say that we cannot use it. Note for example   that with the ADF restriction, $R$ is redundant. This is a warning sign for us. All we really have in ADF  are logic programming clauses of the form 
\[
x \mbox{ if } I(x)
\]
and we have 
\[
xRy\mbox{ iff $y$ appears in } I(x).
\]

Thus in ADF $R$ is not an attack relation but an ``occur" relation. Brewka and Woltran are careful; they call it ``link" relation.

Our view about ADF becomes clearer when we consider it in the context of the equational approach \cite{509-1,509-4,509-6,509-7,509-36}. 

Given $(S, R)$, we consider the network as generating equations of the form $Eq_{\max}$:
\[
x=1-\max(Y_x)
\]
where $x\in S$ and $Y_x =\{y|yRx\}$.

The solution of these equations in $[0,1]$ yield all the Dung extensions, where with each \Bf\ we associate a Caminada labelling $\lambda (\Bf)$ as follows:
\[\begin{array}{l}
\lambda (\Bf) = \mbox{ in, if } \Bf(x)=1\\
\lambda (\Bf) =\mbox{ out, if } \Bf(x) =0\\
\lambda(\Bf)=\mbox{undecided, if } 0 < \Bf(x) < 1.
\end{array}
\]

Let us now depart from the above equations in two ways.
\begin{enumerate}
\item Let us have possibly different continuous functions $h_x(Y_x)$ associated with each $x\in S$.
\item Let $h_x(Y_x)$ be a Boolean function in $Y_x$
\end{enumerate}

The solutions $\Bf$ of the above system of equation (call it $Eq_B$) would yield us a Boolean ADF. See \cite[p. 804]{509-34} and \cite{509-36}.

The reader should note the generality of the equational approach. We can choose any family of continuous functions. We can choose T-norms which generalise the classical connectives and using T-norms study numerical ADF (T-norm). However, this is not the place to elaborate, but we leave this to a future paper.  

Let us at this point quote from paper \cite{509-34}:

\begin{quote}
Begin quote from \cite{509-34}.

\noindent 
{\bf Definition 1.} An {\em abstract dialectical framework} is a tuple $D = (S, L, C)$ where 
\begin{itemize}
\item $S$ is a set of statements (positions, nodes),
\item $L \subseteq S \times S$ is a set of links,
\item $C =\{C_s\}_{s\in S}$ is a set of total functions $C_s: 2^{par(s)}\to \{\Bt,\Bf\}$, one for each statement $s$. $C_s$ is called acceptance condition of $S$
\end{itemize}

In many cases it is convenient to represent acceptance conditions as propositional formulas. For this reason we will frequently use a logical representation of ADFs $(S, L, C)$ where $C$ is a collection $\{\varphi_s\}_{s\in S}$ is called acceptance condition of $s$.

In many cases it is convenient to represent acceptance conditions as propositional formulas. For this reason we will frequently use a lgoical representation of ADFs $(S, L, C)$ where $C$ is a collection $\{\varphi_s\}_{s\in S}$ of propositional formulas.

Moreover, unless specified differently we will tacitly assume that the acceptance formulas specify the parents a node depends on implicitly. It is then not necessary to give the links in the graph explicitly.  We thus can represent an ADF $D$ as a tuple $(S, C)$ where $S$ and $C$ are as above and $L$ is implicitly given as $(ab)\in L$ iff $a$ appears in $\varphi_b$.\footnote{When presenting examples we will use a notation where acceptance conditions are written in square brackets behind nodes, e.g.\ $c [\neg (a\wedge b)]$ denotes a node $a$ which is jointly attacked by nodes $a$ and $b$, that is, each attacker alone is insufficient to defeat $c$.}

The different semantics of ADFs over statements $S$ are based on the notion of a model. A two-valued interpretation $v$ --- a mapping from statements to the truth balues true and false --- is a {\em two-valued model} ({\em model}, if clear from the context) of an ADF $(S, C)$ whenever for all statements $s \in S$ we have $v(s) =v(\varphi_s)$, that is $v$ maps exactly those statements to true whose acceptance conditions are satisfied under $v$.  Our analysis in this paper will be based on a straightforward generalization of two-valued interpretations for ADFs to Kleene's strong three-valued logic \cite{509-18}.\footnote{A comparable treatment for AFs was given by the labellings of \cite{509-Caminada2006}.  We use standard notation and terminology from mathematical logic.}  A three-valued interpretation is a mapping $v: S\to \{\Bt, \Bf, \Bu\}$ that assigns one of he truth values true (\Bt), false \Bf) or unknown (\Bu) to each statement.  Interpretations can easily be extended to assign truth values to propositional formulas over the statements: negation switches \Bt\ and \Bf, and leaves \Bu\ unchanged; a conjunction is \Bt\ if both conjuncts are \Bt, it is \Bf\ if some conjunct is \Bf\
 and it is \Bu\ otherwise; disjunction is dual. It is also straightforward to generalize the notion of a model: a three=valued interpretation is a model whenever for all statements $s\in S$ we have $v(s)\neq \Bu$ implies $v(s) =v(\varphi_s)$.

The three truth values are partially ordered by $\leq_i$ according to their information content: we have $
Bu <_i\Bt$ and $\Bu <_i\Bf$ and no other pair in $<_i$, which intuitively means that the classical truth valuecontain more information than the truth value unknown. The pair $(\{\Bt, \Bf, \Bu\} \leq_i)$ forms a complete meet-semilattice\footnote{A complete meet-semilattice is such that every non-empty finite subset has a greatest lower bound, the meet; and every non-empty directed subset has a least upper bound. A subset is directed if any two of its elements have an upper bound in the set.} with the meet operation $\sqcap$. This meet can be read as {\em consensus} and assigns $\Bt\sqcap \Bt =\Bt, \Bf\sqcap \Bf=\Bf$, and returns \Bu\ otherwise.

End quote from \cite{509-34}.
\end{quote}

Let us conclude this Appendix and quote here from the Brewka--Woltran paper. They are aware that they are really doing logic programming but, I assume for their own reasons, they still present their paper as an argumentation paper.  See also \cite{509-9,509-10,509-12}.  We present three quotations from \cite{509-2}.

\begin{quote}
Begin quote 1:\\
Vocabulary for the quotation below: $s$ is a node. $Par(s)$ are its attackers, it is $Y_s$ in our notation. $C_s$ is our $I(s)$.

\medskip\noindent
Definition 1. An abstract dialectical framework is a tuple
$D = (S, L, C)$ where
\begin{itemize}
\item  $S$ is a set of statements (positions, nodes),
\item $L \subseteq S\times S $ is a set of links,
\item $C = \{C_s \}_{s\in S}$ is a set of total functions $C_s: 2^{par(s)}\to \{\mbox{in, out}\}$,  one for each statement $s$. $C_s$ is called acceptance condition of $s$.
\end{itemize}

\medskip\noindent 
Begin quote 2:\\
Definition 2.   Let $D = (S, L, C)$ be an ADF. $M \subseteq S$ is
called conflict-free (in $D$) if for all $s \in M$ we have $C_s (M \cap
par (s)) =$ in. Moreover, $M \subseteq S$ is a model of $D$ if $M$
is conflict-free and for each $s in S, C_s(M\cap par(s)) =$ in implies $s\in M$.

  In other words, $M\subseteq S$ is a model of $D = (S, L, C)$ if for all $s \in S$ we have $s \in M$ iff $C_s (M \cap par (s)) =$ in.

We say $M$ is a minimal model if there is no model $M$
which is a proper subset of $M$.

\medskip\noindent
Begin quote 3:\\
It is not difficult to verify that, when the acceptance condition of each node s is represented as a propositional formula $F (s)$, a model is just a propositional model of the set of formulas
\[
\{s\equiv F(s)|s\in S\}.
\]
End quotes from \cite{509-2}.
\end{quote}

\begin{remark}\label{509-ADF1}
To further highlight the fact that the idea of Instantiation is   different from and is  orthogonal to the ADF idea, let us define the notion of abstract Instantiated Dialectical Frameworks.

An abstract dialectical framework is a tuple $DI = (S, L, C, I)$ 
 where
\begin{itemize}
\item $S$ is a set of statements (positions, nodes),
\item $L \subseteq S\times S$ is a set of links,
\item $C = \{C_s\}, s \in S$   is a set of total functions $C_s: 2^{par(s)} \to \{\mbox{in, out}\}$,
one for each statement $s$. $C_s$  is called acceptance condition of
s.
\item For each $s \in S, I(s)$ is a wff of classical propositional logic. We consider I(s) as a function from $2^{par(s)}  to  \{in, out\}$.
\end{itemize}

Following quote 3 above , we understand by a model of the system any solution to the equations for $s \in S$, 
\[
I(s) = C_s( y/I(y)| y \mbox{ a parent of } s).
\]

This system of equations is not guaranteed a solution is the real interval $[0,1]$, while the system of Quote 3, does always have a solution in $[0,1]$, though not always in $\{0, 1/2 , 1\}$.
\end{remark}

\section{}
\subsection{Comparing abstract instantiation with the ASPIC approach}
When we talk in this paper about ``instantiation in argumentation'', we must compare what we are doing with the well known school of ``instantiated argumentation networks'' and the ASPIC movement \cite{509-13,509-30}.  What is the connection between what this paper is doing and ASPIC?  The answer is that they are similar but different. We first explain the difference in principle and then give examples.

\paragraph{1.  The abstract instantiation of the current paper.}
In this paper we start with an abstract argumentation frame $(S, R)$ and in parallel with a closed logical theory $\Delta$.\footnote{By a closed theory we mean a theory containing whatever it proves. This can also be defined not only for monotonic theories but also  for non-monotonic theories which satisfy the restricted monotonicity rule, namely:
\[
\Delta \proves \Phi  \mbox{ and } \Delta + \Phi \proves \Psi ==> \Delta \proves \Psi.
\]} It is important to note that $\Delta$ need not be a defeasible theory. In fact our main case studies are monotonic logics; classical propositional logic, monadic predicate logic and modal logic S5. We can also use a defeasible theory if we want.  We substitute wffs of $\Delta$ into $S$ via a function $I: S\mapsto \Delta$. In this set up the network $(S, R)$ and the attack structure $R$ of the argumenation frame is primary and it is retained and is influenced by the substitution function $I$. Our main task is to give the system $(S, R, I)$ proper meaning.  This substitution may cause problems and we need to find a theoretical remedy. 

For the purpose of comparing with ASPIC and with other instantiation approaches, such as \cite{509-8,509-26,509-27,509-29}, let us offer another   way of looking at the same problem. It is to say that we have a basic consistent closed theory $\Delta$ and in parallel we have an abstract schema $(S,R)$ of attack relation. By substituting, using $I$,  formulas from  $\Delta$ for the elements of $S$, we form an indexed collection of formulas $\Gamma =\{I(x) | x\in S\}$ of wffs from $\Delta$ containing an abstract unspecified conflict, as  recorded by $R$. If this abstract conflict could be expressed in the language of $\Delta$ as a theory $\Delta_R$, then $\Delta \cup \Delta_R$ would have been a defeasible theory or a monotonic consistent theory or a monotonic inconsistent theory, all depending on the underlying logic. We ask the question: is it possible to retain consistency of $\Gamma$ and yet satisfy the constraints $R$?  

To motivate this point of view of constraints, and to be able to have a working case study to use in comparing with ASPIC, let us give an example.

\begin{example}\label{509-EE0}
Let $\Delta$ be a theory governing a birthday party.  Let $S$ be a set of people and let $I(x)$ for $x\in S$ be ``$x$ is invited to the party''.  Suppose everything is consistent together, namely $\Delta\cup \{I(x) |x\in S\}$ is consistent.  We can have the party as required by $\Delta$ and invite all the people of $S$. Bear in mind that $\Delta$ may contain requirements which may affect the people invited.

We now add the constraints $R$ of the form $xRy$ meaning  ``If you invite $x$ you cannot invite $y$''.   

OK, our problem now is, whom do we invite and still satisfy the constraints?  In other words, we are looking for an extension (in fact a maximal preferred extension) of $(S,R,I)$. 
\end{example}

\paragraph{2.  Instantiated argumentation: ASPIC.}
In comparison ASPIC does something different.  It will look at all the arguments (proofs) generated by a defeasible theory $\Delta$.  Since $\Delta$ is defeasible, it might defeasibly prove both  an $x$ and its negation $\neg x$. This can be viewed as  giving rise to an attack relation (several possible notions of attack relations) among the set of all proofs from the theory $\Delta$. We need to resolve the conflict. ASPIC will use argumentation theory  and {\em construct} an instantiated network $(S, R)$ accordingly.  Then ASPIC would apply the machinery of finding extensions and {\em expect} the resulting wffs of the extensions to be consistent in the logic of $\Delta$.

When this does not happen, ASPIC offers postulates to {\em force} $\Delta$ to behave as to ensure that the resulting extensions are consistent.

Thus the ASPIC  process has three stages:	
\begin{enumerate}
\item  {\bf The Input}\\  Input a defeasible theory $\Delta$.
\item {\bf The System}\\ Construct an argumentation network $(S,R)$ from $\Delta$. Call the process of construction the `ASPIC construction". This is ASPIC's way of doing it. 	
\item {\bf The Output}\\ Output complete extension of the network $(S,R)$ constructed in 2 and  expect these  complete extensions  to be consistent in the logic of $\Delta$.
\end{enumerate}
When the output 3 does not meet expectations, ASPIC restricts the input by putting postulates on $\Delta$. ASPIC does not try and improve or change the construction in point 2. (We shall offer a different way of constructing a network  in Appendix E2).

\paragraph{3.  Summary comparison.}
As you can see, although our paper and ASPIC deal with similar (lego pieces) components, they do not do the same things.  Several examples would be helpful.

\begin{example}\label{509-EE1}
We look at Example 6 from the Caminada and Amgoud paper \cite{509-16}.

A defeasible theory $\Delta$ is given with strict rules $\CS$ and defeasible rules $\CD$ where 
\[
\begin{array}{l}
\CS =\{a, d, g, b\wedge c\wedge e \wedge f\to \neg g\}\\
\CD =\{a\To b, b\To c, e\To f, d\To e\}
\end{array}
\]
and where ``$\to$'' is strict implication and $\To$ is defeasible implication. There are two modus ponens rules, one for each implication.

Following the ASPIC instantiation idea, we look at all possible proofs of atoms from $\Delta$.  These are (notation: [\ldots]):

\begin{itemlist}{PPPPPP}
\item [$\Pi$1.] $[a]$
\item [$\Pi$2.] $[d]$
\item [$\Pi$3.] $[g]$
\item [$\Pi$4.] $[a,a\To b]$, (proving $b$)
\item [$\Pi$5.] $[d, d\To e]$, (proving $e$)
\item [$\Pi$6.] $[a, a\To b, b\To c]$, (proving $c$)
\item [$\Pi$7.] $[d, d\To e, e\To f]$, (proving $f$)
\item [$\Pi$8.] $[4, 5, 6, 7, b\wedge c\wedge e \wedge f\to \neg g]$, (proving $\neg g$)
\end{itemlist}

The paper \cite{509-16} uses the proofs $\Pi 1$--$\Pi 7$ to construct an argumentation   network 
\[\begin{array}{l}
S_\Delta =\{\Pi 1, \Pi 2, \Pi 3, \Pi 4, \Pi 5, \Pi 6, \Pi 7\}\\
R_\Delta =\{\Pi R\Pi'\mbox{ if } \Pi \mbox{ proves $x$ and $\Pi'$ proves }\neg x\}.
\end{array}
\]
The way \cite{509-16} defines $S_\Delta$ it turns out that the proof $\Pi 8$ cannot be used and is not a member of $S_\Delta$. This is because they string the data without using conjunctions, using only ``linear'' sequences of the form $x_1, x_1\rightsquigarrow x_2, x_2\rightsquigarrow x_3\comma$ where ``$\rightsquigarrow$'' is either ``$\to$'' or ``$\To$'' and where each $x_i$ is atomic.  This way of constructing $(S_\Delta, R_\Delta)$ causes problems.

If we collect the atoms proved in the proofs of $S_\Delta$ and which are in the ground extension, we get $\Gamma =\{a,d,g,b,e,c,f\}$. Using the logic of $\Delta$ and the rule $b\wedge c\wedge e\wedge f\to \neg g$, we get that $\Gamma$ is contradictory.  Thus taking extensions may result in inconsistent sets.

So \cite{509-16} offers postulates on theories $\Delta$, hoping to ensure there will be no problems.

We are not here to evaluate \cite{509-16} or the ASPIC approach. We are just comparing their idea of ``instantiation'' with ours.

\begin{itemize}
\item ASPIC starts with $\Delta$ and constructs $(S_\Delta, R_\Delta)$.
\item We start with $(S, R)$ and substitute proofs from $\Delta$.
\end{itemize}
We could end up with similar difficulties. Suppose we start with 
\[
S=\{x_1\comma x_8\}, R=\varnothing
\]
and $I(x_i)=\Pi i$. We end up with 
\[
S_I=\{\Pi_1 \comma \Pi_8\}.
\]

The question is, in our methodology, how do we define $R_I$?  Since we allow for joint attacks, we can express $\Pi_8$. Let us play the game the ASPIC way. We are given $\Delta$, let us ask, since we have powerful machinery in this paper, if we wanted to construct $(S_\Delta, R_\Delta)$, how would we have done it?\footnote{Recall the discussion in Section 1.2. ASPIC restricts the input, i.e. using (r1). We are going to change the system, the way we construct the network, i.e. we are using (r2).}

Our answer is that we would construct a 2-state network with joint attacks.  Figure \ref{509-EF2} illustrates the network we get.

\begin{figure}
\centering
\setlength{\unitlength}{0.00083333in}
\begingroup\makeatletter\ifx\SetFigFont\undefined%
\gdef\SetFigFont#1#2#3#4#5{%
  \reset@font\fontsize{#1}{#2pt}%
  \fontfamily{#3}\fontseries{#4}\fontshape{#5}%
  \selectfont}%
\fi\endgroup%
{\renewcommand{\dashlinestretch}{30}
\begin{picture}(5127,2922)(0,-10)
\put(782,60){\makebox(0,0)[b]{\smash{{\SetFigFont{10}{12.0}{\rmdefault}{\mddefault}{\updefault}$\neg g$}}}}
\path(1812,1785)(2547,2395)
\blacken\path(2473.818,2295.278)(2547.000,2395.000)(2435.500,2341.449)(2473.818,2295.278)
\path(2712,1785)(3542,2430)
\blacken\path(3465.655,2332.678)(3542.000,2430.000)(3428.839,2380.055)(3465.655,2332.678)
\path(4662,1635)(5112,960)
\blacken\path(5020.474,1043.205)(5112.000,960.000)(5070.397,1076.487)(5020.474,1043.205)
\path(757,2095)(787,1995)
\blacken\path(723.783,2101.319)(787.000,1995.000)(781.253,2118.560)(723.783,2101.319)
\path(307,900)(377,875)
\blacken\path(253.901,887.108)(377.000,875.000)(274.081,943.613)(253.901,887.108)
\path(322,240)(387,200)
\blacken\path(269.078,237.342)(387.000,200.000)(300.524,288.441)(269.078,237.342)
\path(2077,185)(1957,150)
\blacken\path(2063.800,212.400)(1957.000,150.000)(2080.600,154.800)(2063.800,212.400)
\path(4982,1155)(5017,1080)
\blacken\path(4939.068,1176.055)(5017.000,1080.000)(4993.439,1201.429)(4939.068,1176.055)
\path(2352,2240)(2422,2295)
\blacken\path(2346.176,2197.272)(2422.000,2295.000)(2309.107,2244.451)(2346.176,2197.272)
\path(3337,2270)(3432,2335)
\blacken\path(3349.904,2242.479)(3432.000,2335.000)(3316.023,2291.997)(3349.904,2242.479)
\path(4642,1615)(4917,1215)
\path(2702,1810)(3287,2265)
\path(1812,1815)(2297,2220)
\blacken\path(1002.000,1790.000)(882.000,1760.000)(1002.000,1730.000)(1002.000,1790.000)
\path(882,1760)(1527,1760)
\blacken\path(1407.000,1730.000)(1527.000,1760.000)(1407.000,1790.000)(1407.000,1730.000)
\blacken\path(1002.000,895.000)(882.000,865.000)(1002.000,835.000)(1002.000,895.000)
\path(882,865)(1527,865)
\blacken\path(1407.000,835.000)(1527.000,865.000)(1407.000,895.000)(1407.000,835.000)
\blacken\path(1012.000,145.000)(892.000,115.000)(1012.000,85.000)(1012.000,145.000)
\path(892,115)(1537,115)
\blacken\path(1417.000,85.000)(1537.000,115.000)(1417.000,145.000)(1417.000,85.000)
\blacken\path(1111.595,1786.576)(992.000,1755.000)(1112.384,1726.581)(1111.595,1786.576)
\path(992,1755)(1372,1760)(1407,1760)
\blacken\path(1287.000,1730.000)(1407.000,1760.000)(1287.000,1790.000)(1287.000,1730.000)
\blacken\path(1132.000,890.000)(1012.000,860.000)(1132.000,830.000)(1132.000,890.000)
\path(1012,860)(1407,860)
\blacken\path(1287.000,830.000)(1407.000,860.000)(1287.000,890.000)(1287.000,830.000)
\blacken\path(1122.000,140.000)(1002.000,110.000)(1122.000,80.000)(1122.000,140.000)
\path(1002,110)(1437,110)
\blacken\path(1317.000,80.000)(1437.000,110.000)(1317.000,140.000)(1317.000,80.000)
\blacken\path(3648.042,2300.265)(3617.000,2420.000)(3588.045,2299.744)(3648.042,2300.265)
\path(3617,2420)(3622,1845)
\blacken\path(3590.958,1964.735)(3622.000,1845.000)(3650.955,1965.256)(3590.958,1964.735)
\blacken\path(2585.042,2285.265)(2554.000,2405.000)(2525.045,2284.744)(2585.042,2285.265)
\path(2554,2405)(2559,1830)
\blacken\path(2527.958,1949.735)(2559.000,1830.000)(2587.955,1950.256)(2527.958,1949.735)
\blacken\path(2588.459,2169.170)(2562.000,2290.000)(2528.485,2170.934)(2588.459,2169.170)
\path(2562,2290)(2552,1950)
\blacken\path(2525.541,2070.830)(2552.000,1950.000)(2585.515,2069.066)(2525.541,2070.830)
\blacken\path(3647.000,2210.000)(3617.000,2330.000)(3587.000,2210.000)(3647.000,2210.000)
\path(3617,2330)(3617,1945)
\blacken\path(3587.000,2065.000)(3617.000,1945.000)(3647.000,2065.000)(3587.000,2065.000)
\blacken\path(4707.000,2290.000)(4677.000,2410.000)(4647.000,2290.000)(4707.000,2290.000)
\path(4677,2410)(4677,1860)
\blacken\path(4647.000,1980.000)(4677.000,1860.000)(4707.000,1980.000)(4647.000,1980.000)
\blacken\path(4707.000,2200.000)(4677.000,2320.000)(4647.000,2200.000)(4707.000,2200.000)
\path(4677,2320)(4677,1965)
\blacken\path(4647.000,2085.000)(4677.000,1965.000)(4707.000,2085.000)(4647.000,2085.000)
\blacken\path(4442.000,890.000)(4322.000,860.000)(4442.000,830.000)(4442.000,890.000)
\path(4322,860)(4887,860)
\blacken\path(4767.000,830.000)(4887.000,860.000)(4767.000,890.000)(4767.000,830.000)
\blacken\path(4562.422,888.258)(4442.000,860.000)(4561.553,828.264)(4562.422,888.258)
\path(4442,860)(4787,855)
\blacken\path(4666.578,826.742)(4787.000,855.000)(4667.447,886.736)(4666.578,826.742)
\path(537,2685)(537,2684)(536,2682)
	(534,2677)(531,2670)(527,2660)
	(522,2646)(516,2629)(508,2608)
	(499,2584)(488,2555)(477,2524)
	(464,2489)(450,2451)(436,2411)
	(421,2368)(405,2324)(389,2278)
	(373,2230)(357,2182)(341,2133)
	(325,2083)(309,2033)(294,1983)
	(279,1932)(265,1881)(251,1830)
	(238,1779)(225,1727)(214,1675)
	(203,1622)(193,1570)(184,1517)
	(177,1464)(171,1411)(166,1359)
	(163,1309)(162,1260)(163,1206)
	(167,1157)(174,1112)(182,1073)
	(192,1039)(204,1009)(217,983)
	(231,961)(247,943)(263,927)
	(280,914)(298,904)(316,896)
	(335,889)(354,884)(373,880)
	(392,878)(411,877)(429,876)
	(447,876)(463,877)(478,878)
	(492,879)(504,880)(514,881)
	(522,882)(528,883)(537,885)
\blacken\path(426.365,829.683)(537.000,885.000)(413.350,888.254)(426.365,829.683)
\path(537,2685)(537,2684)(536,2682)
	(534,2678)(531,2672)(527,2664)
	(521,2652)(514,2637)(506,2619)
	(496,2598)(484,2573)(471,2544)
	(457,2513)(441,2478)(424,2440)
	(406,2400)(387,2358)(368,2313)
	(348,2267)(328,2219)(307,2169)
	(287,2119)(266,2068)(246,2016)
	(226,1963)(207,1909)(187,1855)
	(169,1801)(151,1745)(134,1689)
	(117,1632)(101,1575)(86,1516)
	(73,1457)(60,1397)(48,1336)
	(38,1273)(29,1211)(22,1148)
	(17,1084)(13,1022)(12,960)
	(14,887)(19,818)(27,753)
	(37,694)(50,639)(65,589)
	(81,543)(100,502)(119,464)
	(140,430)(161,398)(184,369)
	(208,343)(232,319)(257,296)
	(282,276)(307,257)(333,240)
	(357,224)(382,209)(405,196)
	(427,184)(448,174)(466,165)
	(483,158)(497,151)(509,146)
	(519,142)(526,139)(537,135)
\blacken\path(413.972,147.815)(537.000,135.000)(434.477,204.203)(413.972,147.815)
\path(4662,1635)(4661,1634)(4660,1633)
	(4657,1631)(4653,1627)(4647,1621)
	(4640,1614)(4629,1605)(4617,1594)
	(4602,1580)(4585,1564)(4566,1547)
	(4544,1527)(4520,1506)(4494,1483)
	(4466,1459)(4437,1433)(4405,1406)
	(4373,1378)(4339,1349)(4303,1320)
	(4266,1289)(4229,1259)(4189,1228)
	(4149,1196)(4107,1164)(4064,1132)
	(4019,1099)(3973,1065)(3925,1031)
	(3874,997)(3822,961)(3766,925)
	(3709,889)(3649,851)(3586,813)
	(3520,775)(3453,736)(3383,698)
	(3312,660)(3237,621)(3163,584)
	(3090,549)(3019,516)(2950,485)
	(2884,457)(2821,430)(2759,406)
	(2700,383)(2643,361)(2588,342)
	(2535,323)(2483,306)(2433,290)
	(2384,275)(2336,261)(2290,248)
	(2245,236)(2201,224)(2159,213)
	(2118,203)(2079,194)(2042,185)
	(2008,177)(1975,169)(1946,163)
	(1919,157)(1896,152)(1875,148)
	(1858,144)(1844,141)(1833,139)
	(1824,137)(1812,135)
\blacken\path(1925.435,184.320)(1812.000,135.000)(1935.299,125.136)(1925.435,184.320)
\path(4057,870)(3212,615)
\path(3602,1665)(3603,1664)(3604,1661)
	(3606,1656)(3609,1648)(3613,1637)
	(3618,1623)(3623,1606)(3629,1587)
	(3635,1566)(3641,1543)(3646,1519)
	(3651,1493)(3654,1465)(3656,1436)
	(3657,1405)(3655,1371)(3652,1335)
	(3645,1296)(3636,1253)(3623,1208)
	(3607,1160)(3591,1119)(3573,1079)
	(3554,1041)(3534,1005)(3514,970)
	(3493,938)(3472,907)(3451,878)
	(3430,850)(3409,823)(3387,797)
	(3366,772)(3345,748)(3324,726)
	(3305,705)(3286,685)(3269,667)
	(3253,651)(3240,638)(3229,627)
	(3220,618)(3214,612)(3210,608)
	(3208,606)(3207,605)
\path(2567,1670)(2568,1669)(2571,1666)
	(2576,1662)(2584,1655)(2595,1646)
	(2609,1633)(2625,1618)(2645,1600)
	(2666,1580)(2689,1558)(2714,1534)
	(2739,1508)(2765,1481)(2790,1453)
	(2816,1423)(2841,1393)(2865,1361)
	(2889,1327)(2911,1292)(2933,1254)
	(2954,1215)(2973,1173)(2990,1129)
	(3005,1082)(3017,1035)(3025,988)
	(3030,942)(3032,899)(3030,858)
	(3026,820)(3020,784)(3012,750)
	(3003,718)(2992,688)(2979,660)
	(2966,633)(2952,606)(2938,582)
	(2923,558)(2909,536)(2895,516)
	(2881,498)(2869,482)(2859,468)
	(2850,457)(2843,448)(2838,442)
	(2835,438)(2833,436)(2832,435)
\path(1827,870)(1828,870)(1830,870)
	(1834,871)(1840,872)(1850,873)
	(1862,874)(1878,876)(1897,879)
	(1920,882)(1947,885)(1978,889)
	(2012,894)(2050,899)(2091,905)
	(2135,911)(2182,918)(2231,926)
	(2283,934)(2336,942)(2391,951)
	(2447,960)(2504,970)(2563,981)
	(2622,992)(2681,1004)(2742,1016)
	(2802,1029)(2864,1042)(2925,1057)
	(2988,1072)(3051,1088)(3114,1105)
	(3178,1124)(3243,1143)(3308,1164)
	(3375,1186)(3441,1209)(3509,1234)
	(3576,1261)(3644,1288)(3711,1318)
	(3777,1348)(3842,1380)(3917,1420)
	(3988,1460)(4054,1500)(4114,1540)
	(4169,1579)(4219,1618)(4264,1655)
	(4304,1692)(4341,1728)(4373,1763)
	(4403,1797)(4429,1831)(4453,1864)
	(4474,1897)(4493,1929)(4510,1960)
	(4526,1991)(4540,2022)(4552,2051)
	(4563,2080)(4572,2107)(4580,2133)
	(4588,2157)(4594,2180)(4599,2200)
	(4604,2218)(4607,2234)(4610,2248)
	(4612,2259)(4614,2268)(4615,2275)(4617,2285)
\blacken\path(4622.883,2161.447)(4617.000,2285.000)(4564.049,2173.214)(4622.883,2161.447)
\path(1807,900)(1808,900)(1810,900)
	(1814,901)(1820,901)(1829,902)
	(1842,904)(1857,905)(1877,907)
	(1900,910)(1926,913)(1957,917)
	(1991,921)(2028,925)(2069,930)
	(2113,936)(2159,942)(2208,948)
	(2259,956)(2312,963)(2367,971)
	(2423,980)(2480,989)(2538,999)
	(2597,1010)(2656,1021)(2716,1033)
	(2777,1045)(2838,1059)(2900,1073)
	(2962,1088)(3025,1104)(3088,1121)
	(3153,1140)(3218,1160)(3283,1181)
	(3350,1204)(3417,1229)(3485,1255)
	(3553,1282)(3621,1312)(3689,1343)
	(3756,1376)(3822,1410)(3895,1451)
	(3964,1492)(4028,1534)(4087,1575)
	(4142,1616)(4192,1656)(4238,1696)
	(4279,1735)(4317,1773)(4351,1811)
	(4383,1847)(4411,1884)(4437,1920)
	(4460,1955)(4481,1990)(4501,2024)
	(4518,2058)(4534,2091)(4549,2124)
	(4562,2156)(4574,2187)(4585,2216)
	(4594,2245)(4603,2271)(4610,2296)
	(4617,2319)(4622,2340)(4627,2359)
	(4631,2375)(4634,2388)(4637,2400)
	(4639,2408)(4640,2415)(4642,2425)
\blacken\path(4647.883,2301.447)(4642.000,2425.000)(4589.049,2313.214)(4647.883,2301.447)
\put(2562,2460){\makebox(0,0)[b]{\smash{{\SetFigFont{10}{12.0}{\rmdefault}{\mddefault}{\updefault}$\neg b$}}}}
\put(3612,2460){\makebox(0,0)[b]{\smash{{\SetFigFont{10}{12.0}{\rmdefault}{\mddefault}{\updefault}$\neg c$}}}}
\put(4662,2460){\makebox(0,0)[b]{\smash{{\SetFigFont{10}{12.0}{\rmdefault}{\mddefault}{\updefault}$\neg e$}}}}
\put(4662,1710){\makebox(0,0)[b]{\smash{{\SetFigFont{10}{12.0}{\rmdefault}{\mddefault}{\updefault}$e$}}}}
\put(3612,1710){\makebox(0,0)[b]{\smash{{\SetFigFont{10}{12.0}{\rmdefault}{\mddefault}{\updefault}$c$}}}}
\put(2562,1710){\makebox(0,0)[b]{\smash{{\SetFigFont{10}{12.0}{\rmdefault}{\mddefault}{\updefault}$b$}}}}
\put(537,2760){\makebox(0,0)[b]{\smash{{\SetFigFont{10}{12.0}{\rmdefault}{\mddefault}{\updefault}$\top$}}}}
\put(1662,1710){\makebox(0,0)[b]{\smash{{\SetFigFont{10}{12.0}{\rmdefault}{\mddefault}{\updefault}$a$}}}}
\put(1662,810){\makebox(0,0)[b]{\smash{{\SetFigFont{10}{12.0}{\rmdefault}{\mddefault}{\updefault}$d$}}}}
\put(4212,810){\makebox(0,0)[b]{\smash{{\SetFigFont{10}{12.0}{\rmdefault}{\mddefault}{\updefault}$f$}}}}
\put(5112,810){\makebox(0,0)[b]{\smash{{\SetFigFont{10}{12.0}{\rmdefault}{\mddefault}{\updefault}$\neg f$}}}}
\put(1662,60){\makebox(0,0)[b]{\smash{{\SetFigFont{10}{12.0}{\rmdefault}{\mddefault}{\updefault}$g$}}}}
\put(697,1725){\makebox(0,0)[b]{\smash{{\SetFigFont{10}{12.0}{\rmdefault}{\mddefault}{\updefault}$\neg a $}}}}
\put(757,810){\makebox(0,0)[b]{\smash{{\SetFigFont{10}{12.0}{\rmdefault}{\mddefault}{\updefault}$\neg d$}}}}
\path(537,2685)(837,1860)
\blacken\path(767.797,1962.523)(837.000,1860.000)(824.185,1983.028)(767.797,1962.523)
\end{picture}
}
\caption{}\label{509-EF2}
\end{figure}

We need two notions of attack, strict attack $x\tO y$ (same as $a\to \neg y$ and defeasible attack $x \To\!\!\!\!\To y$ (same as $x \To\!\!\!\!\To \neg y$). We have $\neg\neg x=x$ and $\neg x \LtO x$.

We need joint attacks as shown in figure \ref{509-EF2}. We also need to use $\top$, $\top$ is {\em truth}, helping us describe the strict facts. 

Note that in view of Remark \ref{509-BR101}, Figure \ref{509-EF2} is equivalent to Figure \ref{509-EF2a}. This figure contains node to node attacks only.

\begin{figure}
\centering
\setlength{\unitlength}{0.00083333in}
\begingroup\makeatletter\ifx\SetFigFont\undefined%
\gdef\SetFigFont#1#2#3#4#5{%
  \reset@font\fontsize{#1}{#2pt}%
  \fontfamily{#3}\fontseries{#4}\fontshape{#5}%
  \selectfont}%
\fi\endgroup%
{\renewcommand{\dashlinestretch}{30}
\begin{picture}(5321,3222)(0,-10)
\put(956,1110){\makebox(0,0)[b]{\smash{{\SetFigFont{10}{12.0}{\rmdefault}{\mddefault}{\updefault}$\neg d$}}}}
\blacken\path(2576.000,165.000)(2456.000,135.000)(2576.000,105.000)(2576.000,165.000)
\path(2456,135)(2981,135)
\blacken\path(2861.000,105.000)(2981.000,135.000)(2861.000,165.000)(2861.000,105.000)
\blacken\path(1676.000,765.000)(1556.000,735.000)(1676.000,705.000)(1676.000,765.000)
\path(1556,735)(2306,735)
\blacken\path(2186.000,705.000)(2306.000,735.000)(2186.000,765.000)(2186.000,705.000)
\blacken\path(1826.000,765.000)(1706.000,735.000)(1826.000,705.000)(1826.000,765.000)
\path(1706,735)(2156,735)
\blacken\path(2036.000,705.000)(2156.000,735.000)(2036.000,765.000)(2036.000,705.000)
\blacken\path(1376.000,1215.000)(1256.000,1185.000)(1376.000,1155.000)(1376.000,1215.000)
\path(1256,1185)(2006,1185)
\blacken\path(1886.000,1155.000)(2006.000,1185.000)(1886.000,1215.000)(1886.000,1155.000)
\blacken\path(1451.000,1215.000)(1331.000,1185.000)(1451.000,1155.000)(1451.000,1215.000)
\path(1331,1185)(1931,1185)
\blacken\path(1811.000,1155.000)(1931.000,1185.000)(1811.000,1215.000)(1811.000,1155.000)
\blacken\path(4526.000,1215.000)(4406.000,1185.000)(4526.000,1155.000)(4526.000,1215.000)
\path(4406,1185)(5156,1185)
\blacken\path(5036.000,1155.000)(5156.000,1185.000)(5036.000,1215.000)(5036.000,1155.000)
\blacken\path(4676.000,1215.000)(4556.000,1185.000)(4676.000,1155.000)(4676.000,1215.000)
\path(4556,1185)(5081,1185)
\blacken\path(4961.000,1155.000)(5081.000,1185.000)(4961.000,1215.000)(4961.000,1155.000)
\blacken\path(4736.000,2640.000)(4706.000,2760.000)(4676.000,2640.000)(4736.000,2640.000)
\path(4706,2760)(4706,2010)
\blacken\path(4676.000,2130.000)(4706.000,2010.000)(4736.000,2130.000)(4676.000,2130.000)
\blacken\path(4736.000,2565.000)(4706.000,2685.000)(4676.000,2565.000)(4736.000,2565.000)
\path(4706,2685)(4706,2160)
\blacken\path(4676.000,2280.000)(4706.000,2160.000)(4736.000,2280.000)(4676.000,2280.000)
\blacken\path(3836.000,2640.000)(3806.000,2760.000)(3776.000,2640.000)(3836.000,2640.000)
\path(3806,2760)(3806,2010)
\blacken\path(3776.000,2130.000)(3806.000,2010.000)(3836.000,2130.000)(3776.000,2130.000)
\blacken\path(3836.000,2565.000)(3806.000,2685.000)(3776.000,2565.000)(3836.000,2565.000)
\path(3806,2685)(3806,2160)
\blacken\path(3776.000,2280.000)(3806.000,2160.000)(3836.000,2280.000)(3776.000,2280.000)
\blacken\path(2786.000,2640.000)(2756.000,2760.000)(2726.000,2640.000)(2786.000,2640.000)
\path(2756,2760)(2756,2010)
\blacken\path(2726.000,2130.000)(2756.000,2010.000)(2786.000,2130.000)(2726.000,2130.000)
\blacken\path(2786.000,2565.000)(2756.000,2685.000)(2726.000,2565.000)(2786.000,2565.000)
\path(2756,2685)(2756,2085)
\blacken\path(2726.000,2205.000)(2756.000,2085.000)(2786.000,2205.000)(2726.000,2205.000)
\blacken\path(1526.000,2565.000)(1406.000,2535.000)(1526.000,2505.000)(1526.000,2565.000)
\path(1406,2535)(2006,2535)
\blacken\path(1886.000,2505.000)(2006.000,2535.000)(1886.000,2565.000)(1886.000,2505.000)
\blacken\path(1601.000,2565.000)(1481.000,2535.000)(1601.000,2505.000)(1601.000,2565.000)
\path(1481,2535)(1931,2535)
\blacken\path(1811.000,2505.000)(1931.000,2535.000)(1811.000,2565.000)(1811.000,2505.000)
\path(806,2985)(1181,2610)
\blacken\path(1074.934,2673.640)(1181.000,2610.000)(1117.360,2716.066)(1074.934,2673.640)
\path(806,2985)(956,1260)
\blacken\path(915.717,1376.950)(956.000,1260.000)(975.492,1382.148)(915.717,1376.950)
\path(1001,2790)(1081,2710)
\blacken\path(974.934,2773.640)(1081.000,2710.000)(1017.360,2816.066)(974.934,2773.640)
\path(941,1510)(941,1415)
\blacken\path(911.000,1535.000)(941.000,1415.000)(971.000,1535.000)(911.000,1535.000)
\path(1446,165)(1551,150)
\blacken\path(1427.963,137.272)(1551.000,150.000)(1436.449,196.669)(1427.963,137.272)
\path(2356,1050)(2396,965)
\blacken\path(2317.760,1060.804)(2396.000,965.000)(2372.049,1086.352)(2317.760,1060.804)
\path(2576,1070)(2531,975)
\blacken\path(2555.258,1096.291)(2531.000,975.000)(2609.482,1070.606)(2555.258,1096.291)
\path(2766,930)(2631,905)
\blacken\path(2743.531,956.349)(2631.000,905.000)(2754.457,897.352)(2743.531,956.349)
\path(2776,2040)(3751,2765)
\blacken\path(3672.606,2669.322)(3751.000,2765.000)(3636.803,2717.470)(3672.606,2669.322)
\path(3536,2605)(3626,2675)
\blacken\path(3549.696,2577.647)(3626.000,2675.000)(3512.860,2625.008)(3549.696,2577.647)
\path(4731,1800)(5266,1250)
\blacken\path(5160.824,1315.100)(5266.000,1250.000)(5203.833,1356.936)(5160.824,1315.100)
\path(5111,1410)(5176,1360)
\blacken\path(5062.594,1409.387)(5176.000,1360.000)(5099.176,1456.944)(5062.594,1409.387)
\path(4516,2435)(4596,2575)
\blacken\path(4562.511,2455.927)(4596.000,2575.000)(4510.416,2485.695)(4562.511,2455.927)
\path(4711,1780)(5036,1435)
\path(2781,2085)(3496,2610)
\path(2451,625)(3196,195)
\blacken\path(3077.073,229.004)(3196.000,195.000)(3107.066,280.969)(3077.073,229.004)
\path(3001,310)(3056,270)
\blacken\path(2941.307,316.319)(3056.000,270.000)(2976.597,364.843)(2941.307,316.319)
\path(806,2985)(805,2984)(804,2982)
	(801,2979)(797,2973)(791,2965)
	(784,2954)(773,2940)(761,2923)
	(746,2903)(729,2879)(710,2851)
	(689,2820)(665,2786)(639,2749)
	(612,2709)(583,2666)(553,2621)
	(522,2573)(491,2524)(458,2473)
	(426,2420)(393,2366)(361,2311)
	(329,2256)(298,2199)(268,2142)
	(238,2085)(210,2026)(183,1968)
	(157,1909)(133,1849)(110,1789)
	(90,1728)(71,1667)(55,1605)
	(40,1542)(29,1478)(20,1414)
	(14,1349)(12,1284)(12,1218)
	(17,1153)(26,1087)(39,1023)
	(56,960)(79,897)(106,836)
	(137,779)(171,725)(209,675)
	(249,629)(290,586)(334,546)
	(380,509)(427,476)(476,444)
	(526,416)(577,389)(629,365)
	(682,343)(735,322)(790,303)
	(845,286)(901,270)(957,255)
	(1013,241)(1069,229)(1124,217)
	(1178,207)(1232,197)(1284,188)
	(1334,180)(1382,173)(1428,167)
	(1470,161)(1510,156)(1546,152)
	(1578,148)(1607,145)(1631,142)
	(1652,140)(1668,138)(1682,137)
	(1692,136)(1706,135)
\blacken\path(1584.168,113.626)(1706.000,135.000)(1588.442,173.473)(1584.168,113.626)
\path(2751,2760)(2750,2759)(2749,2757)
	(2746,2753)(2741,2748)(2734,2739)
	(2725,2729)(2714,2715)(2701,2699)
	(2686,2680)(2669,2658)(2651,2634)
	(2631,2608)(2610,2580)(2588,2550)
	(2566,2519)(2544,2486)(2522,2452)
	(2499,2417)(2477,2380)(2456,2342)
	(2434,2303)(2414,2262)(2393,2219)
	(2374,2174)(2355,2127)(2337,2078)
	(2319,2026)(2304,1971)(2289,1914)
	(2276,1855)(2266,1795)(2258,1731)
	(2252,1668)(2249,1608)(2249,1550)
	(2251,1495)(2255,1444)(2260,1394)
	(2267,1348)(2276,1303)(2285,1261)
	(2296,1220)(2307,1180)(2319,1143)
	(2332,1106)(2345,1071)(2359,1038)
	(2372,1006)(2385,977)(2397,949)
	(2409,924)(2419,902)(2429,883)
	(2437,867)(2443,855)(2448,845)(2456,830)
\blacken\path(2373.059,921.765)(2456.000,830.000)(2426.000,950.000)(2373.059,921.765)
\path(3811,2745)(3810,2744)(3809,2742)
	(3806,2739)(3802,2734)(3796,2726)
	(3788,2716)(3778,2703)(3766,2687)
	(3751,2668)(3735,2646)(3716,2621)
	(3695,2594)(3672,2564)(3647,2531)
	(3621,2497)(3593,2461)(3564,2424)
	(3535,2385)(3504,2345)(3473,2304)
	(3441,2262)(3409,2220)(3376,2177)
	(3344,2134)(3310,2090)(3277,2046)
	(3244,2001)(3210,1956)(3175,1910)
	(3141,1863)(3106,1816)(3071,1769)
	(3035,1720)(2999,1671)(2963,1622)
	(2927,1572)(2891,1522)(2855,1473)
	(2821,1425)(2767,1349)(2719,1280)
	(2677,1219)(2641,1166)(2610,1119)
	(2584,1079)(2562,1044)(2543,1014)
	(2528,987)(2515,965)(2504,945)
	(2495,927)(2488,912)(2482,899)
	(2477,888)(2474,879)(2471,871)
	(2469,866)(2466,855)
\blacken\path(2468.631,978.665)(2466.000,855.000)(2526.517,962.878)(2468.631,978.665)
\path(4676,2745)(4676,2744)(4676,2742)
	(4676,2739)(4676,2734)(4676,2726)
	(4676,2716)(4675,2703)(4675,2687)
	(4674,2669)(4673,2648)(4671,2624)
	(4669,2598)(4666,2570)(4663,2540)
	(4658,2508)(4653,2475)(4647,2441)
	(4640,2405)(4631,2369)(4621,2332)
	(4609,2294)(4596,2255)(4580,2215)
	(4563,2174)(4543,2132)(4520,2089)
	(4494,2045)(4465,1999)(4432,1952)
	(4395,1904)(4354,1854)(4308,1803)
	(4259,1751)(4204,1698)(4146,1645)
	(4093,1600)(4039,1557)(3984,1515)
	(3928,1475)(3871,1437)(3815,1400)
	(3760,1366)(3704,1333)(3649,1302)
	(3595,1272)(3541,1244)(3487,1218)
	(3434,1192)(3381,1168)(3329,1145)
	(3277,1122)(3225,1101)(3174,1080)
	(3123,1061)(3073,1042)(3023,1023)
	(2975,1006)(2928,989)(2882,973)
	(2838,958)(2795,944)(2755,931)
	(2718,919)(2683,908)(2652,898)
	(2624,889)(2599,881)(2577,875)
	(2559,869)(2544,865)(2532,861)
	(2524,859)(2511,855)
\blacken\path(2616.871,918.964)(2511.000,855.000)(2634.516,861.617)(2616.871,918.964)
\path(2216,1150)(2217,1150)(2219,1150)
	(2224,1150)(2231,1150)(2240,1151)
	(2253,1151)(2270,1152)(2290,1153)
	(2314,1154)(2341,1156)(2372,1158)
	(2406,1160)(2443,1163)(2483,1166)
	(2525,1170)(2569,1174)(2615,1179)
	(2663,1185)(2711,1191)(2761,1198)
	(2812,1207)(2864,1216)(2916,1226)
	(2970,1237)(3024,1250)(3079,1264)
	(3135,1280)(3193,1298)(3251,1318)
	(3311,1340)(3373,1364)(3435,1391)
	(3499,1420)(3564,1452)(3630,1487)
	(3696,1525)(3761,1565)(3824,1607)
	(3885,1650)(3943,1694)(3997,1739)
	(4048,1783)(4096,1827)(4140,1870)
	(4182,1913)(4220,1956)(4257,1998)
	(4290,2040)(4322,2081)(4351,2122)
	(4379,2162)(4405,2202)(4430,2242)
	(4454,2282)(4476,2320)(4496,2359)
	(4516,2396)(4535,2432)(4552,2467)
	(4568,2501)(4583,2532)(4596,2562)
	(4608,2589)(4619,2614)(4629,2636)
	(4637,2655)(4643,2671)(4649,2685)
	(4653,2695)(4656,2703)(4661,2715)
\blacken\path(4642.538,2592.692)(4661.000,2715.000)(4587.154,2615.769)(4642.538,2592.692)
\path(2191,1155)(2192,1155)(2194,1155)
	(2199,1155)(2206,1155)(2215,1156)
	(2228,1156)(2245,1157)(2265,1158)
	(2289,1159)(2316,1161)(2347,1163)
	(2381,1165)(2418,1168)(2458,1171)
	(2500,1175)(2544,1179)(2590,1184)
	(2638,1190)(2686,1196)(2736,1203)
	(2787,1212)(2839,1221)(2891,1231)
	(2945,1242)(2999,1255)(3054,1269)
	(3110,1285)(3168,1303)(3226,1323)
	(3286,1345)(3348,1369)(3410,1396)
	(3474,1425)(3539,1457)(3605,1492)
	(3671,1530)(3736,1570)(3799,1612)
	(3860,1655)(3918,1699)(3972,1744)
	(4023,1788)(4071,1832)(4115,1875)
	(4157,1918)(4195,1961)(4232,2003)
	(4265,2045)(4297,2086)(4326,2127)
	(4354,2167)(4380,2207)(4405,2247)
	(4429,2287)(4451,2325)(4471,2364)
	(4491,2401)(4510,2437)(4527,2472)
	(4543,2506)(4558,2537)(4571,2567)
	(4583,2594)(4594,2619)(4604,2641)
	(4612,2660)(4618,2676)(4624,2690)
	(4628,2700)(4631,2708)(4636,2720)
\blacken\path(4617.538,2597.692)(4636.000,2720.000)(4562.154,2620.769)(4617.538,2597.692)
\put(806,3060){\makebox(0,0)[b]{\smash{{\SetFigFont{10}{12.0}{\rmdefault}{\mddefault}{\updefault}$\top$}}}}
\put(1181,2460){\makebox(0,0)[b]{\smash{{\SetFigFont{10}{12.0}{\rmdefault}{\mddefault}{\updefault}$\neg a$}}}}
\put(2081,2460){\makebox(0,0)[b]{\smash{{\SetFigFont{10}{12.0}{\rmdefault}{\mddefault}{\updefault}$a$}}}}
\put(2756,2835){\makebox(0,0)[b]{\smash{{\SetFigFont{10}{12.0}{\rmdefault}{\mddefault}{\updefault}$\neg b$}}}}
\put(3806,2835){\makebox(0,0)[b]{\smash{{\SetFigFont{10}{12.0}{\rmdefault}{\mddefault}{\updefault}$\neg c$}}}}
\put(4706,2835){\makebox(0,0)[b]{\smash{{\SetFigFont{10}{12.0}{\rmdefault}{\mddefault}{\updefault}$\neg e$}}}}
\put(4706,1860){\makebox(0,0)[b]{\smash{{\SetFigFont{10}{12.0}{\rmdefault}{\mddefault}{\updefault}$e$}}}}
\put(3806,1860){\makebox(0,0)[b]{\smash{{\SetFigFont{10}{12.0}{\rmdefault}{\mddefault}{\updefault}$c$}}}}
\put(2756,1860){\makebox(0,0)[b]{\smash{{\SetFigFont{10}{12.0}{\rmdefault}{\mddefault}{\updefault}$b$}}}}
\put(2081,1110){\makebox(0,0)[b]{\smash{{\SetFigFont{10}{12.0}{\rmdefault}{\mddefault}{\updefault}$d$}}}}
\put(5306,1110){\makebox(0,0)[b]{\smash{{\SetFigFont{10}{12.0}{\rmdefault}{\mddefault}{\updefault}$\neg f$}}}}
\put(4256,1110){\makebox(0,0)[b]{\smash{{\SetFigFont{10}{12.0}{\rmdefault}{\mddefault}{\updefault}$f$}}}}
\put(2456,660){\makebox(0,0)[b]{\smash{{\SetFigFont{10}{12.0}{\rmdefault}{\mddefault}{\updefault}$z$}}}}
\put(1256,660){\makebox(0,0)[b]{\smash{{\SetFigFont{10}{12.0}{\rmdefault}{\mddefault}{\updefault}$\neg z$}}}}
\put(2006,60){\makebox(0,0)[b]{\smash{{\SetFigFont{10}{12.0}{\rmdefault}{\mddefault}{\updefault}$\neg g$}}}}
\put(3206,60){\makebox(0,0)[b]{\smash{{\SetFigFont{10}{12.0}{\rmdefault}{\mddefault}{\updefault}$g$}}}}
\blacken\path(2426.000,165.000)(2306.000,135.000)(2426.000,105.000)(2426.000,165.000)
\path(2306,135)(3056,135)
\blacken\path(2936.000,105.000)(3056.000,135.000)(2936.000,165.000)(2936.000,105.000)
\end{picture}
}

$b\wedge c \wedge e\wedge f\tO g$ is replaced by $\neg b\tO z$ and $\neg c\tO z$ and $\neg e \tO z$ and $\neg f\tO z$ and $z\tO g$.
\caption{}\label{509-EF2a}
\end{figure}

We need to say how we compute extensions for the argumentation network of Figure \ref{509-EF2} arising from the defeasible theory $\Delta$.  Being a defeasible theory we should be able to resolve the conflict of the atom $g$ both supported and attacked by its neighbours. Intuitively $g$ has a direct strict proof from $\top$, but $\neg g$ is derived defeasibly using 3 separate defeasible rules, two of which are chained. So $\neg g$ has lower preference ``value'' than $g$ and so $g$ is accepted. We need to define the priority values on the atoms $x, \neg x, y, \neg y$, etc., and use these in calculating extensions.  Our network has two attack relations and so the priority values and the extensions must be defined geometrically in terms of the graphs.  This is done in Appendix E2.
\end{example}

\begin{example}\label{509-EE1a}
Let us look at another example from \cite{509-16}.  This is example 4 from their paper.  The vocabulary is:
\begin{quote}
{\em wr}: John wears a ring\\
{\em m}: John is married\\
{\em hw}: John has a wife\\
{\em b}: John is a bachelor\\
{\em go}: John often goes out until late with friends.
\end{quote}

The arguments are obtained from a database with strict rules $\CS$ and defeasible rules $\CD$.
\[\begin{array}{l}
\CS =\{\top\to wr, \top \to go, b\to \neg hw, m\to hw\}\\
\CD =\{wr\To m, go \To b\}.
\end{array}
\]
Caminada and Amgoud form the following arguments from the data $(\CS, \CD)$.
\[\begin{array}{l}
A_1: [\top\to wr]\\
A_2: [\top \to go]\\
A_3: [A_1\To m]\\
A_4: [A_2\To b]\\
A_5: [A_3\to hw]\\
A_6: [A_4\to \neg hw]
\end{array}
\]

\begin{figure}
\centering
\setlength{\unitlength}{0.00083333in}
\begingroup\makeatletter\ifx\SetFigFont\undefined%
\gdef\SetFigFont#1#2#3#4#5{%
  \reset@font\fontsize{#1}{#2pt}%
  \fontfamily{#3}\fontseries{#4}\fontshape{#5}%
  \selectfont}%
\fi\endgroup%
{\renewcommand{\dashlinestretch}{30}
\begin{picture}(1739,3079)(0,-10)
\put(1697,442){\makebox(0,0)[b]{\smash{{\SetFigFont{10}{12.0}{\rmdefault}{\mddefault}{\updefault}$\neg hw$}}}}
\path(1697,2842)(1697,2317)
\blacken\path(1667.000,2437.000)(1697.000,2317.000)(1727.000,2437.000)(1667.000,2437.000)
\path(47,2017)(47,1492)
\blacken\path(17.000,1612.000)(47.000,1492.000)(77.000,1612.000)(17.000,1612.000)
\path(1697,2017)(1697,1492)
\blacken\path(1667.000,1612.000)(1697.000,1492.000)(1727.000,1612.000)(1667.000,1612.000)
\path(47,1192)(47,667)
\blacken\path(17.000,787.000)(47.000,667.000)(77.000,787.000)(17.000,787.000)
\path(1697,1192)(1697,667)
\blacken\path(1667.000,787.000)(1697.000,667.000)(1727.000,787.000)(1667.000,787.000)
\path(12,2017)(17,1652)
\path(1717,2012)(1722,1647)(1692,1647)
\path(1497,832)(1582,782)
\blacken\path(1463.357,816.984)(1582.000,782.000)(1493.778,868.700)(1463.357,816.984)
\path(117,137)(82,222)
\blacken\path(155.430,122.461)(82.000,222.000)(99.950,99.616)(155.430,122.461)
\path(77,612)(78,612)(81,614)
	(86,616)(94,619)(106,623)
	(121,629)(140,636)(163,645)
	(189,655)(218,666)(251,678)
	(286,690)(323,704)(362,718)
	(403,732)(445,746)(488,760)
	(531,774)(576,787)(621,800)
	(666,813)(712,825)(760,836)
	(807,847)(856,857)(906,866)
	(957,874)(1008,881)(1060,886)
	(1112,890)(1162,892)(1221,892)
	(1275,889)(1324,883)(1368,875)
	(1407,866)(1441,855)(1471,843)
	(1497,830)(1521,816)(1542,802)
	(1560,786)(1577,770)(1592,754)
	(1605,738)(1617,722)(1627,707)
	(1635,693)(1643,681)(1649,670)
	(1653,661)(1657,653)(1662,642)
\blacken\path(1585.033,738.830)(1662.000,642.000)(1639.655,763.658)(1585.033,738.830)
\path(1712,387)(1711,386)(1708,385)
	(1703,383)(1694,379)(1683,374)
	(1667,367)(1648,359)(1625,349)
	(1597,337)(1567,324)(1532,310)
	(1495,294)(1455,278)(1414,261)
	(1370,244)(1325,226)(1278,208)
	(1231,190)(1183,173)(1135,155)
	(1086,139)(1037,123)(987,107)
	(937,92)(887,78)(836,65)
	(784,53)(732,42)(679,33)
	(626,25)(574,18)(522,14)
	(472,12)(419,13)(370,16)
	(326,22)(287,30)(253,40)
	(223,52)(196,65)(174,79)
	(154,95)(137,111)(123,129)
	(110,147)(100,165)(91,184)
	(83,203)(77,222)(72,241)
	(68,259)(65,277)(63,293)
	(61,307)(59,320)(58,331)
	(58,340)(57,346)(57,357)
\blacken\path(87.000,237.000)(57.000,357.000)(27.000,237.000)(87.000,237.000)
\put(47,2917){\makebox(0,0)[b]{\smash{{\SetFigFont{10}{12.0}{\rmdefault}{\mddefault}{\updefault}$\top$}}}}
\put(1697,2917){\makebox(0,0)[b]{\smash{{\SetFigFont{10}{12.0}{\rmdefault}{\mddefault}{\updefault}$\top$}}}}
\put(47,2092){\makebox(0,0)[b]{\smash{{\SetFigFont{10}{12.0}{\rmdefault}{\mddefault}{\updefault}$wr$}}}}
\put(1697,2092){\makebox(0,0)[b]{\smash{{\SetFigFont{10}{12.0}{\rmdefault}{\mddefault}{\updefault}$go$}}}}
\put(47,1267){\makebox(0,0)[b]{\smash{{\SetFigFont{10}{12.0}{\rmdefault}{\mddefault}{\updefault}$m$}}}}
\put(1697,1267){\makebox(0,0)[b]{\smash{{\SetFigFont{10}{12.0}{\rmdefault}{\mddefault}{\updefault}$b$}}}}
\put(47,442){\makebox(0,0)[b]{\smash{{\SetFigFont{10}{12.0}{\rmdefault}{\mddefault}{\updefault}$hw$}}}}
\path(47,2842)(47,2317)
\blacken\path(17.000,2437.000)(47.000,2317.000)(77.000,2437.000)(17.000,2437.000)
\end{picture}
}
\caption{Representation of Caminada and Amgoud example}\label{509-EF1b}
\end{figure}

We form (in our notation) the following network of Figure \ref{509-EF1b}.

The correspondence in this case between the arguments of Caminada and Amgoud and the arguments in Figure \ref{509-EF1b} is to paths in this figure as follows:

\[\begin{array}{l}
A_1: (\top\to wr)\\
A_2: (\top \to go)\\
A_3: (\top\to wf\To m)\\
A_4: (\top \to go\To b)\\
A_5: (\top \to wr\To m \to hw)\\
A_6: (\top \to go \To b\to \neg hw)
\end{array}\]

In the above set of arguments, only $A_5$ and $A_6$ attack each other and so we get the extension $\{A_1, A_2, A_3, A_4\}$. The {\em output} of this extension are the argument heads, namely $\{wr, go, m, b\}$.

Caminada and Amgoud's approach corresponds to our considering the network of Figure \ref{509-EF1b}, with attack relation $\tO$, and we thus get the extension $\{wr, go, m, b\}$. Caminada and Amgoud proceed to close this extenson under the strict rules and they get inconsistency. They consider this a problem and offer to remedy the problem by systematically adding, with every strict rule $x\to y$, its contrapositive rule $\neg y \to \neg x$.  This allows them essentially to also have that $m$ attacks $b$ and $b$ attacks $m$, and so the extension becomes only $\{wr, go\}$, which is consistenly closed under the strict rules.

We translate Figure \ref{509-EF1b} into Figure \ref{509-EF1c},  and so have no problems with it..
\begin{figure}
\centering
\setlength{\unitlength}{0.00083333in}
\begingroup\makeatletter\ifx\SetFigFont\undefined%
\gdef\SetFigFont#1#2#3#4#5{%
  \reset@font\fontsize{#1}{#2pt}%
  \fontfamily{#3}\fontseries{#4}\fontshape{#5}%
  \selectfont}%
\fi\endgroup%
{\renewcommand{\dashlinestretch}{30}
\begin{picture}(4002,2547)(0,-10)
\put(3012,960){\makebox(0,0)[rb]{\smash{{\SetFigFont{10}{12.0}{\rmdefault}{\mddefault}{\updefault}$\neg b$}}}}
\path(1962,2310)(3012,1935)
\blacken\path(2888.901,1947.108)(3012.000,1935.000)(2909.081,2003.613)(2888.901,1947.108)
\path(912,1635)(162,1185)
\blacken\path(249.464,1272.464)(162.000,1185.000)(280.334,1221.015)(249.464,1272.464)
\path(1402,910)(1537,220)
\blacken\path(1484.517,332.007)(1537.000,220.000)(1543.400,343.527)(1484.517,332.007)
\path(3987,1635)(3087,1185)
\blacken\path(3180.915,1265.498)(3087.000,1185.000)(3207.748,1211.833)(3180.915,1265.498)
\path(3762,885)(2412,285)
\blacken\path(2509.473,361.151)(2412.000,285.000)(2533.842,306.322)(2509.473,361.151)
\path(382,1320)(297,1275)
\blacken\path(389.018,1357.660)(297.000,1275.000)(417.091,1304.633)(389.018,1357.660)
\path(2782,2015)(2862,1990)
\blacken\path(2738.514,1997.159)(2862.000,1990.000)(2756.411,2054.427)(2738.514,1997.159)
\path(3297,1295)(3222,1260)
\blacken\path(3318.055,1337.932)(3222.000,1260.000)(3343.429,1283.561)(3318.055,1337.932)
\path(2637,385)(2552,355)
\blacken\path(2655.174,423.228)(2552.000,355.000)(2675.143,366.649)(2655.174,423.228)
\path(447,1325)(927,1600)
\path(3382,1315)(3987,1600)
\blacken\path(357.000,1815.000)(237.000,1785.000)(357.000,1755.000)(357.000,1815.000)
\path(237,1785)(912,1785)
\blacken\path(792.000,1755.000)(912.000,1785.000)(792.000,1815.000)(792.000,1755.000)
\blacken\path(507.000,1815.000)(387.000,1785.000)(507.000,1755.000)(507.000,1815.000)
\path(387,1785)(762,1785)
\blacken\path(642.000,1755.000)(762.000,1785.000)(642.000,1815.000)(642.000,1755.000)
\blacken\path(807.000,1065.000)(687.000,1035.000)(807.000,1005.000)(807.000,1065.000)
\path(687,1035)(1062,1035)
\blacken\path(942.000,1005.000)(1062.000,1035.000)(942.000,1065.000)(942.000,1005.000)
\blacken\path(1932.000,165.000)(1812.000,135.000)(1932.000,105.000)(1932.000,165.000)
\path(1812,135)(2187,135)
\blacken\path(2067.000,105.000)(2187.000,135.000)(2067.000,165.000)(2067.000,105.000)
\blacken\path(1782.000,165.000)(1662.000,135.000)(1782.000,105.000)(1782.000,165.000)
\path(1662,135)(2337,135)
\blacken\path(2217.000,105.000)(2337.000,135.000)(2217.000,165.000)(2217.000,105.000)
\blacken\path(3207.000,1815.000)(3087.000,1785.000)(3207.000,1755.000)(3207.000,1815.000)
\path(3087,1785)(3762,1785)
\blacken\path(3642.000,1755.000)(3762.000,1785.000)(3642.000,1815.000)(3642.000,1755.000)
\blacken\path(657.000,1065.000)(537.000,1035.000)(657.000,1005.000)(657.000,1065.000)
\path(537,1035)(1212,1035)
\blacken\path(1092.000,1005.000)(1212.000,1035.000)(1092.000,1065.000)(1092.000,1005.000)
\blacken\path(3357.000,1815.000)(3237.000,1785.000)(3357.000,1755.000)(3357.000,1815.000)
\path(3237,1785)(3612,1785)
\blacken\path(3492.000,1755.000)(3612.000,1785.000)(3492.000,1815.000)(3492.000,1755.000)
\blacken\path(3207.000,1065.000)(3087.000,1035.000)(3207.000,1005.000)(3207.000,1065.000)
\path(3087,1035)(3762,1035)
\blacken\path(3642.000,1005.000)(3762.000,1035.000)(3642.000,1065.000)(3642.000,1005.000)
\blacken\path(3357.000,1065.000)(3237.000,1035.000)(3357.000,1005.000)(3357.000,1065.000)
\path(3237,1035)(3612,1035)
\blacken\path(3492.000,1005.000)(3612.000,1035.000)(3492.000,1065.000)(3492.000,1005.000)
\path(267,1920)(167,1895)
\blacken\path(276.141,1953.209)(167.000,1895.000)(290.693,1895.000)(276.141,1953.209)
\path(1492,475)(1507,375)
\blacken\path(1459.531,489.222)(1507.000,375.000)(1518.867,498.123)(1459.531,489.222)
\put(1962,2385){\makebox(0,0)[b]{\smash{{\SetFigFont{10}{12.0}{\rmdefault}{\mddefault}{\updefault}$\top$}}}}
\put(912,1710){\makebox(0,0)[lb]{\smash{{\SetFigFont{10}{12.0}{\rmdefault}{\mddefault}{\updefault}$wr$}}}}
\put(162,1710){\makebox(0,0)[rb]{\smash{{\SetFigFont{10}{12.0}{\rmdefault}{\mddefault}{\updefault}$\neg wr$}}}}
\put(162,960){\makebox(0,0)[lb]{\smash{{\SetFigFont{10}{12.0}{\rmdefault}{\mddefault}{\updefault}$\neg m$}}}}
\put(1287,960){\makebox(0,0)[lb]{\smash{{\SetFigFont{10}{12.0}{\rmdefault}{\mddefault}{\updefault}$m$}}}}
\put(1512,60){\makebox(0,0)[b]{\smash{{\SetFigFont{10}{12.0}{\rmdefault}{\mddefault}{\updefault}$\neg hw$}}}}
\put(2412,60){\makebox(0,0)[lb]{\smash{{\SetFigFont{10}{12.0}{\rmdefault}{\mddefault}{\updefault}$hw$}}}}
\put(3987,1710){\makebox(0,0)[b]{\smash{{\SetFigFont{10}{12.0}{\rmdefault}{\mddefault}{\updefault}$go$}}}}
\put(3087,1710){\makebox(0,0)[rb]{\smash{{\SetFigFont{10}{12.0}{\rmdefault}{\mddefault}{\updefault}$\neg go$}}}}
\put(3912,960){\makebox(0,0)[b]{\smash{{\SetFigFont{10}{12.0}{\rmdefault}{\mddefault}{\updefault}$b$}}}}
\path(1962,2310)(12,1860)
\blacken\path(122.181,1916.215)(12.000,1860.000)(135.673,1857.751)(122.181,1916.215)
\end{picture}
}
\caption{}\label{509-EF1c}
\end{figure}
\end{example}

\subsection{Defining complete extensions for two-state two-attacks abstract argumentation networks}

\begin{definition}\label{509-ED3}
A 2-state bipolar 2-attack network has the form $(S\cup S^\urcorner \cup \{\top\}, R_1, R_2)$ where $S$ is a set of atomic letters, $S^\urcorner =\{\neg x|x\in S\}$, $\top\not\in S$ and $R_1$ and $R_2$ are subsets of $(S\cup S\urcorner \cup \{\top\})^2$. We write $x\tO y$ for $(x, y)\in R_1$ and $x  \To\!\!\!\!\To y$ for $(x,y)\in R_2$.

We consider the elements of $S^\urcorner$ as negations of the elements of $S$. $\top$ is truth.

The following holds (compare with Definition \ref{509-D2}):
\begin{itemize}
\item $\neg \exists x (x\tO \top$ or $x\To\!\!\!\!\To \top)$
\item $\forall x( x\tO \neg x$ and $\neg x \tO x)$.
\end{itemize}
$R_1$ is called the strict attacks and $R_2$ is called the defeasible attacks.
\end{definition}

\begin{example}\label{509-EE4}
Consider the network of Figure \ref{509-EF2a}. This is a 2-state two-attack bipolar network. We need to define a process for finding extensions for it. The difficult part in the definition of such a notion is to deal with cases where the attacks ``$\tO$'' and $``\To\!\!\!\!\To$'' disagree.  By calling ``$\tO$'' strict and ``$\To\!\!\!\!\To$'' defeasible  we are giving ``$\tO$'' priority over ``$\To\!\!\!\!\To$''. But we may have cases where ``$\To\!\!\!\!\To$'' attacks both $x$ and $\neg x$. We might give priority to either $x$ or $\neg x$ depending on the geometrical location of $x, \neg x$ relative to their attacking ``$\To\!\!\!\!\To$'' ancestors.

Let us see how to calculate the ground extension by propagating the attacks from $\top$. We record a defeasible index $\BD(x)$ for every node $x$, by counting the maximal number of ``$\To$'' participating in the chain of attacks leading up to it.

The following is the progression, step by step.  1= ``in'', 0= ``out'', $\half$=``undecided''.

\begin{enumerate}
\item $\top =1$ \hfill $\BD(\top =0)$
\item $\neg g =0, g=1$,\hfill from 1\\
$\BD(g) =\BD(\neg g)=0$
\item $\neg d =0, d=1$, \hfill from 1\\
$\BD(d) =\BD(\neg d)=0$
\item $\neg a=0, a=1$, \hfill from 1\\
$\BD(a) =\BD(\neg a) =0$.
\item $\neg e=0, e=1$, \hfill from 3\\
$\BD(e)=\BD(\neg e)=1$
\item $\neg b=0, b=1$, \hfill from 4\\
$\BD(b) =\BD(\neg b) =1$
\item $\neg c =0 c=1$, \hfill from 6\\
$\BD(c) =\BD(\neg c) =2$
\item $\neg f=0. f=1$\hfill from 5\\
$\BD(f) =\BD(\neg f) =2$
\item $\neg g=1, g=0$\hfill from 5, 6, 7, 8\\
$\BD(g) =\BD(\neg g) =2$.
\end{enumerate}
We see that 2 contradicts 9.  Since the \BD\ index of 2 is lower than that of 9, it wins.

Therefore our ground extension for the network of Figure \ref{509-EF2} is 
\[
\{\top, g, d, a, e, b, c, f\}.
\]
The reader can see that by associating with a theory $\Delta$ the network of Figure \ref{509-EF2} we have none of the problem mentioned by the ASPIC group \cite{509-16}.

Furthermore, if one does not want to deal with joint attacks (see \cite{509-3} and see \cite{509-17} for arguments in favour of joint attacks), one can use additional auxiliary points and eliminate them as in Figure \ref{509-EF2a}.
\end{example}

\begin{example}\label{509-EE4a}
Let us calculate the ground extension of Figure \ref{509-EF1c} in steps:
\begin{enumerate}
\item $\top =1$. $\BD(\top)=0$
\item $\neg go =0, go =1, \BD(\neg go)=\BD(go)=0$, from 1.
\item $\neg wr =0, wr =1, \BD(\neg wr) =\BD(wr)=0$, from 1.
\item $\neg m =0, m=1, \BD(\neg m) =\BD(m) =1$, from 3.
\item $\neg b =0, b=1, \BD(\neg b) =\BD(b) =1$, from 2
\item $\neg hw =0$. $\BD(\neg hw) =1$, from 4.
\item $hw =0, \BD(hw) =1$, from 5.
\end{enumerate}
If we insist on the sum of the values of any $\{w, \neg w\}$ to be 1 we need to give $hw$ and $\neg hw$ values $\half$.
\end{example}

\begin{example}\label{509-EE5}
We want to discuss options for defining the index $\BD(x)$ for index $x$. Consider the network in Figure \ref{509-EF6}.

\begin{figure}
\centering
\setlength{\unitlength}{0.00083333in}
\begingroup\makeatletter\ifx\SetFigFont\undefined%
\gdef\SetFigFont#1#2#3#4#5{%
  \reset@font\fontsize{#1}{#2pt}%
  \fontfamily{#3}\fontseries{#4}\fontshape{#5}%
  \selectfont}%
\fi\endgroup%
{\renewcommand{\dashlinestretch}{30}
\begin{picture}(3480,1792)(0,-10)
\put(3465,955){\makebox(0,0)[b]{\smash{{\SetFigFont{10}{12.0}{\rmdefault}{\mddefault}{\updefault}$e$}}}}
\path(1065,880)(1065,205)
\blacken\path(1035.000,325.000)(1065.000,205.000)(1095.000,325.000)(1035.000,325.000)
\path(1815,1555)(1140,205)
\blacken\path(1166.833,325.748)(1140.000,205.000)(1220.498,298.915)(1166.833,325.748)
\path(1815,1555)(1065,1105)
\blacken\path(1152.464,1192.464)(1065.000,1105.000)(1183.334,1141.015)(1152.464,1192.464)
\path(1815,1555)(2415,1105)
\blacken\path(2301.000,1153.000)(2415.000,1105.000)(2337.000,1201.000)(2301.000,1153.000)
\path(1815,1555)(3465,1105)
\blacken\path(3341.335,1107.631)(3465.000,1105.000)(3357.122,1165.517)(3341.335,1107.631)
\path(2415,955)(1215,205)
\blacken\path(1300.860,294.040)(1215.000,205.000)(1332.660,243.160)(1300.860,294.040)
\path(3465,880)(1365,205)
\blacken\path(1470.063,270.282)(1365.000,205.000)(1488.424,213.160)(1470.063,270.282)
\path(325,1060)(770,1060)
\path(1785,1500)(1235,1165)
\path(1035,885)(1035,355)
\path(1830,1530)(1230,330)
\path(2370,955)(1330,305)
\path(1505,285)(3460,895)
\path(1815,1515)(2270,1165)
\path(1885,1520)(3300,1110)
\put(1815,1630){\makebox(0,0)[b]{\smash{{\SetFigFont{10}{12.0}{\rmdefault}{\mddefault}{\updefault}$\top$}}}}
\put(1065,955){\makebox(0,0)[b]{\smash{{\SetFigFont{10}{12.0}{\rmdefault}{\mddefault}{\updefault}$a$}}}}
\put(1065,55){\makebox(0,0)[b]{\smash{{\SetFigFont{10}{12.0}{\rmdefault}{\mddefault}{\updefault}$c$}}}}
\put(2415,955){\makebox(0,0)[b]{\smash{{\SetFigFont{10}{12.0}{\rmdefault}{\mddefault}{\updefault}$d$}}}}
\put(15,955){\makebox(0,0)[b]{\smash{{\SetFigFont{10}{12.0}{\rmdefault}{\mddefault}{\updefault}$b$}}}}
\blacken\path(285.000,1060.000)(165.000,1030.000)(285.000,1000.000)(285.000,1060.000)
\path(165,1030)(915,1030)
\blacken\path(795.000,1000.000)(915.000,1030.000)(795.000,1060.000)(795.000,1000.000)
\end{picture}
}

\caption{}\label{509-EF6}
\end{figure}

The node $c$ is supported by several chains.  For example
\begin{enumerate}
\item $\top \To a \To c$
\item $\top \To a \To b \To a\To c$
\item $\top \To d \To c$
\item $\top \To e \To c$
\item $\top \To c$
\end{enumerate}
We can also loop and get
\[
(6, k) ~~\top \To a\To b\To a\comma \mbox{ loop $k$ times } \To a \To b \To a \To c.
\]

\paragraph{Questions.}
What index do we give to c?

The first question is what to do with (6,k). We have an infinite number of paths leading from $\top$ to $c$. The second question is that we can get to $c$ using several parallel paths through $a, d$ and $e$. Does this increase its priority over $f$?

The thrid question is how do we record the totality of inidices/paths which characterise $c$?

We think it is reasonable to do the following:
\begin{enumerate}
\item Limit the paths with loops to $n$ number of repetitions, where $n$ is the number of elements in the network. Recall that we are dealing with finite networks.  Thus $(6,k)$ becomes $(6,1)\comma (6,6)$.
\item Record for each $x$ the number of different paths of each length.  Thus the index of node $c$ becomes the table in Figure \ref{509-EF7}.

\begin{figure}
\centering

\begin{tabular}{c|c}
length & number  of paths\\
\hline
2&1\\
3&3\\
5&1\\
7&1\\
9&1\\
11&1\\
13&1\\
15&1
\end{tabular}

\caption{}\label{509-EF7}
\end{figure}

\end{enumerate}

We leave it to the defeasible logical theory to decide which $x$ with table $\BD(x)$ is preferable to which $y$ with table $\BD(y)$. For example we can take the number of shortest paths to be the index.
\end{example}

\begin{definition}\label{509-ED8}
Let $(S\cup S^\urcorner \cup \{\top\}, R_1, R_2\}$ be a finite 2-state 2-attack bipolar network. Let $k$ be the number of elements in the network.
\begin{enumerate}
\item By a legitimate path from $x$ to $\top$ we mean a sequence 
\[
\Pi_x=(x_1=x, x_2\comma x_n=\top)
\]
such that for each $1 < i \leq n$ we have $x_iRx_{i-1}$, where $R$ is either $R_1= ~\tO$ or $R_2=  ~\To\!\!\!\!\To$ and such that no node $y$ appears in $\Pi_x$ more than $k$ times.
\item The defeasible value of $\Pi_x$ (denoted by $\BD(\Pi_x)$) is the number of points $x_i, 1 < i \leq n$ such that $x_i   \To\!\!\!\!\To x_{i-1}$.
\item The defeasible value of $x$ denoted by $\BD(x)$ is the pair $(\BD_1(x), \BD_2(x))$ where $\BD_1(x) =\min \{\BD(\Pi_x)\}$ and $\BD_2(x)=$ the number of paths $\Pi_x$ such that $\BD(\Pi_x) =\BD_1(x)$.
\item Order the nodes $x$ according to the lexicographic ordering of $\BD(x)$. We get three possibilities for two nodes $x,y$.
\begin{enumerate}
\item $\BD_1(x) < \BD_1 (y)$
\item $\BD_1(x) =\BD_1(y)$ and $\BD_2(x)> \BD_2(y)$
\item $\BD_1(x) =\BD_1(y)$ and $\BD_2(x)=\BD_(y)$.
\end{enumerate}
Possibilities (a) and (b) are considered $x > y$ ($x$ is stronger than $y$). Possibility (c) is considered $x \approx y$ ($x$ is indifferent to $y$).
\end{enumerate}
\end{definition}

\begin{definition}\label{509-ED9}
Let $(S\cup S^\urcorner \cup \{\top\}, \tO,  \To\!\!\!\!\To)$ be a finite 2-state, 2-attack bipolar network. Assume that each node $x$ already has an index $\BD(x)$ defined and that a priority ordering $x > y$ and $x\approx y$ have been defined. We define the notion of a Caminada-Gabbay labelling function 
\[
\lambda: S \cup S^\urcorner \cup \{\top\}\mapsto \{0,\half,1\}
\]
being a legitimate complete extension for the network.

 $\lambda$ is defined relative to the relations $>$ and $\approx$.

$\lambda$ must satisfy the following
\begin{enumerate}
\item $\lambda (\top)=1$.
\item Each pair $\{x,\neg x\}$ in the network is part of the following geometrical constellation of Figure \ref{509-EF10}.

\begin{figure}
\centering
\setlength{\unitlength}{0.00083333in}
\begingroup\makeatletter\ifx\SetFigFont\undefined%
\gdef\SetFigFont#1#2#3#4#5{%
  \reset@font\fontsize{#1}{#2pt}%
  \fontfamily{#3}\fontseries{#4}\fontshape{#5}%
  \selectfont}%
\fi\endgroup%
{\renewcommand{\dashlinestretch}{30}
\begin{picture}(4080,2022)(0,-10)
\put(1515,1860){\makebox(0,0)[b]{\smash{{\SetFigFont{10}{12.0}{\rmdefault}{\mddefault}{\updefault}$\comma$}}}}
\path(2865,1785)(2865,210)
\blacken\path(2835.000,330.000)(2865.000,210.000)(2895.000,330.000)(2835.000,330.000)
\path(3465,1785)(2940,210)
\blacken\path(2949.487,333.329)(2940.000,210.000)(3006.408,314.355)(2949.487,333.329)
\path(3990,1785)(3015,210)
\blacken\path(3052.655,327.822)(3015.000,210.000)(3103.671,296.241)(3052.655,327.822)
\path(615,1785)(615,210)
\blacken\path(585.000,330.000)(615.000,210.000)(645.000,330.000)(585.000,330.000)
\path(15,1785)(540,210)
\blacken\path(473.592,314.355)(540.000,210.000)(530.513,333.329)(473.592,314.355)
\path(1140,1785)(690,210)
\blacken\path(694.121,333.625)(690.000,210.000)(751.812,317.141)(694.121,333.625)
\path(1740,1785)(765,210)
\blacken\path(802.655,327.822)(765.000,210.000)(853.671,296.241)(802.655,327.822)
\blacken\path(1035.000,165.000)(915.000,135.000)(1035.000,105.000)(1035.000,165.000)
\path(915,135)(2715,135)
\blacken\path(2595.000,105.000)(2715.000,135.000)(2595.000,165.000)(2595.000,105.000)
\blacken\path(1185.000,165.000)(1065.000,135.000)(1185.000,105.000)(1185.000,165.000)
\path(1065,135)(2565,135)
\blacken\path(2445.000,105.000)(2565.000,135.000)(2445.000,165.000)(2445.000,105.000)
\path(465,450)(490,350)
\blacken\path(431.791,459.141)(490.000,350.000)(490.000,473.693)(431.791,459.141)
\path(615,465)(620,365)
\blacken\path(584.045,483.352)(620.000,365.000)(643.970,486.348)(584.045,483.352)
\path(2710,455)(2735,355)
\blacken\path(2676.791,464.141)(2735.000,355.000)(2735.000,478.693)(2676.791,464.141)
\path(2865,460)(2865,360)
\blacken\path(2835.000,480.000)(2865.000,360.000)(2895.000,480.000)(2835.000,480.000)
\path(3435,1790)(2960,365)
\blacken\path(2969.487,488.329)(2960.000,365.000)(3026.408,469.355)(2969.487,488.329)
\path(3955,1790)(3075,365)
\blacken\path(3112.526,482.863)(3075.000,365.000)(3163.577,451.338)(3112.526,482.863)
\path(1105,1780)(710,365)
\blacken\path(713.369,488.647)(710.000,365.000)(771.160,472.515)(713.369,488.647)
\path(1695,1795)(825,365)
\blacken\path(861.741,483.110)(825.000,365.000)(913.000,451.925)(861.741,483.110)
\put(615,60){\makebox(0,0)[b]{\smash{{\SetFigFont{10}{12.0}{\rmdefault}{\mddefault}{\updefault}$\neg x$}}}}
\put(2265,1860){\makebox(0,0)[b]{\smash{{\SetFigFont{10}{12.0}{\rmdefault}{\mddefault}{\updefault}$c_1 $}}}}
\put(2865,1860){\makebox(0,0)[b]{\smash{{\SetFigFont{10}{12.0}{\rmdefault}{\mddefault}{\updefault}$c_2$}}}}
\put(2565,1860){\makebox(0,0)[b]{\smash{{\SetFigFont{10}{12.0}{\rmdefault}{\mddefault}{\updefault}$\comma$}}}}
\put(3465,1860){\makebox(0,0)[b]{\smash{{\SetFigFont{10}{12.0}{\rmdefault}{\mddefault}{\updefault}$d_1$}}}}
\put(4065,1860){\makebox(0,0)[b]{\smash{{\SetFigFont{10}{12.0}{\rmdefault}{\mddefault}{\updefault}$d_2$}}}}
\put(3765,1860){\makebox(0,0)[b]{\smash{{\SetFigFont{10}{12.0}{\rmdefault}{\mddefault}{\updefault}$\comma$}}}}
\put(2865,60){\makebox(0,0)[b]{\smash{{\SetFigFont{10}{12.0}{\rmdefault}{\mddefault}{\updefault}$x$}}}}
\put(15,1860){\makebox(0,0)[b]{\smash{{\SetFigFont{10}{12.0}{\rmdefault}{\mddefault}{\updefault}$a_1$}}}}
\put(615,1860){\makebox(0,0)[b]{\smash{{\SetFigFont{10}{12.0}{\rmdefault}{\mddefault}{\updefault}$a_m$}}}}
\put(315,1860){\makebox(0,0)[b]{\smash{{\SetFigFont{10}{12.0}{\rmdefault}{\mddefault}{\updefault}$\comma$}}}}
\put(1140,1860){\makebox(0,0)[b]{\smash{{\SetFigFont{10}{12.0}{\rmdefault}{\mddefault}{\updefault}$b_1$}}}}
\put(1815,1860){\makebox(0,0)[b]{\smash{{\SetFigFont{10}{12.0}{\rmdefault}{\mddefault}{\updefault}$b_k$}}}}
\path(2265,1785)(2790,210)
\blacken\path(2723.592,314.355)(2790.000,210.000)(2780.513,333.329)(2723.592,314.355)
\end{picture}
}
\caption{}\label{509-EF10}
\end{figure}

There may be no $a_i$ and/or no $b_j$ and/or no $c_i$ and/or no $d_j$.

We assume that the $\lambda$ function is known for $\{a$s, $b$s, $c$s, $d$s\} and we indicate by case analysis what the values $\lambda (\neg x)$ and $\lambda (x)$ should be.
\item $\lambda (x) + \lambda (\neg x)=1$.
\item It is not the case that for some $i$ and some $j$, 
\[
\lambda (a_i) =\lambda (c_j)=1
\]
(If this happens then $\lambda$ is not legitimate.)

For networks coming from consistent defeasible theories $\Delta$, this will not happen because it means that using strict rules we have $a_i\vdash \neg x$ and $c_j\vdash x$ and both $\Delta \vdash a_i$ and $\Delta\vdash c_j$.
\item If for some $i, \lambda (a_i)=1$ and for all $j, \lambda (c_j) < 1$ then $\lambda (\neg x) =0$ and $\lambda(x)=1$.
\item If for some $i, \lambda (c_i)=1$ and for all $j, \lambda (a_j)<1$ then $\lambda (\neg x) =1$ and $\lambda (x) =0$.
\item Assume that the values $\lambda (a_j),\lambda (c_i)$ are $<1$ for all $a_j$ and $c_i$.  If for at least one of $\{a_j,c_i\}, \lambda$ gives value $\half$ then $\lambda(x)=\lambda(\neg x) =\half$.
\item Assume $\lambda$ gives value 0 to all $\{a_i,c_j\}$, and assume that $x >\neg x$, then 
\begin{enumerate}
\item If for some $d_j, \lambda (d_j) =1$ then $\lambda(x)=0$ and $\lambda(\neg x) =1$.
\item If for all $d_j$, $\lambda (d_j) < 1$ and for some $d_j~ \lambda (d_j)=\half$ then $\lambda (x) =\lambda (\neg x)=\half$. 
\end{enumerate}
\item Assume for all $\{a$s, $c$s, $d$s\} $\lambda$ gives value 0 and $x>\neg x$ then
\begin{enumerate}
\item If for some $b_j, \lambda(b_j) =1$ then $\lambda(\neg x) =0$ and $\lambda(x)=1$.
\item If for all $b_j, \lambda (b_j) < 1$ and for some $b_j, \lambda (b_j)=\half$ then $\lambda (x) =\lambda(\neg x) =\half$.
\item If for all $j, \lambda (b_j)=0$ (i.e.\ the case is that none of $\neg x, x$ is attacked in any way) then $\lambda (x)=\lambda(\neg x) =\half$.
\end{enumerate}
\item The mirror case of (7)--(8) for $\neg x > x$. Take the mirror case analysis.
\item If $x\approx \neg x$ and all $\lambda  (a$s$)=\lambda(c$s$)=0$ and $\lambda (b_j)=1$ for some $j$ and all $\lambda (d$s$)<1$ then $\lambda (\neg x)=0$ and $\lambda (x)=1$.
\item If all $\lambda (a{\rm s}) =\lambda (c{\rm s})=0$ and all $\lambda (b{\rm s})<1$ and some $\lambda (d{\rm s})=1$ then $\lambda (\neg x) =1$ and $\lambda (x)=0$.
\item If all $\lambda (a{\rm s})=\lambda (c{\rm s})=0$ and either for some $b_j$ and $d_i$ $\lambda (b_j) =\lambda (d_i)=1$ or all $\lambda (b{\rm s}), \lambda (d{\rm s})<1$ then $\lambda (\neg x) =\lambda (x)=\half$.
\end{enumerate}
\end{definition}

\begin{definition}\label{509-ED11}
Consider a rule of the form $\alpha =[\bigwedge_i x_i \rightsquigarrow z]$ where ``$\rightsquigarrow$'' is either ``$\tO$'' or ``$ \To\!\!\!\!\To$'', and $x_i$ and $z$ are literals of the form either $b$ or $\neg b$, with $b$ atomic. We agree that ``$\neg\neg b$'' is ``$b$''.

We translate $\alpha$ into an attack formation $\Delta_\alpha$ in the language of 2-state 2-attack networks as in Figure \ref{509-BF9}.  We use the auxiliary points $S_\alpha =\{y_1(\alpha)\comma y_n(\alpha), y(\alpha)\}$
\[
\Delta_\alpha =\{x_i\rightsquigarrow\!\!\to \neg y_k (\alpha)|i=1\comma n\}\cup \{y_i(\alpha)\rightsquigarrow\!\!\to \neg y(\alpha),y(\alpha)\rightsquigarrow\!\!\to \neg z\}.
\]
Where ``$\rightsquigarrow\!\!\to$'' is ``$\tO$'' if ``$\rightsquigarrow$'' is ``$\to$'' and ``$\rightsquigarrow\!\!\to$'' is ``$ \To\!\!\!\!\To$'' if ``$\rightsquigarrow$'' is ``$\To$''.

The auxiliary points of $S_\alpha, S_\beta$ are all pairwise disjoint for $\alpha$ different from $\beta$ and disjoint from the literals of $\Delta$.
\end{definition}

\begin{definition}\label{ED12}
Let $\Delta$ be a defeasible theory based on the set of atoms $S$, containing the literals $S\cup S^\urcorner$ with $S^\urcorner =\{\neg x|x\in S\}$, with $\bot, \top\not\in S\cup S^\urcorner$.

Let $\CD$ be the set of defeasible rules and $\CS$ be the set of strict rules. Assume the language of $\Delta$ has strict implication $\to$ and defeasible implication $\To$.  The rules of $\Delta$ have the form 
\[
\bigwedge_i x_i\to y
\]
 or 
\[
\bigwedge_j y_j\To z
\]
where $\{x_i, y_i, y, z\}$ are literals, i.e.\ have the form $b$ or $\neg b$, $b$ atomic letter.

We allow for rules of the form $\to y$ or $\To z$ meaning $y$ or $z$ are assumptions.

We define the associated network $N(\Delta)$ for $\Delta$ as follows:
\begin{enumerate}
\item The set of nodes of $N(\Delta)$ is $S\cup S^\urcorner\cup \{\top\}\cup\bigcup_\alpha (S_\alpha\cup S^\urcorner_\alpha)$ where $\alpha$ runs over all rules of the form $\bigwedge^k_{i=1} x_i\rightsquigarrow z$ with $k \geq 2$. (I.e.\ joint rules) and ``$\rightsquigarrow$'' is either ``$\to$''  or ``$\To$''.

The attack relation of $N(\Delta)$ is as follows:
\[\begin{array}{l}
\{b \twoheadleftarrow\!\!\tO\neg b|b \mbox{ atom of }
 N(\Delta\}\cup \{\top \tO \neg b | c \TO b \mbox{ in } \Delta\}\cup \{\top \To\!\!\!\!\To \neg b | c \To b\mbox{ in }\Delta\}\cup \bigcup_\alpha\\
~~~\Delta_\alpha\cup \{ x \rightsquigarrow\!\!\to \neg y | x \rightsquigarrow y \mbox{ in } \Delta\}
\end{array}\]
where $\alpha$ runs over all joint rules $\alpha$ in $\Delta$.

Note that $N(\Delta)$ satisfies that for every $b\neq\top$ either $b$ or $\neg b$ is attacked (using $\rightsquigarrow\!\!\to$) by some node.
\end{enumerate}

\end{definition}

\section{Discussion of papers of Arieli and Strasser \cite{509-26,509-27} and the book of Besnard and Hunter \cite{509-29}}

Our purpose here is to compare our work with that of Besnard and Hunter and in parallel, with that of Arieli and Strasser.  We first want to make two comments:
\begin{enumerate}
\item To set the scene for the comarison we need to start with my 1999 paper \cite{509-31}, which contains the relevant machinery.
\item Whatever criticism I have here of Arieli and Strasser \cite{509-26,509-27}, it must be borne in mind that these papers are preliminary conference papers and not definitive versions, like, e.g.\ Besnard and Hunter \cite{509-29}.
\end{enumerate}

So let us start. My paper \cite{509-31} and the later chapter 7 of our mootgraph \cite{509-32} dealt with what I called {\em compromise revision} of databases. We explain by example.

\begin{example}\label{509-AFE1}
This example continues Example \ref{509-EE0}, with a view of illustrating the idea of compromise revision.

Let $\Delta$ be a theory governing a birthday party. Add to $\Delta$ the following additional statements (t1)--(t5)
\begin{itemlist}{(ttt-5}
\item [(t1)] $a$ = invite Agnes
\item [(t2)] $b$ = invite Bertha
\item [(t3)] $a \to \neg b$
\item [(t4)]$b \to \neg a$
\item [(t5)]$a\wedge b \to$ invite Caterina (let $c$ = invite Caterina).
\end{itemlist}
Then our database is $\Gamma$:
\[
\Gamma =\Delta \cup \{a,b,a\to \neg b, b\to \neg a, a\wedge b \to c\}.
\]
$\Gamma$ is not consistent, but each of its items makes initial sense. Agnes and Bertha may be old aunties who do not talk to each other because of something that happened 30 years ago. Caterina may be an old friend of each one of them whose presence might ``mitigate'' the friction.  After some deliberation a decision was made not to risk inviting these two warring aunties. This means that we regard $\Gamma$ as an inconsistent theory in need of belief revision. The obvious revision is to delete either (t1) or (t2), i.e.\ not invite one of the aunties. If we do that then we need not invite Caterina (i.e.\ $c$ would not be provable).

The compromise revision problem is the following:
\begin{itemize}
\item Given an inconsistent $\Gamma$ and $X\in \Gamma$ such that $\Gamma -X$ is consistent, and given a $Z$ such that $\Gamma\vdash Z$ but $\Gamma -X \not\vdash Z$, then compromise revision would like to include $Z$ in the revised theory in case the revised theory is $\Gamma-X$.
\end{itemize}

The problem is that if $~\Gamma$ is inconsistent, then $\Gamma$ can prove everything, including 
\begin{center}
$V$ = invite Vladimir Putin
\end{center}
So we need to be careful and ``control'' what $\Gamma$ exactly proves. For that we use the discipline of labelled deductive systems \cite{509-33}.  We label every step in the syntactic proof of any wff $Z$.

The LDS rule of Modus Ponens is 
\[
\frac{\beta: A; \gamma: A\to B}{(\gamma,\beta): B}
\]

Thus we have that $\Gamma$ proves
\[
(t_1, t_2, t_5): c
\]
If we can prove $V$, it will be by some label $\alpha$, $\alpha: V$ which can be recognised.

So to prove $V$ by virtue of $\Gamma$ being inconsistent, we will need to first prove some $x\wedge\neg x$ and then use the axiom $x\wedge\neg x\to V$. This will all be recorded in $\alpha$ and we can recognise it and not include $V$ in the revised theory, i.e.\ invite Vladimir Putin. By comparison, if for example Putin is a relative of Agnes and $\Delta$ says something about relatives, we may have a more direct labelled proof of $V$.
\end{example}

The background considerations in Example \ref{509-AFE1} show that we have precise LDS machinery to trace proofs. Thus the Besnard and Hunter notion of argument of the form $(\Delta, A)$ where $\Delta$ is a minimal theory such that $\Delta\vdash A$, can be refined to be $t_\Delta: A$ where $t$ is a label showing how $A$ is proved from $\Delta$. There may be several such proofs in which case there would be several such labels. So given an inconsistent theory $\Gamma$, we can look at all labelled proofs $t_i: A_i$ from $\Gamma$ and define an attack relation among them in a much refined way, taking into account exactly how each formula is proved.

Arieli and Strasser use a Gentzen formulation of the logic and use progressions of Gentzen sequents as their attacking elements. I have reservations about the very idea of using Gentzen systems as the machinery for attack in the context  of argumentation networks. I think that Arieli and Strasser's impressive system is more at home with meta-level theories of belief revision, rather than abstract  argumentation. However, this is not the place to to discuss and evaluate their system, especially since, the current publication is at a conference and we yet to wait for the Journal expanded version. It is enough to say that for the purpose of comparison with the current paper, given $(S, R)$ then  if we consider instantiations of the form $(S,R,I)$, where $I$ is an instantiation into labelled formulas of some labelled deductive system, i.e.\  $I: S \mapsto$ LDS, then such a system would  generalise and include both approaches; the Arieli--Strasser approach and the Besnard--Hunter approach.  This, however, is the subject for a new paper. 

We concldue by looking at Figure \ref{509-AFF2}, which explains the situation of Example \ref{509-AFE1}.

\begin{figure}
\centering
\setlength{\unitlength}{0.00083333in}
\begingroup\makeatletter\ifx\SetFigFont\undefined%
\gdef\SetFigFont#1#2#3#4#5{%
  \reset@font\fontsize{#1}{#2pt}%
  \fontfamily{#3}\fontseries{#4}\fontshape{#5}%
  \selectfont}%
\fi\endgroup%
{\renewcommand{\dashlinestretch}{30}
\begin{picture}(1980,3072)(0,-10)
\put(1242,885){\makebox(0,0)[lb]{\smash{{\SetFigFont{10}{12.0}{\rmdefault}{\mddefault}{\updefault}joint attack}}}}
\path(792,2835)(1542,2385)
\blacken\path(1423.666,2421.015)(1542.000,2385.000)(1454.536,2472.464)(1423.666,2421.015)
\path(277,2530)(177,2470)
\blacken\path(264.464,2557.464)(177.000,2470.000)(295.334,2506.015)(264.464,2557.464)
\path(1307,2530)(1402,2455)
\blacken\path(1289.225,2505.811)(1402.000,2455.000)(1326.403,2552.904)(1289.225,2505.811)
\path(42,2085)(42,1635)
\blacken\path(12.000,1755.000)(42.000,1635.000)(72.000,1755.000)(12.000,1755.000)
\path(1542,2085)(1542,1635)
\blacken\path(1512.000,1755.000)(1542.000,1635.000)(1572.000,1755.000)(1512.000,1755.000)
\path(1542,1935)(1542,1785)
\blacken\path(1512.000,1905.000)(1542.000,1785.000)(1572.000,1905.000)(1512.000,1905.000)
\path(42,1935)(42,1785)
\blacken\path(12.000,1905.000)(42.000,1785.000)(72.000,1905.000)(12.000,1905.000)
\blacken\path(312.000,1515.000)(192.000,1485.000)(312.000,1455.000)(312.000,1515.000)
\path(192,1485)(1392,1485)
\blacken\path(1272.000,1455.000)(1392.000,1485.000)(1272.000,1515.000)(1272.000,1455.000)
\blacken\path(462.000,1515.000)(342.000,1485.000)(462.000,1455.000)(462.000,1515.000)
\path(342,1485)(1242,1485)
\blacken\path(1122.000,1455.000)(1242.000,1485.000)(1122.000,1515.000)(1122.000,1455.000)
\path(42,1335)(792,885)
\path(1542,1335)(792,885)
\path(792,885)(792,285)
\blacken\path(762.000,405.000)(792.000,285.000)(822.000,405.000)(762.000,405.000)
\path(792,510)(792,435)
\blacken\path(762.000,555.000)(792.000,435.000)(822.000,555.000)(762.000,555.000)
\blacken\path(1137.000,165.000)(1017.000,135.000)(1137.000,105.000)(1137.000,165.000)
\path(1017,135)(1692,135)
\blacken\path(1572.000,105.000)(1692.000,135.000)(1572.000,165.000)(1572.000,105.000)
\blacken\path(1287.000,165.000)(1167.000,135.000)(1287.000,105.000)(1287.000,165.000)
\path(1167,135)(1542,135)
\blacken\path(1422.000,105.000)(1542.000,135.000)(1422.000,165.000)(1422.000,105.000)
\path(567,1035)(569,1037)(572,1041)
	(577,1048)(586,1058)(596,1071)
	(609,1085)(623,1101)(639,1116)
	(656,1131)(674,1145)(693,1157)
	(715,1168)(738,1177)(764,1183)
	(792,1185)(820,1183)(846,1177)
	(869,1168)(891,1157)(910,1145)
	(928,1131)(945,1116)(961,1101)
	(975,1085)(988,1071)(998,1058)
	(1007,1048)(1012,1041)(1015,1037)(1017,1035)
\put(792,2910){\makebox(0,0)[b]{\smash{{\SetFigFont{10}{12.0}{\rmdefault}{\mddefault}{\updefault}$\top$}}}}
\put(42,2160){\makebox(0,0)[b]{\smash{{\SetFigFont{10}{12.0}{\rmdefault}{\mddefault}{\updefault}$\neg a$}}}}
\put(1542,2160){\makebox(0,0)[b]{\smash{{\SetFigFont{10}{12.0}{\rmdefault}{\mddefault}{\updefault}$\neg b$}}}}
\put(42,1410){\makebox(0,0)[b]{\smash{{\SetFigFont{10}{12.0}{\rmdefault}{\mddefault}{\updefault}$a$}}}}
\put(1542,1410){\makebox(0,0)[b]{\smash{{\SetFigFont{10}{12.0}{\rmdefault}{\mddefault}{\updefault}$b$}}}}
\put(792,960){\makebox(0,0)[b]{\smash{{\SetFigFont{10}{12.0}{\rmdefault}{\mddefault}{\updefault}$\wedge$}}}}
\put(792,60){\makebox(0,0)[b]{\smash{{\SetFigFont{10}{12.0}{\rmdefault}{\mddefault}{\updefault}$\neg c$}}}}
\put(1767,60){\makebox(0,0)[lb]{\smash{{\SetFigFont{10}{12.0}{\rmdefault}{\mddefault}{\updefault}$c$}}}}
\path(792,2835)(42,2385)
\blacken\path(129.464,2472.464)(42.000,2385.000)(160.334,2421.015)(129.464,2472.464)
\end{picture}
}

\caption{}\label{509-AFF2}
\end{figure}

The preferred extensions for this figure are the following 
\begin{enumerate}
\item $\neg b =\neg a=0, a=1, b=0, \neg c=1, c=0$
\item $\neg b=\neg a=0, a=1, b=0, \neg c=0, c=1$
\item Same as (1), with $a=0, b=1$
\item Same as (2), with $a=0, b=1$.
\end{enumerate}

\begin{example}\label{509-AFE3}
Let us do Example 17 from Arieli and Strasser \cite{509-27}.  This is to show how simple the labelled approach is compared with the Gentzen formulation.  Gentzen systems were invented to prove the consistency of arithmetic. It is risky to take off the shelf tool designed for one purpose and apply it to another purpose; such a move requires proper justification.

The data of this example is:
\begin{itemlist}{(ttt-5}
\item [(t1)] $m$
\item [(t2)] $a$ 
\item [(t3)] $m \to \bigcirc\neg f$
\item [(t4)]$m\wedge a\to \bigcirc f$.
\end{itemlist}

The meaning of the normative $\bigcirc f$ and $\bigcirc \neg f$ is not important here. It is sufficient to note that they attack each other. We can derive:
\begin{enumerate}
\item $(t_3, t_1):\bigcirc \neg f$
\item $(t_1, t_2): m\wedge a$
\item $(t_3, (t_1, t_2)):\bigcirc f$
\end{enumerate}

(1) and (3) attack each other. They have labels telling us how they were proved and one can define an attack relation sensitive of the labels.  In LDS we call this ``flattening''. See \cite{509-33}.\footnote{We may have in LDS that  we can prove $t_i : X$ and also $s_j : \neg X$, yielding multiple bilateral attacks between $X$ and $\neg X$ from different proofs. The Flattening process decides, based on $\{t_i, s_j\}$ whether $X$ or $\neg X$ has the upper hand.}

In fact, the attack relation can be a relation $R$ on labels. So $S$ can be the labels and $R\subseteq S\times S$.  This is OK since the labels contain the information of the proofs, including the proved formulas. In fact, in my book on LDS \cite{509-33}, I use many times the formulas themselves as their own labels. So $\bigcirc\neg f$ is labelled by ($1^*$) and $\bigcirc f$ by ($3^*$), where 
\begin{itemlist}{3333}
\item [($1^*$)] $(m \to \bigcirc\neg f, m): \bigcirc\neg f$
\item [($3^*$)] $(m\wedge a \to \bigcirc f, (m, a)): \bigcirc f$
\end{itemlist}

Now compare this with the Gentzen formulation in Figure 3 of \cite{509-27}. We reproduce it as Figure \ref{509-AFF4} (the horseshoe is classical implication and the double arrow is the main symbol for the Gentzen sequent).

\begin{figure}
\centering
\setlength{\unitlength}{0.00083333in}
\begingroup\makeatletter\ifx\SetFigFont\undefined%
\gdef\SetFigFont#1#2#3#4#5{%
  \reset@font\fontsize{#1}{#2pt}%
  \fontfamily{#3}\fontseries{#4}\fontshape{#5}%
  \selectfont}%
\fi\endgroup%
{\renewcommand{\dashlinestretch}{30}
\begin{picture}(2130,2443)(0,-10)
\put(15,55){\makebox(0,0)[b]{\smash{{\SetFigFont{10}{12.0}{\rmdefault}{\mddefault}{\updefault}$A$}}}}
\path(765,2230)(2115,1705)
\path(2115,1555)(1365,1105)
\path(1290,880)(615,205)
\path(1290,880)(1365,205)
\dashline{60.000}(1965,2380)(915,2380)
\blacken\path(1035.000,2410.000)(915.000,2380.000)(1035.000,2350.000)(1035.000,2410.000)
\dashline{60.000}(1965,2380)(90,1105)
\blacken\path(172.362,1197.285)(90.000,1105.000)(206.100,1147.669)(172.362,1197.285)
\dashline{60.000}(1965,2380)(15,205)
\blacken\path(72.768,314.375)(15.000,205.000)(117.442,274.322)(72.768,314.375)
\put(765,2305){\makebox(0,0)[b]{\smash{{\SetFigFont{10}{12.0}{\rmdefault}{\mddefault}{\updefault}$I$}}}}
\put(2115,2305){\makebox(0,0)[b]{\smash{{\SetFigFont{10}{12.0}{\rmdefault}{\mddefault}{\updefault}$H$}}}}
\put(2115,1555){\makebox(0,0)[b]{\smash{{\SetFigFont{10}{12.0}{\rmdefault}{\mddefault}{\updefault}$G$}}}}
\put(2115,55){\makebox(0,0)[b]{\smash{{\SetFigFont{10}{12.0}{\rmdefault}{\mddefault}{\updefault}$D$}}}}
\put(15,955){\makebox(0,0)[b]{\smash{{\SetFigFont{10}{12.0}{\rmdefault}{\mddefault}{\updefault}$E$}}}}
\put(1365,955){\makebox(0,0)[b]{\smash{{\SetFigFont{10}{12.0}{\rmdefault}{\mddefault}{\updefault}$F$}}}}
\put(1365,55){\makebox(0,0)[b]{\smash{{\SetFigFont{10}{12.0}{\rmdefault}{\mddefault}{\updefault}$C$}}}}
\put(615,55){\makebox(0,0)[b]{\smash{{\SetFigFont{10}{12.0}{\rmdefault}{\mddefault}{\updefault}$B$}}}}
\path(765,2230)(15,1105)
\end{picture}
}

\[
\begin{array}{rcl}
\hat{A} &=& m\supset \bigcirc\neg f\To m \supset \bigcirc \neg f\\
\hat{B} &=& m\To m\\
\hat {C} &=& a\To a\\
\hat{D} &=& (m\wedge a)\supset\bigcirc f\To (m\wedge a)\supset \bigcirc f\\
\hat{E}&=& m,m\supset \bigcirc \neg f\To \bigcirc\neg f\\
\hat{F} &=& m, a\To m\wedge a\\
\hat{G} &=& m, a,(m\wedge a)\supset \bigcirc f \To \bigcirc f\\
\hat{H} &=& m, a,(m\wedge a) \supset\bigcirc f\To \neg (m\supset \bigcirc\neg f)\\
\hat{I} &=& m, a, m \supset\bigcirc \neg f, (m\wedge a)\supset \bigcirc f \To \bigcirc \bot.
\end{array}
\]
\caption{}\label{509-AFF4}
\end{figure}

The problem is not so much the complexity of the representation. The problem is that proofs in a Gentzen system do not flow with the implication, while the human argument does follow the implication.  Arieli and Strasser are just using Gentzen as a meta-level deductive machine!

Figure \ref{509-AFF5} gives the argumentation network the way we build it. We simplified $m\wedge a\to \bigcirc f$ as $(m, a)\to \bigcirc f$, so we avoid conjunction. There is nothing special to this move.

You can see that paths indicate chains of implications, very intuitive the way we think of it!

\begin{figure}
\centering
\setlength{\unitlength}{0.00083333in}
\begingroup\makeatletter\ifx\SetFigFont\undefined%
\gdef\SetFigFont#1#2#3#4#5{%
  \reset@font\fontsize{#1}{#2pt}%
  \fontfamily{#3}\fontseries{#4}\fontshape{#5}%
  \selectfont}%
\fi\endgroup%
{\renewcommand{\dashlinestretch}{30}
\begin{picture}(3078,3072)(0,-10)
\put(1215,60){\makebox(0,0)[rb]{\smash{{\SetFigFont{10}{12.0}{\rmdefault}{\mddefault}{\updefault}$\bigcirc f$}}}}
\path(1890,2835)(2640,2385)
\blacken\path(2521.666,2421.015)(2640.000,2385.000)(2552.536,2472.464)(2521.666,2421.015)
\path(1375,2530)(1275,2470)
\blacken\path(1362.464,2557.464)(1275.000,2470.000)(1393.334,2506.015)(1362.464,2557.464)
\path(2405,2530)(2500,2455)
\blacken\path(2387.225,2505.811)(2500.000,2455.000)(2424.403,2552.904)(2387.225,2505.811)
\path(1140,2085)(1140,1485)
\blacken\path(1110.000,1605.000)(1140.000,1485.000)(1170.000,1605.000)(1110.000,1605.000)
\path(2640,2085)(2640,1560)
\blacken\path(2610.000,1680.000)(2640.000,1560.000)(2670.000,1680.000)(2610.000,1680.000)
\path(2640,1935)(2640,1635)
\blacken\path(2610.000,1755.000)(2640.000,1635.000)(2670.000,1755.000)(2610.000,1755.000)
\path(1140,1935)(1140,1635)
\blacken\path(1110.000,1755.000)(1140.000,1635.000)(1170.000,1755.000)(1110.000,1755.000)
\path(1140,1335)(1890,885)
\path(2640,1335)(1890,885)
\path(1890,885)(1890,285)
\blacken\path(1860.000,405.000)(1890.000,285.000)(1920.000,405.000)(1860.000,405.000)
\path(1890,510)(1890,435)
\blacken\path(1860.000,555.000)(1890.000,435.000)(1920.000,555.000)(1860.000,555.000)
\path(1140,1335)(240,1035)
\blacken\path(344.355,1101.408)(240.000,1035.000)(363.329,1044.487)(344.355,1101.408)
\blacken\path(210.000,990.000)(90.000,960.000)(210.000,930.000)(210.000,990.000)
\path(90,960)(465,960)
\blacken\path(345.000,930.000)(465.000,960.000)(345.000,990.000)(345.000,930.000)
\blacken\path(135.000,990.000)(15.000,960.000)(135.000,930.000)(135.000,990.000)
\path(15,960)(540,960)
\blacken\path(420.000,930.000)(540.000,960.000)(420.000,990.000)(420.000,930.000)
\blacken\path(1770.000,105.000)(1890.000,135.000)(1770.000,165.000)(1770.000,105.000)
\path(1890,135)(1290,135)
\blacken\path(1410.000,165.000)(1290.000,135.000)(1410.000,105.000)(1410.000,165.000)
\blacken\path(1620.000,105.000)(1740.000,135.000)(1620.000,165.000)(1620.000,105.000)
\path(1740,135)(1365,135)
\blacken\path(1485.000,165.000)(1365.000,135.000)(1485.000,105.000)(1485.000,165.000)
\blacken\path(774.502,794.670)(690.000,885.000)(722.052,765.532)(774.502,794.670)
\path(690,885)(1065,210)
\blacken\path(980.498,300.330)(1065.000,210.000)(1032.948,329.468)(980.498,300.330)
\blacken\path(852.464,647.536)(765.000,735.000)(801.015,616.666)(852.464,647.536)
\path(765,735)(990,360)
\blacken\path(902.536,447.464)(990.000,360.000)(953.985,478.334)(902.536,447.464)
\path(1140,1935)(1140,2160)
\blacken\path(1170.000,2040.000)(1140.000,2160.000)(1110.000,2040.000)(1170.000,2040.000)
\path(1140,1935)(1140,2010)
\blacken\path(1170.000,1890.000)(1140.000,2010.000)(1110.000,1890.000)(1170.000,1890.000)
\path(2640,2010)(2640,2160)
\blacken\path(2670.000,2040.000)(2640.000,2160.000)(2610.000,2040.000)(2670.000,2040.000)
\path(2640,1935)(2640,2010)
\blacken\path(2670.000,1890.000)(2640.000,2010.000)(2610.000,1890.000)(2670.000,1890.000)
\path(1665,1035)(1667,1037)(1670,1041)
	(1675,1048)(1684,1058)(1694,1071)
	(1707,1085)(1721,1101)(1737,1116)
	(1754,1131)(1772,1145)(1791,1157)
	(1813,1168)(1836,1177)(1862,1183)
	(1890,1185)(1918,1183)(1944,1177)
	(1967,1168)(1989,1157)(2008,1145)
	(2026,1131)(2043,1116)(2059,1101)
	(2073,1085)(2086,1071)(2096,1058)
	(2105,1048)(2110,1041)(2113,1037)(2115,1035)
\put(1890,2910){\makebox(0,0)[b]{\smash{{\SetFigFont{10}{12.0}{\rmdefault}{\mddefault}{\updefault}$\top$}}}}
\put(1140,2160){\makebox(0,0)[b]{\smash{{\SetFigFont{10}{12.0}{\rmdefault}{\mddefault}{\updefault}$\neg m$}}}}
\put(2640,2160){\makebox(0,0)[b]{\smash{{\SetFigFont{10}{12.0}{\rmdefault}{\mddefault}{\updefault}$\neg a$}}}}
\put(1140,1410){\makebox(0,0)[b]{\smash{{\SetFigFont{10}{12.0}{\rmdefault}{\mddefault}{\updefault}$m$}}}}
\put(2640,1410){\makebox(0,0)[b]{\smash{{\SetFigFont{10}{12.0}{\rmdefault}{\mddefault}{\updefault}$a$}}}}
\put(1890,960){\makebox(0,0)[b]{\smash{{\SetFigFont{10}{12.0}{\rmdefault}{\mddefault}{\updefault}$\wedge$}}}}
\put(2340,885){\makebox(0,0)[lb]{\smash{{\SetFigFont{10}{12.0}{\rmdefault}{\mddefault}{\updefault}joint attack}}}}
\put(15,885){\makebox(0,0)[rb]{\smash{{\SetFigFont{10}{12.0}{\rmdefault}{\mddefault}{\updefault}$\neg\bigcirc \neg f$}}}}
\put(615,885){\makebox(0,0)[lb]{\smash{{\SetFigFont{10}{12.0}{\rmdefault}{\mddefault}{\updefault}$\bigcirc\neg f$}}}}
\put(1890,60){\makebox(0,0)[lb]{\smash{{\SetFigFont{10}{12.0}{\rmdefault}{\mddefault}{\updefault}$\neg\bigcirc f$}}}}
\path(1890,2835)(1140,2385)
\blacken\path(1227.464,2472.464)(1140.000,2385.000)(1258.334,2421.015)(1227.464,2472.464)
\end{picture}
}

\caption{}\label{509-AFF5}
\end{figure}
\end{example}

We conclude this Appendix by quoting  a response from C. Strasser and O. Arieli.

\begin{quote}
Begin quote.

Thanks for referring to our work in the above-mentioned paper. Below are some comments and a response to several issues.
\begin{enumerate}
\item  First, since Besnard and Hunter's (BH) formalism is mentioned in the same appendix, let us
emphasize the differences between our approach and theirs. According to BH, an argument
is a pair $\jset{\Gamma,\psi}$, where $\Gamma$ is a subset-minimal consistent set of propositional formulas that
entails according to classical logic the propositional formula  $\psi$. In our approach none of
these is assumed: languages other than the propositional one may be used, $\Gamma$ need not
be consistent nor minimal, and the underlying logic need not be classical logic. Another
difference is that we enhance the calculus of the base logic by elimination rules (see Item 3
below).
\item  The general view on arguments, as indicated in the previous item, may serve as a justification
for our choice to incorporate sequents in our framework: once an underlying logic $\CL$
is fixed, an argument $\jset{\Gamma,\psi}$  in our sense is an indication that   logically follows, according
to $\CL$, from $\Gamma$. If a sound and complete sequent calculus $\CC$ exists for $\CL$ it serves as a
syntactical tool for constructing complex arguments from simpler ones. Such a mechanism
must accompany, either implicitly or explicitly, any structural (logic-based) argumentation
system, so we are not sure that we follow the criticism in this case. Moreover, to some
extent (we still have to check this more carefully), and in the notations of your paper, a
labeled formula $t_\Delta: \psi$   may be associated with the sequent $\Delta\To \psi$   (the way we encode
arguments in COMMA'14), or with a concrete proof of a sequent $\Delta\To \psi$  (the way we
encode arguments in DEON'14 --- see also Item 6 below).
\item  We do not agree with the claim that our use of sequent-based Gentzen-style systems is
purely a `meta-level deductive machine', as in addition to deductions, aimed at systematically
constructing arguments, we also have sequent-based rules for {\em eliminating} arguments.
Thus, sequents are not only meta-leveled deducible objects, but they are syntactical entities
that may be retracted as well. This has two implications. First, the (enhanced) calculus
does not only produce the arguments for the argumentation framework but also the attacks.
Second, this tighter link between the calculus of the base logic and argumentation
frameworks (as compared to the BH-approach) allows to define a machinery for automated
deduction on the basis of dynamic proofs (see our COMMA'14 paper and the literature on
adaptive logics). It follows that derivations are more complicated structures than `ordinary'
proofs in Gentzen-type systems, which also allow for non-monotonic reasoning.
\item  Example F.1: It is noted that `the obvious revision is to delete either (t1) or (t2)'. One may
argue that eliminating both (t3) and (t4), thus inviting both aunties as well as Caterina,
is also a plausible revision. This view, which is more in-line with a paraconsistent view of
1 the state of affairs, in which all the assertions in a theory are treated uniformly, may be
supported by our setting, depending on the pre-defined logic and the attack rules.
\item  We are not sure that we understand what do you mean by `flowing with implication'.
Whatever this property may be, undesirable properties may be lifted by modifying the
corresponding proof system (and maybe also by changing the underlying logic). As we
indicated before, this is fully supported by our approach.
\item  Please note that in the DEON'14 paper an argument is the whole proof of $\psi$ from $\Gamma$ (and
so attacks may be on subproofs). This is similar to the way the ASPIC system views
arguments. In the COMMA'14 paper we adopted a simpler view, in which an argument is
simply a sequent (or, alternatively, only the `top sequent' of a proof). Both representations
of arguments seem to be different than the way that argument are encoded in your Dunglike
digraphs, where vertices are propositions. In view of this it is also difficult to directly
compare the two approaches in terms of representational complexity/transparency as you
seem to do at the end of appendix.
\end{enumerate}
End quote.
\end{quote}


\begin{thebibliography}{99}
\bibitem{509-1} 
D. Gabbay. The Equational approach to contrary to duty obligations, in {\em Foundations of Information and Knowledge Systems}, C. Beierle and C. Meghini, eds. Lecture Notes in Computer Science Volume 8367, 2014, pp 1-61.

\bibitem{509-2} 
G. Brewka and S. Woltran. Abstract dialectical Frameworks, in {\em Proceedings of the Twelfth International Conference on the Principles of Knowledge Representation and Reasoning (KR 2010)}, pp 102-111.

\bibitem{509-3} 
D. Gabbay. Fibring Argumentation Frames.
{\em Studia Logica}, 93(2-3): 231--295,2009.

\bibitem{509-4} 
D. Gabbay.  An Equational Approach to Argumentation Networks, Feb 2011, 104 pp . Paper split into two papers:                                                                                          First part published in   {\em Argumentation and Computation}, 2012, vol 3 issues (2-3), pp 87-142.
 
\bibitem{509-5} D.  Gabbay and O. Rodrigues.
An equational approach to the merging of argumentation networks. {\em J Logic Computation} first published online November 20, 2013. doi:10.1093/logcom/ext060(25 pages)     

\bibitem{509-6} D. Gabbay.  The Equational approach to Logic Programs,   in {\em LNCS 7265 Festschrift for Vladimir Lifschitz},    Esra Erdem, Joohyung  Lee,Yuliya Lierler and David Pearce, eds.,  pp 279--295, 2012

\bibitem{509-7} D. Gabbay. {\em Meta-Logical Investigations in Argumentation networks}.  
Research Monograph  College publications 2013, 770 pp


\bibitem{509-8} Nikos  Gorogiannis, Anthony Hunter.  Instantiating abstract argumentation with classical logic  arguments:
Postulates and  properties, {\em  Artificial Intelligence} 175 (2011) 1479--1497

\bibitem{509-9}    Francesca  Toni and Marek Sergot.  Argumentation and  Answer Set Programming.  In {\em Gelfond Festschrift, LNAI 6565}, M. Balduccini and T.C. Son, eds. pp. 164--180, 2011.
Qc   Springer-Verlag Berlin Heidelberg 2011

\bibitem{509-10} Hannes  Strass.
Approximating operator sand semantics for abstract dialectical frameworks
{\em Artificial Intelligence}205(2013)39--70                                                                                                              
\bibitem{509-11} M Caminada and D Gabbay. A logical account of formal argumentation   
{\em Studia Logica}, 93(2-3): 109--145, 2009.

\bibitem{509-12}  
Yining Wu, Martin Caminada, and Dov M. Gabbay.  Complete Extensions in Argumentation Coincide with 3-Valued Stable Models in Logic Programming
{\em Studia Logica},
December 2009, Volume 93, Issue 2-3, pp 383-403

\bibitem{509-13} Sanjay Modgil.  Revisiting Abstract Argumentation Frameworks
Theory and Applications of Formal Argumentation
{\em Lecture Notes in Computer Science} Volume 8306, 2014, pp 1-15

\bibitem{509-14}  Henry Prakken.   An abstract framework for argumentation with structured
arguments, {\em Argument and Computation}, 1:2, 93-124, 2010. DOI: 10.1080/19462160903564592

\bibitem{509-15} P. M. Dung. On the Acceptability of Arguments and its Fundamental Role in Nonmonotonic
Reasoning, Logic Programming, and $n$-Person Games, {\em Artificial Intelligence}, 77, 321--357, 1995.

\bibitem{509-16}
Martin Caminada and Leila Amgoud.   On the evaluation of argumentation
formalisms. {\em   Artificial Intelligence} 171, 5--6, pp. 286--310, 2007.

\bibitem{509-Caminada2006}
Caminada 2006 (from quoted material)

\bibitem{509-17}
S\o ren Holbech Nielsen and  Simon Parsons.
 A Generalization of Dung's Abstract Framework for Argumentation: Arguing with Sets of Attacking Arguments
 in {\em Argumentation in Multi-Agent Systems}, 
Lecture Notes in Computer Science Volume 4766, 2007, pp 54-73.

\bibitem{509-18}  Stephen Cole Kleene
{\em Introduction to Metamathematics}, 1952.

\bibitem{509-19}
P. Baroni, M. Giacomin, G. Guida.
SCC-recursiveness: a general schema for argumentation semantics. {\em Artificial Intelligence}, 168(1--2), 162--210, 2005.

\bibitem{509-20} 
Dov Gabbay.  Logical foundations for  bipolar argumentation  networks, to appear in special issue of the {\em Journal of Logic and Computation} in honour of Arnon Avron. {\em Journal of Logic and Computation} 2013; doi: 10.1093/logcom/ext027

\bibitem{509-21}
M. Abraham, I. Belfer, D. Gabbay and U. Schild. Identity merging and identity revision in Talmudic logic. To appear in {\em Annals of Mathematics and Artificial Intelligence}, special issue on Belief Change and Argumentation Theory, edited by Juergen Dix, Sven Ove Hansson, Gabriele Kern-Isberner and Guillermo Simari.

\bibitem{509-22} 
Leila Amgoud and Caroline Devred. Argumentation Frameworks as Constraint Satisfaction Problems, in S. Benferhat and J. Grant (Eds.): {\em SUM 2011}, LNAI 6929, pp. 110Ð122, 2011.

\bibitem{509-23}
Stefano Bistarelli, Daniele Pirolandi  and
Francesco Santini. Solving Weighted Argumentation Frameworks with Soft Constraints.
In {\em Recent Advances in Constraints},
Lecture Notes in Computer Science Volume 6384, 2011, pp 1-18

\bibitem{509-23a}
Caroline Devred, Sylvie Doutre, Claire Lef\'{e}vre and Pascal Nicolas. 
Dialectical Proofs
for Constrained Argumentation, COMMA 2010.

\bibitem{509-24}
Dov Gabbay and Tjitze Rienstra. Intervention and observation in argumentation networks, to appear special issue of {\em Logica Universalis}.

\bibitem{509-25} 
Tjitze Rienstra. {\em  Argumentation in Flux}. Thesis, a revised version to appear with College Publications.

\bibitem{509-26}  
Ofer Arieli, Christian Strasser. Dynamic derivations for sequent-based logical argumentation. {\em Proc. 5th International Conference on Computational Models of Argument (COMMA'14), Frontiers in Artificial Intelligence and Applications}, IOS Press, 2014. To Appear

\bibitem{509-27}
Christian Strasser and Ofer Arieli. Sequent-based argumentation for normative reasoning. {\em Proc. 12th International Conference on Deontic Logic and Normative Systems (DEON'14)}, LNAI 8554, pages 224--240, Springer, 2014.

\bibitem{509-29}
 Philippe Besnard and  Anthony Hunter. {\em Elements of Argumentation}, MIT Press, 2008. 

\bibitem{509-30} 
Sanjay Modgil and  Henry Prakken.
  The ASPIC+ framework for structured argumentation: a tutorial. {\em 
Argument and Computation}, 5:1, 31--62, 2014.
Special Issue:   Tutorials on Structured Argumentation,  DOI:10.1080/19462166.2013.869766 

\bibitem{509-31} D Gabbay. Compromise Update and Revision A position paper.
In  {\em Dynamic Worlds}, B. Fronhoffer and R. Pareschi, eds. pp. 111--148. Kluwer, 1999.

\bibitem{509-32} D Gabbay,  O. Rodrigues and A. Russo. {\em  Revision, Acceptability and Context: Theoretical and Algorithmic Aspects} (Cognitive Technologies) 
 Springer 2010.

\bibitem{509-33} Dov Gabbay. {\em Labelled Deductive Systems; Principles and Applications. Vol 1: Basic   Principles}, 
Oxford University Press, 1996.

\bibitem{509-34}
G. Brewka, S. Ellmauthaler, H. Strass, J. P. Wallner and S. Woltran.  Abstract dialectical frameworks revisited. In {\em Proc. IJCAI 2013}, pp. 803--809, AAAI Press, 2013.

\bibitem{509-36}
D. Gabbay and O. Rodrigues.  The equational approach to abstract dialectical frameworks. To appear.

\bibitem{509-37}
D. Gabbay.  Dung's argumentation is equivalent to classical propositional logic with the Peirce-Quine dagger.  {\em Logical Universalis}, 5-, 255--318, 2011.


\end{thebibliography}
\end{document}

=\!\!\!\!\tO

\rightsquigarrow\!\!\to

\Leftarrow\!\!\Rightarrow